\newcommand\numberthis{\addtocounter{equation}{1}\tag{\theequation}}
\newcommand{\lp}{\left(}
\newcommand{\rp}{\right)}
\newcommand{\argmin}{\mathop{\mathrm{argmin}}}
\newcommand{\argmax}{\mathop{\mathrm{argmax}}}
\newcommand{\mbf}{\mathbf}
\newcommand{\mc}{\mathcal}
\newcommand{\mbb}{\mathbb}
\newcommand{\bds}{\boldsymbol}
\newcommand{\ads}[1]{\textcolor{magenta}{\textbf{ADS:} #1}}
\newcommand{\kn}[1]{\textcolor{blue}{\textbf{KN:} #1}}
\newcommand\redout{\bgroup\markoverwith{\textcolor{red}{\rule[.5ex]{2pt}{0.4pt}}}\ULon}
\renewcommand{\P}{\mbb{P}}
\newcommand{\E}{\mbb{E}}
\newcommand{\Var}{\mathop{\mathrm{Var}}\nolimits}
\DeclareMathOperator\arctanh{arctanh}
\newcommand{\ind}[1]{\mbf{1}_{#1}}
\newcommand{\Rad}{\mathrm{Rademacher}}
\newcommand\defeq{\stackrel{\triangle}{=}}
\newcommand{\SKL}{\bds{S}_{\text{KL}}}
\newcommand{\KL}{\bds{D}_{\text{KL}}}
\newcommand{\tpath}{\mathrm{path}}
\newcommand{\ssTV}{\mathrm{ssTV}}
\newcommand{\bx}{\mbf{x}}
\newcommand{\bX}{\mbf{X}}
\newcommand{\bnX}{\mbf{Y}} 
\newcommand{\bnx}{\mbf{y}}
\newcommand{\nX}{Y} 
\newcommand{\nx}{y}
\newcommand{\nmu}{\mu^{\dagger}} 
\newcommand{\nmue}{\hat{\mu}^{\dagger}} 
\newcommand{\np}{\p_{\dagger}}
\newcommand{\T}{\mathrm{T}}
\newcommand{\F}{\mathrm{F}}
\newcommand{\p}{\mathrm{p}}
\newcommand{\TCL}{\T^\mathrm{CL}}
\newcommand{\TCLn}{\T^\text{CL}_{\dagger}}
\newcommand{\N}{N} 
\newcommand{\n}{n}
\newcommand{\nn}{n_{\dagger}}
\newcommand{\neps}{\epsilon_{\dagger}}
\newcommand{\eps}{\epsilon}
\newcommand{\ngam}{\gamma_{\dagger}}
\newcommand{\ntau}{\tau^{\dagger}}
\newcommand{\nP}{P^{\dagger}}
\newcommand{\nPe}{\hat{P}^{\dagger}}
\newcommand{\nZ}{Z_{\dagger}}
\newcommand{\Lnorm}{\mc{L}^{(2)}}
\newcommand{\RIP}{\Pi{}_{\TCL}} 
\newcommand{\RIPn}{\Pi{}_{\TCLn}}
\newcommand{\isingtree}{\mc{P}_{\T}(\alpha,\beta)}
\newcommand{\CP}{\mc{CP}_{\T}}
\newcommand{\Vast}{\bBigg@{5}}
\newcommand{\Err}{\mathrm{E}}
\newcommand{\Ecorr}{\mathrm{E}^{\text{corr}}\left(\epsilon\right)}
\newcommand{\Estrong}{\mathrm{E}^{\text{strong}}\left(\epsilon\right)}
\newcommand{\Ecorrn}{\mathrm{E}_{\dagger}^{\text{corr}}\left(\neps\right)}
\newcommand{\Ecascn}{\mathrm{E}_{\dagger}^{\text{cascade}}\left(\ngam\right)}
\newcommand{\Estrongn}{\mathrm{E}_{\dagger}^{\text{strong}}\left(\neps\right)}
\newcommand{\Einter}{\mathrm{E}_{\dagger} (\neps,\ngam)}
\newcommand{\BSC}{\mathrm{BSC}(q)^p}
\newcommand{\Q}{\mathrm{Q}}
\newcommand{\Nw}{N_{w}}
\newcommand{\Nwb}{N_{\bar{w}}}
\newcommand{\Nu}{N_{u}}
\newcommand{\Nub}{N_{\bar{u}}}
\newcommand{\Zsample}{Z_{e,u,\bar{u}}^{(i)}}
\newcommand{\Z}{Z_{e,u,\bar{u}}}
\newcommand{\Ysample}{M_{e,u,\bar{u}}^{(i)}}
\newcommand{\Y}{M_{e,u,\bar{u}}}
\newcommand{\Zf}{Z_{f,u,\bar{u}}}
\newcommand{\Zfsample}{Z_{f,u,\bar{u}}^{(i)}}
\newcommand{\Yf}{M_{f,u,\bar{u}}}
\newcommand{\Yfsample}{M_{f,u,\bar{u}}^{(i)}}
\newcommand{\DKL}{D_{\text{KL}}}
\newcommand{\efq}{\hat{f}^{\mu_k}_{\mu_{k-1}}(q)}
\newcommand{\UBxi}{S(\beta,q)}
\newcommand{\hd}{\mathrm{d}_{\mathrm{H}}}
\newtheorem{assumption}{Assumption}
\newtheorem{defn}{Definition}
\begin{document}

\title{Predictive Learning on Hidden Tree-Structured Ising Models
}


\author{\name Konstantinos E. Nikolakakis \email k.nikolakakis@rutgers.edu \\
       \addr Department of Electrical \& Computer Engineering\\
       Rutgers, The State University of New Jersey \\
	   94 Brett Road, Piscataway, NJ 08854, USA
       \AND
       \name Dionysios S. Kalogerias \email dionysis@msu.edu \\
       \addr  Department of Electrical \& Computer Engineering\\
       Michigan State University\\
       428 S. Shaw Lane, MI 48824, USA
       \AND 
       \name Anand D. Sarwate \email anand.sarwate@rutgers.edu \\
       \addr Department of Electrical \& Computer Engineering\\
       Rutgers, The State University of New Jersey \\
	   94 Brett Road, Piscataway, NJ 08854, USA}

\thispagestyle{empty}

\maketitle

\begin{abstract}%
We provide high-probability sample complexity guarantees for exact \textit{structure recovery} and accurate \textit{predictive learning} using noise-corrupted samples from an acyclic (tree-shaped) graphical model. The hidden variables follow a tree-structured Ising model distribution, whereas the observable variables are generated by a binary symmetric channel taking the hidden variables as its input (flipping each bit independently with some constant probability $q\in [0,1/2)$). In the absence of noise, predictive learning on Ising models was recently studied by~\cite{bresler2020learning}; this paper quantifies how noise in the hidden model impacts the tasks of structure recovery and marginal distribution estimation by proving upper and lower bounds on the sample complexity. Our results generalize state-of-the-art bounds reported in prior work, and they exactly recover the noiseless case ($q=0$). In fact, for any tree with $p$ vertices and probability of incorrect recovery $\delta>0$, the sufficient number of samples remains logarithmic as in the noiseless case, i.e., $\mc{O}(\log(p/\delta))$, while the dependence on $q$ is $\mc{O}\big( 1/(1-2q)^{4} \big)$, for both aforementioned tasks. We also present a new equivalent of Isserlis' Theorem for sign-valued tree-structured distributions, yielding a new low-complexity algorithm for higher-order moment estimation. 
\end{abstract}
\begin{keywords}
Ising Model, Chow-Liu Algorithm, 
Structure Learning, Predictive Learning, Distribution Estimation, Noisy Data, Hidden Markov Random Fields
\end{keywords}


\section{Introduction}
Graphical models are a useful tool for modeling high-dimensional structured data. 
The graph captures structural dependencies: its edge set corresponds to (often physical) interactions between variables. There is a long and deep literature on graphical models (see~\citet{koller2009probabilistic} for a comprehensive introduction), 
and they have found wide applications in areas such as image processing and vision~\citep{schwing2015fully,Li_2016_CVPR,lin2016efficient,liu2017deep,morningstar2017deep,wu2017coupled}, artificial intelligence more broadly~\citep{wainwright2003tree,wang2017learning}, signal processing~\citep{kim2013single,wisdom2016deep}, and gene regulatory networks~\citep{zuo2017incorporating,banf2017enhancing}, to name a few.

An \textit{undirected graphical model}, or \textit{Markov random field} (MRF) in particular, is defined in terms of a hypergraph $\mc{G} = (\mc{V},\mc{E})$, that models the Markov properties of a joint distribution on $p \triangleq |\mc{V}|$ node variables $  (X_1, X_2, \ldots, X_p)\triangleq \mbf{X}$.  A \textit{tree-structured graphical model} is one in which $\mc{G}$ is a tree. We denote the tree-structured model as $\T=(\mc{V},\mc{E})$. In this paper, we consider binary models on $2p$ variables $(\mbf{X}, \bnX )$, where the joint distribution $\p (\cdot)$ of $\mbf{X}$ is a tree-structured Ising model distribution on $\{-1,+1\}^{p}$ and $\bnX = (Y_1, Y_2, \ldots, Y_p)$  is a noisy version of $\mbf{X}$, such that $Y_i = N_i X_i$ and $\{N_i\}$ are independent and identically distributed (i.i.d.) Rademacher noise with $\P(N_i=-1)=1- \P(N_i=+1)= q$, for all $i\in\mc{V}$. We refer to $\bX$ as the \textit{hidden layer} and $\bnX$ as the \textit{observed layer}. Under this setting, our objective is to recover the underlying tree structure and accurately estimate the distribution of the hidden layer $\mbf{X}$ (with high probability) using only the noisy observations $\bnX$. This is non-trivial because $\bnX$ does \textit{not} itself follow any tree structure; this is similar to more traditional problems in nonlinear filtering, where a Markov process of known distribution (and thus, of known structure) is observed through noisy measurements~\citep{arulampalam2002tutorial,jazwinski2007stochastic,van2009observability,douc2011consistency,kalogerias2016grid}. The sample complexity of the noiseless  version of our model was recently studied by~\citet{bresler2020learning}, where the well-known Chow-Liu algorithm~\citep{chow1968approximating} is employed for tree reconstruction.  Like them, we also analyze the Chow-Liu algorithm.

\subsection{Applications and Motivating Examples}

Models for joint distributions characterized by pairwise variable interactions have found many applications, with the Ising model being a popular model for binary variables. Our work is primarily motivated by examples of Ising models \textit{corrupted by noise}.
In many cases, the underlying graph-structured process cannot be observed directly; instead, 
only a noisy version of the process is available. Examples abound in  physics, computer science, biology, medicine, psychology, social sciences, and finance. 
Some applications motivating this work include the following: 

\textbf{1)} \textit{Statistical mechanics of population, social and pedestrian dynamics} (see related work by~\cite{matsuda1992statistical,castellano2009statistical}): The Ising model can be used to represent the statistical properties of the spreading of a feeling, behavior or the change of an emotional state among individuals in a crowd, where each individual interacts with his neighbors. 

\textbf{2)} \textit{Epidemic dynamics and  epidemiological models} by~\cite{barnett2013information,erten2017criticality}: Disease spread can be modeled through the Ising model, where each individual is susceptible (spin down) or ineffective (spin up).

\textbf{3)} \textit{Neoplastic transitions} and related applications in biology (\cite{torquato2011toward}): Each cell interacts with neighboring cells. Different cases are studied in the literature, for instance, healthy versus cancerous cells, malignant versus benign cells, where both can be modeled as spin up and spin down observations. The probability of diagnostic error is not zero which gives rise to the hidden model that we consider.

\textbf{4)} \textit{Differential Privacy}, originally proposed by~\cite{dwork2006our,dwork2006calibrating}: In computer science, differential privacy is used to guarantee privacy for individuals. A hidden model describes data gathered using a \emph{locally differentially private mechanism}~~\citep{Warner65:response,KLNRS08} such as randomized response.

\textbf{5)} \textit{Trading} and related applications in economics (see related work by~\cite{zhou2007self,takaishi2015multiple}): The Ising model has been considered in the literature to model increasing (spin up) or decreasing (spin down) price trends in a market.

\subsection{Structure Learning for Undirected Graphical Models and Related Work}
For a detailed review of methods for structure learning involving undirected and directed graphical models, see the relevant article by~\citet{drton2017structure}. In general, learning the structure of a graphical model from samples can be intractable~\citep{karger2001learning,hojsgaard2012graphical}. For general graphs, neighborhood selection methods~\citep{jalali2011learning,bresler2015efficiently,ray2015improved} estimate the conditional distribution for each vertex in order to learn the neighborhood of each node and therefore the full structure.  These approaches may use greedy search or $\ell_{1}$ regularization. For Gaussian or Ising models, $\ell_{1}$-regularization~\citep{ravikumar2010high}, the GLasso~\citep{yuan2007model,banerjee2008model}, or coordinate descent approaches~\citep{friedman2008sparse} have been proposed, focusing on estimating the non-zero entries of the precision (or interaction) matrix. Model selection can also be performed using score matching methods~\citep{hyvarinen2005estimation,hyvarinen2007some,nandy2015high,lin2016estimation}, or Bayesian information criterion methods~\citep{foygel2010extended,gao2012tuning,barber2015high}. Other works address non-Gaussian models such as elliptical distributions, $t$-distribution models or latent Gaussian data~\citep{finegold2011robust,vogel2011elliptical,vogel2014robust,bilodeau2014graphical}, or even mixed data~\citep{fan2017high}. 

For tree- or forest-structured models, exact inference and the structure learning problem are significantly simpler: the Chow-Liu algorithm provides an estimate of the tree or forest structure of the underlying graph~\citep{chow1968approximating,wainwright2008graphical,edwards2010selecting,tan2011learning,liu2011forest,daskalakis2018testing,bresler2020learning}. Furthermore, marginal distributions and maximum values are simpler to compute using a variety of algorithms (sum-product, max-product, message passing, variational inference)  \citep{pearl1988morgan,lauritzen1996graphical,wainwright2003tree,wainwright2008graphical}). 


\textit{The noiseless counterpart} of the model considered in this paper was studied recently by~\citet{bresler2020learning}; in this paper, we extend their results to the hidden case, where samples from a tree-structured Ising model are passed through a binary symmetric channel with crossover probability $q\in[0,1/2)$. Of course, in the special case of a linear graph, our model reduces to a hidden Markov model.
Latent variable models are often considered in the literature when some variables of the graph are deterministically unobserved~\citep{chandrasekaran2010latent,anandkumar2013learning,ma2013alternating,anandkumar2014tensor}. 
Our model is most similar to that studied by Chaganty et al.~\citep{chaganty2014estimating}, in which a hidden model is considered with a discrete exponential distribution and Gaussian noise. They solve the parameter estimation problem by using moment matching and pseudo-likelihood methods; the structure can be recovered indirectly using the estimated parameters.

\noindent\textbf{Connection with Phylogenetic Estimation.} In phylogenetic estimation problems the goal is to learn the structure of tree given only observations form the leaves~~\citep{steel1997few}. The sample complexity of phylogenetic reconstruction algorithms grows exponentially with respect to the depth of the tree~\citep{steel1997few}, however if we are interested in reconstructing only parts of the tree which are ``close'' to the leaves then the depth of tree does not affect the sample complexity~\citep{daskalakis2009phylogenies}. The hidden structure learning problem that we consider in this paper is a special case of phylogeny estimation problem with constant depth; there is exactly one noisy observable for each hidden node of the tree. In contrast with phylogenetic estimation approaches, Chow-Liu algorithm is simple and computationally more efficient, while the sample complexity is of the same order\footnote{while considering the depth fixed} with the well-known phylogenetic reconstruction methods, to name a few ``Dyadic Closure'' method by~\cite{steel1997few}, the ``Contractor-Extender'' and ``Cherry-picking'' algorithms by~\cite{daskalakis2006optimal,daskalakis2009phylogenies,daskalakis2013alignment}. On the other hand, the approach of distribution estimation by matching the structure and the correlations~\citep{bresler2020learning} has not been considered in the phylogenetic estimation literature. Based on the above discussion, the following interesting question naturally rises: How well can we estimate the distribution of a hidden tree structured model while having access only to the leaves of the tree? The latter remains open problem for future work.

\subsection{Statement of Contributions} 

We are interested in answering the following general question: \textit{How does noise affect the sample complexity of the structure and predictive learning procedure?}  That is, given \textit{only} noisy observations, our goal is to learn the tree structure of the hidden layer in a well-defined and meaningful sense. The MLE-structure from tree-structured (noiseless) data is the output of the Chow-Liu algorithm~\citep{chow1968approximating}. However, the MLE-structure from noisy data is not consistent with the hidden structure in general because the graphical model of the observables is a complete graph. Further, the (latent) MLE of the actual interaction parameters $\theta$ of the hidden layer is intractable. In Sections \ref{Hidden_structure_estimation} and \ref{MLE} we explain the importance of Chow-Liu algorithm in our setting, we show why the classical MLE approach fails, and we discuss the connection between the output of the Chow-Liu algorithm and an alternative, projection-based MLE approach.

The estimated structure is an essential statistic for estimating the underlying distribution of the hidden layer, allowing for predictive learning. Specifically, based on the structure estimate, we are also interested in appropriately approximating the tree-structured distribution under study, which can then be used for accurate predictions.
We also consider the problem of hidden layer higher-order moment estimation of tree-structured Ising models and, in particular, how such estimation can be efficiently performed, on the basis of noisy observations.

A summary of the main contributions of this paper is as follows: 
\begin{itemize}
\item A lower bound on the sufficient number of samples needed to recover the exact hidden structure with high probability, by using the Chow-Liu algorithm. We also show an upper bound on the necessary number of samples for any algorithm to estimate the hidden structure. The proof of the lower bound follows the general structure of Lemmata 8.1-8.4 by~\cite{bresler2020learning}, however we need to extend the necessary events and prove new concentration bounds for the noisy setting. 
Although the graphical model of the observables is a complete graph we show that the Chow-Liu algorithm (with input a finite number of noisy samples) returns the exact tree of the hidden layer with high probability and we characterize its sample complexity. The proof of the upper bound uses the same construction of the approach in Section 7.1 by~\cite{bresler2020learning} but requires the combination of Fano's inequality and a strong data processing inequality (SDPI) by~\cite{polyanskiy2017strong}. Specifically, we show that SDPI's can be a useful tool to derive minimax bounds when closed form expressions or upper bounds of the KL-divergence are hard to be found. The later is of independent interest and it can be applied to other machine learning problems that involve noisy observations.

\item Determination of the sufficient and necessary number of samples for accurate predictive learning. We analyze the sample complexity of learning distribution estimates, which can accurately provide predictions on the hidden tree. The estimates are computed using the noisy data. Predictive learning under noisy samples is challenging because structural properties such as the independence of random variables $X_i X_j$ and correlation estimates $\hat{\E}[X_iX_j]$ for $(i,j)\in\mc{E}$ do not hold for the noisy observable $\bnX$. To overcome this we evaluate the required conditional distributions of the dependent variables, construct a martingale difference sequence, and prove a high probability bound of the event that involves these variables by applying a concentration bound for supermartingales (generalized Bennet's inequality~\citep{fan2012hoeffding}). We refer the reader to Section \ref{necessary events discussion} for a detailed discussion about the technical contributions and a sketch of proof of the main result.

\item A closed-form expression \textit{and} a computationally efficient estimator for higher-order moment estimation in tree-structured Ising models. This result corresponds to an equivalent statement of Isserlis' theorem for sign-valued tree models. Given pair-wise correlations and the tree (or estimates of both, from noisy or noiseless data) we provide an algorithm that runs on the tree and returns the expression of high-order moments. The proof involves the existence and identification of (minimum length) disjoint paths among any set of pairs of nodes. The proposed algorithm (Algorithm \ref{alg:matching_pairs}) identifies these paths that yield the expression of the moments. The results may be of independent interest for a computational efficient exact or approximated higher-moment evaluation.

\end{itemize}
Our main results Theorem \ref{thm:sufficient} and Theorem \ref{thm:Main_result} provide the amount of finite samples needed for exact structure recovery and accurate predictive learning with high probability. Although we are interested in the finite sample complexity bounds, our results are also asymptotically optimal. That is, for any fixed (constant) $q\in[0,1/2)$ the order of the upper bound (necessary number of samples) matches the corresponding (lower) minimax bound. The sample complexity bounds that we provide are the extended form of state of the art (noiseless setting) bounds by~\cite{bresler2020learning}. By setting $q=0$, our bounds reduce to the noiseless setting bounds. Further, the explicit version of our results (see Section \ref{Main Results}) are continuous functions of the cross-over probability $q$.

\begin{table}[t]
\centering
\begin{tabular}{r|l}
\textbf{Symbol} & \textbf{Meaning} \\
\hline 
$p$ & number of variables nodes in the tree \\
$\p (\bx)$ & $\exp\big( \sum_{(i,j)\in\mc{E}} \theta_{i,j}x_i x_j \big)/Z(\theta)$, $\bx\in\{-1,+1\}^p$, $Z(\theta)\!:$ partition function\\
$\alpha$ & minimum $|\theta_{ij}|$ in the Ising model, $\min_{i,j\in\mc{V}} |\theta_{ij}|$ \\
$\beta$ & maximum $|\theta_{ij}|$ in the Ising model, $\max_{i,j\in\mc{V}} |\theta_{ij}|$ \\
$\T$ & Original tree of the model\\
$\isingtree$ & set of tree-structured Ising models with $\alpha \leq |\theta_{ij}| \leq \beta$ \\
$n$ & number of samples \\
$q$ & crossover probability of the BSC, $q\in[0,1/2)$\\
$c_q$ & $1-2q$\\
$\p(\cdot)$ & distribution of the hidden node variables, $\bX\sim \p(\cdot)\in \isingtree$ \\
$\p_{\dagger}(\cdot)$ & distribution of the observable node variables $\bnX\sim \p_{\dagger}(\cdot)$\\
$\mathds{1}_{\bds{A}}$ & indicator function of the set $\bds{A}$\\
$\KL$ & KL divergence\\
$\SKL$ & symmetric KL divergence\\
$I(X,Y)$ & mutual information of X,Y\\
$d_{\text{TV}}$ & total variation distance \\
$\Lnorm (P,Q)$ & $\sup_{i,j\in\mc{V}} d_{\text{TV}}\left(P_{ij},Q_{ij}\right)$, and $P_{ij},Q_{ij}$ the pairwise marginals of $P,Q$ \\
$\bX^{1:n}$ & $ n$ independent observations of $\bX$\\
$\bnX^{1:n}$ & $ n$ independent observations of $\bnX$\\
$\TCL$ & Chow-Liu-estimated structure from noiseless data $\bX^{1:n}$\\
$\TCLn$ & Chow-Liu-estimate of the hidden tree structure $\T$ from noisy data $\bnX^{1:n}$ \\
$\tpath_{\T} (w,\tilde{w})$ &the set of edges which connects the nodes $w,\tilde{w}\in\mc{V}_{\T}$ \\ 
$\hat{\mu}_{i,j}$ & $\frac{1}{n} \sum^{\n}_{k=1}  X_i^{(k)}\! X_{\!j}^{(k)}$ 
\\$\nmue_{i,j}$ & $\frac{1}{n}\sum^{n}_{k=1} \nX_i^{(k)} \nX_{\!j}^{(k)}$ \\
$\RIPn (\hat{\p}_{\dagger} )$ & estimator of the distribution $\p(\cdot)$ from noisy data $\bnX^{1:n}$\\
$\eta$ & maximum error on the distribution estimation: $\Lnorm (P,\hat{P})\leq \eta$\\
$\delta$ & maximum probability of error, the notation depends on the task\\ 
 & \qquad in structure estimation: $\P (\TCLn \neq \T)\leq \delta$\\
 & \qquad in predictive learning: $	\P \lp \Lnorm \lp \p (\cdot),\RIPn (\hat{p}_{\dagger} ) \rp \leq \eta \rp \geq 1-\delta.$
\end{tabular}
\caption{Notation/Definitions. \label{table:notation}}
\end{table}

\subsection{Notation}
 Boldface indicates a vector or tuple and calligraphic face for sets and trees. The sets of even and odd natural numbers are $2\mbb{N}$ and $2\mbb{N}+1$ respectively. For an integer $n$, define $[n] \triangleq \{1,2,\ldots n \}$. 
The indicator function of a set $A$ is $\bds{1}_A$. For a graph $\mc{G} = (\mc{V}, \mc{E})$, $\mc{V} = [p]$ indexes the set of variables $\{X_1, X_2, \ldots, X_p\}$, for any pair of vertices $i,j \in \mc{V}$ the correlation $\mu_{ij} \defeq \E\left[X_{i}X_{j}\right]$ and for any edge $e = (i,j) \in \mc{E}$ it is $\mu_e\triangleq\E[X_i X_j]$. For two nodes $w,\tilde{w}$ of a tree, the term $\tpath(w,\tilde{w})$ denotes the set of edges in the unique path with endpoints $w$ and $\tilde{w}$. 
 Further, BSC$(q)^p$ denotes a binary symmetric channel with crossover probability $q$ and block-length $p$. The $\BSC$ is a conditional distribution from $\{-1,1\}^p \to \{-1,1\}^p$ that acts componentwise independently on $\bX$ to generate $\bnX$, such that $X_i = N_i \nX_i$ and $\mbf{N}$ is a vector of i.i.d.~Rademacher variables equal to $+1$ with probability $1 - q$.
We use the symbol $\dagger$ to indicate the corresponding quantity for the observable (noisy) layer. For instance, $\np(\cdot)$ is the probability mass function of $\bnX$ and $\nmu_{i,j}\triangleq \E [\nX_i \nX_j]$ corresponds to the correlation of variables $\nX_i ,\nX_j$. For our readers' convenience, we summarize the notation in Table \ref{table:notation}.


\begin{figure}[!ht]
  \centering
  \includegraphics[width=0.8\linewidth]{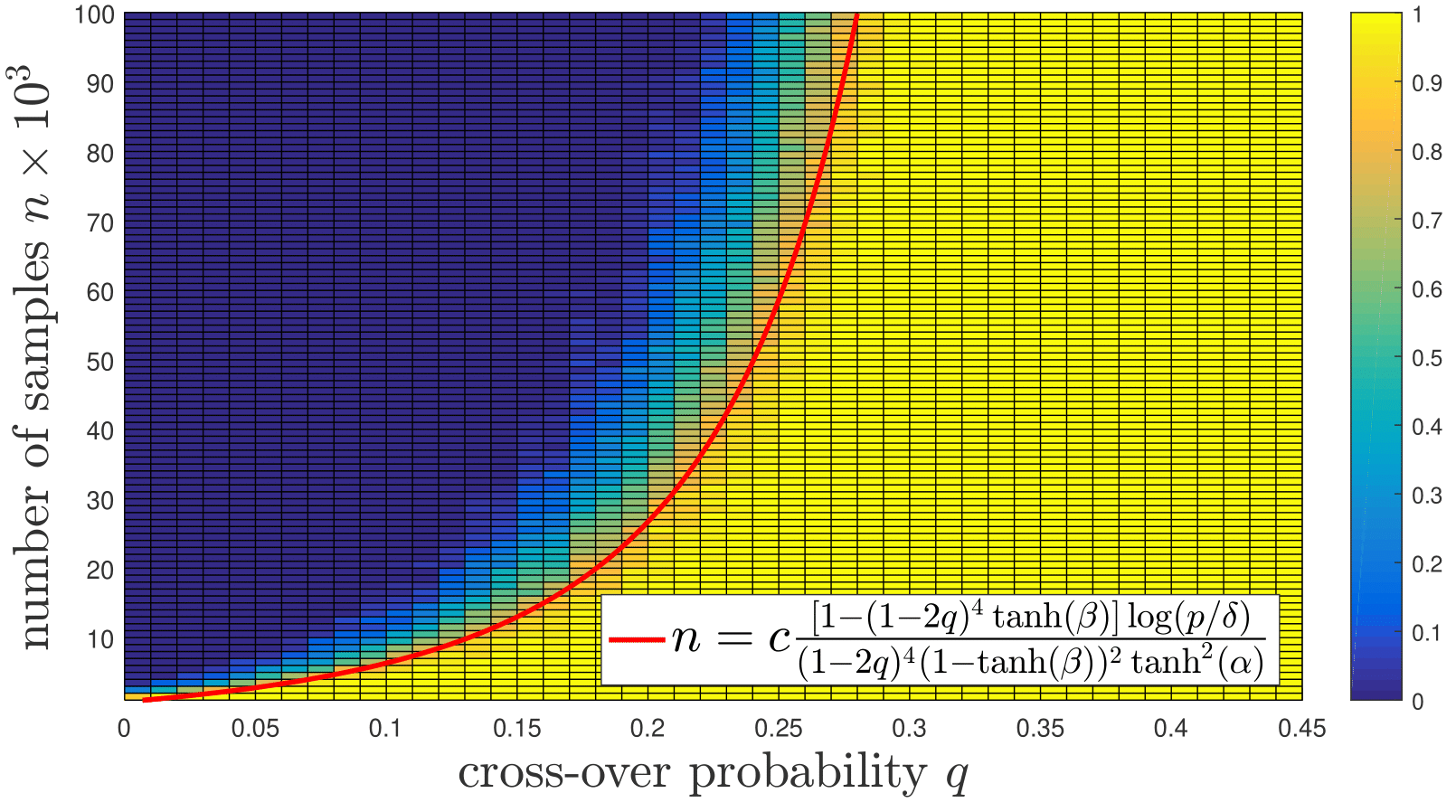}
\caption{The simulation corresponds to structure learning. Comparison of the experimental results (heat-map) and the theoretical bound of Theorem \ref{thm:sufficient}, the bound that yields \eqref{eq:sf_1}. The colored regions denote different values of the estimated probability of error $\delta$ (at least one edge has been missed). The value of $\delta$ varies between $0$ and $1$ while the parameters $\alpha=0.2,\beta = 1.1 ,p = 100$ are fixed. The red line shows the bound from Theorem \ref{thm:sufficient} (the explicit form of Theorem  \ref{theorem: structure: simple}). The code of the experiment is available at \url{https://github.com/KonstantinosNikolakakis/Structure-Learning}. \label{fig:sfig1}}
\end{figure}
\subsection{Summary of the Results}

In this section, we present a summary of the main results of our work up to constant factors $C,C'>0$. We refer the reader to Table \ref{table:notation} for the definition of the model parameters. We provide the explicit statements of the results, and we specify the constants in Section \ref{Main Results}. Recall that, the random vector $\bnX\in \{-1,+1\}^p$ is the output of the binary symmetric channel $\BSC$ with input the random vector $\bX\sim p(\cdot)\in\isingtree$.

\subsubsection{Structure Learning}
The first results provides the sufficient number of samples for exact structure recovery.
\begin{theorem}[Sample Complexity for Structure Learning.]\label{theorem: structure: simple}The Chow-Liu algorithm with input $n$ noisy samples $\bnX^{1:n}$ exactly estimates the hidden tree structure $\TCLn\equiv\T$ with probability at least $1-\delta\in (0,1)$, as long as \begin{align}\label{eq:sf_1}
    n>C\frac{e^{2\beta(1+\mathds{1}_{q\neq0})}}{(1-2q)^4\tanh^2 (\alpha)} \log (p /\delta).
\end{align} 
\end{theorem}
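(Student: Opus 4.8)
The plan is to run the entire analysis through pairwise correlations rather than mutual informations, exploiting two symmetries of the model. First, since $\p(\cdot)\in\isingtree$ has no external field it is invariant under the global sign flip $\bX\mapsto-\bX$, so every hidden marginal is uniform on $\{-1,+1\}$; the $\BSC$ preserves this, so each $\nX_i$ is uniform as well. Thus the pairwise law of $(\nX_i,\nX_j)$ is a function of $\nmu_{i,j}$ alone, and $I(\nX_i,\nX_j)$ is strictly increasing in $|\nmu_{i,j}|$. Because a maximum-weight spanning tree depends only on the \emph{ordering} of its edge weights, the tree returned by Chow--Liu is unchanged if the weights $I(\nX_i,\nX_j)$ are replaced by $|\nmu_{i,j}|$, so establishing $\TCLn\equiv\T$ reduces to controlling empirical correlations $\nmue_{i,j}$. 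Second, since the noise acts coordinatewise and independently of $\bX$, for $i\neq j$ we have the exact identity $\nmu_{i,j}=\E[N_iN_j]\,\mu_{ij}=c_q^{2}\,\mu_{ij}$: the observable correlation structure is a uniform rescaling by $c_q^{2}=(1-2q)^2$ of the hidden one.

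I would then prove population-level correctness with an explicit margin. The tree factorization gives the telescoping identity $\mu_{w,\tilde w}=\prod_{e\in\tpath(w,\tilde w)}\mu_e$ with $\tanh(\alpha)\le|\mu_e|\le\tanh(\beta)<1$, so for every non-tree pair $f=(i,j)$ and every edge $e$ on $\tpath(i,j)$, $|\nmu_f|=c_q^{2}|\mu_e|\prod_{e'\in\tpath(i,j)\setminus\{e\}}|\mu_{e'}|<c_q^{2}|\mu_e|=|\nmu_e|$. By the cut/cycle characterization of maximum-weight spanning trees this makes $\T$ the unique population optimizer and identifies the tightest margin to protect: the binding configuration is a length-two path $a\!-\!b\!-\!j$ with a weak edge $e=(a,b)$, $|\mu_e|=\tanh(\alpha)$, a strong edge $(b,j)$, $|\mu_{b,j}|=\tanh(\beta)$, and competing non-edge $f=(a,j)$, whose observable margin is $|\nmu_e|-|\nmu_f|=c_q^{2}\tanh(\alpha)\bigl(1-\tanh(\beta)\bigr)$.

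The finite-sample core is a concentration bound tailored to this margin. Instead of bounding $\nmue_e$ and $\nmue_f$ separately, I concentrate the difference directly: $\nmue_f-\nmue_e=\tfrac1n\sum_{k}\nX_a^{(k)}\bigl(\nX_j^{(k)}-\nX_b^{(k)}\bigr)$, whose summand lies in $[-2,2]$ with variance at most $2\bigl(1-\nmu_{b,j}\bigr)=2\bigl(1-c_q^{2}\tanh(\beta)\bigr)$. A Bernstein inequality plus a union bound over the $\binom{p}{2}$ fundamental-cycle comparisons then controls every binding margin with probability $1-\delta$ once $n\gtrsim \frac{1-c_q^{2}\tanh(\beta)}{\bigl(c_q^{2}\tanh(\alpha)(1-\tanh(\beta))\bigr)^2}\log(p/\delta)$. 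This is precisely where the noise indicator appears. When $q=0$ the variance $2(1-\tanh(\beta))\asymp e^{-2\beta}$ decays at the same rate as the squared margin, leaving the sharp $e^{2\beta}/\tanh^2(\alpha)$ scaling of Bresler and Karzand; when $q\neq0$ the increment $\nX_j-\nX_b$ remains nonzero with probability $2q(1-q)>0$ even when $X_j=X_b$, so the variance is bounded below by $2(1-c_q^{2})=8q(1-q)$ and no longer shrinks, forcing the cruder $e^{4\beta}$ scaling that is absent in the noiseless analysis. In both regimes the $c_q^{2}$ rescaling of the margin contributes the $c_q^{-4}=(1-2q)^{-4}$ prefactor, and collecting the estimates reproduces \eqref{eq:sf_1}.

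I expect the main obstacle to be the combinatorial bookkeeping behind the population margin: the ``weak edge beside a strong edge'' picture is intuitively the worst case, but making it rigorous requires verifying that no longer fundamental cycle and no intermediate configuration of edge strengths yields a smaller protected margin, and that $1-\nmu_{b,j}$ is simultaneously the correct Bernstein variance proxy along each such cycle. A secondary point, routine but needing care, is upgrading ``$|\nmue_{i,j}|$ is correctly ordered on every binding pair'' to the tree-level statement $\TCLn\equiv\T$ through the spanning-tree exchange argument, while accounting for the fact that empirical marginals are only approximately uniform so that empirical mutual information and $|\nmue_{i,j}|$ order edges consistently up to a lower-order fluctuation.
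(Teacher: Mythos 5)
Your overall route is essentially the one the paper takes: reduce Chow--Liu to ordering the empirical $|\nmue_{i,j}|$ via the monotonicity of mutual information, use $\nmu_{i,j}=(1-2q)^2\mu_{i,j}$ and the telescoping identity $\mu_{w,\tilde w}=\prod_{e\in\tpath(w,\tilde w)}\mu_e$ to get a population margin between each tree edge and every non-edge whose path contains it, and then close the argument with a Bernstein bound on the difference statistic $\frac1n\sum_k\bigl(\nX_w^{(k)}\nX_{\bar w}^{(k)}-\nX_u^{(k)}\nX_{\bar u}^{(k)}\bigr)$ with variance proxy $2(1-\nmu_A)$, plus a union bound (the paper unions over $O(p^3)$ tuples rather than your $\binom{p}{2}$, but that only changes the constant in the logarithm). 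Your worry about the ``worst configuration'' is resolved in the paper by a one-line monotonicity argument: the function $\mu_A\mapsto 4\neps\sqrt{1-(1-2q)^4\mu_A}/(1-\mu_A)$ is increasing, so the extremal case is $\mu_A=\tanh\beta$, exactly your weak-edge-beside-a-strong-edge picture. Your secondary worry about empirical marginals and mutual information is moot, since Algorithm 1 runs the maximum spanning tree directly on the $|\nmue_{i,j}|$.

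There is, however, one concrete gap. The class $\isingtree$ allows $\theta_{ij}$ of either sign, so the correlations $\mu_e$ need not be positive, and concentrating only the difference $\nmue_f-\nmue_e$ around $\nmu_f-\nmu_e$ does \emph{not} establish the ordering $|\nmue_f|<|\nmue_e|$ that the spanning-tree exchange argument requires: when $\mu_e<0$ the population margin $|\nmu_e|-|\nmu_f|$ is not $\pm(\nmu_e-\nmu_f)$ in any fixed sign pattern, and a well-concentrated difference is compatible with $|\nmue_f|\ge|\nmue_e|$. The paper handles this by writing $|\nmue_f|^2-|\nmue_g|^2=(\nmue_f-\nmue_g)(\nmue_f+\nmue_g)$ and concentrating \emph{both} factors --- the variable $\Zf$ with variance proxy $2(1-\nmu_A)$ and the variable $\Yf$ with variance proxy $2(1+\nmu_A)$ (Lemmata B.2--B.4) --- before taking the maximum of the resulting four bounds in Lemma C.1. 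Adding the companion Bernstein bound on the sum $\nmue_f+\nmue_e$ repairs your argument and lands you on the paper's explicit condition \eqref{eq:sufficient_number_of_samples_noise_thm}, from which \eqref{eq:sf_1} follows via $1-\tanh\beta\geq e^{-2\beta}$.
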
 The order with respect to $\beta$ is $\mc{O}( e^{ 4\beta})$ for all $q>0$. The bound in \eqref{eq:sf_1} exactly reduces to the noiseless case~\cite[Theorem 3.2]{bresler2020learning}. Additionally, the explicit form of the result, Theorem \ref{thm:sufficient}, shows that the bound is also a continuous function of $q\in[0,1/2)$. The next proposition gives the necessary number of samples for exact structure recovery.  

\begin{proposition}\label{prop_structure}
No algorithm can recover the structure with probability great than $1/2$ if \begin{align}\label{eq:nec_structure}
    n<C'\frac{e^{2\beta}[1-(4q(1-q))^p]^{-1}}{\alpha\tanh(\alpha)}\log \lp p\rp.
\end{align}
\end{proposition}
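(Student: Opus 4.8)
The plan is to prove this converse by the classical information-theoretic recipe: recast exact structure recovery as a multiple-hypothesis test and invoke Fano's inequality, building on the hard instance behind the noiseless lower bound of \citet{bresler2018learning}. Concretely, I would fix a base tree and construct tree structures $\T_1,\dots,\T_M$ with $M=\Theta(p)$ that pairwise differ by a single, deliberately hard-to-detect edge modification, together with associated zero-field Ising laws $p_1,\dots,p_M\in\isingtree$ whose interaction strengths all lie in $[\alpha,\beta]$. The count $M=\Theta(p)$ delivers the $\log p$ factor, while the noiseless pairwise distinguishability scales as $\DKL(p_j\,\|\,p_{j'})\asymp \alpha\tanh(\alpha)\,e^{-2\beta}$, which is exactly the quantity that reproduces the noiseless necessary-sample bound $C'\frac{e^{2\beta}}{\alpha\tanh(\alpha)}\log p$ once plugged into Fano.

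The genuinely new feature is that the learner only observes $\bnX^{1:n}$, i.e.\ samples from the noisy laws $p_j^{\dagger}$, the law of $\bnX$ when $\bX\sim p_j$, rather than from $p_j$ itself. With $J$ uniform on $[M]$, Fano's inequality pins the error probability above $1/2$ as soon as $n$ is below a constant multiple of $\log M$ divided by the per-sample information, and that per-sample information is governed by the pairwise divergences $\DKL(p_j^{\dagger}\,\|\,p_{j'}^{\dagger})$ of the \emph{noisy} laws. Thus the entire gap between the noisy and noiseless bounds is dictated by how much the channel shrinks these divergences, and by our ability to extract the precise shrinkage factor.

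To compute this shrinkage I would work in the Walsh--Fourier (correlation) basis, where the channel acts diagonally: multiplying each coordinate by independent Rademacher noise damps the parity $\prod_{i\in S}X_i$ by $(1-2q)^{|S|}$, so each order-$|S|$ correlation of $p_j$ is rescaled by $(1-2q)^{|S|}$ in $p_j^{\dagger}$. The subtlety is the global-flip invariance $p_j(\bx)=p_j(-\bx)$ of zero-field Ising models: a configuration $\bx$ and its antipode $-\bx$ carry identical structural information, and the channel's ability to confuse the two is measured by the Hellinger affinity of the $p$-fold BSC between antipodal inputs, namely $\big(2\sqrt{q(1-q)}\big)^p$, whose square is exactly $(4q(1-q))^p$. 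Carrying this through the divergence computation, the noisy pairwise divergence factors as (noiseless distinguishability) times a symmetry correction, of the form $\DKL(p_j^{\dagger}\,\|\,p_{j'}^{\dagger})\asymp \big(1-(4q(1-q))^p\big)\,\DKL(p_j\,\|\,p_{j'})$; substituting into Fano and solving for $n$ converts the factor $1-(4q(1-q))^p$ into the advertised $\big[1-(4q(1-q))^p\big]^{-1}$ multiplier. The limiting behavior is reassuring: at $q=0$ the correction is $1$ and we recover the noiseless bound exactly, whereas as $q\to 1/2$ the factor diverges, matching the fact that the sample requirement must blow up.

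The step I expect to be the crux is exactly this noisy-divergence estimate. Unlike the hidden laws $p_j$, the observed laws $p_j^{\dagger}$ are neither Ising nor tree-structured and have no clean closed form, so $\DKL(p_j^{\dagger}\,\|\,p_{j'}^{\dagger})$ must be teased out by pushing the computation through the tilted partition function $\sum_{\bx}p_j(\bx)\big((1-q)/q\big)^{\langle\bx,\bnx\rangle/2}$ and isolating the contribution of the antipodal overlap responsible for the flip-symmetry confusion. Producing the \emph{exact} factor $1-(4q(1-q))^p$ with the correct dependence on all $p$ coordinates --- rather than a crude strong-data-processing constant that would wash out the $p$-dependence and give a much weaker bound --- is the delicate part, and is what makes the noisy converse substantively more than a data-processing corollary of the noiseless result.
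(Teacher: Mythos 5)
Your overall architecture --- reduce exact recovery to an $M=\Theta(p)$-ary hypothesis test over the Bresler--Karzand chain family, bound the noiseless pairwise divergence by $\alpha\tanh(\alpha)e^{-2\beta}$ up to constants, multiply by a channel contraction factor, and invoke Fano --- is exactly the skeleton of the paper's proof of Theorem \ref{thm:necessary}, of which Proposition \ref{prop_structure} is the informal version. The divergence the paper controls is the symmetric KL divergence, which for Ising models has the closed form \eqref{eq:SKL} and yields $\SKL(P_{\theta^0}\|P_{\theta^i})\leq 4\alpha\tanh(\alpha)e^{-2\beta}$; that is a cosmetic difference from your use of $\DKL$.

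The genuine gap is in how you obtain the factor $1-(4q(1-q))^p$. The paper gets it essentially for free: this quantity is precisely the tensorized strong data processing constant of the product channel $\BSC$, i.e.\ $\eta_{\text{KL}}\leq 1-\lp 1-(1-2q)^2\rp^p=1-(4q(1-q))^p$ from Polyanskiy and Wu, and since the SDPI holds uniformly over \emph{all} pairs of input laws, the bound $\SKL(\nP_{\theta^k}\|\nP_{\theta^\ell})\leq[1-(4q(1-q))^p]\,\SKL(P_{\theta^k}\|P_{\theta^\ell})$ requires no computation specific to the chain construction. You explicitly set this route aside on the grounds that ``a crude strong-data-processing constant \ldots{} would wash out the $p$-dependence,'' but that is backwards: the $p$-dependence \emph{is} the SDPI constant of the product channel, and the only genuinely crude alternative is the trivial DPI $\eta_{\text{KL}}\leq 1$, which still yields a valid (merely weaker) converse. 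In its place you propose to compute $\DKL(p_j^{\dagger}\|p_{j'}^{\dagger})$ directly and assert it is $\asymp(1-(4q(1-q))^p)\,\DKL(p_j\|p_{j'})$ because the squared Hellinger affinity of the $\BSC$ between antipodal inputs equals $(4q(1-q))^p$. That identification is not justified: the KL contraction between two noisy tree laws is not governed by the antipodal overlap of the channel (the noise damps every even-order correlation by a different power of $(1-2q)$, and the global-flip symmetry is shared by both hypotheses, so it does not isolate the discriminating statistic), and the two-sided $\asymp$ is in fact false in general --- the paper itself notes the SDPI is expected to be tight only in the low-temperature/repetition-code regime. Only the \emph{upper} bound on the noisy divergence is needed for Fano, and the clean way to get it with the stated constant is exactly the citation you discarded. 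As written, the crux step of your argument is an unproven claim, so the proof does not go through without either carrying out that (considerably harder) direct computation or reinstating the SDPI.
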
 Note that the terms $(1-2q)^{-4}$ and $[1-(4q(1-q))^p]^{-1}$ introduce a gap between the sample complexity of \eqref{eq:sf_1} and \eqref{eq:nec_structure}. However, the sample complexity of Theorem \ref{theorem: structure: simple} is indeed accurate. To illustrate this experimentally, we show that the theoretical and experimental bounds exactly match, see Figure \ref{fig:sfig1}. The latter indicates that the Chow-Liu algorithm requires exactly the number of samples that our theoretical result suggest (see Figure \ref{fig:sfig1}). On the other hand, Proposition \ref{prop_structure} provides the necessary number of samples, for any algorithm. 
Finally, we conjecture that the bound of Proposition 1.2 is tight only under the low temperature regime $|\theta_{i,j}|\to\infty$ for all $i,j\in\mc{E}$. The derivation of generalized tighter forms of the bound in \eqref{eq:nec_structure} is challenging and left for future work.

\subsubsection{Predictive Learning}\label{Predictive_Learning_Intro}

To learn the tree-shaped distribution $\p(\cdot)\in \isingtree$ of $\bX$ from $n$ noisy samples $\bnX^{1:n}$, we first estimate the correlations $\hat{\mu}^{\dagger}_{i,j}$ for all $i,j\in\mc{V}$. We then estimate the tree structure $\TCLn$ by running the Chow-Liu algorithm with input the candidate edge weights $\hat{\mu}^{\dagger}_{i,j}$ and finally evaluate the estimator of $\p(\cdot)$ (by matching correlations) as follows\footnote{The distribution in \eqref{eq:estimator} is a function of $\bx$, however we suppress the notation for consistency with prior work and for sake of space. } \begin{align}
     \RIPn (\hat{\p}_{\dagger})\triangleq\frac{1}{2} \prod_{\lp i,j\rp\in \mc{E}_{\TCLn}} \frac{1+x_{i}x_{j}\frac{\nmue_{i,j}}{(1-2q)^2}}{2},\quad \bx\in \{-1,+1\}^p.\label{eq:estimator}
\end{align}
Note that one restriction of our approach is that the distribution estimator requires the value $q$ to be known. The same restriction appears in other structure learning from noisy data approaches~\citep{goel2019learning}. However, in our setting $q$ is required only for the predictive learning, while the Chow-Liu algorithm and the structure estimation does not require $q$ to be known. Under the assumption that $q$ is unknown, one can first learn its value  through an independent procedure~\cite[Section 5]{goel2019learning}. The accuracy of the estimated distribution in \eqref{eq:estimator} is measured by the small-set Total Variation (ssTV), that captures the estimation error on the $k^\text{th}$-order marginals~\citep{georgii2011gibbs,rebeschini2015can,bresler2020learning}. Let $P_{\mc{S}},Q_{\mc{S}}$ denote the marginals of $P,Q$ on a set $\mc{S}\subset \mc{V}$, and $|\mc{S}|=k$. Then the $k^{\text{th}}$ order ssTV of $P$ and $Q$ is defined as \begin{align}
    \label{eq:SSTV_DEF_intro}
    \mc{L}^{(k)}\left(P,Q\right) & \triangleq \sup_{\mc{S}:\left|\mc{S}\right|=k} d_{\text{TV}}\left(P_{\mc{S}},Q_{\mc{S}}\right).
    \end{align}The next results provides the necessary number of samples for accurate distribution estimation by guaranteeing that the $\Lnorm$ is less than a small positive number $\eta$ with high probability. We provide guarantees on higher-order marginals ($k>2$) in Section \ref{higher order moments - Isserlis'}.

\begin{figure}[!t]
  \centering
  \includegraphics[width=0.6\linewidth]{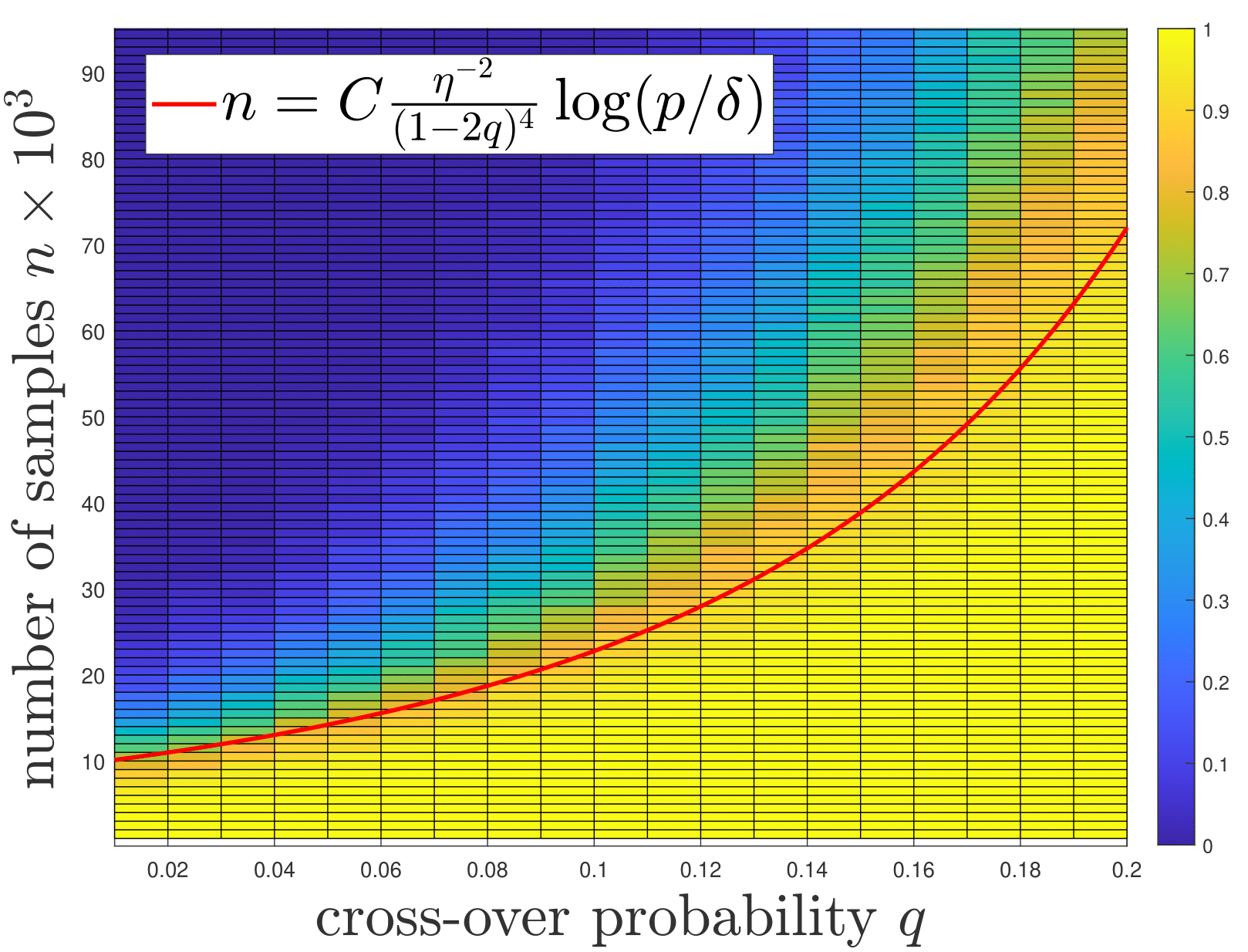}
\caption{The simulation corresponds to predictive learning. Comparison of the experimental results (heat-map) and the theoretical bound of Theorem \ref{thm:Main_result:short}. The colored regions denote different values of the estimated probability of error $\delta$ (ssTV to be greater than a fixed number $\eta$). The value of $\delta$ varies between $0$ and $1$ while the parameters $\eta=0.03,\beta=1.1,p=31$ are fixed. The code of the experiment is available at \url{https://github.com/KonstantinosNikolakakis/Predictive-Learning}.}\label{fig:sfig2}
\end{figure}

\begin{theorem}[Sample Complexity for Predictive Learning]\label{thm:Main_result:short}  Fix $\delta\in (0,1)$. Choose $\eta>0$ (independent of $\delta$). If
\begin{align}
    n\geq C\max \left\{\frac{1}{ \eta^2(1-2q)^4}, \frac{ e^{2\beta(1+\mathds{1}_{q\neq 0})}}{(1-2q)^4} , \frac{e^{4\beta}}{\eta^2}\mathds{1}_{q\neq 0} \right\}\log\lp\frac{p}{\delta}\rp\label{eq:short_main_result}
\end{align} then \begin{align}
	\P \lp \Lnorm \lp \p (\cdot),\RIPn (\hat{p}_{\dagger} ) \rp \leq \eta \rp \geq 1-\delta.
	\end{align} 
\end{theorem}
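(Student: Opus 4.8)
The plan is to reduce the $\Lnorm$ guarantee to a uniform control of pairwise correlation errors, and then to run a Chow--Liu/maximum-weight-spanning-tree argument in the spirit of the noiseless analysis, carefully tracking the de-noising rescaling by $(1-2q)^{-2}$. First I would reduce the ssTV to correlations: both $\p(\cdot)\in\isingtree$ and the estimator $\RIPn(\hat{\p}_{\dagger})$ of \eqref{eq:estimator} are zero-field tree-structured sign distributions, so every singleton marginal is uniform and every pairwise marginal of a pair $(u,v)$ has the affine form $(1+x_ux_v\rho_{uv})/4$. Hence $d_{\text{TV}}$ of two such marginals equals $\tfrac12|\rho_{uv}-\rho'_{uv}|$, and by the sign-valued Isserlis factorization (Section \ref{higher order moments - Isserlis'}) the relevant correlations factor along paths. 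Writing $\hat\mu_e \triangleq \nmue_{e}/(1-2q)^2$ for the de-noised edge estimate, this yields
\begin{align}
\Lnorm\lp \p(\cdot),\RIPn(\hat{\p}_{\dagger})\rp = \frac12 \sup_{u,v\in\mc{V}} \Big| \prod_{e\in\tpath_{\T}(u,v)}\mu_e \;-\; \prod_{e\in\tpath_{\TCLn}(u,v)}\hat\mu_e \Big|,
\end{align}
so it suffices to show the supremum on the right is at most $2\eta$ with probability at least $1-\delta$.

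Next I would establish edge-level concentration after de-noising. Each $\nmue_{e}$ is an empirical average of $n$ i.i.d.\ $\pm1$ variables with mean $\nmu_e = (1-2q)^2\mu_e$ (the BSC shrinks correlations by exactly $(1-2q)^2$ since $\nX_i=N_i X_i$ with $\E[N_i]=1-2q$). Hoeffding together with a union bound over the $\binom{p}{2}$ pairs gives, for any $t>0$,
\begin{align}
\P\Big( \max_{e} |\hat\mu_e - \mu_e| > t \Big) \le p^2 \exp\big( -\tfrac12 n t^2 (1-2q)^4 \big).
\end{align}
Taking $t = \Theta(\eta)$ forces the first sample-size term $\tfrac{1}{\eta^2(1-2q)^4}\log(p/\delta)$ and controls the \emph{estimation} error along any fixed path via the telescoping identity $\prod a_e-\prod b_e=\sum_\ell(\prod_{k<\ell}a_{e_k})(a_{e_\ell}-b_{e_\ell})(\prod_{k>\ell}b_{e_k})$.

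The crux --- and the step I expect to be the main obstacle --- is to control the \emph{structural} error arising when $\TCLn\neq\T$. Rather than insisting on exact recovery, I would use that Chow--Liu returns the maximum-weight spanning tree for edge weights monotone in $|\nmue_e|$ (equivalently in $|\hat\mu_e|$, since de-noising is a common rescaling), and that the population-weight MWST is exactly $\T$ by correlation decay (each path product is at most each edge it bypasses). On the concentration event an edge can be swapped only when the competing weights agree up to $O(t)$, so weak edges may be misplaced but each misplacement perturbs any path product by only $O(t)$; the ``strong'' edges, whose correlations sit near $\tanh\beta$, must instead be recovered correctly, which is the structure-learning burn-in and forces the second term $\tfrac{e^{2\beta(1+\mathds{1}_{q\neq0})}}{(1-2q)^4}\log(p/\delta)$ (cf.\ Theorem \ref{theorem: structure: simple}, now without the $\tanh^2\alpha$ factor since only strong edges matter for predictions). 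The genuinely noisy phenomenon is that $\hat\mu_e=\nmue_e/(1-2q)^2$ can exceed $1$ in magnitude when $q\neq0$, so a factor near $\tanh\beta\approx1$ may overshoot; since the sensitivity of a path product to such a factor scales like $(1-\mu_e^2)^{-1}\sim e^{2\beta}$, damping this overshoot to $O(\eta)$ demands the extra term $\tfrac{e^{4\beta}}{\eta^2}\mathds{1}_{q\neq0}\log(p/\delta)$, which correctly vanishes at $q=0$.

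Finally I would assemble the pieces by the triangle inequality $|\prod_{\T}\mu_e-\prod_{\TCLn}\hat\mu_e|\le|\prod_{\T}\mu_e-\prod_{\TCLn}\mu_e|+|\prod_{\TCLn}\mu_e-\prod_{\TCLn}\hat\mu_e|$, bounding the first (structural) term by the argument above and the second (estimation) term by the telescoped concentration bound, each on an event of probability at least $1-\delta/2$. Requiring $n$ to exceed the maximum of the three displayed terms then yields $\sup_{u,v}|\cdots|\le2\eta$ with probability at least $1-\delta$, i.e.\ $\Lnorm\le\eta$. The delicate accounting is entirely in the third paragraph: quantifying how an incorrectly recovered weak edge, and a de-noising overshoot on a strong edge, propagate through the path product, while keeping the union-bound cost logarithmic in $p$.
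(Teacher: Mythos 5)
Your overall architecture (triangle inequality splitting the path-product error into a structural term and a parameter-estimation term, with only strong edges needing exact recovery) matches the paper's decomposition, and your reduction of $\Lnorm$ to path products of correlations is correct. The genuine gap is in how you control the parameter-estimation term along a path. You claim that per-edge Hoeffding concentration at level $t=\Theta(\eta)$ plus the deterministic telescoping identity controls $\bigl|\prod_{e}\mu_e-\prod_{e}\hat\mu_e\bigr|$ over a path of length $d$. But the telescoped sum has $d$ terms, each of magnitude up to $t$ times a product of correlations; bounding those products by $1$ gives $dt$ with $d$ up to $p-1$, and bounding them by $(\tanh\beta)^{d-1}$ gives at best $\sup_{d} d(\tanh\beta)^{d-1}t\asymp e^{2\beta}t$. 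Either way you would need $t=O(\eta e^{-2\beta})$ or $t=O(\eta/p)$, i.e.\ a first sample-complexity term of order $e^{4\beta}\eta^{-2}(1-2q)^{-4}$ or $p^{2}\eta^{-2}(1-2q)^{-4}$, which is strictly worse than the claimed $\eta^{-2}(1-2q)^{-4}$ and fails to recover the noiseless Bresler--Karzand rate at $q=0$. The paper avoids this loss with the cascade event $\Ecascn$: the telescoped terms $M^{\dagger}_k$ form, after centering, a martingale difference sequence whose conditional variances sum to a quantity bounded uniformly in $d$ (the function $G(\beta,q)\lesssim(1-2q)^{-2}$), so a supermartingale Bennett-type inequality (Lemma \ref{Lemma Ecascn}) gives end-to-end path error at most $\eta/3$ already for $n\gtrsim \eta^{-2}(1-2q)^{-4}\log(p/\delta)$. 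This concentration step, which exploits the near-independence of the edge-wise errors along a path, is the missing idea.

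Relatedly, your explanation of the third term $e^{4\beta}\eta^{-2}\mathds{1}_{q\neq0}$ as a ``de-noising overshoot'' $|\hat\mu_e|>1$ amplified by a sensitivity $(1-\mu_e^2)^{-1}$ is not the actual mechanism and would be difficult to make rigorous. In the paper this term arises because, unlike in the noiseless case, the conditional expectations $\E\bigl[Z^{(i)}_k\mid\mc{F}^{k}_{i-1}\bigr]$ in the martingale construction are nonzero when $q>0$: the empirical correlation of the preceding edge biases the conditional law of $Y_kY_{k+1}$ (Lemma \ref{Lemma Cond_Prob}). The resulting bias $\Delta$ scales like $\frac{1-(1-2q)^2}{1-(1-2q)^4\tanh^2\beta}\tanh^2(\beta)\,e^{2\beta}\sqrt{\log(p/\delta)/n}$, and forcing $\Delta\le\eta/4$ is exactly what produces $n\gtrsim e^{4\beta}\Gamma(\beta,q)\eta^{-2}\log(p/\delta)$. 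Without identifying and bounding this bias, the supermartingale argument cannot be closed in the noisy case, and without the supermartingale argument the first term of the bound cannot be attained.
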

Note that the dependence on $\beta$ is $\mc{O}(e^{4\beta})$ for accurate distribution learning from noisy data (similarly to the structure learning task, Theorem \ref{theorem: structure: simple}). The bound in \eqref{eq:short_main_result} \textit{exactly} reduces to the noiseless setting bound by~\cite[Theorem 3.3]{bresler2020learning}. Theorem \ref{thm:Main_result:short} is a short version of the main result of the paper. The explicit statement, Theorem \ref{thm:Main_result}, shows that the bound is also continuous at $q\to 0$.


Conversely, the following proposition gives an upper bound on the necessary number of samples for accurate marginal distributions' estimation under the assumption $\beta>\alpha$.
\begin{proposition}\label{proposition_predictive} Fix $\eta>0$ such that $\eta\leq (\tanh(\beta)-\tanh(\alpha))/2$. Then
no algorithm can accurately estimate the distribution of the hidden variables (ssTV less than $\eta>0$) with probability greater than $1/2$ if \begin{align}
    n<C'\eta^{-2}[1-(4q(1-q))^p]^{-1}\log (p).\label{eq:PL_nec_sect_1}
\end{align}
\end{proposition}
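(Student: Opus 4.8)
This is a minimax impossibility statement, so the plan is to reduce accurate predictive learning to a multiple--hypothesis testing problem over members of $\isingtree$ and invoke Fano's inequality. Concretely, I would exhibit a finite family $\{\p_1,\dots,\p_M\}\subset\isingtree$ of hidden distributions with $M$ of order $p$ such that (i) the family is \emph{packed} in the prediction metric, $\Lnorm(\p_a,\p_b)\ge(\tanh(\beta)-\tanh(\alpha))/2$ for all $a\neq b$, while (ii) the induced observed laws $\{\np_1,\dots,\np_M\}$ (the outputs of $\BSC$) are statistically \emph{close}, so that $n$ noisy samples cannot identify the index. Property (i) implies that any estimator achieving $\Lnorm(\p,\hat{\p}_\dagger)\le\eta$ can be rounded into a correct test for $a$ (the factor between $\eta$ and the packing radius is absorbed into $C'$, exactly as the hypothesis $\eta\le(\tanh(\beta)-\tanh(\alpha))/2$ permits); hence if testing must fail with probability at least $1/2$ on the worst instance, so must estimation. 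Fano's inequality then forces failure whenever $n\cdot\max_{a,b}\DKL(\np_a\,\|\,\np_b)\lesssim\log M\asymp\log(p)$, which is precisely the regime \eqref{eq:PL_nec_sect_1} once the divergence is controlled.

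For the packing I would fix the tree and vary a single distinguished edge between the two admissible correlation levels. Because the hidden Ising model carries no external field, every single--node marginal is uniform and uninformative, so the separation must live in the pairwise marginals; for a two--node Ising marginal one computes $d_{\text{TV}}$ between correlations $\rho_1,\rho_2$ equal to $|\rho_1-\rho_2|/2$. Taking edge correlations in $\{\tanh(\alpha),\tanh(\beta)\}$ and letting the distinguished (strong) edge range over the $\Theta(p)$ edge positions of the tree yields $M\asymp p$ instances that pairwise differ, on some pair of nodes, by $(\tanh(\beta)-\tanh(\alpha))/2\ge\eta$ in total variation, establishing property (i).

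The crux is property (ii): upper bounding the per--sample divergence $\DKL(\np_a\,\|\,\np_b)$ between the \emph{noisy} laws and extracting the factor $[1-(4q(1-q))^p]$. Here I would pass to the Walsh--Fourier (sign) basis on $\{-1,+1\}^p$, in which $\BSC$ acts diagonally, multiplying the order-$|\mc{S}|$ coefficient $\E[\prod_{i\in\mc{S}}X_i]$ by $(1-2q)^{|\mc{S}|}$; the tree (Isserlis--type) product form expresses each such coefficient as a product of edge correlations $\mu_e$ over the minimal subtree spanning $\mc{S}$. Thus the $\chi^2$/KL between the two observed laws becomes a weighted sum of squared coefficient gaps with weights $(1-2q)^{2|\mc{S}|}$. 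The combination $4q(1-q)=(2\sqrt{q(1-q)})^2$ is exactly the squared Bhattacharyya coefficient of a single channel use, and its $p$-th power is the affinity between the two maximally separated (near--antipodal) hidden configurations; bounding the noisy affinity by this quantity lets me assemble $\max_{a,b}\DKL(\np_a\,\|\,\np_b)\lesssim\eta^2\,[1-(4q(1-q))^p]$, which upon substitution into Fano yields \eqref{eq:PL_nec_sect_1}. Note $1-(4q(1-q))^p\ge(1-2q)^4$, so this necessary bound sits strictly below the sufficient bound of Theorem~\ref{thm:Main_result:short}, consistent with the announced gap.

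The hard part will be this last divergence estimate. Two difficulties compound: first, the observed law $\np$ is \emph{not} tree--structured, so $\DKL$ has no product closed form and must be handled through the Fourier expansion and partition--function ratios rather than edge factorization; second, the global sign symmetry $\p(\bx)=\p(-\bx)$, inherited by $\np$, kills all odd--order coefficients and must be tracked so that the surviving even--order contributions assemble into the stated $[1-(4q(1-q))^p]$ rather than the tighter per--level factor $(1-2q)^2$ — this is the step where a possibly loose Bhattacharyya bound is introduced, which is why the result is expected tight only in the low--temperature regime. Finally I would verify that all $M\asymp p$ instances remain inside $\isingtree$ and stay simultaneously $\Lnorm$-separated, and that the Fano reduction applies to arbitrary randomized estimators, both of which are standard once the packing and the divergence bound are in hand.
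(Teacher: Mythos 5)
Your overall strategy (a Fano reduction over a one-parameter family of tree models, combined with a contraction estimate for the $\BSC$) is the same as the paper's, but two of your key steps have genuine gaps. First, your packing is mis-calibrated: you separate hypotheses by varying a distinguished edge between the correlation levels $\tanh(\alpha)$ and $\tanh(\beta)$, so two hypotheses differ by $(\tanh(\beta)-\tanh(\alpha))/2$ in pairwise total variation. For Fano to yield the rate $n\gtrsim\eta^{-2}\log p$ you need hypotheses that are $\Theta(\eta)$-separated \emph{and} have per-sample KL divergence $O(\eta^{2})$; with your construction the hidden-layer divergence is of order $(\tanh\beta-\tanh\alpha)^{2}$, which can vastly exceed $\eta^{2}$ (Pinsker already forces $\KL\gtrsim(\tanh\beta-\tanh\alpha)^{2}$), so Fano would only give the much weaker bound $n\gtrsim(\tanh\beta-\tanh\alpha)^{-2}\log p$. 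The paper instead perturbs a single chain edge from $\alpha$ to $\arctanh(\tanh(\alpha)+2\eta)$ --- the hypothesis $\eta\le(\tanh\beta-\tanh\alpha)/2$ is used only to keep the perturbed model inside $\isingtree$, not as the packing radius --- and then bounds the hidden-layer divergence in closed form via $\SKL\left(\bds{\theta}||\bds{\theta}'\right)=\sum_{e}(\theta_{e}-\theta'_{e})(\mu_{e}-\mu'_{e})\le 4\eta^{2}/\left(1-\left[\tanh\alpha+2\eta\right]^{2}\right)$ using the mean value theorem.

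Second, your route to the factor $[1-(4q(1-q))^p]$ is not carried out and, as sketched, would not obviously close. The paper obtains $\KL\lp\np_{a}||\np_{b}\rp\le[1-(4q(1-q))^p]\,\KL\lp\p_{a}||\p_{b}\rp$ by directly invoking the strong data processing inequality for the $p$-fold binary symmetric channel due to Polyanskiy and Wu, which is a nontrivial external result. Your proposed Walsh--Fourier expansion plus a Bhattacharyya-affinity bound conflates two different quantities: $4q(1-q)$ is the square of the single-use Bhattacharyya coefficient, but the KL contraction coefficient of the product channel is not obtained by a per-coefficient $(1-2q)^{2|\mc{S}|}$ weighting of a $\chi^{2}$-type expansion (that computation controls $\chi^{2}$-contraction for product input measures, which your tree-structured, non-product $\np$ is not). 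If you replace that step by a citation of the SDPI and recalibrate the packing to a $2\eta$ perturbation in correlation, your argument becomes essentially the paper's proof.
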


\begin{table}[]
\centering
\begin{tabular}{|p{2.2cm}||p{4.2cm}|p{7.5cm}|}
\hline
 \multicolumn{3}{| c |}{ Sufficient Number of Samples}
 \\\hline Task/Setting &  Noiseless (prior work) &  Noisy   \\ \hline
 \vspace{+1pt}Structure Learning&     
 \vspace{+3pt}$C\frac{e^{2\beta}}{\tanh^2 (\alpha)} \log (p /\delta)$  
 & \vspace{+1pt} $C\frac{e^{2\beta(1+\mathds{1}_{q\neq0})}}{(1-2q)^4\tanh^2 (\alpha)} \log (p /\delta)$\vspace{+1pt} \\ \hline
 \vspace{+1pt}Predictive Learning& \vspace{+4pt}$ C\max\{\eta^{-2}, e^{2\beta}\}\log (p/\delta)$  & \vspace{+1pt}$  C\max \left\{\frac{\eta^{-2}}{ (1-2q)^4}, \frac{ e^{2\beta(1+\mathds{1}_{q\neq 0})}}{(1-2q)^4} , \frac{e^{4\beta}\mathds{1}_{q\neq 0}}{\eta^2} \right\}\log\lp p/\delta\rp$   \\ \hline   
\end{tabular}
\caption{Sufficient number of samples for accurate structure and predictive learning.  \label{table:Results}}
\end{table} 

\begin{table}[]
\centering
\begin{tabular}{|p{2.2cm}||p{4.2cm}|p{7.5cm}|}
\hline 
 \multicolumn{3}{| c |}{ Necessary Number of Samples}\\
\hline Task/Setting
 &  Noiseless (prior work)&  Noisy   \\ \hline
 \vspace{+1pt}Structure learning& \vspace{+3pt}$C'\frac{e^{2\beta}}{\alpha\tanh(\alpha)}\log \lp p\rp$ &   \vspace{+1pt}$C'\frac{e^{2\beta}[1-(4q(1-q))^p]^{-1}}{\alpha\tanh(\alpha)}\log \lp p\rp$ \\ \hline
 \vspace{+1pt}Predictive learning&\vspace{+3pt} $ C'\eta^{-2}\log (p)$  &\vspace{+1pt} $ C'\eta^{-2}[1-(4q(1-q))^p]^{-1}\log (p)$   \\ \hline  
\end{tabular}
\caption{Necessary number of samples for structure and predictive learning. \label{table:Results1}}
\end{table}

A quick comparison of \eqref{eq:short_main_result} and \eqref{eq:PL_nec_sect_1} shows that there is a gap between the sufficient and necessary number of samples. Our experiments (Figure \ref{fig:sfig2}) confirm the accuracy of our theoretical results. For instance the bound of Theorem \ref{thm:Main_result:short} exactly matches the experimental curve. For further discussion related to the gap between the upper and lower bounds see Section \ref{MLE}. Further, we conjecture that bound in \eqref{eq:PL_nec_sect_1} is tight only under the low temperature regime, similarly to the Proposition \ref{prop_structure}.  The derivation of tighter characterization of the necessary number of samples Propositions \ref{prop_structure} and \ref{proposition_predictive} remains an problem for future work. Additional plots of the experiments are provided in Section \ref{Simulations}. Finally, Table \ref{table:Results} and \ref{table:Results1} summarize the state-of-the-art bounds of the noiseless setting by~\citet{bresler2020learning} and the extended version under the noisy setting that we study in this paper.

To summarize, the following holds for both structure and predictive learning: the dependence on the parameter $\beta$ is of the order $\mc{O}(e^{2\beta})$ for $q= 0$ and becomes $\mc{O}(e^{4\beta})$ for positive values of $q$. Further, the bounds are continuous functions of $q$, as our results suggest (for the continuity see the explicit form of the results Theorem \ref{thm:sufficient} and Theorem \ref{thm:Main_result}.) Similarly to the noiseless case, the following statement holds when noise exists as well: Under the high temperature regime ($\alpha$ close to zero), structure learning requires much more data than the predictive learning task, because of the $\tanh^2(\alpha)$ in the denominator of the bound in \eqref{eq:sf_1}. On the contrary, the required number of samples for predictive learning \eqref{eq:short_main_result} does not depend on $\alpha$. Specifically, exact structure recovery is not necessary for learning the distribution efficiently, that is, \textit{weak} edges' identification failure does not affect the predictive learning task. We refer the reader to Section \ref{necessary events discussion} for the definition of \textit{weak/strong} edges and additional explanation. Finally, for $q>0$ an extra term that involves both $\beta$ and $\eta$ appears in the bound of Theorem \ref{thm:Main_result:short}, while for values of $q$ close to zero and $q=0$ vanishes. 

The pairwise correlations of end-point vertices ($E[X_i X_j]$: $(i,j)\in\mc{E}$) are sufficient statistics, and as expected, the accuracy of pairwise marginals corresponds to accuracy of higher order marginals and accurate estimation of higher order moments. In Sections \ref{higher order moments - Isserlis'} and \ref{higher_order_moments} we provide a method for evaluating higher order moments (and marginals) from noisy observations. Our approach is based on an equivalent of Isserlis' theorem for tree-structured Ising models that is also of independent interest.

\section{Preliminaries and Problem Statement}
In this section, we introduce our model of \textit{hidden sign-valued Markov random fields on trees}.

\subsection{Undirected Graphical Models}

We consider \textit{sign-valued graphical models} where the joint distribution $\p (\cdot)$ has support $\{-1,+1\}^p$. Let $\bX=\lp X_1,X_2,\ldots,X_p \rp \in \{-1,+1\}^p$ be a collection of sign-valued (binary) random variables. Then, $\mathds{1}_{X_i =x_i} \equiv (1+x_i X_i)/2$, and the distribution of $\bX$ is	
    \begin{align*}
	\p(\bx)&=\E\left[ \prod^{p}_{i=1} \mathds{1}_{X_i =x_i}\right] = \frac{1}{2^{p}} \left[ 1 + \sum_{k \in [p]}  \sum_{\mc{S} \subset \mc{V}: |\mc{S}| = k} \E\left[ \prod_{s \in \mc{S}} X_s \right] \prod_{s \in \mc{S}} x_s  \right],\quad\bx\in\{-1,+1\}^{p}. \numberthis \label{eq:general}
	\end{align*} 
In this paper we assume that the marginal distributions of the $X_i$ are uniform, that is,
	\begin{align}\label{eq:uniform}
    	\P \left( X_{i} = \pm 1 \right)=\frac{1}{2},\quad \forall i\in \mc{V}.
    \end{align} 
Thus, $\E[X_i] = 0$, for all $i \in \mc{V}$. A distribution is Markov with respect to a hypergraph $\mc{G}=(\mc{V},\mc{E})$ if for every node $i$ in the set $\mc{V}$ it is true that $\P \lp X_i \vert\ \bx_{\mc{V}\setminus \{i\}} \rp=\P \lp X_i \vert\ \bx_{\mc{N}(i)} \rp$,  
where $\mc{N}(i)$ is the set of neighbors of $i$ in $\mc{G}$. One subclass of distributions for which the Markov property holds is the \textit{Ising model}, in which the random variables $X_i$ are sign-valued and the hypergraph is a simple undirected graph, indicating that variables have only pairwise and unary interactions. The joint distribution for the \textit{Ising model} with zero external field is given by
	\begin{align}
    \p(\bx) &= \frac{1}{Z(\theta)} \exp\left\{ \sum_{\left(s,t\right)\in\mc{E}}
    \theta_{st} x_{s} x_{t} \right\} ,\quad\bx\in\{-1,1\}^{p}.
    \label{eq:Ising_model_zero_external}
	\end{align}
$\{ \theta_{st}: (s,t) \in \mc{E} \}$ are parameters of the model representing the interaction strength of the variables and $Z(\cdot) \in (0,\infty)$ is the \textit{partition function}. 
These interactions are expressed through potential functions $\exp( \theta_{st}x_s x_t )$ that ensure that the Markov property holds with respect to the graph $G=\lp \mc{V},\mc{E} \rp$. 
Next, we discuss the properties of distributions of the form of \eqref{eq:general}, \textit{which are Markov with respect to a tree}.

\subsection{Sign-Valued Markov Fields on Trees}

From prior work by~\citet{lauritzen1996graphical}, it is known that any distribution $\p(\cdot)$ that is Markov with respect to a tree (or forest) $\T=(\mc{V},\mc{E})$ factorizes as
	\begin{align}\label{eq:tree_shaped}
	\p(\bx)
	&= \prod_{i\in V} \p\left(x_{i}\right) 
	\prod_{ (i,j) \in \mc{E}} \frac{\p(x_{i},x_{j})}{\p(x_{i}) \p(x_{j})},\hspace{+0.3cm} \bx\in\{-1,+1\}^p,
	\end{align} and we call $\p (\cdot)$ as tree (forest) structured distribution, to indicate the factorization property.
If the distribution $\p(\cdot)$ has the form of \eqref{eq:general} with $\P (X_i=\pm 1)=1/2$, for all $i\in\mc{V}$, and is Markov with respect to a tree $\T$, then
	\begin{align}\label{eq:tree}
	\p(\bx) = \frac{1}{2}\prod_{\left(i,j\right)\in \mc{E}} 
		\frac{1+x_{i}x_{j}\E\left[X_{i}X_{j}\right]}{2}
	\end{align}
and
	\begin{align}\label{eq:CDP_1}
	\E\left[X_{i}X_{j}\right]
	=\prod_{e\in\tpath\left(i,j\right)} \mu_{e}, \quad \text{for all } i,j\in\mc{V}.
	\end{align}
(see Appendix A, Lemma~\ref{Tree_structured_Model_Lemma}). Additionally, let us state the definition of the so-called \textit{Correlation (coefficient) Decay Property (CDP)}, that will be of central importance in our analysis.
\begin{defn}
The CDP holds if and only if $|\E[X_i X_k]|\geq |\E[X_\ell X_m]|$ for all tuples $\{i,k,\ell,m\}$ $\subset \mc{V}$ such that $\tpath(i,k) \subset \tpath(\ell,m)$.
\end{defn}

The CDP is a well known attribute of acyclic Markov fields (see, e.g., \cite{tan2010learning}, \cite{bresler2020learning}). Further, it is true that the products $X_i X_j$ for all $(i,j)\in\mc{E}$ are independent and the CDP holds for every $\p(\cdot)$ of the form of \eqref{eq:general}, that factorizes with respect to a tree (see Lemma~\ref{independent_products}, Appendix A). This is a consequence of property \eqref{eq:CDP_1} and the inequality $|\mu_e|\leq 1$, for all $e\in\mc{E}$. We can interpret the CDP as a type of data processing inequality (see~\citet{cover2012elements}). The connection is clear through the relationship between the mutual information $I(X_i,X_j)$ and the correlations $\E[X_i X_j]$, namely,
	\begin{align}\label{eq:MI}
	I\left(X_{i},X_{j}\right)
	&=\frac{1}{2} \log_{2} \lp \lp 1-\E\left[X_{i}X_{j}\right]\rp^{1-\E\left[X_{i}X_{j}\right]} \lp 1+\E\left[X_{i}X_{j}\right]\rp^{1+\E\left[X_{i}X_{j}\right]}\rp,
	\end{align} 
for any pair of nodes $i,j\in\mc{V}$. This expression shows that the mutual information is a symmetric function of $\E\left[X_{i}X_{j}\right]$ and increasing with respect to $\left|\E\left[X_{i}X_{j}\right]\right|$ (see also Lemma~\ref{MI_function_of_corr}, Appendix A).

 \textbf{Tree-structured Ising models:} Despite its simple form, the Ising model has numerous useful properties. In particular, \eqref{eq:tree}, \eqref{eq:CDP_1} hold for any tree-structured Ising model with uniform marginal distributions and $\theta_r = 0$ for all $r \in \mc{V}$. Furthermore,  
\begin{align}
	\E[X_i X_j] &= \tanh\theta_{ij}, 
		\quad \forall (i,j)\in \mc{E}_{\T},
	\label{eq:Tree_Structured_Ising_Model_tanh}
	\end{align} the latter implies that
\begin{align}
	\p(\bx) &= \frac{1}{2} \prod_{\left(i,j\right)\in \mc{E}_{\T} } \frac{1+x_{i}x_{j}\tanh\theta_{ij}}{2},\quad \bx\in \{-1,1\}^{p},\quad \alpha\leq|\theta_{ij}|\leq\beta,\numberthis \label{eq:Tree_Structured_Ising_Model}\\
    \E[X_i X_j] &= \prod_{e\in\tpath\left(i,j\right)} \mu_{e}=
		\prod_{e\in\tpath\left(i,j\right)} \tanh\left(\theta_{e}\right), \quad \forall i,j\in \mc{V}.
		\label{eq:Covariates_prod}
	\end{align}
A short argument showing \eqref{eq:Tree_Structured_Ising_Model_tanh} and \eqref{eq:Tree_Structured_Ising_Model} is included in Appendix A, Lemma~\ref{Tree_structured_Model_Lemma2}.
For the rest of the paper, we assume a tree-structured Ising model for the hidden variable $\bX$, that is, the distribution of $\bX$ has the form of \eqref{eq:tree}. We also impose a reasonable compactness assumption on the respective interaction parameters, as follows.

\begin{assumption}\label{assume:isingtree}
There exist $\alpha$ and $\beta$ such that for the distribution $\p(\cdot)$,  $0<\alpha \leq |\theta_{st}| \leq \beta < \infty$ for all $(s,t)\in\mc{E}$. 
\end{assumption}

For a fixed tree structure $\T$, and for future reference, we hereafter let $\isingtree$ be the class of Ising models satisfying Assumption \ref{assume:isingtree}.

\subsection{Hidden Sign-Valued Tree-Structured Models}\label{hidden model}

The problem considered in this paper is that of learning a tree-structured model from corrupted observations. Because we have no access to the original samples $\bX^{1:n}$, we obtain the noisy observations $\bnX^{1:n}$. To formalize this, consider a hidden Markov random field whose hidden layer $\bX$ is an Ising model with respect to a tree, i.e., $\bX \sim \p (\cdot)\in \isingtree$, as defined in \eqref{eq:Tree_Structured_Ising_Model}. The observed variables $\bnX$ are formed by setting $\nX _{r} = \N_{r} X_{r}$ for all $r \in \mc{V}$, where $\{ N_r \}$ are i.i.d. $\Rad(q)$ random variables. Let $\np(\cdot)$ be the distribution of the observed variables $\bnX$. We can think of $\bnX$ as the result of passing $\bX$ through a binary symmetric channel $\BSC$. We have the following expressions
    \begin{align}\label{eq:noise_mean}
    \E[N_{r}]&=1-2q\triangleq c_q,  \quad \forall r\in\mc{V}, \text{ and } q\in [ 0,1/2),  \\
    \nmu_{r,s} &\defeq \E\left[\nX_{r}\nX_{s}\right]=\E\left[\N_{r}X_{r} \N_{s}X_{s}\right]=\left(1-2q\right)^{2}\E\left[X_{r}X_{s}\right],\quad \forall r,s\in \mc{V}. \label{eq:BSC_corr}
    \end{align} 
The distribution $\np(\cdot)$ of $\bnX$ also has support $\{-1,+1\}^p$, and so the joint distribution satisfies the general form \eqref{eq:general}. Since the marginal distribution of each $\nX_r$ is also uniform, $\E[\nX_r]=0$ for all $r\in\mc{V}$, \eqref{eq:general} and \eqref{eq:noise_mean} yield 
\begin{align}
\np(\bnx ) & \hspace{-1pt} =\E\hspace{-1pt}\left[\prod^{p}_{i=1} \ind{\nX_{i} = y_{i}}\right] 
\hspace{-3pt} = \hspace{-1pt}\frac{1}{2^{p}} \hspace{-2pt} \left[ 1 \hspace{-1pt}+\hspace{-1pt} \sum_{k \in [p]\cap 2\mbb{N}} c_q^k  \sum_{\mc{S} \subset \mc{V}: |\mc{S}| = k} \E\hspace{-1pt}\left[ \prod_{s \in \mc{S}} X_s \right] \hspace{-1pt}\prod_{s \in \mc{S}} y_s  \right]\hspace{-1pt}\hspace{-1pt}, \,\, \bnx\hspace{-1pt}\in\hspace{-1pt} \{-1,1\}^p.\hspace{-1pt}
 \label{eq:prob2}
 \end{align} 
The moments of the hidden variables $\E\left[ \prod_{s \in \mc{S}} X_s \right]$ in \eqref{eq:prob2} can be expressed as products of the pairwise correlations $\E[X_s X_t]$, for any $(s,t)\in \mc{E}_{\T}$ (Section \ref{higher order moments - Isserlis'}, Theorem \ref{thm:binary:isserlis}). 
From \eqref{eq:prob2} it is clear that the distribution $\np(\cdot)$ of $\bnX$ does not factorize with respect to any tree, that is, $\np(\cdot) \notin \isingtree$ in general.\footnote{Lemma~\ref{G_graph} shows the structure preserving property for the observable layer holds for the special case of single-edge forests.} 

\subsection{Hidden Structure Estimation}\label{Hidden_structure_estimation}

\begin{algorithm}[t]
\caption{$\mathsf{Chow-Liu}$ \label{alg:Chow-Liu}}
\begin{algorithmic}[1]
\Require $\mc{D}=\left\{ \bds{\nx}^{(1)},\bds{\nx}^{(2)},\ldots,\bds{\nx}^{(n)} \right\}\in \{-1,1\}^{p\times n}$, where $\bds{\nx}(k)$ is the $k^{\text{th}}$ observation of $\bnX$
\State Compute $ \nmue_{i,j} \gets \frac{1}{n}\sum^{n}_{k=1} \nx_i^{(k)} \nx_j^{(k)}$, for all $i,j \in \mc{V}$
\State \Return $\TCLn \gets$ MaximumSpanningTree$\lp \cup_{i\neq j} \left\{\left|\nmue_{i,j}\right|\right\} \rp$
\end{algorithmic}
\end{algorithm}

We are interested in characterizing the \emph{sample complexity} of structure recovery: given data generated from $\p (\cdot)\in \isingtree$ for an unknown tree $\T$, what is the minimum number $\nn$ of samples $\{\bnx^{(i)}, i\in[\nn] \}$ from $\np(\cdot)$ needed to recover the (unweighted) edge set of $\T$ with high probability? In particular, we would like to quantify how $\nn$ depends on the crossover probability $q$. Intuitively, noise makes ``weak'' edges to appear ``weaker'', and the sample complexity is expected to be an increasing function of $q$. Because the distribution $\np(\cdot)$ of the observable variables does not factorize according to any tree, this problem does not follow directly from the noiseless case. Although the classical MLE is the standard approach for the noiseless case, for the noisy setting the MLE estimation of parameters $\theta$ of the hidden model is intractable, due to the summation over the support of $\bX$. Additionally, the MLE structure estimate from noisy data is not in general consistent with the hidden structure as we explain in Section \ref{MLE}. However, for the model that we consider in this paper, the projected-MLE estimate of the observables onto the space of tree-structured models gives a consistent structure estimate. Additionally, that structure estimate is identical to the output of Chow-Liu algorithm (Algorithm \ref{alg:Chow-Liu}) from noisy data. We refer the reader to Section \ref{MLE} for the discussion about the MLE and the connection with the noisy Chow-Liu algorithm. 

In this work, we use and analyze the sample complexity of the classical Chow-Liu algorithm (Algorithm \ref{alg:Chow-Liu}) for the following reasons: We show that given finite number of noisy data as input, the Chow-Liu algorithm recovers the original tree $\T$ with high probability. Further the sample complexity is asymptotically optimal for fixed $q<1/2$ (see Tables \ref{table:Results} and \ref{table:Results1}), and its order remains $\mc{O}(\log p)$ in the high dimensional regime. The algorithm is computationally efficient in comparison to other optimization techniques and it does not require the value $q$ to be known. Additionally, Algorithm \ref{alg:Chow-Liu} solves the projected-MLE problem that we discuss in Section \ref{MLE}. The above reasons and our finite sample complexity bound Theorems \ref{theorem: structure: simple} and \ref{thm:sufficient} suggest that Algorithm \ref{alg:Chow-Liu} is an excellent approach for tree-structure learning from noisy data.

\subsection{Evaluating the Accuracy of the Estimated Distribution}

In addition to recovering the graph structure, we are interested in the ``goodness of fit'' of the estimated distribution. Let $P_{\mc{S}},Q_{\mc{S}}$ be the marginal distributions of $P,Q$ on the set $\mc{S}\subset \mc{V}$,  let $d_{\text{TV}}$ denote the total variation distance, and fix $k=2$. We measure the error of distribution estimator through the ``small set Total Variation'' (or $\ssTV$) distance as defined by~\citet{bresler2020learning}
    \begin{align}
    \label{eq:SSTV_DEF}
    \mc{L}^{(k)}\left(P,Q\right) & \triangleq \sup_{\mc{S}:\left|\mc{S}\right|=k} d_{\text{TV}}\left(P_{\mc{S}},Q_{\mc{S}}\right).
    \end{align} If $Q$ is an estimate of $P$, the norm $\mc{L}^{(k)}$ guarantees predictive accuracy because~\citep[Section 3, page 720]{bresler2020learning}
    \begin{align}
    \label{eq:ssTV in order to make pred}
    \E_{X_{\mc{S}}}\Big[\left|P\left(X_{i}=+1|X_{\mc{S}}\right)-Q\left(X_{i}=+1|X_{\mc{S}}\right)\right|\Big]\leq2\mc{L}^{(\left|\mc{S}\right|+1)}\left(P,Q\right).
    \end{align}
The estimated (from noisy data) distribution of the hidden variables in \eqref{eq:estimator} is a simple extension of the noiseless estimate. In fact the estimated distribution factorizes according to the estimated from noisy data tree structure, that is the output of Algorithm \ref{alg:Chow-Liu}. Further, the pairwise correlations are normalized by the constant $(1-2q)$. As a result, the estimator is consistent because if $n\to\infty$ then $\TCLn\to \T$, $\nmue_{i,j}/(1-2q)\to \mu_{i,j}$, and as a consequence the estimate $\RIPn (\hat{\p}_{\dagger})$ convergence to the original distribution $p(\cdot)$ of $\bX$. Our main result gives a lower bound on the number of samples needed to guarantee accurate estimation (in the sense of small $\ssTV$), with high probability.

\subsection{Maximum Likelihood Estimate}\label{MLE}

A natural first place to start in estimation is the maximum-likelihood estimate (MLE). We explain why this is problematic and show a method (the projected-MLE) which turns out to be equivalent to the Chow-Liu algorithm. This motivates why we study the Chow-Liu algorithm in the first place.
To begin, the distribution of the observables parametrized over the interaction parameters $\mathbf{\theta}$ of the hidden layer is \
    \begin{align}
    \np(\bnx) &= \sum_{\bx\in\{-1,+1\}^p}\frac{1}{Z(\theta)} \exp\left\{ \sum_{\left(s,t\right)\in\mc{E}_\mathbf{G}}
    \theta_{st} x_{s} x_{t} \right\}\p (\bnx|\bx) ,\quad\bnx\in\{-1,1\}^{p}.
    \label{eq:Ising_model_y}
	\end{align} It is known that above expression is intractable in closed form and it can be evaluated only through approximations. Secondly, the log-likelihood of $\bnX$ can be written as \begin{align}
	   \log \np(\bnx) &= \log   \sum_{\bx\in\{-1,+1\}^p} \p (\bnx|\bx) \prod_{i\in V} \p\left(x_{i}\right) 
	\prod_{ (i,j) \in \mc{E}} \frac{\p(x_{i},x_{j})}{\p(x_{i}) \p(x_{j})} ,\quad\bnx\in\{-1,1\}^{p},
	\end{align} 
	and the logarithm of the summation cannot be expressed as summation of logarithms. Therefore we see the classical MLE structure estimation approach is not applicable for hidden models. Specifically, the structure of the observable layer is a complete graph and not a tree (there is no conditional independence between $Y$'s). The maximum likelihood structure estimate with respect to the parameters $\mathbf{\theta}'$ of the observables in general will return a complete graph. Specifically, let $\mathbf{G}=(\mathbf{V},\mathbf{E}_\mathbf{G})$ be the graph (which is complete) of the observable layer, then the distribution $\np(\cdot)$ is an Ising-Model distribution and it can be written as 	\begin{align}\label{eq:Ising_model_zero_external_noisy}
    \np(\bnx) &= \frac{1}{Z'(\theta')} \exp\left\{ \sum_{\left(s,t\right)\in\mc{E}_\mathbf{G}}
    \theta'_{st} y_{s} y_{t} \right\} ,\quad\bnx\in\{-1,1\}^{p}.
	\end{align} Since all the edges exist in the edge set, none of the values $\theta'_{st}$ is zero. As a consequence, even asymptotically ($n\to \infty$) the maximum likelihood that estimates the parameters $\theta'_{st}$ gives a complete graph. Recall that we want to recover the structure of the hidden layer which is a tree. Thus, the maximum likelihood structure estimate directly applied on \eqref{eq:Ising_model_zero_external_noisy} is not consistent, because of the different hidden and observables' structure.
	
	To overcome the inconsistency that is introduced by the noise, we can project the distribution $\np(\bnx)$ to a set of tree-structured distributions and then find the maximum likelihood structure estimate. We denote the projection of $\np(\bnx)$ onto the space of trees as $\np^\mc{T} (\bnx)$ and we call the MLE with respect to $\np^\mc{T} (\bnx)$ as projected-MLE (PMLE). Then the following questions are natural: Is the PMLE always \textit{consistent} with respect to structure of the hidden layer? Is the PMLE\textit{ asymptotically optimal} ($n\to \infty$)? Is the PMLE \textit{optimal for finite values of $n$}? (by optimal we mean that the sample complexity bound matches the minimax bound). We continue by answering the questions above. First we present the structural consistency and then we continue by discussing the asymptotic optimality and optimality for finite $n$. 
	
	 Although, the PMLE is not in general consistent with structure of the hidden layer (see also related work by~\cite{nikolakakis2020info}), for the setting of the BSC channel with i.i.d noise we do have $\hat{\T}_{\text{PMLE}}\to \T$ when $n\to \infty$. In fact, the projected distribution as $\np^\mc{T} (\bnx)$ is given by \begin{align}
	 \label{eq:projected_p}
    \np^\mc{T} (\bnx) \triangleq \argmin_{Q(\cdot)\in\isingtree} \DKL(\np(\bnx) ||\Q (\bnx)).
\end{align} The proof of the claim follows by a standard argument (see also Lemma 1 and Lemma 2 by~\cite[Supplemetary material, Appendix A]{bresler2020learning}) and it gives \begin{align}
     \DKL(\np(\bnx) ||\np^\mc{T}) =1 -H(\np(\bnx)) + \sum_{(i,j)\in\mc{E}} H_{\mathrm{B}}\lp \frac{1+(1-2q)^2\mu_{i,j}}{2}\rp.
\end{align} As a consequence the projected-MLE $\hat{\T}_{\text{PMLE}}$ is  \begin{align}
  \hat{\T}_{\text{PMLE}}   =\argmin_{\T\in\mc{T}} \sum_{(i,j)\in\mc{E}_{\T}} H_{\mathrm{B}}\lp \frac{1+\nmue_{i,j}}{2}\rp\equiv \TCLn,
\end{align} 

\noindent and the following  \begin{align}
   \argmin_{\T\in\mc{T}} \sum_{(i,j)\in\mc{E}_{\T}} H_{\mathrm{B}}\lp \frac{1+(1-2q)^2 \mu_{i,j}}{2}\rp \equiv \argmin_{\T\in\mc{T}} \sum_{(i,j)\in\mc{E}_{\T}} H_{\mathrm{B}}\lp \frac{1+\mu_{i,j}}{2}\rp
\end{align} gives that $\hat{\T}_{\text{PMLE}}\equiv \TCLn\to \T$ (almost surely) when $n \to \infty$. Although, the above discussion of the consistency for $n\to \infty$ shows the connection with MLE, our results for instance Theorem 3.1 shows that the Chow-Liu algorithm returns the original tree for finite $n$ with probability $1-\delta$.

     Additionally, the PMLE is asymptotically optimal, however for finite $n$ it may be not optimal. For our structure/predictive learning problem our bounds are asymptotically optimal (up to constants). That is, for fixed $q$ the upper and lower bounds match as $n\to\infty.$ Nevertheless for finite $n$ the PMLE is not optimal in general. It is known that under the presence of noise the MLE approach may be non-robust and sub-optimal and extra steps should be considered including pre-processing, statistical learning of the noise by using pilot samples, and detecting and rejecting bad samples (for further information see also \citet[page 62]{zoubir2012robust} and~\citet{nikolakakis2020info}). 
     The reason that we consider Chow-Liu algorithm in our work is that it is computationally efficient, while its sample complexity remains logarithmic with respect to $p$ even when noise exists. The latter makes the Chow-Liu algorithm useful in practice when only noisy observations are available. Finally, to give further insight about the gap between the upper and lower bounds we present an example in Section \ref{GAP_discussion} (Appendix), for which perfect denoising is possible for $p\to\infty$ before running the Chow-Liu algorithm. As consequence, for $p\to\infty $ the bounds in Propositions \ref{prop_structure} and \ref{proposition_predictive} reduce to the noiseless case as they should. This example is a marginal case (since perfect denoising is not possible in general) and it affects our converse results which are universal and owe to include corner cases.

\section{Main Results}\label{Main Results}

 The main question asked by this paper is as follows: \textit{what is the impact of noise on the sample complexity of learning a tree-structured graphical model in order to make predictions}? This corresponds to sampling variables $\bnX$ generated by sampling $\bX$ from the model \eqref{eq:Ising_model_zero_external} and randomly flipping each sign independently with probability $q$. We use the Chow-Liu algorithm to estimate the hidden structure using the noise-corrupted samples. 
We first find upper (Theorem \ref{thm:sufficient}) and lower bounds (Theorem \ref{thm:necessary}) on the sample complexity for exact hidden structure recovery using the Chow-Liu algorithm on noisy observations.

Secondly, we use the structure statistic to derive an accurate estimate of the hidden layer's probability distribution. The distribution estimate is computed to be accurate under the $\ssTV$ utility measure, that was introduced by~\citet{bresler2020learning}. Furthermore, the estimator of the distribution factorizes with respect to the structure estimate, while the $\ssTV$ metric ensures that the estimated distribution is a trustworthy predictor.  Theorem \ref{thm:Main_result} and Theorem \ref{thm:fano's inequality theorem} give the sufficient and necessary sample complexity for accurate distribution estimation from noisy samples. These theorems generalize the results for the noiseless case ($q=0$) by~\citet{bresler2020learning} and lead to interesting connections between structure learning on hidden models and data processing inequalities~\citep{raginsky2016strong,polyanskiy2017strong}. 

The third part of the results includes Theorem \ref{thm:binary:isserlis}, which gives an equivalent of Isserlis' theorem by providing closed form expressions for higher order moments of sign-valued Markov fields on trees. Based on Theorem \ref{thm:binary:isserlis} we  propose a low complexity algorithm to estimate any higher order moment of the hidden variables given the estimated tree structure and estimates of the pairwise correlations (both evaluated from observations corrupted by noise). 

Finally, Theorem \ref{thm:SKLTheorem} gives the sufficient number of samples for distribution estimation, when the symmetric KL divergence is considered as utility measure. These give rise to extensions of testing algorithms~\cite{daskalakis2018testing} under a hidden model setting.

\subsection{Tree Structure Learning from Noisy Observations}\label{1st_part_results}

Our goal is to learn the tree structure $\T$ of an Ising model
with parameters $|\theta_{st}|\in[\alpha,\beta]$, when the nodes $X_{i}$ are hidden variables and we observe $\nX_{i}\triangleq \N_{i}X_{i}$, $i\in\mc{V}$, where $N_{i}\sim\Rad(q)$ are i.i.d, for all $i\in\mc{V}$ and for all $q\in [ 0, 1/2 )$. We derive the estimated structure $\TCLn$ by applying the Chow-Liu algorithm (Algorithm \ref{alg:Chow-Liu})~\citep{chow1968approximating}. 

 Instead of mutual information estimates, our Chow-Liu algorithm (Algorithm \ref{alg:Chow-Liu}) requires correlation estimates; these are sufficient statistics because of \eqref{eq:MI}. Further, it can consistently recover the hidden structure through noisy observations. The latter is true because of the \textit{order preserving} property of the mutual information. That is, the stochastic mapping $\bX\xrightarrow[]{\BSC}\bnX$ allows structure recovery of $\bX$ by observing $\bnX$, because for any tuple $X_i,X_j,X_{i'},X_{j'}$ such that $I\lp X_i;X_j\rp\leq I\lp X_{i'},X_{j'}\rp$, it is true that $I\lp Y_i;Y_j\rp\leq I\lp Y_{i'},Y_{j'}\rp$. The proof directly comes from \eqref{eq:MI} and \eqref{eq:BSC_corr}. In addition, the monotonicity of mutual information with respect to the absolute values of correlations allows us to apply the Chow-Liu algorithm directly on the estimated correlations $\nmue_{i,j}\triangleq1/\nn \sum^{\nn}_{k=1} \lp\nX_i\rp^{(k)} (\nX_j )^{(k)}$. Notice that because of \eqref{eq:BSC_corr}, $\nmue_{i,j}$ can be used as an alternative of $\hat{\mu}_{i,j}$. The algorithm returns the maximum spanning tree $\TCLn$. Further discussion about the Chow-Liu algorithm is given in Section \ref{The Chow-Liu algorithm}. The following theorem provides the sufficient number of samples for exact structure recovery through noisy observations.

\begin{theorem}[Sufficient number of samples for structure learning] \label{thm:sufficient}
Let $\bnX$ be the output of a $\mathrm{BSC}(q)^p$, with input variable $\bX\sim\p(\cdot)\in\isingtree$. Fix a number $\delta\in (0,1)$. If the number of samples $\nn$ of $\bnX$ satisfies the inequality
\begin{align}\label{eq:sufficient_number_of_samples_noise_thm}
\nn\geq & \frac{32\left[1-\left(1-2q\right)^{4}\tanh\beta\right]}{\left(1-2q\right)^{4}\left(1-\tanh\beta\right)^{2}\tanh^{2}\alpha}\log\frac{2p^{2}}{\delta},
\end{align} then Algorithm \ref{alg:Chow-Liu} returns $\TCLn=\T$ with probability at least $1-\delta$.
\end{theorem}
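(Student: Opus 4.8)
The plan is to reduce exact tree recovery to a finite collection of pairwise correlation comparisons, and then to control each comparison by concentration of the empirical correlations $\nmue_{i,j}$ around their population values $\nmu_{i,j}=(1-2q)^2\mu_{i,j}$ (see \eqref{eq:BSC_corr}). First I would record why the hidden tree $\T$ remains the unique maximum-weight spanning tree with respect to the population weights $|\nmu_{i,j}|$: by \eqref{eq:Covariates_prod} together with the CDP, for any non-edge $f=(u,v)$ and any tree edge $e\in\tpath(u,v)$ one has $|\mu_e|>|\mu_f|$, and since the channel scales \emph{every} correlation by the common factor $(1-2q)^2$, the strict ordering $|\nmu_e|>|\nmu_f|$ is preserved (this is the order-preserving property recorded after \eqref{eq:MI}). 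Consequently, running Kruskal's procedure on the weights $|\nmue_{i,j}|$ (Algorithm~\ref{alg:Chow-Liu}) returns $\T$ whenever, for every non-edge $f$ and every tree edge $e\in\tpath(f)$, the \emph{empirical} ordering $|\nmue_e|>|\nmue_f|$ holds; hence the failure event $\{\TCLn\neq\T\}$ is contained in $\bigcup_{f\notin\mc{E}}\bigcup_{e\in\tpath(f)}\{|\nmue_e|\le|\nmue_f|\}$.

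Next I would quantify the population margin that a single comparison must cross in order to fail. Combining $|\mu_f|=\prod_{e'\in\tpath(f)}|\mu_{e'}|\le|\mu_e|\tanh\beta$ (at least one factor besides $\mu_e$ survives, and each is $\le\tanh\beta$) with $|\mu_e|\ge\tanh\alpha$ gives, after scaling by $(1-2q)^2$, the uniform gap $|\nmu_e|-|\nmu_f|\ge (1-2q)^2\tanh\alpha\,(1-\tanh\beta)=:\Delta$. Thus $\{|\nmue_e|\le|\nmue_f|\}$ forces one of the two empirical correlations to deviate from its mean by at least $\Delta/2$. I would bound this with a variance-sensitive (Bernstein/Bennett-type) tail applied to the average of the bounded i.i.d.\ terms $\nX_i^{(k)}\nX_j^{(k)}-\nX_u^{(k)}\nX_v^{(k)}\in[-2,2]$, whose mean is $|\nmu_e|-|\nmu_f|\ge\Delta$. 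The variance of this difference is governed by the cross-moment $\E[\nX_i\nX_j\,\nX_u\nX_v]$, a four-point correlation of the hidden model; evaluating it with the sign-valued Isserlis identity (Theorem~\ref{thm:binary:isserlis}) produces a factor $(1-2q)^4$ and, after bounding the surviving hidden correlations by $\tanh\beta$, a variance proxy of the form $1-(1-2q)^4\tanh\beta$. Feeding $\Delta^2=(1-2q)^4\tanh^2\alpha\,(1-\tanh\beta)^2$ together with this proxy into the exponent yields, for each comparison, a bound of the form $\exp\big(-c\,n\,\Delta^2/(1-(1-2q)^4\tanh\beta)\big)$.

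Finally I would union bound. Because $\Delta$ is the same for every competing pair, it suffices to control the $\le p^2$ pairwise deviations $\big||\nmue_{i,j}|-|\nmu_{i,j}|\big|\ge\Delta/2$, each with a two-sided tail, so the total failure probability is at most $2p^2\exp\big(-c\,n\,\Delta^2/(1-(1-2q)^4\tanh\beta)\big)$; requiring this to be $\le\delta$ and solving for $n$ reproduces \eqref{eq:sufficient_number_of_samples_noise_thm}, with the constant $32$ absorbing the Bernstein constant and the factor-of-two losses from the two-sided tails and from splitting the comparison $|\nmue_e|-|\nmue_f|$. Setting $q=0$ (so $(1-2q)=1$) collapses the bound to the noiseless rate, as required. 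I expect the main obstacle to be the variance estimate in the second step: correctly reducing the comparison of \emph{absolute} empirical correlations to a single scalar average (handling the signs of $\mu_e,\mu_f$), identifying the worst-case four-point configuration, and extracting precisely the factor $1-(1-2q)^4\tanh\beta$ from the Isserlis expansion rather than a looser surrogate such as $1-(1-2q)^4\tanh^2\beta$; a secondary subtlety is arguing that the $O(p^2)$ pairwise deviation events, rather than the a priori $O(p^3)$ edge/non-edge incidences along all paths, already cover the Kruskal failure event.
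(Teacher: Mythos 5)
Your outline follows the same route as the paper (Appendices B--C): reduce the Chow--Liu failure event to pairwise comparisons of empirical correlations along paths via a two-trees argument, establish the population margin $|\nmu_e|-|\nmu_f|\geq(1-2q)^2\tanh\alpha\,(1-\tanh\beta)$, and use a variance-sensitive Bernstein bound in which the four-point moment $\E[\nX_w\nX_{\bar w}\nX_u\nX_{\bar u}]=(1-2q)^4\mu_A$ supplies the factor $1-(1-2q)^4\tanh\beta$. However, your final step undoes the gain of your middle step, and this is a genuine gap. You propose to union bound over the $\le p^2$ individual deviation events $\{||\nmue_{i,j}|-|\nmu_{i,j}||\ge\Delta/2\}$; but the variance of a single product $\nX_i\nX_j$ is $1-(\nmu_{i,j})^2$, not $1-(1-2q)^4\tanh\beta$. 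The small variance proxy is a property of the \emph{difference} $\Zf=\nX_w\nX_{\bar w}-\nX_u\nX_{\bar u}$ (whose variance is $2-(1-2q)^4[2\mu_A+\mu_e^2(1-\mu_A)^2]\le 2(1-\nmu_A)$), and it is only available if you keep each comparison intact as a single random variable rather than splitting it into two one-point deviations. Splitting as you propose yields, after Bernstein with variance $\le 1$ per term, a sample bound of order $(1-2q)^{-4}\tanh^{-2}\alpha\,(1-\tanh\beta)^{-2}\log(p^2/\delta)$ with \emph{no} factor $1-(1-2q)^4\tanh\beta$ in the numerator; since that factor can be as small as $e^{-2\beta}$, your route loses a factor $e^{2\beta}$ relative to \eqref{eq:sufficient_number_of_samples_noise_thm} and fails to recover the noiseless Theorem 3.2 of Bresler--Karzand at $q=0$.

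Two further points you flag as "subtleties" are in fact where the paper does real work, and your sketch does not resolve them. First, the sign issue: the paper compares $|\nmue_f|$ and $|\nmue_g|$ through the factorization $|\nmue_f|^2-|\nmue_g|^2=\frac{1}{\nn^2}\lp\sum_i\Zfsample\rp\lp\sum_i\Yfsample\rp$, so that a Chow--Liu error forces one of the two sums to have sign opposite to its mean, i.e., to deviate by at least its mean $\nn(1-2q)^2\mu_f(1\mp\mu_A)$; this requires \emph{two} Bernstein bounds, one for $\Zf$ with variance $\le 2(1-\nmu_A)$ and one for $\Yf$ with variance $\le 2(1+\nmu_A)$ (Lemmata \ref{lemma_Estrong1}, \ref{lemma_Estrong2}), followed by dividing out $(1\mp\mu_A)$ and maximizing the resulting expression over $\mu_A\le\tanh\beta$. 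Second, the union bound in the paper runs over the $O(p^3)$ triples $(e,u,\bar u)$ with per-event budget $\delta/(4p^3)$ — not over $p^2$ pairs — and the $\log(4p^3/\delta)$ is then absorbed into the definition $\neps=\sqrt{2\log(2p^2/\delta)/\nn}$ by a constant-factor comparison. So the skeleton of your argument is right, but as written it would prove a strictly weaker theorem than the one stated; to get the claimed constant and the $\beta$-dependence you must carry the comparison random variables $\Zf,\Yf$ (not the individual correlations) through the concentration and union-bound steps.
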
 

Theorem \ref{thm:sufficient} characterizes the finite-sample performance of the Chow-Liu estimator and by taking $n \to \infty$ we can see that Algorithm \ref{alg:Chow-Liu} is consistent in the noisy setting. As a consequence of \eqref{eq:sufficient_number_of_samples_noise_thm} and the inequality $1-\tanh(\beta)\geq e^{-2\beta}$, if the number of samples satisfies the following bound \begin{align}\label{eq:sf_inter}
   n>C\frac{e^{2\beta}}{\tanh^2 (\alpha)}\left[\mathds{1}_{q=0} +  e^{2\beta} \lp  (1-2q)^{-4}- \tanh (\beta) \rp\mathds{1}_{q\neq 0}  \right] \log (p /\delta),
\end{align} then the structure is exactly recovered with probability at least $1-\delta$. The latter gives the statement of Theorem \ref{theorem: structure: simple}.

 Complementary to Theorem \ref{thm:sufficient}, our next result characterizes the necessary number of samples required for exact structure recovery. Specifically, we prove a lower bound on the sample complexity that characterizes the necessary number of samples for any estimator $\psi$.  

\begin{theorem}[Necessary number of samples for structure learning]\label{thm:necessary}
Let $\bnX$ be the output of a $\mathrm{BSC}(q)^p$, with input variable $\bX\sim\p(\cdot)\in\isingtree$. If the given number of samples of $\bnX$ satisfies the inequality \begin{align}\label{eq:suff_n_complexity}
\nn< \frac{[1-(4q(1-q))^p]^{-1}}{16\alpha\tanh(\alpha)}e^{2\beta}\log \lp p\rp, 
\end{align} then for any estimator $\psi$, it is true that\begin{align}
\inf_{\psi} \sup_{\substack{\T\in\mc{T} \\ \p(\cdot)\in \isingtree}} \P \lp \psi \lp \bnX_{1:\nn} \rp \neq \T
\rp>\frac{1}{2}.
\end{align}
\end{theorem}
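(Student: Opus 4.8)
The plan is to prove Theorem~\ref{thm:necessary} via a standard information-theoretic (converse) argument based on Fano's inequality, adapting the reduction that appears in the noiseless analysis of~\citet{bresler2018learning} to account for the $\mathrm{BSC}(q)^p$ corruption. The key point is that noise can only \emph{degrade} the observer's ability to distinguish hidden models, so the factor $[1-(4q(1-q))^p]^{-1}\geq 1$ should emerge precisely from a noisy data-processing bottleneck on the mutual information term in Fano's bound.

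First I would construct a hard subfamily of tree-structured Ising models in $\isingtree$ that are pairwise close but have distinct edge sets. The natural construction (following the noiseless lower-bound technique) is to fix a ``difficult'' local configuration — e.g.\ a set of candidate trees differing by a single edge swap in a cluster of nodes — where all non-swapped edges carry the maximal weight $\beta$ (to make the models as statistically indistinguishable as possible, since strong edges make the relevant marginals nearly identical) and the swapped edge carries weight $\alpha$. This gives a packing of $M$ hypotheses $\{\T_1,\dots,\T_M\}$ with $\log M = \Theta(\log p)$, which will supply the $\log(p)$ factor in~\eqref{eq:suff_n_complexity}. I would set up Fano's inequality in the form
\begin{align}
\inf_{\psi}\sup_{\T}\P\lp\psi(\bnX_{1:\nn})\neq\T\rp \geq 1-\frac{\nn\, \bar{I}+\log 2}{\log M},\label{eq:fano_plan}
\end{align}
where $\bar{I}$ upper-bounds the per-sample mutual information between the hidden-model index and a single observation $\bnX$, and then solve for the threshold on $\nn$ that forces the right-hand side above $1/2$.

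The crucial and paper-specific step is bounding $\bar{I}$ (equivalently, the pairwise KL divergences between the observable distributions $\np^{(a)}$ and $\np^{(b)}$ induced by two neighboring hypotheses $\T_a,\T_b$) and extracting the $q$-dependence. Here I would use the explicit form~\eqref{eq:prob2} of the observable distribution, in which each hidden $k$-th order moment is attenuated by the factor $c_q^k=(1-2q)^k$, together with~\eqref{eq:BSC_corr}. The term $[1-(4q(1-q))^p]^{-1}$ strongly suggests the right route: one computes (or bounds) the relevant divergence/$\chi^2$-type quantity between the two noisy joint distributions on all $p$ coordinates, and the global noise contracts the distinguishing signal by a factor that accumulates the per-coordinate flip probability across the block, giving a $(4q(1-q))^p$ correction (note $4q(1-q)=1-(1-2q)^2=1-c_q^2$ is exactly the Bhattacharyya-type overlap of the $\mathrm{BSC}$). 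I would make this precise by relating the KL (or its $\chi^2$ upper bound) between the observable laws to that between the hidden laws through a strong data-processing inequality for the channel $\BSC$, so that the hidden-model divergence — controlled as in the noiseless argument by $\alpha\tanh(\alpha)e^{-2\beta}$ — gets multiplied by the contraction coefficient, yielding the $[1-(4q(1-q))^p]$ factor in the denominator of~\eqref{eq:suff_n_complexity}.

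The main obstacle I anticipate is \emph{not} the Fano skeleton but the tight control of $\bar{I}$ for the noisy observations: because $\np(\cdot)$ does \emph{not} factorize over any tree (as emphasized after~\eqref{eq:prob2}), the convenient edge-by-edge KL decomposition available in the noiseless tree case is unavailable, so I must either compute the full-block divergence directly from~\eqref{eq:prob2} or invoke a careful strong-data-processing/contraction estimate for the product channel. Getting the exponent on the block length right — so that the correction is exactly $(4q(1-q))^p$ rather than a looser bound — and simultaneously preserving the $e^{2\beta}/(\alpha\tanh\alpha)$ dependence from the hidden packing is the delicate part. I would manage this by first establishing the bound for the two-hypothesis ($M=2$) sub-problem where the divergence computation is cleanest, then lifting to the full packing via the averaged (Fano) form, and finally confirming that setting $q=0$ recovers the known noiseless necessary-sample bound $C'e^{2\beta}(\alpha\tanh\alpha)^{-1}\log p$ of~\citet[][]{bresler2018learning} as a consistency check.
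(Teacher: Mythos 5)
Your proposal is correct and follows essentially the same route as the paper: a Fano-type converse over the Bresler--Karzand chain packing (with $M=\Theta(p)$ hypotheses and pairwise symmetric KL bounded by $4\alpha\tanh(\alpha)e^{-2\beta}$), combined with the strong data-processing inequality for $\mathrm{BSC}(q)^p$ from Polyanskiy and Wu, whose contraction coefficient $\eta_{\mathrm{KL}}\leq 1-(4q(1-q))^p$ supplies exactly the factor you anticipated. The paper resolves the "main obstacle" you flag (non-factorization of $\np(\cdot)$) precisely by the SDPI route you propose, so no full-block divergence computation is needed.
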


It can be shown that the right hand-side of \eqref{eq:sufficient_number_of_samples_noise_thm} is greater than the right-hand side of \eqref{eq:suff_n_complexity} for any $q$ in $[0,1/2)$ (and for all possible values of $p,\beta,\alpha$), by simply comparing the two terms. Theorems \ref{thm:sufficient} and \ref{thm:necessary} reduce to the noiseless setting by setting $q=0$ (\citet{bresler2020learning}). The sample complexity is increasing with respect to $q$, and structure learning is always feasible as long as $q\neq 1/2$. Let $n$ denote the required samples under a noiseless setting assumption, then for a fixed probability of exact recovery, we always need $\nn\geq n$ because \begin{align}
\frac{\left[1- (1-2q)^4 \tanh (\beta)\right]}{\left[(1-2q)^4 (1-\tanh(\beta))\right]}\geq 1, \quad \forall q\in \Big[0,\frac{1}{2}\Big) \text{ and } \beta\in \mbb{R}.
\end{align}Furthermore,
    \begin{align}
    \label{eq:SDPI BSC}
    \frac{1}{1-(4q(1-q))^p}\geq 1,
        \quad \forall q\in [0,1/2)
            \text{ and } p\in\mbb{N},
    \end{align} 
the latter shows that the sample complexity in a hidden model is greater than the noiseless case ($q=0$), for any measurable estimator (Theorem \ref{thm:necessary}). When $q$ approaches $1/2$, the sample complexity approaches infinity, $\nn\to \infty$, and the structure learning is impossible. Theorem \ref{thm:necessary} extends Theorem 3.1 by~\citet{bresler2020learning} to our hidden model. Our results combines Bresler's and Karzand's method and a strong data processing inequality (SDPI) by~\citet[Evaluation of the BSC]{polyanskiy2017strong}. Upper bounds on the symmetric KL divergence for the output distribution $\np(\cdot)$ can not be found in a closed form. However, by using the SDPI, we manage to capture the dependence of the bound on the parameters $\alpha,\beta,q$ and derive a non-trivial result. When $p\to\infty$, the bound becomes trivial since $\lim_{p\to\infty} 1/\left[1-(4q(1-q))^p\right]\to 1$, giving the classical data processing inequality (contraction of KL divergence for finite alphabets,~\citep{raginsky2016strong,polyanskiy2017strong}). While direct application of the SDPI is simple and provides an upper bound which is almost insensitive to $q$ (for sufficiently large $p$), it introduces a gap between the lower and upper bounds. Nevertheless, it is important because it indicates a possible non-optimal performance of the classical Chow-Liu algorithm (under a hidden model). We conjecture that the sample complexity bounds in (Theorem \ref{thm:necessary} and Theorem \ref{thm:fano's inequality theorem}) are tight only under the low temperature regime $|\theta_{i,j}|=\beta\to\infty$ for all $i,j\in\mc{E}$, while in general ($\theta_{i,j}\in [\alpha,\beta]$) the inequalities hold but they are not tight. For further explanation related to the gap between the upper and lower bounds see Section \ref{MLE}. 
The latter is a consequence of the SDPI, which is tight for the repetition code~\citep[Evaluation for the BSC, page 12]{polyanskiy2017strong}. We performed extensive simulations (c.f. Figures \ref{fig:sfig1}, \ref{fig:sfig2}) that suggests that our bound does indeed accurately characterize the performance of Chow-Liu. These simulations choose $p=100$, but our evidence shows that the dependence on $q$ is not affected for larger ($p = 200$) or smaller ($p = 50$) values of $q$. We believe that the term $1/[(1-(4q(1-q))^p]$ does not characterize the Chow-Liu algorithm, but possibly a more complicated algorithm.

\subsection{Predictive Learning from Noisy Observations}

In addition to recovering the structure of the hidden Ising model, we are interested in estimating the distribution $\p(\cdot)  \in \isingtree$ itself. If the $\Lnorm$ distance between the estimator and the true distribution is sufficiently small, then the estimated distribution is appropriate for predictive learning because of \eqref{eq:ssTV in order to make pred}. For consistency, this distribution should factorize according to the structure estimate $\TCLn$ and for the predictive learning part, the estimate $\TCLn$ is considered the output of the Chow-Liu algorithm (see Algorithm \ref{alg:Chow-Liu}). We continue by defining the distribution estimator of $\p(\cdot)$ as
\begin{align}
\label{eq:distribution_estimator}
\RIPn (\hat{\p}_{\dagger} ) \triangleq \frac{1}{2} \prod_{\lp i,j\rp\in \mc{E}_{\TCLn}} \frac{1+x_{i}x_{j}\frac{\nmue_{i,j}}{(1-2q)^2}}{2}.
\end{align}
The estimator \eqref{eq:distribution_estimator} can be defined for any $q\in[0,1/2)$. For $q=0$ it reduces to that in the noiseless case, since $\TCLn\equiv \TCL$, $\nmue_{i,j}\equiv \hat{\mu}_{i,j}$, and thus $\RIPn  \big( \nPe \big)\equiv \RIP \big(\hat{P}\big)$. It is also closely related to the reverse information projection onto the tree-structured Ising models~\citep[supplementary material, Appendix A]{bresler2020learning}, in the sense that 
    \begin{align}\label{eq:RIP}
    \Pi{}_{\T} (P) =
    \argmin_{Q\in\mc{P}_{\T}(\alpha,\beta)}
    \KL \lp P||Q \rp, \quad P\in \mc{P}_{\T}(\alpha,\beta).
\end{align} To compute $\RIPn (\hat{p}_{\dagger} )$, two sufficient statistics are required: the structure $\TCLn$ and the set of second order moments~\citep{chow1968approximating,bresler2020learning}, under the assumption that $q$ is known. The next result provides a sufficient condition on the number of samples to guarantee that the $\Lnorm$ distance between the true distribution and the estimated distribution is small with probability at least $1-\delta$. 


Note that the dependence on $\beta$ changes from $e^{2\beta}$ to $e^{4\beta}$ when the data are noisy $q>0$, while for $q=0$ our bound exactly recovers the noiseless case~\citep{bresler2020learning}. A key component of the bound is the following function \begin{align}
    \Gamma(\beta,q)\triangleq \lp \frac{1-(1-2q)^2}{1-(1-2q)^4\tanh^2 (\beta)}\rp^2,\quad  \beta>0 \text{ and } q\in [0,1/2).
\end{align} Further, notice that $\Gamma(\beta,q)\in [0,1]$ for all $\beta>0$ and $q\in [0,1/2)$, and $\Gamma(\beta,0)=0$ for all $\beta>0$. Additionally, we define the functions\begin{align}
\hspace{-1cm} K(\beta,q)&\triangleq \frac{10(1-\tanh^2 (\beta))}{9+(1-2q)^2-\tanh^2 (\beta) (1-2q)^2(9(1-2q)^2+1)},\label{eq:K()_def1}
\end{align} and
\begin{align}
    B (\beta,q)\triangleq \max\left\{\frac{1}{K(\beta,q)},\left(1+ 2e^{\beta}\sqrt{2\left(1-q\right)q\tanh\beta}\right)^2\right\}.
\end{align} The latter constitute additional components of the main rusult that follows.
\begin{theorem} \label{thm:Main_result} Fix $\delta\in (0,1)$ and choose $\eta>0$. If

\begin{align}
    n\geq \max \left\{\frac{512}{ \eta^2(1-2q)^4}, \frac{1152 e^{2\beta}B(\beta,q)}{(1-2q)^4} , \frac{48 e^{4\beta}}{\eta^2}\Gamma(\beta,q) \right\}\log\lp\frac{6p^3}{\delta}\rp\label{eq:main_result_bound}
\end{align} then \begin{align}
	\P \lp \Lnorm \lp \p (\cdot),\RIPn (\hat{p}_{\dagger} ) \rp \leq \eta \rp \geq 1-\delta.
	\end{align} 
\end{theorem}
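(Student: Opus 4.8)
The plan is to bound the $\Lnorm$ distance between $\p(\cdot)$ and the estimator $\RIPn(\hat{p}_{\dagger})$ by decomposing the total error into a \emph{structural} component (the event that the Chow-Liu algorithm returns the wrong tree) and an \emph{estimation} component (the pairwise-correlation error, assuming the correct tree). First I would invoke Theorem \ref{thm:sufficient}: whenever the first two terms of the max in \eqref{eq:main_result_bound} dominate, the sample bound \eqref{eq:sufficient_number_of_samples_noise_thm} is met, so $\TCLn = \T$ with probability at least $1-\delta/3$ (absorbing the constant $1152e^{2\beta}B(\beta,q)(1-2q)^{-4}$ term, which matches the structure-learning requirement up to the factor $B(\beta,q)$). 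On this good event the estimator and the truth factorize over the \emph{same} edge set, so the $\Lnorm$ distance reduces to controlling, uniformly over pairs $(i,j)$, the deviation of the corrected empirical correlation $\nmue_{i,j}/(1-2q)^2$ from the true $\mu_{i,j}$.

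Next I would translate the total-variation distance on $2$-marginals into a correlation-difference bound. Because both $P_{ij}$ and $Q_{ij}$ are sign-valued pairwise distributions with uniform marginals, $d_{\text{TV}}(P_{ij},Q_{ij})$ is (up to a constant) $\tfrac14\bigl|\E[X_iX_j] - \hat{\mu}^{\dagger}_{ij}/(1-2q)^2\bigr|$ along the $\tpath$-product expression \eqref{eq:CDP_1}. The error in each corrected empirical correlation splits as $\nmue_{i,j}/(1-2q)^2 - \mu_{i,j} = \bigl(\nmue_{i,j} - \nmu_{i,j}\bigr)/(1-2q)^2$, using the identity $\nmu_{i,j} = (1-2q)^2\mu_{i,j}$ from \eqref{eq:BSC_corr}. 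The numerator is a centered average of $n$ i.i.d.\ bounded terms $\nX_i^{(k)}\nX_j^{(k)} \in \{-1,+1\}$, so a Hoeffding/Bernstein concentration bound gives, for each fixed pair, $\P\bigl(|\nmue_{i,j} - \nmu_{i,j}| > t\bigr) \le 2\exp(-nt^2/2)$. Setting the required accuracy $t = \eta(1-2q)^2/c$ and applying a union bound over the $\binom{p}{2}$ pairs yields the $\eta^{-2}(1-2q)^{-4}\log(p/\delta)$ scaling visible in the first term of \eqref{eq:main_result_bound}.

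The delicate part, and the main obstacle, is the third term $48e^{4\beta}\eta^{-2}\Gamma(\beta,q)\log(6p^3/\delta)$, which carries the extra $e^{4\beta}$ dependence that appears \emph{only} when $q>0$ (note $\Gamma(\beta,0)=0$). This term does not come from the naive pairwise concentration above; rather, it reflects the amplified variance of the corrected estimator when the hidden correlation along a path is close to the boundary $\tanh\beta$. I expect the function $\Gamma(\beta,q)=\bigl((1-(1-2q)^2)/(1-(1-2q)^4\tanh^2\beta)\bigr)^2$ to arise from a sharper variance calculation: the variance of $\nX_i\nX_j$ is $1-\nmu_{i,j}^2 = 1-(1-2q)^4\mu_{i,j}^2$, and after dividing by $(1-2q)^4$ and optimizing the Bernstein tail over the range $|\mu_{i,j}|\le\tanh\beta$, the effective noise penalty concentrates into $\Gamma(\beta,q)$. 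Controlling this requires a \emph{variance-sensitive} concentration inequality (Bernstein rather than Hoeffding) together with the telescoping product structure of $\mu_{i,j}=\prod_{e\in\tpath(i,j)}\tanh\theta_e$, so that errors propagate multiplicatively along paths but are damped by the CDP and the bound $|\mu_e|\le\tanh\beta$. The functions $K(\beta,q)$ and $B(\beta,q)$ in \eqref{eq:K()_def1} should emerge as the precise constants from this variance optimization; verifying that they correctly bundle the structural-recovery requirement with the path-propagation of estimation error is where the real work lies. Finally, I would combine the three failure events by a union bound, allocating $\delta/3$ to each, and conclude $\P(\Lnorm(\p(\cdot),\RIPn(\hat{p}_{\dagger}))\le\eta)\ge 1-\delta$.
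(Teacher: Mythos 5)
Your proposal has a genuine gap at its very first step. You condition on exact structure recovery, $\TCLn=\T$, and claim this is guaranteed "whenever the first two terms of the max in \eqref{eq:main_result_bound} dominate." But the bound in \eqref{eq:main_result_bound} contains no factor of $\tanh^{-2}(\alpha)$, whereas exact recovery via Theorem \ref{thm:sufficient} requires $n\gtrsim e^{4\beta}(1-2q)^{-4}\tanh^{-2}(\alpha)\log(p/\delta)$. The theorem you are proving is explicitly $\alpha$-free: as $\alpha\to 0$ the stated sample size does not blow up, yet exact structure recovery becomes impossible. So the good event you condition on does not hold with probability $1-\delta/3$ under the hypothesis of the theorem, and the argument collapses in the high-temperature regime. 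The paper avoids this by never demanding the full tree: it works with the event $\Estrongn$, which only asks the Chow-Liu output to contain the \emph{strong} edges ($|\tanh\theta_{ij}|\geq\ntau(\neps)/(1-2q)^2$), and then shows via the triangle inequality $\Lnorm(\p,\RIPn(\hat{\p}_{\dagger}))\leq \Lnorm(\p,\RIPn(\p))+\Lnorm(\RIPn(\p),\RIPn(\hat{\p}_{\dagger}))$ that misplacing weak edges costs at most $O(\neps+\ntau\eta)$ in ssTV (Lemma \ref{Lemma F5}), because a path is segmented into strong-edge blocks separated by weak edges whose correlations are themselves bounded by $\ntau$.

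Your account of the third term is also off target. The $48e^{4\beta}\eta^{-2}\Gamma(\beta,q)$ contribution does not come from a variance-sensitive Bernstein bound on individual pairwise correlations; it comes from the cascade event $\Ecascn$, which controls $\bigl|\prod_i\nmue_i/(1-2q)^2-\prod_i\nmu_i/(1-2q)^2\bigr|$ along paths. In the noiseless case the edge products $X_kX_{k+1}$ are independent and the natural martingale differences are centered; when $q>0$ the conditional laws $\P(Y_kY_{k+1}=\pm1\mid\nmue_{k-1},\ldots,\nmue_1)$ depend on the earlier empirical correlations (Lemma \ref{Lemma Cond_Prob}), so the conditional means $\E[Z_k^{(i)}\mid\mc{F}^k_{i-1}]$ are nonzero. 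The paper handles this with a generalized Bennett inequality for supermartingales and isolates the resulting bias into the quantity $\Delta$, whose square is exactly where $\Gamma(\beta,q)$ (and hence the vanishing of this term at $q=0$) originates. A plain telescoping-plus-Bernstein argument on pairwise correlations, as you sketch, would not produce this structure and would not explain why the term disappears continuously as $q\to0$.
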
 \eqref{eq:main_result_bound} and the inequalities $\Gamma (\beta,q)\leq \mathds{1}_{q\neq 0}$, $B(\beta,q)\leq \lp1+3\sqrt{q}\rp^2e^{2\beta\mathds{1}_{q\neq 0}}$ give Theorem \ref{thm:Main_result:short}. We provide the proof of Theorem \ref{thm:Main_result:short} and Theorem \ref{thm:Main_result} in Section \ref{Section:Predictive_Learning_Proof} (Appendix). As we mentioned in Section \ref{Predictive_Learning_Intro}, the sample complexity for accurate predictive learning does not depend on $\alpha$, that is, even in the high temperature regime $\alpha\to 0$ (and in contrast with the structure learning), the number of required samples does not increase.

 Conversely, the following result provides the necessary number of samples for small $\Lnorm$ distance by a minimax bound, that characterizes any possible estimator $\psi$. In other words, it provides the necessary number of samples required for accurate distribution estimation, appropriate for predictive learning (small $\mc{L}^{(2)}(\cdot)$).

\begin{theorem}[Necessary number of samples for inference]\label{thm:fano's inequality theorem}
Fix a number $\delta\in (0,1)$. Choose $\eta>0$ such that $\tanh(\alpha)+2\eta < \tanh(\beta)$. If the given number of samples satisfies the inequality   \begin{align}
\nn<\frac{1-\left[ \tanh(\alpha) +2\eta\right]^2}{16\eta^2[1-(4q(1-q))^p]}\log p,
\end{align} then for any algorithm $\psi$, it is true that
\begin{align*}
\inf_{\psi} \sup_{\substack{\T\in\mc{T} \\ \p(\cdot) \in \isingtree}} \P \lp \mc{L}^2 \lp \p (\cdot), \psi \lp \bnX_{1:n} \rp\rp >\eta  \rp>\frac{1}{2}.
\end{align*}
\end{theorem}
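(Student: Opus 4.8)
The plan is to prove this impossibility result by a standard reduction from estimation to multiple-hypothesis testing, followed by Fano's inequality, with the noise handled through a strong data processing inequality (SDPI) for the BSC exactly as in the companion lower bound, Theorem~\ref{thm:necessary}. Since the claim is a minimax statement over $\T\in\mc{T}$ and $\p(\cdot)\in\isingtree$, it suffices to exhibit a single finite family $\{\p_1,\dots,\p_M\}\subset\isingtree$ of tree-structured Ising models that is simultaneously (i) well separated in the $\Lnorm$ metric and (ii) statistically hard to tell apart from its \emph{noisy} observations. Concretely, if the $\eta$-balls around the $\p_i$ in the $\Lnorm$ metric are disjoint, then any estimator $\psi$ with $\Lnorm(\p_{\text{true}},\psi(\bnX_{1:\nn}))\le\eta$ can be post-processed (round to the nearest $\p_i$) into a test that recovers the active index; hence a uniform prior over the family turns an accurate estimator into a low-error multihypothesis test, and Fano's inequality lower-bounds $\nn$.

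For the construction I would take all $M\asymp p$ models to share a common tree (e.g.\ a star or path on the $p$ vertices) with baseline edge correlations $\tanh\alpha$, and let the $i$-th model raise the correlation of a single designated edge from $\tanh\alpha$ up to $\tanh\alpha+2\eta$. The hypothesis $\tanh(\alpha)+2\eta<\tanh(\beta)$ guarantees the perturbed weight still lies in $[\alpha,\beta]$, so every $\p_i\in\isingtree$. Because two such models differ only on the two-node marginal of the perturbed edge, the supremum in \eqref{eq:SSTV_DEF} is controlled by the total-variation distance between two sign-pair distributions whose correlations differ by $2\eta$; this yields both the required $\Lnorm$ separation and, on the information side, a symmetric-KL gap $\SKL(\p_i,\p_j)$ scaling like $(2\eta)^2/(1-[\tanh\alpha+2\eta]^2)$, which is precisely the origin of the factor $(1-[\tanh(\alpha)+2\eta]^2)/(16\eta^2)$, with the numeric constant absorbing the $\chi^2$-to-KL passage and the Fano overhead. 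Spreading the perturbation over $\asymp p$ distinct edge positions gives $\log M\asymp\log p$, producing the $\log p$ factor.

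The crux, and the step I expect to be hardest, is bounding the divergence between the \emph{observed} distributions $\np_i$ rather than the hidden ones: as noted after \eqref{eq:prob2}, $\np(\cdot)$ does not factorize over any tree and its (symmetric) KL admits no closed form. Here I would invoke the SDPI for the binary symmetric channel, in the sharp ``repetition-code'' form evaluated by \citet{polyanskiy2017strong}, to pass from the hidden-layer gap to the observed-layer gap, $\SKL(\np_i,\np_j)\le[1-(4q(1-q))^p]\,\SKL(\p_i,\p_j)$, which is exactly the contraction already used in Theorem~\ref{thm:necessary}. Feeding this into $\inf_\psi\max_i\P_i(\hat\imath\neq i)\ge 1-(\nn\max_{i,j}\SKL(\np_i,\np_j)+\log 2)/\log M$ and requiring the right-hand side to exceed $1/2$ yields $\nn<C'(1-[\tanh\alpha+2\eta]^2)/(16\eta^2[1-(4q(1-q))^p])\log p$, as claimed. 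As in the structure-learning case, the antipodal/repetition family is the extremal one for the SDPI, so the construction is expected to be tight only in the low-temperature limit $|\theta_{ij}|\to\infty$; for large $p$ the factor $[1-(4q(1-q))^p]^{-1}\to1$ and the bound degrades to the noiseless one, consistent with the discussion following Theorem~\ref{thm:necessary}.
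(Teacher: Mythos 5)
Your proposal follows essentially the same route as the paper's own proof: a family of $\asymp p$ tree-structured Ising models differing in a single edge whose correlation is shifted by $2\eta$ (the paper uses a chain with one edge lowered to $\alpha$ while the rest sit at $\arctanh(\tanh\alpha+2\eta)$, which is the mirror image of your construction), a symmetric-KL bound of order $\eta^2/(1-[\tanh\alpha+2\eta]^2)$ via the mean value theorem, the Polyanskiy SDPI contraction $[1-(4q(1-q))^p]$ to pass to the observed layer, and Fano's inequality in Tsybakov's form. The argument and all the key ingredients match; only the cosmetic choice of which edge weight is perturbed differs.
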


 Theorems \ref{thm:Main_result} and \ref{thm:fano's inequality theorem} reduce to the noiseless setting for $q=0$, that has been studied earlier by \citet{bresler2020learning}. Similarly to our structure learning results, presented previously (Theorem \ref{thm:sufficient}, Theorem \ref{thm:necessary}), when $q\to 1/2 $ we have $\nn\to\infty$, the latter indicates that the learning task becomes impossible for $q=1/2$.

\begin{remark}
Theorem \ref{thm:fano's inequality theorem} requires the assumption $\alpha<\beta$. The special case $\alpha=\beta$ can be derived by applying the same proof technique of Theorem \ref{thm:fano's inequality theorem} combined with Theorem 3 by~\cite[supplementary material]{bresler2020learning} and the SDPI by~\cite{polyanskiy2017strong}.
\end{remark}

\noindent Further details and proof sketches of Theorems \ref{thm:Main_result} and \ref{thm:fano's inequality theorem} are provided in Section \ref{necessary events discussion}.

\subsection{Estimating Higher Order Moments of Signed-Valued Trees}
\label{higher order moments - Isserlis'}

A collection of moments is sufficient to represent completely any probability mass function. For many distributions, the first and second order moments are sufficient statistics; this is true, for instance, for the \textit{Gaussian distribution} or the \textit{Ising model} with unitary and pairwise interactions. Even further, in the Gaussian case, the well-known Isserlis' Theorem (\citet{isserlis1918formula}) gives a closed form expression for all moments of every order.  As part of this work, we derive the corresponding moment expressions, \textit{for any tree-structured Ising model}. To derive the expression of higher order moments, we first prove a key property of tree structures: for any tree structure $\T=(\mc{V},\mc{E})$ and a even-sized set of nodes $\mc{V'}\subset \mc{V}$, we can partition $\mc{V'}$ into $|\mc{V'}|/2$ pairs of nodes, such that the path along any pair is disjoint with the path of any other pair (see Appendix A,  Lemma~\ref{disjoint_paths}). We denote as $\mc{C}_{\T}(\mc{V'})$ the set of distinct $|\mc{V'}|/2$ pairs of nodes in $\mc{V'}$, such that $ \tpath(u,u')\cap\tpath(w,w')=\emptyset$, for all $\{u,u'\},\{w,w'\}\in \mc{C}_{\T}(\mc{V'} \}$. Let $\mc{CP}_{\T}(\mc{V'})$ be the set of all edges in all mutually edge-disjoint paths with endpoints the pairs of nodes in $\mc{V'}$, that is,
	\begin{align}\label{eq:CPT}
	\mc{CP}_{\T}(\mc{V'}) \triangleq \bigcup_{\left\{w,w'\right\}\in \mc{C}_{\T}(\mc{V'})} \tpath_{\T}(w,w').
\end{align} 
\begin{algorithm}[t]
\caption{Matching Pairs \label{alg:matching_pairs} }
\begin{algorithmic}[1]
\Require Tree structure $\T=(\mc{V},\mc{E})$, any set $\mc{V'\subset \mc{V}}:|\mc{V'}|\in 2\mbb{N}$  
\State $\mc{CP}_{\T}\gets \emptyset$ 
\For{$i \in \mc{V}$}
	\If {$i\in \mc{V'}$}
        \State $p(i)\gets 1$ 
    \Else
    	 \State $p(i)\gets 0$ 
    \EndIf     
\EndFor
\For{$k\in [d]$} \algorithmiccomment{$d$ is the depth of the tree}
    \State $\text{Store all nodes at level $k$ to } L(k)$ 
\EndFor
\For{$k\in [d]$}
	\For{$i\in L(d+1-k)$} \algorithmiccomment{Visit each of the nodes at level $d+1-k$}
        \If {$p(i) = 1$} 
            \State $\mc{V'}\gets \mc{V'}\setminus\{i\}$
        	\State $\mc{CP}_{\T}\gets \mc{CP}_{\T}\cup (i,\text{ancestor}(i))$
            \If {$p(\text{ancestor(i)}) = 1$} 
           	 \State $\mc{V'}\gets \mc{V'}\setminus\{\text{ancestor}(i)\}$
           	 \State $p(\text{ancestor}(i))\gets 0$ 
            \Else
       		 \State $p(\text{ancestor}(i))\gets 1$ 
          \EndIf
        \EndIf
        \If {$\mc{V'}\equiv\emptyset$}
        	\State \Return $\mc{CP}_{\T}$           
        \EndIf
    \EndFor
\EndFor
\end{algorithmic}
\end{algorithm}For any tree $\T$, the set $\mc{CP}_{\T}(\mc{V'})$ can be computed via the Matching Pairs algorithm, Algorithm \ref{alg:matching_pairs}. By using the notation above, we can now present the equivalent of Isserlis' Theorem. The closed form expression of moments is given by the next theorem.
\begin{theorem} 
	\label{thm:binary:isserlis}
For any distribution of the form of \eqref{eq:tree_shaped}, which factorizes according to a tree $\T$ and has support $\{-1,+1\}^p$, it is true that \begin{align}\label{eq:binary:isserlis}
	\E\left[X_{i_1}X_{i_2}\ldots X_{i_k}\right]
    	&= 
	\begin{cases}
		0 & k \ \textrm{odd} 
	\\
    		\prod_{e\in \mc{CP}_{\T}(i_1,i_2,\ldots,i_k)}\mu_e
			& k \ \textrm{even} .
	\end{cases}
	\end{align}
\end{theorem}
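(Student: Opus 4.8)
The plan is to split into the two parities, treating the easy (odd) case by symmetry and reducing the even case to the pairwise correlation decay identity \eqref{eq:CDP_1} via an edge-disjoint pairing. For odd $k$ I would exploit the global spin-flip symmetry of the model: the tree distribution \eqref{eq:tree} depends on $\bx$ only through the edge products $x_i x_j$, so $\p(\bx)=\p(-\bx)$. Performing the change of variables $\bx\mapsto -\bx$ inside the expectation then gives $\E[X_{i_1}\cdots X_{i_k}]=(-1)^k\,\E[X_{i_1}\cdots X_{i_k}]$, which forces the moment to vanish whenever $k$ is odd. (Equivalently, this is immediate from $\E[X_i]=0$ once the moment is factored as below.)

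For even $k$, set $\mc{V}'=\{i_1,\ldots,i_k\}$ and invoke Lemma~\ref{disjoint_paths} to obtain a partition of $\mc{V}'$ into $k/2$ pairs $\{w,w'\}\in\mc{C}_{\T}(\mc{V}')$ whose connecting paths are mutually edge-disjoint; this is exactly where evenness of $k$ is used. Since the factors commute and are $\pm1$-valued, reordering the product along the pairing gives $\prod_{\ell}X_{i_\ell}=\prod_{\{w,w'\}}X_w X_{w'}$. The key algebraic identity is that each paired product telescopes along the unique path $w=v_0,v_1,\ldots,v_m=w'$: using $X_{v_t}^2=1$ one gets $X_w X_{w'}=\prod_{t=1}^{m}X_{v_{t-1}}X_{v_t}=\prod_{e\in\tpath(w,w')}\sigma_e$, where $\sigma_e\triangleq X_iX_j$ denotes the edge variable of $e=(i,j)$. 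Thus each paired product is a product of edge variables along its path.

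The remaining step is to factor the expectation. Because the matching is edge-disjoint, distinct pairs involve disjoint subsets of the family $\{X_iX_j:(i,j)\in\mc{E}\}$, which is mutually independent by Lemma~\ref{independent_products}; as functions of disjoint blocks of an independent family, the paired products $\{X_wX_{w'}\}$ are themselves mutually independent. Hence the expectation factorizes, and applying \eqref{eq:CDP_1} to each pair yields $\E[X_{i_1}\cdots X_{i_k}]=\prod_{\{w,w'\}}\E[X_wX_{w'}]=\prod_{\{w,w'\}}\prod_{e\in\tpath(w,w')}\mu_e=\prod_{e\in\mc{CP}_{\T}(\mc{V}')}\mu_e$, where the final equality is the definition \eqref{eq:CPT} of $\mc{CP}_{\T}(\mc{V}')$ together with edge-disjointness, so that the union is over distinct edges with no repeated factor.

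The main obstacle is purely combinatorial and is quarantined in Lemma~\ref{disjoint_paths}: the existence of an edge-disjoint perfect pairing of $\mc{V}'$ is precisely what makes both the independence step and the no-double-counting in $\mc{CP}_{\T}$ valid. A secondary point I would address is well-definedness, since \emph{a priori} $\mc{CP}_{\T}(\mc{V}')$ might depend on the chosen pairing: the argument shows $\prod_{e\in\mc{CP}_{\T}(\mc{V}')}\mu_e$ equals the (pairing-independent) moment, so the formula is unambiguous; equivalently, one can verify directly that an edge $e$ belongs to $\mc{CP}_{\T}(\mc{V}')$ iff an odd number of the $i_\ell$ lie in the component separated from the rest of the tree by deleting $e$, a characterization that makes no reference to the pairing. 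Finally, should repeated indices be permitted, I would first cancel squared factors via $X_i^2=1$, reducing to the distinct-node case handled above.
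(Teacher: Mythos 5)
Your proposal is correct and follows essentially the same route as the paper: the even case rests on Lemma~\ref{disjoint_paths} for the edge-disjoint pairing, telescopes each paired product along its path using $X_v^2=1$ (the paper's insertion of the almost-surely-unit variable $\bds{1}_{(w,w')}$), and then factorizes the expectation via the independence of edge products from Lemma~\ref{independent_products} together with \eqref{eq:CDP_1}; the odd case vanishes for the same spin-flip symmetry reason underlying the paper's direct summation. Your added remarks on well-definedness of $\mc{CP}_{\T}(\mc{V}')$ and the parity characterization of its edges are a welcome clarification but do not change the argument.
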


 Theorem \ref{thm:binary:isserlis} is an equivalent of Isserlis' theorem for tree-structured sign-valued distributions. Equation \eqref{eq:binary:isserlis} is used later to define an estimator of higher order moments that requires two sufficient statistics: the estimated structure $\TCLn$ and the correlation estimates $\nmue_e$, for any $e\in \TCLn$. Together with the parameter $q$, the higher order moments completely characterize the distribution of the noisy variables of the hidden model \eqref{eq:prob2}. We provide the proof of Theorem \ref{thm:binary:isserlis} in Appendix \ref{App.A}.

A similar expression to \eqref{eq:binary:isserlis} has been introduced in prior work. Specifically, Algorithm \ref{alg:matching_pairs} solves the problem of finding the optimal matching, see Definition 1 by~\cite[supplementary material]{bresler2020learning}.  The evaluation of higher order moments requires an explicit expression or a way to compute the set $\mc{CP}_{\T}$. For a given tree $\T=(\mc{E},\mc{V})$ and a set $((i_1,i_2,\ldots,i_k)\subset \mc{V}$, there is a unique set $\mc{CP}_{\T}(i_1,i_2,\ldots,i_k)$ (see Appendix \ref{App.A}, proof of Theorem \ref{thm:binary:isserlis}). Given a set of edges $\mc{E}$, we show that the set $\mc{CP}_{\T}$ can be evaluated by running a matching pair algorithm. For that purpose, we provide Algorithm \ref{alg:matching_pairs}  (with complexity $(\mc{O}(\mc{E}))$) and we prove its consistency (See Appendix, Lemma \ref{disjoint_paths}). The latter yields to an explicit expression of higher order moments; the Theorem \ref{thm:binary:isserlis}. Furthermore, it provides a concrete higher order moments estimator, that is based on the estimated structure $\TCL$ (or $\TCLn$) and the set of estimated correlations $\{\hat{\mu}_e:e\in \mc{CP}_{\TCL}\}$.

 \textbf{High Order Moments Estimator:} A higher order moment is the expected value of the product of the hidden tree-structured Ising model variables $\{ X_{i}: i \in \mc{V}' \}$ where $\mc{V}' \subset \mc{V}$. Theorem \ref{thm:binary:isserlis} gives the closed form solution for such moments. We have the following estimator for higher order moments using only noisy observations and known $q$. In particular, we have
\begin{align}
\hat{\E}\left[X_{i_1}X_{i_2}\ldots X_{i_k}\right]
    	&\equiv 0,
		\quad k\in 2\mbb{N}+1,
	\\
\hat{\E}\left[X_{i_1}X_{i_2}\ldots X_{i_k}\right] 
    	&\triangleq \prod_{e\in \mc{CP}_{\TCLn}(i_1,i_2,\ldots,i_k)}\frac{\nmue_e}{(1-2q)^2},
		\quad  k\in 2\mbb{N}\label{eq:second_order_monets_estimate}.
\end{align} 
The estimated structure and pairwise correlations are sufficient statistics: given those, \eqref{eq:second_order_monets_estimate} suggests a computationally efficient estimator for higher order moments. First we run the classical Chow-Liu algorithm to estimate the tree structure $\TCLn$, and then we run Algorithm \ref{alg:matching_pairs} with input the estimate $\TCLn$ to evaluate the set $\mc{CP}_{\TCLn}$. Thus, by estimating $\TCLn$, $\mc{CP}_{\TCLn}$ and $\nmue_e$ for any $e\in \TCLn$, we can in turn estimate any higher order moment through \eqref{eq:second_order_monets_estimate}. Considering the absolute estimation error, we have
\begin{align}\label{eq:accuracy_HOM}
 \left|\hat{\E}\left[\prod_{s\in \mc{V}'} X_s \right]-\E\left[\prod_{s\in \mc{V}'} X_s\right] \right|\leq  2|\mc{V'}| \Lnorm \lp \p (\cdot),\RIPn  \big( \nPe \big) \rp.
\end{align} 

Theorem \ref{thm:Main_result} guarantees small $\ssTV$ and in combination with \eqref{eq:accuracy_HOM} gives an upper bound on the higher order moment estimate \eqref{eq:second_order_monets_estimate}. In Section \ref{higher_order_moments}, we provide further details and discussion about Theorem \ref{thm:binary:isserlis}, Algorithm \ref{alg:matching_pairs}, that computes the sets $\mc{CP}_{\T}(\mc{V'}),\mc{CP}_{\TCLn}(\mc{V'})$, and the bound on the error of estimation \eqref{eq:accuracy_HOM}. 

So far we have studied the consistency of the estimator with respect to the $\Lnorm$ metric. We are also interested in sample complexity bounds for $\phi$-divergences. While general divergences may be challenging, the most widely-used is the  KL-divergence, particularly in testing Ising models~\citep{daskalakis2018testing}. The next result gives a bound for the sufficient number of samples to guarantee a small \emph{symmetric} KL divergence $\SKL(P||Q)\triangleq \KL(P||Q)+\KL(Q||P)$ with high probability. For any Ising model distributions $P,Q$ of the form \eqref{eq:Ising_model_zero_external} with respective interaction parameters $\bds{\theta},\bds{\theta}'$, we have\begin{align}\label{eq:SKL}
 \SKL\left(\bds{\theta}||\bds{\theta}'\right) \triangleq \SKL(P||Q) & =\sum_{s,t\in\mc{E}} \left(\theta_{st}-\theta'_{st}\right)\left(\mu_{st}-\mu'_{st}\right).
\end{align}

\begin{theorem}[Upper Bounds for the Symmetric KL Divergence]\label{thm:SKLTheorem}
If the number of samples $\nn$ of $\bnX$ satisfies 
	\begin{align}\label{eq:sufficient:SKL}
	\nn \geq 4 \frac{\beta^{2}(p-1)^2}{(1-2q)^4\eta_{s}^{2}} \log\left( \frac{p^{2}}{\delta}\right),
	\end{align}
then for $\p (\cdot)\in \isingtree$ we have
	\begin{align}
	\P \lp \SKL\left(\p (\cdot)||\RIPn \big(\nPe \big) \right) \leq \eta_{s} \rp \ge 1-\delta,
	\end{align}
where $\TCLn$ is the Chow-Liu tree defined in \eqref{eq:CL_argmax_noisy} and the estimate $\RIPn\big(\nPe \big)$ is given by \eqref{eq:distribution_estimator}.
\end{theorem}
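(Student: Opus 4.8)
The plan is to start from the closed-form expression \eqref{eq:SKL} for the symmetric KL divergence between two tree-structured Ising models and reduce the problem to controlling the edgewise discrepancy between the true interaction parameters $\theta_{e}$ and the estimated ones. Writing $\theta'_{e}\triangleq\arctanh\big(\nmue_{e}/(1-2q)^2\big)$ for the parameters of $\RIPn(\nPe)$ and $\mu'_{e}\triangleq\nmue_{e}/(1-2q)^2$ for the corresponding estimated correlations, the estimator \eqref{eq:distribution_estimator} is itself a tree-structured Ising model, so \eqref{eq:SKL} applies. The first step is to condition on the high-probability event that the Chow-Liu output recovers the true structure, $\TCLn=\T$, which holds with probability at least $1-\delta/2$ once $\nn$ is large enough by Theorem \ref{thm:sufficient}; on this event the edge set appearing in \eqref{eq:SKL} is exactly $\mc{E}_{\T}$, and the divergence collapses to $\SKL(\p(\cdot)\|\RIPn(\nPe))=\sum_{e\in\mc{E}_{\T}}(\theta_{e}-\theta'_{e})(\mu_{e}-\mu'_{e})$, a sum of only $p-1$ terms.

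Next I would bound each summand. Because $\tanh$ (hence $\arctanh$) is strictly increasing, $\theta_{e}-\theta'_{e}$ and $\mu_{e}-\mu'_{e}$ share the same sign, so each product is nonnegative and equals its absolute value; using $|\theta_{e}|\le\beta$ together with a high-probability range bound on $\theta'_{e}$, I would bound $|\theta_{e}-\theta'_{e}|\le 2\beta$, so that $(\theta_{e}-\theta'_{e})(\mu_{e}-\mu'_{e})\le 2\beta\,|\mu_{e}-\mu'_{e}|$. The key reduction is \eqref{eq:BSC_corr}: since $\nmu_{e}=(1-2q)^2\mu_{e}$, the correlation error rescales as $|\mu_{e}-\mu'_{e}|=|\nmu_{e}-\nmue_{e}|/(1-2q)^2$, turning everything into a statement about the empirical correlations of the \emph{observable} variables. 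Each $\nmue_{e}=\frac{1}{\nn}\sum_{k}\nX_{i}^{(k)}\nX_{j}^{(k)}$ is an average of i.i.d.\ $\{-1,+1\}$-valued terms with mean $\nmu_{e}$, so Hoeffding's inequality gives $\P(|\nmue_{e}-\nmu_{e}|>t)\le 2e^{-\nn t^{2}/2}$. Demanding that every edge satisfy $2\beta|\mu_{e}-\mu'_{e}|\le\eta_{s}/(p-1)$, i.e.\ $|\nmue_{e}-\nmu_{e}|\le t\triangleq (1-2q)^2\eta_{s}/(2\beta(p-1))$, forces the whole sum below $\eta_{s}$; a union bound over the at most $p^{2}$ pairs and solving $2p^{2}e^{-\nn t^{2}/2}\le\delta$ for $\nn$ yields a requirement of the same form as \eqref{eq:sufficient:SKL}, namely $\nn\gtrsim \beta^{2}(p-1)^2(1-2q)^{-4}\eta_{s}^{-2}\log(p^{2}/\delta)$, up to the numerical constant.

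I expect the main obstacle to be twofold. First, the estimated parameter $\theta'_{e}=\arctanh(\nmue_{e}/(1-2q)^2)$ is not a priori bounded --- $\nmue_{e}/(1-2q)^2$ can exceed $\tanh\beta$, or even leave $[-1,1]$ for large $q$, on low-probability samples --- so making $|\theta_{e}-\theta'_{e}|\le 2\beta$ rigorous requires either projecting the correlation estimate into $[-\tanh\beta,\tanh\beta]$ before forming the estimator, or showing that on the very same concentration event $\theta'_{e}$ already lies in the admissible range. Second, the bound controls only the parameter-estimation error and presumes correct structure; the clean way to present this is to intersect the structure-recovery event of Theorem \ref{thm:sufficient} with the correlation-concentration event and allocate $\delta/2$ to each, so that the statement holds for $\nn$ at least the maximum of the two sample sizes. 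Alternatively, one can observe that when $t$ falls below the relevant correlation gap the same concentration event already guarantees $\TCLn=\T$, collapsing the two requirements into one. The concentration and summation are routine; the $\arctanh$ range control and the bookkeeping of the two events are the only delicate points.
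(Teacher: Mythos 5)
Your core argument is the paper's argument: both start from the closed form \eqref{eq:SKL}, bound each factor by $\left|\theta_{e}-\theta'_{e}\right|\leq 2\beta$, rescale the correlation error via \eqref{eq:BSC_corr} so that $\left|\mu_{e}-\mu'_{e}\right|=\left|\nmu_{e}-\nmue_{e}\right|/(1-2q)^{2}$, control the empirical correlations by Hoeffding plus a union bound over the $\binom{p}{2}$ pairs (this is exactly the event $\Ecorrn$ of Lemma \ref{Lem_Ecorrn}), and then sum over the $p-1$ edges with the per-edge budget $\eta_{s}/(p-1)$. The one substantive place where you depart from the paper is the first step: the paper never conditions on $\TCLn=\T$. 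Its proof works under $\Ecorrn$ alone and simply asserts $\left|\theta_{st}-\theta'_{st}\right|\leq 2\beta$ and $\left|\mu_{st}-\mu'_{st}\right|\leq\neps/(1-2q)^{2}$ for all edges in the sum. Your extra conditioning is not free: invoking Theorem \ref{thm:sufficient} to get $\TCLn=\T$ with probability $1-\delta/2$ imports a second sample-size requirement of order $e^{2\beta(1+\mathds{1}_{q\neq 0})}/\big((1-2q)^{4}\tanh^{2}\alpha\big)\log(p/\delta)$, which depends on $\alpha$ and does not appear in \eqref{eq:sufficient:SKL}; so followed literally, your route proves the conclusion only under $\nn\geq\max\{\text{structure bound},\text{SKL bound}\}$, a strictly stronger hypothesis than the theorem states. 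If you want to match the stated bound you must, as the paper does, avoid the structure-recovery event entirely and argue only about parameter error. That said, the delicacies you flag are genuine and are precisely the points the published proof glosses over: when $\TCLn\neq\T$ the quantity $\mu'_{st}$ for a true edge $(s,t)\notin\mc{E}_{\TCLn}$ is a product of estimated correlations along a path, not an empirical pairwise correlation, so it is not directly controlled by $\Ecorrn$; and $\theta'_{e}=\arctanh\big(\nmue_{e}/(1-2q)^{2}\big)$ is not a priori in $[-\beta,\beta]$ (or even finite) without a range argument or a projection. So your proposal is essentially the paper's proof made more honest about its hidden assumptions, at the cost of a sample-complexity term the theorem does not charge for.
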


The asymptotic behavior of the bound in \eqref{eq:sufficient:SKL} was recently studied by~\citet{daskalakis2018testing}. In that work, a set of testing algorithms are proposed and analyzed under the assumption of an Ising model with respect to trees and arbitrary graphs. Theorem \ref{thm:SKLTheorem} gives rise to possible extensions of testing algorithms to the hidden model setting. We consider the latter as an interesting subject for future work.     

\begin{figure}[!t]
  \centering
  \includegraphics[width=.7\linewidth]{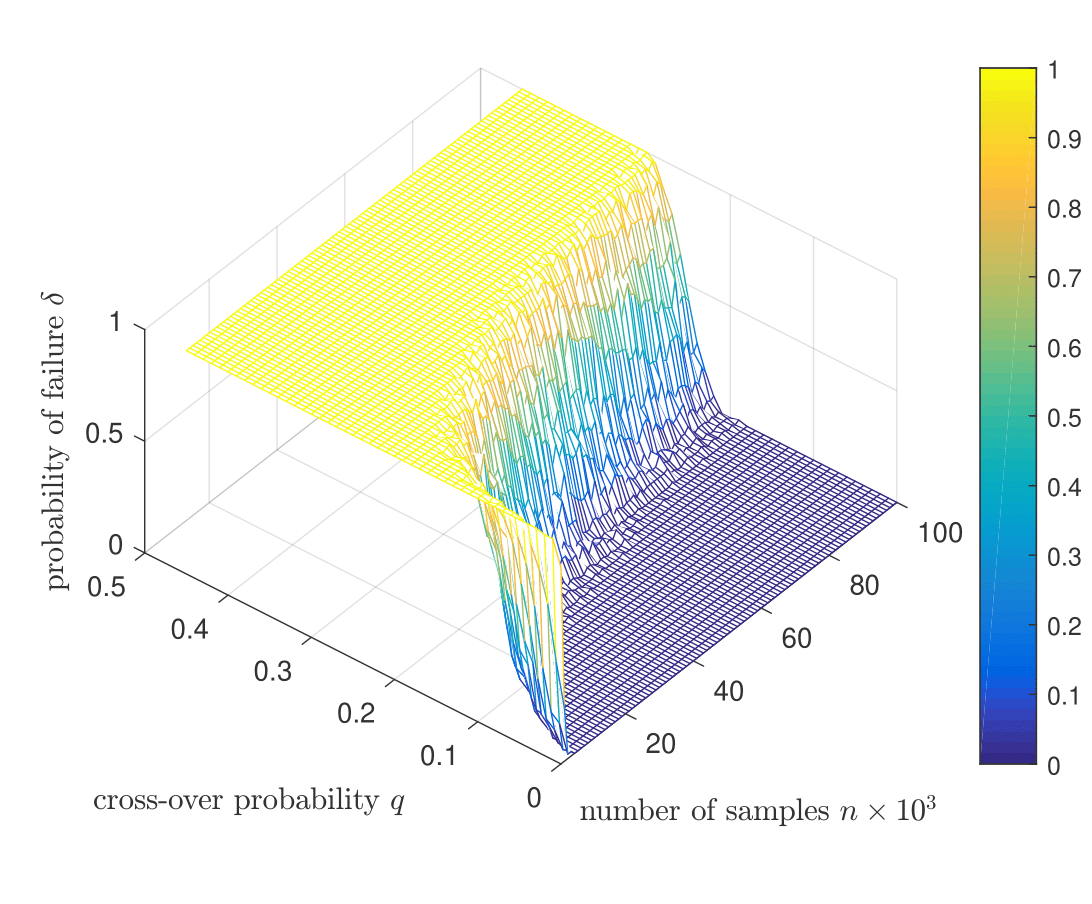}
\caption{Probability of incorrect structure recovery, The theoretical bound is given by Theorem \ref{thm:sufficient}. The top view of the figure is Figure \ref{fig:sfig1} and provides a clear comparison between the experimental and theoretical results.}  \label{fig:3D_1}
\end{figure}
\begin{figure}
  \centering
  \includegraphics[width=.7\linewidth]{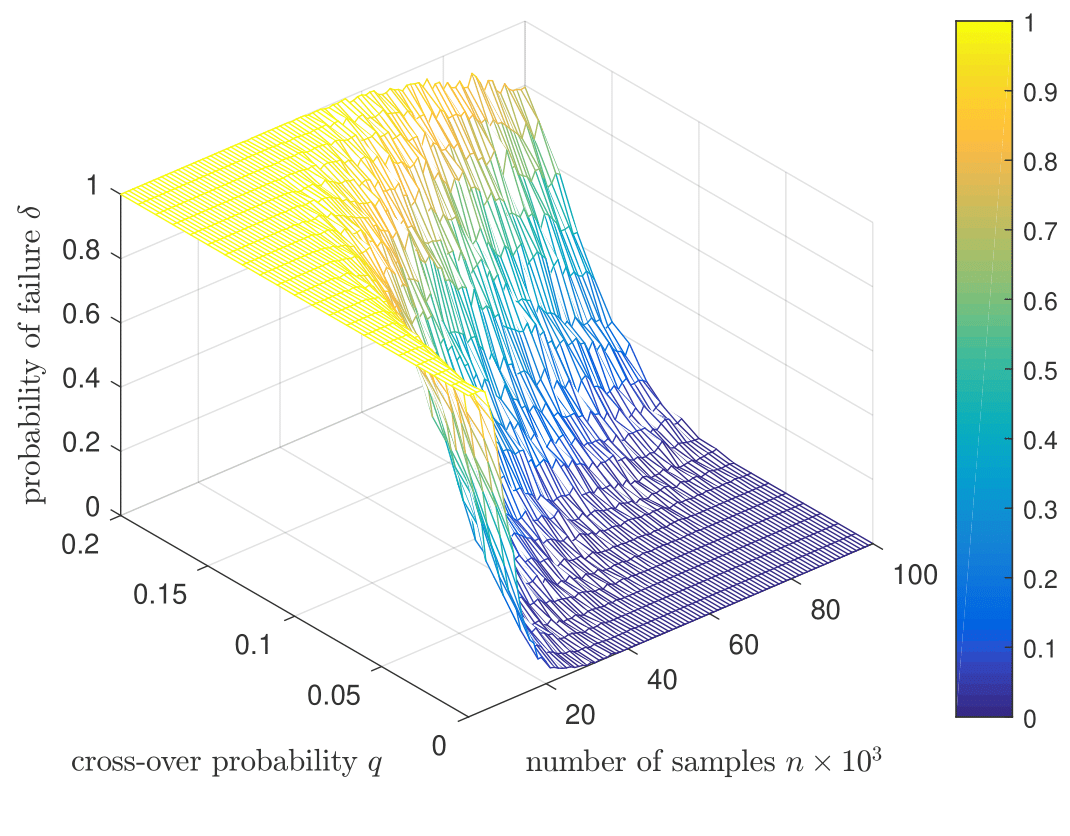}
\caption{Estimate of the probability of the ssTV to be greater than $\eta=0.03$. The theoretical bound is given by Theorem \ref{thm:Main_result}. The top view of the figure is Figure \ref{fig:sfig2} and provides a clear comparison between the experimental and theoretical results.}\label{fig:3D_2}
\end{figure}

\begin{figure}
  \centering
  \includegraphics[width=.7\linewidth]{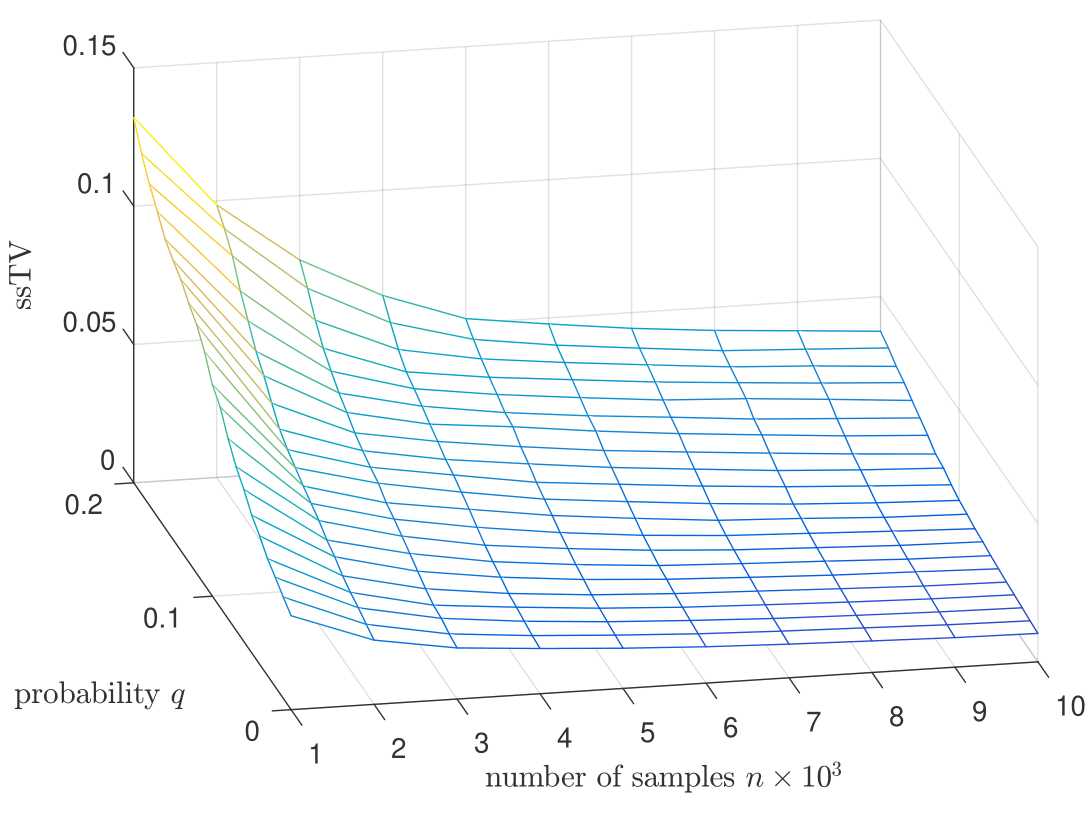}
\caption{Estimate of the distribution error metric ssTV as a function of $q$ and $n$.}\label{fig:3D_3}
\end{figure}

\subsection{Simulations}\label{Simulations}
We provide empirical results based on synthetic data to illustrate the probability of error $\delta$ as function of the cross-over probability $q$ and the number of samples $n$. For the simulations of this paper the original tree structure $\T$ is generated randomly where, starting from the root, we choose the parent of each new node uniformly at random among the nodes that are currently in the tree, in a sequential fashion. First, we estimate the probability of error $\P\big(\TCLn\neq \T \big)$ (named as $\delta$) of the structure learning problem, Figure \ref{fig:3D_1}. For the structure learning experiments, the number of nodes is $100$, $\beta=\arctanh (0.8)$, and $\alpha=\arctanh (0.2)$. Further, we considering $100$ Monte Carlo runs for averaging, and we plot the estimated probability of incorrect structure recovery while $q$ and $n$ vary. As a next step, we would like to see how well the theoretical bound of Theorem \ref{thm:sufficient} matches with the experimental results. To do this we plot the top view of Figure \ref{fig:3D_1} to get Figure \ref{fig:sfig1}. Quite remarkably, the theoretical and experimental bounds exactly match. The latter suggests that our theoretical bound that we derive, sample complexity of the Chow-Liu algorithm (Theorem \ref{thm:sufficient}), is indeed accurate. Second, we plot the probability of error for the predictive learning task, that is the probability of the ssTV to be greater than a positive number $\eta$ (Figure \ref{fig:3D_2}). For the simulation part, we restrict our attention to the case that the first of three terms in the maximization of \eqref{eq:main_result_bound} is the dominant. In fact, $\eta=0.03$, $p=31$, while $\alpha$ and $\beta$ are the same as the structure learning. Finally, Figure \ref{fig:3D_3} presents the ssTV itself for different values of $q$ and $n$. Finally, the top view of Figure \ref{fig:3D_2} is Figure \ref{fig:sfig2}, the latter suggest that the bound of our main result, Theorem \ref{thm:Main_result} is accurate.

Finally, we provide experimental results for the case of unknown $q$. Specifically, Figure \ref{Fig:est_q} illustrates the relationship between the average probability of error and the relative error $|\hat{q}-q|/q$ for the predictive learning task. We notice that the distribution can be approximated by using an estimate $\hat{q}$ of $q$ even for relative error $30\%$ or $60\%$.

\begin{figure}
\includegraphics[width=0.48\textwidth]{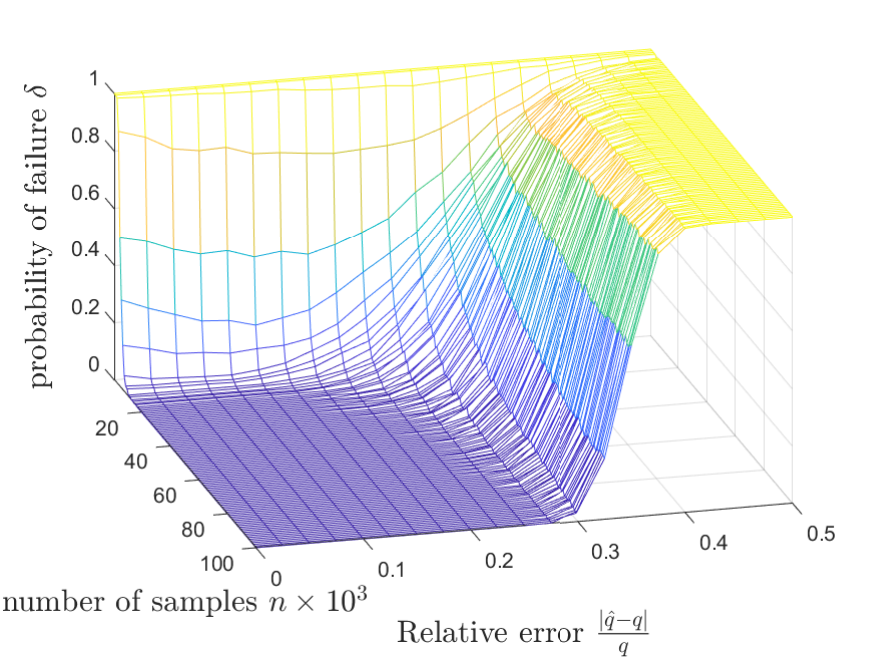}
\hspace*{\fill}
\includegraphics[width=0.48\textwidth]{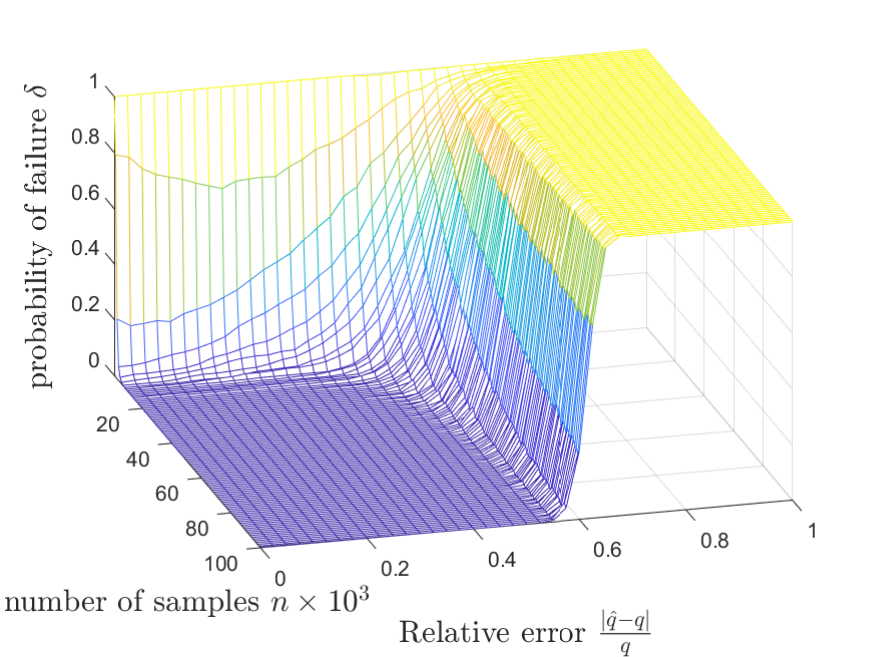}
\caption{The probability of error in the predictive learning task for different values of $n$ and estimation error of the parameter $q$. For both figures we consider $p=31$, $\alpha=0.2$, $\beta=1$, $q=0.1$ and averaging over $500$ independent runs. Left: $\hat{q}\in [0.05,0.15]$, $\eta=0.1$, Right: $\hat{q}\in [0,0.2]$, $\eta=0.12$.} \label{Fig:est_q}
\end{figure}

\section{Discussion }

In this section, we present sketches of proofs, we compare our results with prior work, we further elaborate on Algorithm \ref{alg:Chow-Liu}, Algorithm \ref{alg:matching_pairs} and the error of higher order moment estimates. First, we discuss the convergence of the estimate $\TCLn$ (Section \ref{The Chow-Liu algorithm: discussion}). In section \ref{Structure Learning under noisy observations and comparison with prior results}, we explain the connection between the hidden and noiseless settings on the tree structure learning problem. Later, in Section \ref{necessary events discussion}, we present the analysis and a sketch of proof for Theorem \ref{thm:Main_result}. Finally, in Section \ref{higher_order_moments}, we provide further details about Theorem \ref{thm:binary:isserlis}, discussion about the Matching Pairs algorithm (Algorithm \ref{alg:matching_pairs}) and the accuracy of the proposed higher order moments estimator \eqref{eq:second_order_monets_estimate}.

\subsection{Estimating the Tree Structure $\T$ \label{The Chow-Liu algorithm}\label{The Chow-Liu algorithm: discussion}}

In this work, the structure learning algorithm is based on the classical Chow-Liu algorithm, and is summarized in Algorithm \ref{alg:Chow-Liu}. We can express its output as
\begin{align}\label{eq:CL_argmax_noisy}
\TCLn=\argmax_{\T\in \mc{T}}  \sum_{(i,j)\in \in \mc{E}_{\T}} \left|\nmue_{i,j}\right|,
\end{align} (see also Section \ref{MLE}.). The difference between Algorithm \ref{alg:Chow-Liu} and the Chow-Liu algorithm of the noiseless scheme is the use of noisy observations as input, since we consider a hidden model, whereas~\cite{bresler2020learning} assume that observations directly from the tree-structured model are available. Further, \eqref{eq:CL_argmax_noisy} shows the consistency of the estimate $\TCLn$ for sufficiently large $n$. The tree structure estimator $\TCLn$ converges to $\T$ when $n\to\infty$, since \begin{align}\label{eq:asc_nmu}
\lim_{n\to \infty}\nmue_{i,j}\overset{\text{a.s.}}{=} c^2 \mu_{i,j}.
\end{align} From \eqref{eq:CL_argmax_noisy} and \eqref{eq:asc_nmu} we have (under an appropriate metric) \begin{align}
\lim_{n\to\infty} \TCLn \overset{\text{a.s.}}{=} \T.
\end{align} Asymptotically, both $\TCL$ and $\TCLn$ converge to $\T$, where $\TCL$ denotes the structure estimate from noiseless data ($q=0$). Most importantly, the Chow-Liu algorithm also returns the exact hidden structure with high probability given finite number of noisy samples. We provide the finite sample complexity bound in Theorem \ref{thm:sufficient}. For a fixed probability of exact structure recovery $1-\delta$, more samples are required in the hidden model setting, compared to the noiseless one. Additionally, the difference of the sample complexity between the noisy and noiseless setting comes from Theorem \ref{thm:sufficient} by comparing the bound for the values $q=0$ and $q\neq 0$.

\subsection{Hidden Structure Recovery and Comparison with Prior Results}\label{Structure Learning under noisy observations and comparison with prior results}
Theorem \ref{thm:sufficient} and Theorem \ref{thm:necessary} extend the noiseless setting \citep[Theorem 3.2, Theorem 3.1]{bresler2020learning} to our hidden model; the noiseless results correspond to $q = 0$. 
In particular, in the presence of noise, \textit{the dependence on $p$ remains strictly logarithmic, that is,} $\mc{O}(\log(p/\delta))$. 
To make the connection between sufficient conditions more explicit, by setting $q=0$ in \eqref{eq:sufficient_number_of_samples_noise_thm} of Theorem \ref{thm:sufficient}, we retrieve the corresponding structure learning result by~\citet[Theorem 3.2]{bresler2020learning} \textit{exactly}: Fix a number $\delta\in (0,1)$. If the number of samples of $\bX$ satisfy the inequality \begin{align}\label{eq:suff_number_of_samples}
	n 
    &\geq 
    \frac{32}{\tanh^{2} \alpha\left(1-\tanh\beta\right)}
    	\log\lp \frac{2p^{2}}{\delta}\rp,
	\end{align}then the Chow-Liu algorithm returns $\TCL=\T$ with probability at least $1-\delta$. An equivalent condition of \eqref{eq:suff_number_of_samples} is 
 	\begin{align}\label{eq:weak edge noiseless}
	\tanh\alpha \geq\frac{4\eps}{\sqrt{1-\tanh\beta}}\triangleq \tau (\epsilon),
    \text{\ and\ }
    \eps\triangleq\sqrt{2/n\log\left(2p^{2}/\delta\right)},
	\end{align} 
the latter shows that the weight of weakest edge should satisfy the following inequality $\alpha>\arctanh\lp 4\eps/\sqrt{1-\tanh\beta}\rp$ (\citet{bresler2020learning}).  For the hidden model, the equivalent extended condition for the weakest edge is 
	\begin{align}\label{eq:weak edge noisy}
	\tanh\alpha 
    &\geq\frac{4\neps\sqrt{1-\left(1-2q\right)^{4}\tanh\beta}}{\left(1-2q\right)^{2}\left(1-\tanh\beta\right)}\triangleq \ntau (\neps),
    \text{\ and\ }
    \neps
    =
    \sqrt{ \frac{2\log\left(2p^{2}/\delta\right)}{\nn}},
	\end{align} (see Appendix C, Lemma \ref{lemma_Estrong}) Condition \eqref{eq:weak edge noiseless} is retrieved through \eqref{eq:weak edge noisy} for $q=0$.
Note that, for $q=1/2$, the mutual information of the hidden and observable variables is zero, thus structure recovery is impossible. 

Theorem \ref{thm:necessary} provides the necessary number of samples bound for exact structure recovery given noisy observations. In fact, it generalizes Theorem 3.1 by~\citet{bresler2020learning} to the hidden setting. By fixing $q=0$, Theorem \ref{thm:necessary} recovers the noiseless case. Fix $\delta\in (0,1)$. If the number of samples of $\bX$ satisfies the inequality
	\begin{align}
	\n< \frac{1}{16}e^{2\beta}\left[\alpha\tanh(\alpha)\right]^{-1}\log \lp p\rp,
	\end{align} 
then for any algorithmic mapping (estimator) $\psi$, it is true that
	\begin{align}
	\inf_{\psi} \sup_{\substack{\T\in\mc{T} \\ P\in \isingtree}} \P \lp \psi \lp X_{1:n} \rp \neq \T \rp>\frac{1}{2}.
	\end{align}  
When there is no noise, $q=0$, we retrieve the noiseless result, while for any $q\in (0,1/2)$ the sample complexity increases since $[1-(4q(1-q))^p]^{-1}>1$ in \eqref{eq:suff_n_complexity} and for $q\to1/2$ the required number of samples $\nn \to \infty$, which makes structure learning impossible. The ratio between the noiseless and noisy necessary conditions indicates the gap between the hidden model and the original (noiseless) setting, which reads
	\begin{align}\label{eq:1/SDPI}
	\frac{\nn}{n}\leq[1-(4q(1-q))^p]^{-1} \leq \frac{1}{\eta_{\text{KL}}},
	\end{align} 
(see Appendix E.). The right hand-side of \eqref{eq:1/SDPI} is the strong data processing inequality for the binary symmetric channel, which was recently developed by~\citet[Equation (39)]{polyanskiy2017strong}. We continue by providing the main idea and the important steps of the proof of Theorem \ref{thm:Main_result}.

\subsection{Theorem \ref{thm:Main_result}: A Sketch of the Proof}
\label{necessary events discussion}

Recall that the indices $i,j\in\mc{V}$ of the quantities $\nmu_{i,j}$ and $\nmue_{i,j}$ are pair of nodes, and in fact they can be considered as one (pair) index. For sake of space we introduce the notation $\nmu_{e}$ and $\nmue_{e}$ for some $e\in\mc{E}_{\T}$, that is consistent with our previous definition and $e$ represents a pair of nodes. Theorem \ref{thm:Main_result} guarantees that the estimated pairwise marginal distributions are close to the the original distributions by ensuring that the $\Lnorm$ is small. In this section we provide a sketch of the proof of the Theorem and we mention the main differences between the hidden model and the noiseless case~\citep{bresler2020learning}. The intersection of three events is sufficient to guarantee that $\Lnorm$ is upper bound by $\eta>0$:
	\begin{align}
	\Ecorrn &\triangleq
		\left\{ \sup_{i,j\in\mc{V}}\left|\nmu_{i,j}-\nmue_{i,j}\right|\leq\neps\right\} 
		\label{eq:event_corr_nois-1},\\
	\Estrongn &\triangleq
		\left\{ \left\{ e\in\mc{E}_{\T}:\left|\tanh\theta_e\right|\geq\frac{\ntau (\neps)}{\left(1-2q\right)^{2}}\right\} \in\mc{E}_{\TCLn}\right\} 
		\label{eq:event_strong_nois-1},\\
	\Ecascn &\triangleq 
		\left\{ \left| \prod_{e\in\tpath_\T (i,j)}\frac{\hat{\mu}^{\dagger}_{e}}{(1-2q)^2} - \prod_{e\in\tpath_\T (i,j)}\frac{\mu^{\dagger}_{e}}{(1-2q)^2} \right|\leq\ngam: \,\, i,j\in\mc{V}\right\},
		\label{eq:event_cascade_nois-1}
\end{align}
where \eqref{eq:weak edge noisy} gives the definition of $\ntau (\neps)$. The three events are equivalent events of the noiseless case, but they are modified accordingly to guarantee accurate estimation based on noisy data. The event $\Ecorrn$ guarantees that the error of the correlation estimates is not greater than $\neps$. Under the event $\Estrongn$ all the strong edges are recovered by the Chow-Liu algorithm. Similarly to the noiseless setting, the event $\Estrongn$ requires the Chow-Liu algorithm to recover all the strong edges, while the weak edges (those that do not satisfy the inequality in \eqref{eq:event_strong_nois-1}) do not affect the accuracy of the predictive learning, even if the Chow-Liu algorithm fails to recover them. In contrast with structure learning, exact structure recovery is not necessary for the predicative learning task. In other words, even if $\alpha$ is extremely small, assume $\tanh(\alpha)\leq \ntau(\neps)/(1-2q)^2$ the required number of samples for accurate predictive learning will remain unaffected.

Under the event $\Ecascn$ the end-to-end error along paths is no greater than $\ngam$. In fact, each path between two nodes of the tree can be considered a sequence of segments with strong and weak edges. The end-to-end path error is determined by the strong edge segments of the path through the parameter $\ngam$ for the $\Ecascn$ event, while the effect of weak edges parameters is controlled by the quantity $\ntau (\neps)$ (for the segmentation of the tree and the detailed proof see \ref{Lemma end_to_end_error}). Our goal is to find sufficient conditions on the parameters $\neps$ and $\ngam$ that guarantee that the events $\Ecorrn, \Ecascn$ and $\Estrong$ occur with high probability. 

Recall that our goal is to guarantee that the quantity $\mc{L}^{(2)}(\p(\cdot),\RIPn (\hat{\p}_{\dagger} ))$ is smaller than a fixed number $\eta>0$ with probability at least $1-\delta$. To do this, we follow the technique of prior work by~\cite{bresler2020learning}, the triangle inequality gives 
\begin{align}\label{eq:triangle_sketch}
\mc{L}^{(2)}\left(\p(\cdot),\RIPn (\hat{\p}_{\dagger} )\right)\leq\mc{L}^{(2)}\left(\p (\cdot),\RIPn\left(\p(\cdot)\right)\right)+\mc{L}^{(2)}\left(\RIPn\left(\p(\cdot)\right),\RIPn (\hat{\p}_{\dagger} )\right),
\end{align} and we find the required number of samples such that each of the terms $\mc{L}^{(2)}(\p (\cdot),\RIPn\left(\p(\cdot)\right))$ and $\mc{L}^{(2)}(\RIPn\left(\p(\cdot)\right),\RIPn (\hat{\p}_{\dagger} ))$ is no greater than $\eta/2$ with probability at least $1-\delta$. As we show the probability of the event $\Ecascn$ (Lemma \ref{Lemma Ecascn}, Appendix) and the $\Lnorm$ (between the true and estimated distribution) can be bounded by a constant uniformly over the set of all trees and is not affected by long paths. To prove these properties of the hidden model is non-trivial and ensures that the estimation error from noisy observations does not increase exponentially along paths as someone might expect. Specifically, the first quantity at the right hand-side of inequality \eqref{eq:triangle_sketch} represents the loss due to graph estimation error, while the second term represents the loss due to parameter estimation error. Lemma \ref{Lemma end_to_end_error} (Appendix) shows that under the event 
\begin{align}
\Einter\triangleq\Ecorrn\cap\Ecascn \cap \Estrongn,    
\end{align} if \begin{align}\label{eq:condition1}
\ngam\leq \frac{\eta}{3}\text{ and } \neps\leq (1-2q)^2 e^{-\beta}\left[20\left(1+ 2e^{\beta}\sqrt{2\left(1-q\right)q\tanh\beta}\right)\right]^{-1}
\end{align}
then $\mc{L}^{(2)}(\RIPn\left(\p(\cdot)\right),\RIPn (\hat{\p}_{\dagger} ))\leq \eta/2$. Further Lemma \ref{Lemma F5} (Appendix) shows that if
\begin{align} \label{eq:condition2}
\neps\leq \min \left\{ \frac{\eta}{16}(1-2q)^2,\frac{(1-2q)^2 e^{-\beta}}{24\left(1+ 2e^{\beta}\sqrt{2\left(1-q\right)q\tanh\beta}\right)} \right\}
\end{align}
then $\mc{L}^{(2)}\left(\p (\cdot),\RIPn\left(\p(\cdot)\right)\right)\leq \frac{\eta}{2}$ under the event $\Ecorrn\cap \Estrongn$. Both conditions \eqref{eq:condition1} and \eqref{eq:condition2} should be satisfied, so it is necessary to have \begin{align}\label{eq:condition3}
    \ngam\leq \frac{\eta}{3} \text{ and } \neps\leq \min \left\{ \frac{\eta}{16}(1-2q)^2,\frac{(1-2q)^2 e^{-\beta}}{24\left(1+ 2e^{\beta}\sqrt{2\left(1-q\right)q\tanh\beta}\right)} \right\}.
\end{align}
To guarantee that the errors $\ngam$ and $\neps$ are sufficient small such that \eqref{eq:condition3} is satisfied, we need to make sure that the number of samples $n$ is sufficiently large. In fact, the upper bounds on the errors translate into lower bounds on the number of samples through the concentration bounds for the events. Specifically, Lemma \ref{Lem_Ecorrn} gives a sufficient sample size to ensure that the event $\Ecorrn$ occurs with probability at least $1-\delta$, Lemma \ref{lemma_Estrong} gives the concentration bound for the event $\Estrongn$ and Lemma \ref{Lemma Ecascn} gives the concentration bound of the event $\Ecascn$. Lemma \ref{Lem_Ecorrn}, Lemma \ref{lemma_Estrong} and Lemma \ref{Lemma Ecascn} together with \eqref{eq:condition3} give the final bound of the sample complexity (see the proof \ref{Main_result_proof_Appendix}) \begin{align}\label{eq:sketch_result1}
    n\geq& \max \Bigg\{\frac{512}{ \eta^2(1-2q)^4}, \frac{1152 e^{2\beta}B(\beta,q)}{(1-2q)^4}, \frac{48 e^{4\beta}}{\eta^2}\Gamma(\beta,q) \Bigg\}\log\lp\frac{6p^3}{\delta}\rp
\end{align} and its simplified but looser bound  \begin{align}\label{eq:sketch_result2}
    n\geq& \max \Bigg\{\frac{512}{ \eta^2(1-2q)^4}, \frac{1152 \lp1+3\sqrt{q}\rp^2e^{2\beta(1+\mathds{1}_{q\neq 0})}}{(1-2q)^4}, \frac{48 e^{4\beta}}{\eta^2}\mathds{1}_{q\neq 0}  \Bigg\}\log\lp\frac{6p^3}{\delta}\rp,
\end{align} that provides the condition of  Theorem \ref{thm:Main_result:short}. Although the general structure of our argument follows that of the noiseless case, the presence of noise introduces several technical challenges whose solution may be of independent interest. In the sequel, we highlight the most important aspects of our approach that do not appear in the noiseless case.

The proof of Theorem 3.3 is significantly different and includes additional steps and techniques compared with the approach by~\cite{bresler2020learning}. Specifically, Lemma \ref{Lemma Cond_Prob} is new and it is necessary for the hidden model and we use it later to prove (Lemma \ref{Lemma Ecascn}, Appendix). Lemma \ref{lemma:Edge_corr_estimation_error} is an non-trivial extension of the accurate estimation of edges' correlation. Although the resulting expression seems complicated is important for the proof of Lemma \ref{Lemma Ecascn}. In fact Lemma \ref{Lemma Ecascn}, the proof of the concentration bound for the event $\Ecascn$, is significantly more complicated and longer than the noiseless model (see Appendix E by~\cite{bresler2020learning} for comparison). To show this result we have to consider a martingale difference sequence and evaluate upper bounds for the conditional variance and bias of that sequence. The bias is crucial for the final result because it introduces an extra term in the final bound that does not exist in the noiseless case. It is interesting that this term does not involve any parameter related to the noise and shows how the result is affected by the structural inconsistency between the hidden and the observable layer. As a consequence, the expression of the bound \eqref{eq:suff_bound_cascade_red} in Lemma \ref{Lemma Ecascn} involves two inequalities to guarantee the high-probability bound. The first inequality which introduces restrictions on the parameter $\Delta$ (see inequality \ref{eq:suff_bound_cascade_red}) is an attribute of the noisy case. We continue by briefly explaining one of the main technical aspects of the proof.

To begin with, consider a path of length $d\geq 2$ in the original tree $\T$, $X_1-X_2-\cdot\cdot\cdot-X_{d+1}$ and we denote the edge $(k,k+1)$ as $e_k$, for some $k\in [d]$. Recall that $Y^{(i)}_k$ denotes the $i^\text{th}$ sample of $Y_k$ and $k\in[d+1]$. We would like derive a concentration bound of the probability of the event $\Ecascn$ (Lemma \ref{Lemma Ecascn}, Appendix). To do this, first we have to consider for all $\ell\in [n]$ and $k\in \{2,\ldots,d\}$ the random variables \begin{align}
    Z^{(\ell)}_k\triangleq  \lp\frac{ \lp X_k N_k X_{k+1} N_{k+1}\rp^{(\ell)}}{(1-2q)^2}-\frac{\nmu_{e_k}}{(1-2q)^2}\rp\prod^{k-1}_{j=1}\frac{\nmue_{e_j}}{(1-2q)^2}\prod^{d}_{j=k+1}\frac{\nmu_{e_j}}{(1-2q)^2}.
\end{align}
Define the martingale difference sequence (MDS) $\{\xi_{k}^{(i)}\}$ by setting $\xi_{k}^{(0)}\triangleq 0$, $\xi_{k}^{(1)}\triangleq Z^{(1)}_k-\E\left[Z^{(1)}_k | \nmue_{e_{k-1}},\ldots,\nmue_{e_1}\right]$, $\xi^{(i)}_{k}\triangleq Z^{(i)}_k -\E\left[Z^{(i)}_k |Z^{(i-1)}_k,\ldots, Z^{(1)}_k,\nmue_{e_{k-1}},\ldots,\nmue_{e_{1}} \right]$. Let $\mc{F}^{k}_{i-1}$ be the $\sigma$-algebra generated by $Z^{(i-1)}_k,\ldots, Z^{(1)}_k, \nmue_{e_{k-1}},\ldots,\nmue_{e_{1}}$. Then the pair $(\xi^{(i)}_{k},\mc{F}^{k}_i)_{i=1,\ldots,n}$ is an MDS. In contrast with the noiseless case, \emph{the conditional means are not zero}, which makes the problem significantly harder. To proceed, we apply a concentration bound for supermartingales (generalized Bennett's inequality) by~\cite{fan2012hoeffding}. 

Secondly we have to evaluate the following expression \begin{align}
    &\nonumber \P\lp  Y^{(\ell)}_{k} Y^{(\ell)}_{k+1} =\pm 1 \Big|\nmue_{e_{k-1}},\ldots,\nmue_{e_{1}} \rp \\&\qquad\qquad\qquad\qquad\qquad\qquad =\frac{1\pm \nmu_{e_{k}}}{2}\frac{1-\nmu_{e_{k-1}}\nmue_{e_{k-1}}}{1-(\nmu_{e_{k-1}})^2}+\nmu_{e_{k-1}}\frac{1\pm \mu_{e_{k}}}{2}\frac{\nmue_{e_{k-1}}-\nmu_{e_{k-1}}}{1-(\nmu_{e_{k-1}})^2}.\label{eq:conditional_distribution_on_mus}
\end{align} 
In the noiseless case, the product variables $X^{(\ell)}_{k} X^{(\ell)}_{k+1}$ are independent, leading to a simple expression for this probability (see Lemma \ref{independent_products}, Appendix). 
The closed form expression of \eqref{eq:conditional_distribution_on_mus} is given by Lemma \ref{Lemma Cond_Prob}. Finally, the expectations $\E\left[Z^{(i)}_k |Z^{(i-1)}_k,\ldots, Z^{(1)}_k,\nmue_{e_{k-1}},\ldots,\nmue_{e_1} \right]$ are not zero, however when $n\to  \infty$, they approach zero. As a consequence, a bias exists that affects the sample complexity by introducing an additional term in the bound that that does not appear in the noiseless case, the quantity $e^{4\beta}/\eta^2$ (see Equation \ref{eq:sketch_result1} and Equation \ref{eq:sketch_result2}).

Finally, we continue by bounding the norm $\Lnorm$ between the true and estimated distribution in Appendix E. The proof of Lemma \ref{Lemma end_to_end_error} shows that in the noisy setting as well, the $\Lnorm$ can be bounded by a constant uniformly over the set of all trees and it is not affected by long paths. This property of the hidden model is highly non-trivial and ensures that the estimation error from noisy observations does not increases along paths as someone might expect. Lemma \ref{Lemma F5} follows the corresponding approach of Lemma 6.1 by~\cite{bresler2020learning} and we provide only the required for the noisy setting differences. In Theorem \ref{Main_result_proof_Appendix}, we combine the Lemmata of Appendices \ref{Ecascadesection} and \ref{ProofofthemainresultsSection}, we find the appropriate choice of the parameter $\Delta$ that satisfies the necessary conditions of Lemma \ref{Lemma Ecascn} and we derive the final sample complexity bound. For further details about the proof of the main result see Appendix, Section \ref{Ecascadesection} and Section \ref{ProofofthemainresultsSection}. 

\subsection{Estimating Higher Order Moments}\label{higher_order_moments}

Our results also provide an analogue of Isserlis' Theorem (Theorem  \ref{thm:binary:isserlis}) and the \textit{Matching Pairs} algorithm, which returns the set $\mc{CP}_{\T}(\mc{V'})$ in \eqref{eq:binary:isserlis}. We provide a short proof sketch for the bound on the error of estimation \eqref{eq:accuracy_HOM}.

 \textbf{Proof sketch of Theorem \ref{thm:binary:isserlis}:} We prove that $\mc{C}_{\T}(\mc{V'})$ always exists (when $k$ is even) by induction (see Appendix A, Lemma \ref{disjoint_paths}). We define the set of edges $\mc{CP}_{\T}(\mc{V'})$ as the union of the edge-disjoint paths\footnote{By edge-disjoint paths we refer to paths with no common edges.} $\mc{CP}_{\T}(\mc{V'})=\cup_{w,w'\in \mc{C}_{\T}(\mc{V'})}\tpath(w,w')$. Combining the set $\mc{CP}_{\T}(\mc{V'})$ together with the independent products property (see Lemma \ref{independent_products}), we derive the final expression (see Appendix A, proof of Theorem \ref{thm:binary:isserlis}). Given the tree structure $\T$ and the correlations $\mu_e$ for all $e\in \mc{E}$, we can calculate the higher order expectations. Notice that the collection of edge-disjoint paths $\mc{CP}_{\T}$ depends on the tree structure and as a consequence an algorithm is required to discover those paths. Different matching algorithms can be considered to find the set $\mc{CP}_{\T}$. We propose Algorithm \ref{alg:matching_pairs} which is simple and has low complexity of $\mc{O}(|\mc{E}|)$.

 \textbf{Matching Pairs Algorithm:} Algorithm \ref{alg:matching_pairs} requires as input the tree and the set of nodes $\mc{V'}\equiv\{i_1,\ldots,i_k\}\subset\mc{V}$, and returns the set of edges $\mc{CP}_{\T}(V')$. For each node in the tree, a flag variable is assigned to each node and indicates if the corresponding node is a candidate for the final set $C_{\T}(\mc{V'})$ at the current step of the algorithm. The candidate nodes have to be matched with other nodes of the tree, such that the pairs generate edge-disjoint paths. Initially, the candidate nodes are the nodes of the set $\mc{V}'$. Starting from the nodes which appear in the deepest level of the tree, we ``move'' them to their ancestor. At each step, if two candidate nodes appear at the same point, we match them as pair, we store the pair in the set $\CP (\mc{V'})$ and we remove both of them from the set $\mc{V'}$. We continue until $\mc{V'}\equiv\emptyset$. The complexity of Algorithm \ref{alg:matching_pairs} is $\mc{O}(|\mc{E}|)$. Finally, Theorem \ref{thm:binary:isserlis} can be extended to any forest $\text{F}$ structure by considering the set $\mc{CP}_{\text{F}} (\mc{V'})$ instead of $\CP (\mc{V'})$, where $\mc{CP}_{\text{F}} (\mc{V'})\triangleq \cup_{i} \mc{CP}_{\text{T}_i} (\mc{V'})$ and $T_i$ is the $i^{\text{th}}$ connected tree of the forest.

\textbf{Estimation error of higher order moments:} Inequality \eqref{eq:accuracy_HOM} bounds the error of estimation by the small set Total Variation (ssTV), that is guaranteed to be less than $\eta>0$ by Theorem \ref{thm:Main_result}. Additionally, the bound on the error of the estimation in \eqref{eq:accuracy_HOM} can be found as follows
\begin{align}
 \left|\hat{\E}\left[\prod_{s\in \mc{V}'} X_s \right]-\E\left[\prod_{s\in \mc{V}'} X_s\right] \right|\nonumber&\\
 &\mkern-200mu =\left|\prod_{e\in \mc{CP}_{\TCLn}(i_1,i_2,\ldots,i_k)}\frac{\nmue_e}{(1-2q)^2}-\prod_{e\in \mc{CP}_{\T}(i_1,i_2,\ldots,i_k)}\mu_e \right|\label{eq:error_mom1}\\
 &\mkern-200mu= \left|\prod_{e\in \mc{CP}_{\TCLn}(i_1,i_2,\ldots,i_k)}\frac{\nmue_e}{(1-2q)^2}-\prod_{e\in \mc{CP}_{\T}(i_1,i_2,\ldots,i_k)}\frac{\nmu_e}{(1-2q)^2} \right|\nonumber\\
&\mkern-200mu=  \left|\prod_{e\in \underset{\left\{w,w'\right\}\in \mc{C}_{\TCLn}(\mc{V'})}{\bigcup} \tpath_{\TCLn}(w,w')}\frac{\nmue_e}{(1-2q)^2}-\prod_{e\in \underset{\left\{w,w'\right\}\in \mc{C}_{\T}(\mc{V'})}{\bigcup} \tpath_{\T}(w,w')}\frac{\nmu_e}{(1-2q)^2} \right| \numberthis \label{
eq:expect_error_1}\\
&\mkern-200mu= \left| \prod_{\left\{w,w'\right\}\in \mc{C}_{\TCLn}(\mc{V'})} \prod_{e\in\tpath_{\TCLn}(w,w')}\frac{\nmue_e}{(1-2q)^2}-\prod_{\left\{w,w'\right\}\in \mc{C}_{\T}(\mc{V'})} \prod_{e\in\tpath_{\T}(w,w')}\frac{\nmu_e}{(1-2q)^2} \right| \nonumber\\
&\mkern-200mu\leq  2|\mc{V'}| \Lnorm \lp \p(\cdot),\RIPn (\hat{\p}_{\dagger} ) \rp,\numberthis \label{eq:Exp_error_bounded_L2}
\end{align} where \eqref{eq:error_mom1} holds due to \eqref{eq:binary:isserlis} and \eqref{eq:second_order_monets_estimate}, \eqref{
eq:expect_error_1} comes from \eqref{eq:CPT} and the last inequality \eqref{eq:Exp_error_bounded_L2} is being proved by~\citet[Lemma 1, supplementary material]{bresler2020learning}. Thus, if we can accurately estimate the distribution under the sense $\Lnorm \lp P,\RIPn (\hat{\p}_{\dagger} ) \rp\leq \eta'$, for a sufficiently small positive number $\eta'$, then by using \eqref{eq:second_order_monets_estimate} and choosing $\eta'\leq \eta/(2|\mc{V}'|)$, Theorem \ref{thm:Main_result} guarantees accurate estimates for higher order moments with probability at least $1-\delta$.

\section{Conclusion}

We have considered and analyzed the problem of predictive learning on hidden tree-structures from noisy observations, using the well-known Chow-Liu algorithm. In particular, we derived sample complexity guarantees for exact structure learning and marginal distributions estimation. Our bounds extend prior work (see~\cite{bresler2020learning}) to the hidden model, by introducing the cross-over probability $q$ of the $\BSC$. Our results exactly reduce to the noiseless setting when $q = 0$, and the explicit expressions of the bounds are also continuous functions of $q$. Additionally, by applying a graph property for tree structures and a probabilistic property for Ising models, we derived an equivalent of the well-known Isserlis' theorem for Gaussian distributions, which yields to a consistent high-order moments estimator for Ising models. Further, we considered simulations based on synthetic data to validate our theoretical results. Our theoretical bounds exactly match with the experiment. indicating that our results correctly characterize the dependence on the model parameters.

Our results show that the estimated structure statistic $\TCLn$ is essential for successful statistical inference on the hidden (or observable) layer, while the sample complexity with respect to number of nodes and probability of error remains strictly logarithmic, as in the noiseless case. Our  hidden setting constitutes a first step towards more technically challenging and potentially more realistic statistical models, such as, for instance, structure and distribution learning when the noise is generated by an erasure channel, or when the underlying hidden tree structured distribution has a larger, or even uncountable, support.

\appendix


\section{Preliminaries and Outline of Proof}\label{App.A}

The chart in Figure \ref{fig:1-a} shows the various dependencies of the Lemmata and intermediate results either considered or developed in this paper, and the resulting Theorems. The proofs can be found in the corresponding section of the Appendix.

\begin{figure}[!ht]
     \centering
    \includegraphics[width=\textwidth]{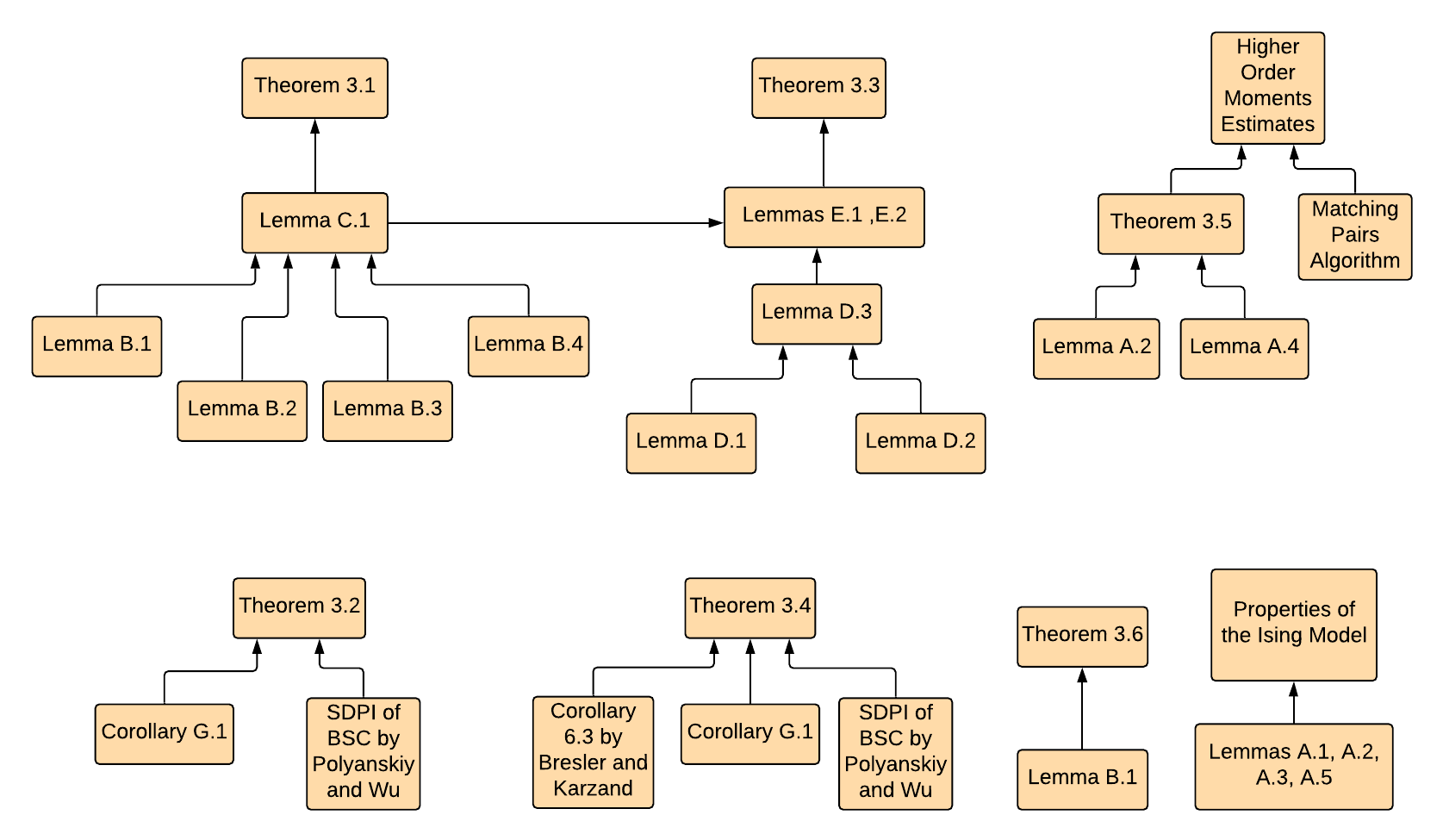}
     \caption{Stream mapping of the results}\label{fig:1-a}
   \end{figure}

For completeness, we start with some properties that hold for any distribution with support $\{-1,+1\}^p$ and tree-structured graphical model~\citep{lauritzen1996graphical}. Later we derive explicit formulas for the Ising model \eqref{eq:Ising_model_zero_external}.

\begin{lemma}\label{Tree_structured_Model_Lemma}
Any distribution $\p(\bx)$ with respect to a forest $F=\left(\mc{V},\mc{E}\right)$, where $\bx\in\{-1,1\}^{p}$ and uniform marginals $\P\left(X_{i}=\pm1\right)=1/2$, for all $i\in\mc{V}$
can be expressed as
\begin{align}
\p(\bx)=\frac{1}{2}\prod_{\left(i,j\right)\in \mc{E}} \frac{1+x_{i}x_{j}\E\left[X_{i}X_{j}\right]}{2}.
\end{align}
\end{lemma}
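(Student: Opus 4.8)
The plan is to reduce the claim to the standard factorization of a Markov field on a forest and then carry out the sign-valued marginal bookkeeping. First I would invoke the classical factorization of \citet{lauritzen1996graphical} (the same one recorded later in \eqref{eq:tree_shaped}): any $\p(\cdot)$ that is Markov with respect to a forest $F=(\mc{V},\mc{E})$ satisfies
\begin{align*}
\p(\bx) = \prod_{i\in\mc{V}}\p(x_i)\prod_{(i,j)\in\mc{E}}\frac{\p(x_i,x_j)}{\p(x_i)\p(x_j)}.
\end{align*}
This structural identity is the only nontrivial input; everything that remains is an explicit evaluation of the singleton and pairwise marginals of a $\{-1,+1\}$-valued model with uniform marginals.

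Next I would compute those marginals. Since each $X_i$ is sign-valued with $\P(X_i=\pm1)=1/2$, we have $\E[X_i]=0$ and hence $\p(x_i)=\tfrac12(1+x_i\E[X_i])=\tfrac12$. For a pair, every distribution on $\{-1,+1\}^2$ admits the finite expansion $\p(x_i,x_j)=\tfrac14\big(1+x_i\E[X_i]+x_j\E[X_j]+x_ix_j\E[X_iX_j]\big)$, which under the uniform-marginal assumption collapses to $\p(x_i,x_j)=\tfrac14\big(1+x_ix_j\E[X_iX_j]\big)$. The vanishing of the first-order terms is precisely what makes each edge factor depend on the correlation alone, giving
\begin{align*}
\frac{\p(x_i,x_j)}{\p(x_i)\p(x_j)} = \frac{\tfrac14\big(1+x_ix_j\E[X_iX_j]\big)}{\tfrac14} = 1+x_ix_j\E[X_iX_j].
\end{align*}

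Substituting these back yields $\p(\bx)=2^{-p}\prod_{(i,j)\in\mc{E}}\big(1+x_ix_j\E[X_iX_j]\big)$. The final step is edge counting: rewriting each factor as $2\cdot\frac{1+x_ix_j\E[X_iX_j]}{2}$ and using that a forest with $c$ connected components on $p$ vertices has exactly $|\mc{E}|=p-c$ edges gives $\p(\bx)=2^{-c}\prod_{(i,j)\in\mc{E}}\frac{1+x_ix_j\E[X_iX_j]}{2}$. For a tree (a single connected component, $c=1$) this is exactly the stated identity with prefactor $1/2$, which is the form used throughout the paper.

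The argument is essentially routine, so I do not expect a deep obstacle; the only points requiring care are (i) citing the factorization under the correct hypothesis — Markovianity with respect to the forest, which places us squarely in Lauritzen's decomposable setting — and (ii) the $2^{-c}$ prefactor bookkeeping, which makes transparent that the clean $1/2$ in the statement corresponds to the connected (tree) case. I would also double-check the sign-valued pairwise expansion, since it is the collapse of the linear terms under uniform marginals that isolates the dependence on $\E[X_iX_j]$ and produces the desired edge-wise form.
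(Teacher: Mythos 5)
Your proposal is correct and follows essentially the same route as the paper: the Lauritzen factorization over the forest, the collapse of the singleton and pairwise marginals under the uniform-marginal assumption to $\p(x_i)=1/2$ and $\p(x_i,x_j)=\tfrac14(1+x_ix_j\E[X_iX_j])$, and then edge counting. Your $2^{-c}$ prefactor bookkeeping is in fact slightly more careful than the paper's statement, which obtains the forest case by retaining cut edges in the product with $\E[X_iX_j]=0$ (each contributing a factor $1/2$) so that the displayed $1/2$ prefactor remains literally valid only under that convention.
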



\begin{proof}
We prove the result for an arbitrary tree $\T=(\mc{V},\mc{E})$ and then we extend it to any forest structure by applying cuts to $\mc{E}$. The distribution factorizes according to the tree structure $\T$
and under the assumption of no external field (uniform marginal distributions), we have
\begin{align}
\P(\bX=\bx) 
= \prod_{i\in V} \p\left(x_{i}\right) 
	\prod_{ (i,j) \in \mc{E}} \frac{\p(x_{i},x_{j})}{\p(x_{i}) \p(x_{j})}
 & =2^{p-2}\prod_{\left(i,j\right)\in \mc{E}} \frac{1+x_{i}x_{j}\E\left[X_{i}X_{j}\right]}{4}\label{eq:Tree_distribution_mod}\\
 &=\frac{1}{2}\prod_{\left(i,j\right)\in \mc{E}} \frac{1+x_{i}x_{j}\E\left[X_{i}X_{j}\right]}{2}.\label{eq:Tree_distribution}
\end{align} \eqref{eq:Tree_distribution_mod} holds since the joint distribution of any pair $(X_i,X_j)$ of distinct nodes $i,j\in\mc{V}$ is
	\begin{align}\label{eq:Joint_Pair}
	\p(x_i,x_j)
	=\E\left[\bds{1}_{X_i =x_i}\bds{1}_{X_j =x_j} \right] =\frac{1+x_i x_j \E[X_i X_j]}{4}.
	\end{align}
	By setting $\E[X_i X_j]=0$ for some $(i,j)\in \mc{E}$ we derive the distribution with respect to a forest generated by cutting the edge $(i,j)$ of $\T$.
\end{proof}

In Lemma~\ref{independent_products} we prove two fundamental properties of the model, the independence of the random variables $\{X_i X_j : (i,j)\in\mc{E}\}$ and the \textit{correlation decay property} (CDP). To the best of our knowledge, these properties are known but there is no reference for the corresponding proofs in the literature. 
\begin{lemma}\label{independent_products}
Let $\bX$ be a random binary vector in $\{-1,+1\}^p$ drawn according to a forest-structured distribution $\p(\cdot)$ with uniform marginal distributions on each entry $X_i$ for $i \in [p]$. Then the elements of the collection of $|\mc{E}|$ random variables $\{ X_i X_j : (i,j)\in\mc{E}\}$, are independent. Furthermore, we have 
	\begin{align}
	\E\left[X_{i}X_{j}\right]
		=\prod_{e\in\tpath\left(i,j\right)} \mu_{e},
	\label{eq:CDP_binary_trees}
	\end{align}
so the Correlation Decay Property (CDP) holds since $|\mu_{e}|\leq1$ for all $e\in\mc{E}.$ 
\end{lemma}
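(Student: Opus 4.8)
The plan is to prove the independence claim first and then obtain the correlation-decay identity \eqref{eq:CDP_binary_trees} as a short corollary, since once the edge products are known to be independent the formula follows by a one-line telescoping argument. Because a forest-structured distribution factorizes as a product over its connected components (vertices in distinct components share no edge, hence are mutually independent), it suffices to establish the claim for a single tree $\T=(\mc{V},\mc{E})$ and then concatenate the resulting independent collections. So I would fix a tree, pick an arbitrary root $r\in\mc{V}$, and introduce the edge-product variables $W_e \triangleq X_i X_j$ for $e=(i,j)\in\mc{E}$. The key structural observation is that the map $\bx \mapsto (x_r, (W_e)_{e\in\mc{E}})$ is a bijection from $\{-1,+1\}^p$ onto $\{-1,+1\}\times\{-1,+1\}^{|\mc{E}|}$: given the root sign and all edge products, each coordinate is recovered by $x_v = x_r \prod_{e\in\tpath(r,v)} W_e$ (the path product telescopes to $x_r x_v$), and the reverse direction is immediate. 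The dimensions match because $|\mc{E}|=p-1$.

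Next I would invoke the factorization of Lemma~\ref{Tree_structured_Model_Lemma}, namely $\p(\bx)=\tfrac12\prod_{e=(i,j)\in\mc{E}}\tfrac{1+x_ix_j\mu_e}{2}$, and note that its right-hand side depends on $\bx$ only through the edge products $w_e$ (negating all signs fixes every $w_e$). Pushing the distribution forward through the bijection and summing out the root sign $x_r$ — which contributes a factor of $2$ cancelling the leading $\tfrac12$ — yields the joint law of the edge products
\[
\P\bigl((W_e)_{e}=(w_e)_{e}\bigr)=\prod_{e\in\mc{E}}\frac{1+w_e\mu_e}{2}.
\]
This is a product measure, so the $W_e$ are independent $\{-1,+1\}$ variables with $\P(W_e=1)=\tfrac{1+\mu_e}{2}$, hence $\E[W_e]=\mu_e$. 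This is the crux of the argument and the one place the acyclic structure is essential: on a graph containing a cycle the edge products must satisfy the constraint that their product around the cycle equals $+1$, so the map above ceases to be a bijection and independence fails — which matches the counterexample mentioned in the text.

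Finally, for \eqref{eq:CDP_binary_trees} I would write the path as $\tpath(i,j)=(v_0=i,v_1,\dots,v_\ell=j)$ and telescope using $X_{v_m}^2=1$:
\[
X_iX_j=\prod_{m=1}^{\ell}X_{v_{m-1}}X_{v_m}=\prod_{e\in\tpath(i,j)}W_e.
\]
Taking expectations and applying the independence just established gives $\E[X_iX_j]=\prod_{e\in\tpath(i,j)}\mu_e$. Since each $\mu_e=\E[X_iX_j]$ is the correlation of two $\{-1,+1\}$ variables, $|\mu_e|\le1$; thus for nested paths $\tpath(i,k)\subset\tpath(\ell,m)$ the extra factors in $|\E[X_\ell X_m]|=|\E[X_iX_k]|\prod_{e\in\tpath(\ell,m)\setminus\tpath(i,k)}|\mu_e|$ all have modulus at most one, giving $|\E[X_\ell X_m]|\le|\E[X_iX_k]|$, which is exactly the CDP. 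I expect the main obstacle to be getting the bijection-and-marginalization bookkeeping precisely right (the two-to-one collapse of the global sign flip and the consequent cancellation of the normalizing constant); once that is settled, the remaining steps are routine.
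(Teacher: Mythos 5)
Your proposal is correct and follows essentially the same route as the paper: the paper also marginalizes the factorized pmf of Lemma~\ref{Tree_structured_Model_Lemma} onto the edge products (rooting the tree, noting the two-element preimage of each configuration of edge products, and summing out the root sign to obtain the product measure $\prod_e \frac{1+c_e\mu_e}{2}$), and then proves \eqref{eq:CDP_binary_trees} by the identical telescoping trick of inserting $X_{k_m}^2=1$ along the path. Your explicit bijection $\bx\mapsto(x_r,(W_e)_e)$ is just a cleaner packaging of the same marginalization, and it makes transparent why the argument breaks on graphs with cycles.
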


\begin{proof} 
 Let $(i_r)^{p}_{r=1}$ be an arbitrary permutation of $\bds{\ell}=\{1,2,\ldots,p\}$. Notice that the singletons $\{i_r \}$, $r=1,\dots,p$ form a partition of $\bds{\ell}$. Then, the set of edges $\mc{E}$ is defined as\begin{align}\label{eq:dependence_structure}
\mc{E}=\lp i_r,j_r \rp^p _{r=2},\quad \text{ and }j_1=\emptyset,\quad j_r \in \{i_1,\ldots,i_{r-1} \}\subset\bds{\ell}.  
\end{align} \eqref{eq:dependence_structure} defines a tree $\T=(\mc{V},\mc{E})$ with root the node $i_1$ (since $j_1=\emptyset$). For the first part, it is sufficient to show that for any $\{ c_r : r = 2, 3, \ldots, p\}\in \{-1,+1\}^{p-1}$, the following holds
	\begin{align}\label{eq:independent_product}
	\P\lp \bigcap^{p}_{r=2} \{x_{i_{r}} x_{j_{r}}=c_r\}\rp=\prod^{p}_{r=2} \P\lp x_{i_{r}} x_{j_{r}}=c_r \rp.
	\end{align} 
We have
	\begin{align*}
	\P\lp \bigcap^{p}_{r=2} \{X_{i_{r}} X_{j_{r}}=c_r\}\rp&=\sum_{\bx:x_{i_{r}} x_{j_{r}}=c_r|^p_{r=2}} p\lp \bx \rp\\
&=\sum_{\bx:x_{i_{r}} x_{j_{r}}=c_r|^p_{r=2}} \frac{1}{2}\prod^p_{r=2} \frac{1+x_{i_r}x_{j_r}\E\left[X_{i_{r}}X_{j_{r}}\right]}{2}\numberthis\label{eq:P_I_2}\\
&=\sum_{\bx:x_{i_{r}} =c_r x_{j_{r}}|^p_{r=2}} \frac{1}{2}\prod^p_{r=2} \frac{1+x_{i_r}x_{j_r}\E\left[X_{i_{r}}X_{j_{r}}\right]}{2}\\
&=\sum_{x_{i_1}\in\{-1,+1\}} \frac{1}{2}\prod^p_{r=2} \frac{1+c_r \E\left[X_{i_{r}}X_{j_{r}}\right]}{2}
=\prod^{p}_{r=2} \P\lp X_{i_{r}} X_{j_{r}}=c_r \rp,\numberthis
\end{align*} \eqref{eq:P_I_2} comes from \eqref{eq:dependence_structure} and Lemma \ref{Tree_structured_Model_Lemma} and the last from \eqref{eq:Joint_Pair}. 
For the second part of the statement note that for all $i,j \in \mc{V}$ there exists a unique path $\{i, k_1 ,k_2, \ldots, k_\ell, j \}$ from $i$ to $j$. Define the variable $\bds{1}_{(i,j)}\triangleq  (X_{k_1} X_{k_1})  (X_{k_2} X_{k_2}) \ldots (X_{k_\ell} X_{k_\ell})$, which is equal to $1$ almost surely, since $\bX\in\{-1,+1\}^p$.\footnote{$\bds{1}_{(\cdot)}$ should not be confused with $\mathds{1}_{\bds{A}}$, where the last denotes the indicator function of a set $A$.} Then, we have   
	\begin{align*}
	\E [X_i X_j] 
    &= 
		\E [X_i \bds{1}_{(i,j)} X_j ]  \\
	&= 
		\E [X_i (X_{k_1} X_{k_1})( X_{k_2} X_{k_2} ) \ldots (X_{k_\ell} X_{k_\ell}) X_j ]  \\
	&=
		\E[X_i X_{k_1}] \left( \prod^{\ell-1}_{m=1} \E[X_{k_m} X_{k_{m+1}}] \right) \E[X_\ell X_j]= \prod_{e\in\tpath\left(i,j\right)} \mu_{e},\numberthis \label{indicator_step} 
\end{align*} and \eqref{indicator_step} comes from \eqref{eq:independent_product} and completes the proof.
\end{proof}


\noindent The next lemma relates the pairwise correlations to the parameters of the Ising model.

\begin{lemma}\label{Tree_structured_Model_Lemma2}
 An equivalent expression of \eqref{eq:Ising_model_zero_external} is the following
   \begin{align}
	\p(\bx) 	= \frac{\prod_{\left(i,j\right)\in \mc{E}} \left[1+x_{i}x_{j}\tanh\left(\theta_{ij}\right)\right]
	}{
	\sum_{\bx} \prod_{\left(i,j\right)\in \mc{E}} \left[1+x_{i}x_{j}\tanh\left(\theta_{ij}\right)\right]
	}
	\hspace{+0.3cm} 
	\bx\in\{-1,1\}^{p}.
\end{align} Further, for a tree-structure Ising model $\E\left[X_{i},X_{j}\right]  =\tanh\left(\theta_{ij}\right),$ for all $\left(i,j\right)\in\mc{E}$.
\end{lemma}

\begin{proof}
We can write $\exp\left(\theta_{ij}x_{i}x_{j}\right)$ as
\begin{align*}
\exp\left(\theta_{ij}x_{i}x_{j}\right)= & \frac{\exp\left(\theta_{ij}x_{i}x_{j}\right)+\exp\left(-\theta_{ij}x_{i}x_{j}\right)}{2}+\frac{\exp\left(\theta_{ij}x_{i}x_{j}\right)-\exp\left(-\theta_{ij}x_{i}x_{j}\right)}{2}\nonumber \\
= & \frac{\exp\left(\theta_{ij}\right)+\exp\left(-\theta_{ij}\right)}{2}+x_{i}x_{j}\frac{\exp\left(\theta_{ij}\right)-\exp\left(-\theta_{ij}\right)}{2}\numberthis\label{eq:binary_prod}\\
= & \cosh\left(\theta_{ij}\right)\left[1+x_{i}x_{j}\tanh\left(\theta_{ij}\right)\right],\numberthis \label{eq:exponent_tanh}
\end{align*} \eqref{eq:binary_prod} holds because $x_ix_j\in \{-1,+1 \}$.
The partition function can be written as 
	\begin{align}
	Z\left(\theta\right) 
 	&= \sum_{\bx}\prod_{\left(i,j\right)\in \mc{E}} \exp\left(\theta_{ij}x_{i}x_{j}\right)\nonumber \\
 	&= \sum_{\bx}\prod_{\left(i,j\right)\in \mc{E}}\cosh\left(\theta_{ij}\right)\left[1+x_{i}x_{j}\tanh\left(\theta_{ij}\right)\right]\nonumber \\
 	&= \prod_{\left(i,j\right)\in \mc{E}} \cosh\left(\theta_{ij}\right)\sum_{\bx} \prod_{\left(i,j\right)\in \mc{E}} \left[1+x_{i}x_{j}\tanh\left(\theta_{ij}\right)\right]= 2^{p}\prod_{\left(i,j\right)\in \mc{E}} \cosh\left(\theta_{ij}\right)\label{eq:partition_tanh}.
	\end{align}
Notice that $\sum_{\bx} \prod_{\left(i,j\right)\in \mc{E}} \left[1+x_{i}x_{j}\tanh\left(\theta_{ij}\right)\right]=2^p$ under the tree-structure assumption. Then 
	\begin{align*}
	\P(\bX  =\bx)=\frac{\prod_{\left(i,j\right)\in \mc{E}} \exp\left(\theta_{ij}x_{i}x_{j}\right)}{Z\left(\theta\right)} &=\frac{\prod_{\left(i,j\right)\in \mc{E}} \cosh\left(\theta_{ij}\right)\left[1+x_{i}x_{j}\tanh\left(\theta_{ij}\right)\right]}{2^{p}\prod_{\left(i,j\right)\in \mc{E}}\numberthis\label{eq:Tree_Ising_Model1} \cosh\left(\theta_{ij}\right)}\\
 & = \frac{1}{2}\prod_{\left(i,j\right)\in \mc{E}} \frac{1+x_{i}x_{j}\tanh\left(\theta_{ij}\right)}{2},\numberthis \label{eq:Tree_Ising_Model}
\end{align*} \eqref{eq:exponent_tanh} and \eqref{eq:partition_tanh} give \eqref{eq:Tree_Ising_Model1} and $|\mc{E}|=p-1$ gives \eqref{eq:Tree_Ising_Model}.
Finally \begin{align}
\E\left[X_{i}X_{j}\right]\overset{\eqref{eq:Ising_model_zero_external}}{=}\frac{\partial\ln Z\left(\theta\right)}{\partial\theta_{ij}}\overset{\eqref{eq:partition_tanh}}{=}\frac{\partial\ln\left[ 2^{p}\prod_{\left(i,j\right)\in \mc{E}} \cosh\left(\theta_{ij}\right)\right]}{\partial\theta_{ij}}=\tanh(\theta_{ij}),\quad \forall(i,j)\in\mc{E},\label{eq:covariates_of_edges}
\end{align}
and the latter gives the second part of the Lemma.
\end{proof}

\begin{lemma}\label{disjoint_paths}
Let $\mc{V'}$ be a set of nodes such that $\mc{V'}\subset \mc{V}$ and $|\mc{V'}|\in 2\mbb{N}$. Then it exists a set $\mc{C}_{\T}(\mc{V'})$ of $|\mc{V'}|/2$ pairs of nodes of $\mc{V'}$, such that any two distinct pairs $(w,w')$, $(v,v')$ in $\mc{C}_{\T}(\mc{V'})$ are pairwise disjoint (their paths have no commons edge), that is,
	\begin{align}
   \tpath_{\T} (w,w')\cap \tpath_{\T} (v,v')=\emptyset,\quad \forall (w,w'),(v,v')\in\mc{C}_{\T}(\mc{V'}):\text{ }(w,w')\not\equiv (v,v').
    \end{align}
\end{lemma}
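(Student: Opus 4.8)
The plan is to prove the statement by induction on the number of vertices $|\mc{V}|$ of the tree, deleting a \emph{leaf} at each step and propagating a valid pairing back up. The existence claim is essentially equivalent to the correctness of the Matching Pairs algorithm (Algorithm~\ref{alg:matching_pairs}), but it is cleanest to argue existence directly rather than through the depth-by-depth token bookkeeping. For the base case, when $|\mc{V}|=1$ the only subset of even size is $\mc{V}'=\emptyset$, and the empty collection is vacuously edge-disjoint.

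For the inductive step, I would assume the claim for all trees with fewer vertices, fix a tree $\T$ with $|\mc{V}|\geq 2$ and an even set $\mc{V}'\subseteq\mc{V}$, and pick any leaf $\ell$ with unique neighbor $u$. Let $\T^-$ be the subtree obtained by deleting $\ell$ and the edge $(\ell,u)$. The key observation is that for any two vertices $a,b\neq\ell$ the path $\tpath_{\T}(a,b)$ never traverses the degree-one vertex $\ell$, so paths between surviving vertices are identical in $\T$ and $\T^-$ and never use $(\ell,u)$. If $\ell\notin\mc{V}'$, then $\mc{V}'\subseteq\mc{V}(\T^-)$ and the inductive pairing $\mc{C}_{\T^-}(\mc{V}')$ is immediately a valid edge-disjoint pairing in $\T$, as none of its paths touch $(\ell,u)$.

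The interesting case is $\ell\in\mc{V}'$, where $\ell$ must be matched and its path is forced to begin with $(\ell,u)$. Here I would form a reduced terminal set $\tilde{\mc{V}}\subseteq\mc{V}(\T^-)$ by removing $\ell$ and \emph{toggling} the membership of $u$ (adjoin $u$ if $u\notin\mc{V}'$, delete $u$ if $u\in\mc{V}'$). A short parity check shows $|\tilde{\mc{V}}|$ is again even in both sub-cases, so induction yields an edge-disjoint pairing $\mc{C}_{\T^-}(\tilde{\mc{V}})$. To lift it: if $u\in\tilde{\mc{V}}$ (i.e.\ $u\notin\mc{V}'$), then $u$ is matched to some $z$, and I replace the pair $\{u,z\}$ by $\{\ell,z\}$, whose $\T$-path is $\tpath_{\T^-}(u,z)\cup\{(\ell,u)\}$; if $u\notin\tilde{\mc{V}}$ (i.e.\ $u\in\mc{V}'$), I simply adjoin the singleton pair $\{\ell,u\}$ with path $\{(\ell,u)\}$. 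In either case the modified collection covers exactly $\mc{V}'$, and edge-disjointness is preserved because the only newly used edge $(\ell,u)$ is incident to the leaf $\ell$ and therefore belongs to no other path.

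The main obstacle is precisely this bookkeeping in the case $\ell\in\mc{V}'$: a naive strategy, such as peeling off the closest pair of terminals, fails because the paths of the remaining pairs can cross the peeled path and destroy edge-disjointness. The toggling device at $u$ is what circumvents this, since it records whether the ``token'' carried up from $\ell$ should annihilate with an already-unmatched terminal at $u$ or be passed further toward the root, exactly mirroring the flag propagation in Algorithm~\ref{alg:matching_pairs}. Verifying the parity of $|\tilde{\mc{V}}|$ and the disjointness after re-attaching the leaf edge are the two points that require brief care; the remainder is routine.
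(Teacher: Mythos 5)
Your proof is correct, but it takes a genuinely different route from the paper's. The paper argues by iterative repair: start from an arbitrary pairing of $\mc{V}'$, and whenever two paths $\tpath_{\T}(w,u')$ and $\tpath_{\T}(u,w')$ share a common sub-path $\tpath_{\T}(z,z')$, swap the endpoints to obtain $\tpath_{\T}(w,u)$ and $\tpath_{\T}(w',u')$; since the swapped paths use a subset of the original edges (the doubly-covered sub-path is dropped), the total amount of edge-sharing strictly decreases and the process terminates in an edge-disjoint pairing. You instead do a structural induction on $|\mc{V}|$ by deleting a leaf $\ell$, toggling the membership of its neighbor $u$ in the terminal set when $\ell\in\mc{V}'$, and re-attaching the edge $(\ell,u)$ to the lifted pairing; the parity check and the observation that $(\ell,u)$ lies on no path between surviving vertices both go through. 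Your argument is constructive in a way that directly mirrors (and effectively certifies) the Matching Pairs algorithm, Algorithm~\ref{alg:matching_pairs}, and it avoids the termination bookkeeping that the paper's exchange argument leaves somewhat terse; the paper's approach, in exchange, is shorter and makes the uniqueness-of-paths-in-a-tree intuition more visible. Both are valid proofs of the lemma.
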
  


\begin{proof}
We prove the existence of $\mc{C}_{\T}(\mc{V'})$ by contradiction. Assume that the two distinct paths $\tpath_{\T}(w,u')$, $\tpath_{\T}(u,w')$ share at least one edge. Let their common sub-path be $\tpath_{\T}(z,z')$, Figure \ref{fig:2-a} and note that $z$ and $z'$ do not necessarily differ from $w,w',u,u'$. Notice that the common sub-path is unique (acyclic graph). Then we can always consider the permutation of the endpoints which gives the edge-disjoint paths $\tpath_{\T}(w,u)$ and $\tpath_{\T}(w',u')$. Now the paths $\tpath_{\T}(w,u)$ and $\tpath_{\T}(w',u')$ are disjoint, however it is possible that one of them or both, contain sub-paths with common edges. Then, we similarly proceed by removing the common sub-paths as previously. The set of common edges strictly decreases through the process, which terminates when there are only paths with no common edge.
\end{proof} 

\vspace{-0.7cm}   

\begin{figure}[!ht]
     \centering
    \includegraphics{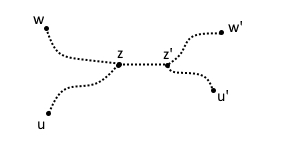}
     \caption{Proof of the existence of $\mc{C}_{\T}(\mc{V'})$, Lemma \ref{disjoint_paths}}\label{fig:2-a}
   \end{figure}
    

\begin{theorem}[Theorem \ref{thm:binary:isserlis}]
Assume $\bX\sim \p(\bx)\in \isingtree$, $\{i_1,i_2,\ldots,i_k\}\subset \mc{V}$, then
\begin{align}\label{eq:isserlis_proof}
\E\left[X_{i_1}X_{i_2}\ldots X_{i_k}\right]=\begin{cases}\prod_{e\in \CP (i_1,i_2,\ldots,i_k)}\mu_e, & \forall k\in 2\mbb{N}\\
0, & \forall k\in 2\mbb{N}+1.
\end{cases}
\end{align}
\end{theorem}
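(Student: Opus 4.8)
The plan is to collapse the $k$-fold moment into a product of the independent edge variables of Lemma~\ref{independent_products} and then take expectations term by term. I will treat the two parities separately. For the odd case I would use the global sign-flip symmetry of the distribution: by Lemma~\ref{Tree_structured_Model_Lemma}, $\p(\bx)=\frac{1}{2}\prod_{(i,j)\in\mc{E}}\frac{1+x_{i}x_{j}\E[X_{i}X_{j}]}{2}$ depends on $\bx$ only through the products $x_{i}x_{j}$, so $\p(-\bx)=\p(\bx)$. Substituting $\bx\mapsto-\bx$ in $\E[X_{i_1}\cdots X_{i_k}]=\sum_{\bx}\p(\bx)\,x_{i_1}\cdots x_{i_k}$ multiplies the integrand by $(-1)^k$ while leaving $\p$ fixed, so for odd $k$ the moment equals its own negative and must vanish.

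For even $k$, I would set $U_e\defeq X_{a}X_{b}$ for each edge $e=(a,b)\in\mc{E}$; Lemma~\ref{independent_products} gives that $\{U_e:e\in\mc{E}\}$ are independent sign-valued variables with $\E[U_e]=\mu_e$. First I would invoke Lemma~\ref{disjoint_paths} to obtain a pairing $\mc{C}_{\T}(\mc{V'})$ of $\mc{V'}=\{i_1,\dots,i_k\}$ into $k/2$ pairs whose connecting paths are pairwise edge-disjoint. Since this is a perfect matching, every $i_m$ occurs in exactly one pair, so $X_{i_1}\cdots X_{i_k}=\prod_{\{w,w'\}\in\mc{C}_{\T}(\mc{V'})}X_{w}X_{w'}$.

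The key algebraic step is a telescoping identity: writing a path as $w=v_0,v_1,\dots,v_\ell=w'$, the product $\prod_{i=1}^{\ell}X_{v_{i-1}}X_{v_i}$ collapses to $X_{w}X_{w'}$ because each interior factor $X_{v_i}^2=1$, whence $X_{w}X_{w'}=\prod_{e\in\tpath(w,w')}U_e$. Substituting this into the matched product and using that the paths are edge-disjoint, each edge of $\mc{CP}_{\T}(\mc{V'})=\bigcup_{\{w,w'\}}\tpath(w,w')$ appears exactly once, so $X_{i_1}\cdots X_{i_k}=\prod_{e\in\mc{CP}_{\T}(\mc{V'})}U_e$. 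Taking expectations and using independence then yields $\E[X_{i_1}\cdots X_{i_k}]=\prod_{e\in\mc{CP}_{\T}(\mc{V'})}\mu_e$, which is the claimed formula.

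The two delicate points are (i) the existence of the edge-disjoint matching, which is exactly Lemma~\ref{disjoint_paths}, and (ii) that $\mc{CP}_{\T}(\mc{V'})$ is well defined independently of the chosen matching, so that the right-hand side is unambiguous. For (ii) I would give a parity characterization: deleting an edge $e$ splits $\T$ into two components, and $e\in\tpath(w,w')$ iff $w,w'$ lie on opposite sides; by edge-disjointness at most one pair can cross $e$, and since $|\mc{V'}\cap A_e|=2(\text{pairs inside }A_e)+(\text{crossing pairs})$ for either side $A_e$, the number of crossing pairs has the same parity as $|\mc{V'}\cap A_e|$. Hence $e\in\mc{CP}_{\T}(\mc{V'})$ iff $|\mc{V'}\cap A_e|$ is odd, a condition independent of the matching. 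The whole argument is short; the only real work is the combinatorial bookkeeping of the telescoping over edge-disjoint paths, which Lemma~\ref{disjoint_paths} reduces to routine verification.
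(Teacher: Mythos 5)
Your proof is correct and follows essentially the same route as the paper's: your telescoping identity $X_{w}X_{w'}=\prod_{e\in\tpath(w,w')}U_{e}$ is precisely the paper's indicator-variable device $\bds{1}_{(w,w')}$, the even case then rests on Lemma~\ref{independent_products} and Lemma~\ref{disjoint_paths} exactly as in the paper, and your sign-flip symmetry for the odd case is equivalent to the paper's direct summation against the factorized distribution. Your parity characterization of $\mc{CP}_{\T}(\mc{V'})$ (an edge lies in it iff one side of the corresponding cut contains an odd number of vertices of $\mc{V'}$) is a welcome supplement, since the paper asserts that this edge set is unique but its Lemma~\ref{disjoint_paths} only establishes existence of a suitable matching.
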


Recall that the set of edges $\CP (i_1, \ldots, i_k)$ is a collection of $k/2$ edge-disjoint paths with endpoints pairs of the nodes $i_1,\ldots,i_k$ for each path. Given a tree structure $\T$, $\CP (i_1, \ldots, i_k)$ is found by running Algorithm \ref{alg:matching_pairs} on $\T$.

\begin{proof}
\underline{Even $k$.} We proceed by showing that the Algorithm \ref{alg:matching_pairs} returns the unique set $\CP$. When $k=2$ the expression is proved in Lemma \ref{independent_products}. For $k>2$ we proceed by using Lemmas \ref{independent_products} and \ref{disjoint_paths}. For all $i,j \in \mc{V}$ there exists a unique path $\{i, k_1 ,k_2, \ldots, k_\ell, j \}$ from $i$ to $j$. Define as previously the variable $\bds{1}_{(i,j)}\triangleq  (X_{k_1} X_{k_1})  (X_{k_2} X_{k_2}) \ldots (X_{k_\ell} X_{k_\ell})$, which is equal to $1$ almost surely, and define the set of nodes $\mc{V'}\triangleq \{i_1,i_2,\ldots,i_k\}$. 
Without loss of generality we assume that the variables in the product $X_{i_1}X_{i_2}\ldots X_{i_k}$ are ordered such such that the pairs $X_{i_{j}},X_{i_{j+1}}$ for all $j\in\{ 1,3,5,\ldots,k-1 \}\triangleq [k-1]^{\text{odd}}$ form edge-disjoint paths (Lemma \ref{disjoint_paths}), in other words 
	\begin{align}\label{eq:paths_intersection} 
	\tpath(i_{j},i_{j+1})\cap\tpath(i_{j'},i_{j'+1})=\emptyset, \forall j\neq j'\in  [k-1]^{\text{odd}}.
	\end{align}
Then, we have 
	\begin{align}
	\E\left[X_{i_1}X_{i_2}\ldots X_{i_k}\right]
	&=\E\left[X_{i_1} \mathbbm{1}_{(i_1,i_2)} X_{i_2}
		X_{i_3}\mathbbm{1}_{(i_3,i_4)} X_{i_4}
		\cdots X_{i_{k-1}}\mathbbm{1}_{(i_{k-1},i_k)} X_{i_k}
		\right]\label{eq:hom1}\\
	&= 
		\prod^{}_{j\in [k-1]^{\text{odd}}} \E[X_{i_{j}} \mathbbm{1}_{(i_j,i_{j+1})} X_{i_{j+1}}]\label{eq:hom2}\\
	&= 
		\prod^{}_{j\in [k-1]^{\text{odd}}} 
			\prod_{e\in\tpath\left(i_j,i_{j+1}\right)} \mu_{e}\label{eq:hom3} \\
	&= 
		\prod_{e\in \mc{CP}_{\T}(i_1,i_2,\ldots,i_k)} \mu_e,
\end{align}
where \eqref{eq:hom1} and \eqref{eq:hom2} come from \eqref{eq:CDP_binary_trees}, and \eqref{eq:hom3} holds because of \eqref{eq:paths_intersection}.




\underline{Odd $k$.} Lemma~\ref{Tree_structured_Model_Lemma} gives $\p(\bx)=2^{-p}\prod_{\left(i,j\right)\in \mc{E}} (1+x_{i}x_{j}\E\left[X_{i}X_{j}\right]).$ Then 
\begin{align*}
\E\left[X_{i_1}X_{i_2}\ldots X_{i_k} \right]
&=\frac{1}{2}\sum_{\bx\in\{-1,+1\}^{i_k}}x_{i_1}x_{i_2}\ldots x_{i_k}\prod_{\left(i,j\right)\in \mc{E}} \frac{1+x_{i}x_{j}\E\left[X_{i}X_{j}\right]}{2}=0,  \numberthis
\end{align*}  
gives the second part of \eqref{eq:isserlis_proof}.
\end{proof}

\begin{lemma}
\label{MI_function_of_corr} The mutual information of $X_i,X_{j}\in\{-1,+1\}$ is symmetric function of the correlation $\E\left[X_{i}X_{j}\right]$ and increasing with respect to $\left|\E\left[X_{i}X_{j}\right]\right|$,
\begin{align}
I\left(X_{i},X_{j}\right)&= \frac{1}{2} \log_{2} \lp \lp 1-\E\left[X_{i}X_{j}\right]\rp^{1-\E\left[X_{i}X_{j}\right]} \lp 1+\E\left[X_{i}X_{j}\right]\rp^{1+\E\left[X_{i}X_{j}\right]}\rp.\label{eq:mutual_info}
\end{align}
\end{lemma} The proof can be derived through the definition of $I\left(X_{i},X_{j}\right)$ and the expression \eqref{eq:Joint_Pair}, under the assumption of uniform marginal distributions.



\section{Bounding the Probability of Mis-Estimating Correlations}

The following lemma bounds the probability that the estimated pairwise correlations in the graph deviate from their true values. This follows from standard concentration of measure arguments.
\begin{lemma}\label{Lem_Ecorrn}
Fix $\delta > 0$. Then for any $\neps>0$, if 
    \begin{align}
    \nn \geq 2\log\left(p^{2}/\delta\right)/\neps^2,
    \end{align}
then the event $\Ecorrn$ defined in \eqref{eq:event_corr_nois-1} holds with high probability:
\begin{align}
\P \lp \Ecorrn \rp \geq 1 - \delta 
= 1 - p^{2}\exp\left(\frac{-\nn \neps ^{2}}{2}\right).
\label{eq:ecorr:whp}
\end{align}
\end{lemma}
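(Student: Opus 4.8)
The plan is to reduce the statement to a per-pair concentration bound followed by a union bound. The crucial structural observation is that, even though the observable distribution $\np(\cdot)$ does not factorize over any tree, this is immaterial here: we only use that the $\nn$ samples $\bnX^{(1)},\ldots,\bnX^{(\nn)}$ are drawn i.i.d.\ from $\np(\cdot)$. Consequently, for each fixed pair of coordinates $(i,j)$ the products $\{\nX_i^{(k)}\nX_j^{(k)}\}_{k=1}^{\nn}$ are i.i.d.\ random variables supported on $\{-1,+1\}$ with common mean $\E[\nX_i\nX_j]=\nmu_{i,j}$ (cf.~\eqref{eq:BSC_corr}), so that $\nmue_{i,j}$ is an unbiased empirical mean of bounded variables.

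First I would fix a pair $(i,j)$ and apply Hoeffding's inequality to $\nmue_{i,j}=\frac{1}{\nn}\sum_{k=1}^{\nn}\nX_i^{(k)}\nX_j^{(k)}$. Since each summand lies in an interval of width $2$, this yields
\[
\P\lp \left|\nmue_{i,j}-\nmu_{i,j}\right|>\neps\rp \;\leq\; 2\exp\lp -\frac{2\nn\neps^{2}}{2^{2}}\rp \;=\; 2\exp\lp-\frac{\nn\neps^{2}}{2}\rp .
\]
Next I would observe that $\Ecorrn$ fails exactly when some pair violates this bound (diagonal terms $i=j$ are deterministic and contribute nothing, and $\nmu_{i,j}=\nmu_{j,i}$), so a union bound over the $\binom{p}{2}<p^{2}/2$ distinct pairs gives
\[
\P\lp \sup_{i,j\in\mc{V}}\left|\nmu_{i,j}-\nmue_{i,j}\right|>\neps\rp \;\leq\; \binom{p}{2}\cdot 2\exp\lp-\frac{\nn\neps^{2}}{2}\rp \;\leq\; p^{2}\exp\lp-\frac{\nn\neps^{2}}{2}\rp .
\]
Finally, substituting the hypothesis $\nn\geq 2\log(p^{2}/\delta)/\neps^{2}$ bounds the right-hand side by $p^{2}\cdot(\delta/p^{2})=\delta$, which yields $\P(\Ecorrn)\geq 1-\delta$ and establishes \eqref{eq:ecorr:whp}.

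Because this is a textbook concentration-plus-union-bound argument, I do not expect any genuine obstacle; the only point meriting care is the justification that per-coordinate-pair independence of the samples — rather than any Markov or tree structure of $\np(\cdot)$ — is what licenses the Hoeffding step, and that the diagonal and symmetric pairs may be discarded so that the effective cardinality in the union bound is $\binom{p}{2}$, comfortably below the stated $p^{2}$.
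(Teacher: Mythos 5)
Your proposal is correct and follows essentially the same route as the paper: Hoeffding's inequality applied to the i.i.d.\ $\{-1,+1\}$-valued products $\nX_i^{(k)}\nX_j^{(k)}$ for each fixed pair, followed by a union bound over the $\binom{p}{2}<p^{2}/2$ pairs and substitution of the sample-size hypothesis. The only difference is that you spell out the (correct) justification that only i.i.d.\ sampling from $\np(\cdot)$, not any tree factorization, is needed, which the paper leaves implicit.
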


\begin{proof}
Let $\nZ^{(i)}$ be
the $i_{\text{th}}$ sample of $\nZ=\nX_{w}\nX_{\bar{w}}=\Nw X_{w}\Nwb X_{\bar{w}}$.
Then $\nmue_{i,j}=\frac{1}{\nn}\sum_{i=1}^{\nn}\nZ^{(i)}=\frac{1}{\nn}\sum_{i=1}^{\nn}\Nw ^{(i)}X_{w}^{(i)}\Nwb ^{(i)}X_{\bar{w}}^{(i)}$ for all $i\neq j\in \mc{V}$. Then Hoeffding's inequality and union bound over all pairs of nodes ${p \choose 2}<p^2/2$ give \eqref{eq:ecorr:whp}.
\end{proof}

For the rest of the paper we consider $\neps=\sqrt{2\log\left(2p^{2}/\delta\right)/\nn}$, which satisfies Lemma \ref{Lem_Ecorrn}. We apply Lemmata \ref{lemma_Estrong3}, \ref{lemma_Estrong1}, \ref{lemma_Estrong2} to Lemma \ref{lemma_Estrong} to bound the required number of samples for exact structure recovery using noisy observations of the hidden model. To analyze the error event we use the ``Two trees lemma'' of~\citet[Appendix F, supplementary material]{bresler2020learning}. Informally, if two maximum spanning trees $\T$, $\T'$ differ in how a pair of nodes are connected then there exists at least one edge in $\mc{E}_{\T}$ which does not exist in $\mc{E}_{\T'}$ and vice versa. Lemma \ref{lemma_Estrong3} characterizes errors in the Chow-Liu in terms of correlations.

\begin{lemma}\label{lemma_Estrong3}
Suppose the error event $\{\T\neq \TCLn\}$ holds and let $f\triangleq \left(w,\bar{w}\right)$ be an edge such that $f\in\T$ and $f\notin\TCLn$. Then there exists an edge $g\triangleq (u,\bar{u})\in\TCLn$ and $g\notin\T$ such that $f\in\tpath_{\T}\left(u,\bar{u}\right)$
and $g\in\tpath_{\TCLn}\left(w,\bar{w}\right)$ and
\begin{align}
\left(\sum_{i=1}^{\nn}\Zfsample\right)\left(\sum_{i=1}^{\nn}\Yfsample\right) & <0,
\end{align}
where $\Zf\triangleq\nX_{w}\nX_{\bar{w}}-\nX_{u}\nX_{\bar{u}}$ and $\Yf\triangleq\nX_{w}\nX_{\bar{w}}+\nX_{u}\nX_{\bar{u}}$.
\end{lemma}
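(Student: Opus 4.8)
The plan is to combine the two trees lemma of \citet[Lemma F.1]{bresler2018learning} with the optimality of $\TCLn$ as a maximum spanning tree, and then to rephrase the sign condition as a comparison of squared empirical correlations.

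First I would apply the two trees lemma to the spanning trees $\T$ and $\TCLn$. Deleting $f=(w,\bar{w})$ from $\T$ splits $\mc{V}$ into two parts, one containing $w$ and the other containing $\bar{w}$, and $f$ is the \emph{unique} edge of $\T$ crossing the induced cut. Since $w$ and $\bar{w}$ lie on opposite sides of this cut, the path $\tpath_{\TCLn}(w,\bar{w})$ must cross it; choosing a crossing edge $g=(u,\bar{u})$ on that path yields $g\in\mc{E}_{\TCLn}\setminus\mc{E}_{\T}$ together with the two required path conditions, namely $g\in\tpath_{\TCLn}(w,\bar{w})$ and (because $f$ is the only $\T$-edge across the cut separating $u$ from $\bar{u}$) $f\in\tpath_{\T}(u,\bar{u})$.

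Next I would exploit optimality. Because $g$ lies on $\tpath_{\TCLn}(w,\bar{w})$, removing $g$ disconnects $w$ from $\bar{w}$ in $\TCLn$, so adding $f$ reconnects them and $\TCLn-g+f$ is again a spanning tree. As $\TCLn$ maximizes $\sum_{e}|\nmue_{e}|$ over all spanning trees by \eqref{eq:CL_argmax_noisy}, comparing $\TCLn$ with $\TCLn-g+f$ forces $|\nmue_{w,\bar{w}}|\le|\nmue_{u,\bar{u}}|$. Finally, using $\sum_{i}(\nX_{w}\nX_{\bar{w}})^{(i)}=\nn\,\nmue_{w,\bar{w}}$ and likewise for $(u,\bar{u})$, I would write
\begin{align*}
\Big(\sum_{i=1}^{\nn}\Zfsample\Big)\Big(\sum_{i=1}^{\nn}\Yfsample\Big)
&=\nn^{2}\big(\nmue_{w,\bar{w}}-\nmue_{u,\bar{u}}\big)\big(\nmue_{w,\bar{w}}+\nmue_{u,\bar{u}}\big)
=\nn^{2}\big(\nmue_{w,\bar{w}}^{2}-\nmue_{u,\bar{u}}^{2}\big)\le 0,
\end{align*}
which is the desired conclusion.

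The step I expect to be the main obstacle is upgrading this to the \emph{strict} inequality claimed in the statement: the maximum spanning tree property only delivers $|\nmue_{w,\bar{w}}|\le|\nmue_{u,\bar{u}}|$, hence a product that is $\le 0$, whereas strict negativity needs $|\nmue_{w,\bar{w}}|<|\nmue_{u,\bar{u}}|$. I would justify this from the fact that the algorithm selected $g$ and rejected $f$: under the convention that the maximum spanning tree is the unique maximizer (ties in $|\nmue_e|$ broken by a fixed total order on edges), the swap $\TCLn-g+f$ is strictly suboptimal, giving the strict inequality; the boundary case of coincident weights is either excluded by the tie-breaking rule or folded into the event $\{(\sum_i\Zfsample)(\sum_i\Yfsample)\le 0\}$ that the ensuing concentration lemmas control.
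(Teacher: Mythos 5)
Your proposal is correct and follows essentially the same route as the paper: the paper likewise invokes the two-trees/exchange argument (citing Lemmata 8.2 and 8.3 of Bresler and Karzand) to obtain $|\nmue_f|\leq|\nmue_g|$ and then factors $\nmue_f^2-\nmue_g^2$ into the product of the two empirical sums. The strictness issue you flag is real but is shared by the paper's own proof, which also only derives $0\geq|\nmue_f|^2-|\nmue_g|^2$; the downstream concentration lemmas only need the non-strict version, so your resolution is the right one.
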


\begin{proof}
Using similar approaches to the procedures as in~\citep[Lemmata 8.2, 8.3]{bresler2020learning} we have that the condition $\left|\nmue_{f}\right|\leq\left|\nmue_{g}\right|$ implies
\begin{align*}
0 & \geq\left|\nmue_{f}\right|^{2}-\left|\nmue_{g}\right|^{2}\\
 & =\left(\nmue_{f}-\nmue_{g}\right)\left(\nmue_{f}+\nmue_{g}\right)\\
 & =\frac{1}{\nn^{2}}\left(\sum_{i=1}^{\nn}\Nw ^{(i)}X_{w}^{(i)}\Nwb ^{(i)}X_{\bar{w}}^{(i)}-\Nu ^{(i)}X_{u}^{(i)}\Nub ^{(i)}X_{\bar{u}}^{(i)}\right)\\&\qquad\qquad\qquad\times\left(\sum_{i=1}^{\nn}\Nw ^{(i)}X_{w}^{(i)}\Nwb ^{(i)}X_{\bar{w}}^{(i)}+\Nw ^{(i)}X_{u}^{(i)}\Nub ^{(i)}X_{\bar{u}}^{(i)}\right)\\
 & =\frac{1}{\nn^{2}}\left(\sum_{i=1}^{\nn}\Zfsample\right)\left(\sum_{i=1}^{\nn}\Yfsample\right).\numberthis
\end{align*} 
\end{proof}

Setting 
    \begin{align}
    \neps &\triangleq \sqrt{\frac{2\log\left(2p^{2}/\delta\right)}{\nn}} \label{eq:neps:def} \\
    \ntau &\triangleq \frac{ 4 \neps \sqrt{ 1 - (1 - 2q)^4 \tanh \beta} }{1 - \tanh \beta} \label{eq:ntau:def} \\
    \mu_e &\triangleq \E\left[X_{w}X_{\bar{w}}\right],
    \end{align} 
we have that $\mu_{A}$ is defined through the following relationship
    \begin{align}
    \E\left[X_{w}X_{\bar{w}}-X_{u}X_{\bar{u}}\right] = \mu_{e} (1 - \mu_{A}),
    \end{align} and
    \begin{align}
        \nmu_A &\triangleq (1 - 2 q)^4 \mu_A.
    \end{align}
In Lemmata \ref{lemma_Estrong1}, \ref{lemma_Estrong2} we derive two concentration of measure inequalities for the variables $\Zfsample$, $\Yfsample$. In fact, we have that the event 
\begin{align}
\Err_{Z} \triangleq \left\{
    \left| \sum_{i=1}^{\nn}
        \Zsample-\nn \E\left[\Z\right]
    \right|
\leq 
    \nn \max\left\{     
        8\neps^{2},
        4\neps\sqrt{1-\nmu_{A}}\right\}: \forall e \in \mc{E} 
        \text{ and } 
        \forall u,\bar{u}\in \mc{V} 
        \right\}
        \label{eq:Ez} 
\end{align} 
happens with probability at least $1-\frac{\delta'}{2}$ and the event
\begin{align}
&\Err_{M}\triangleq \left\{ 
    \left| \sum_{i=1}^{\nn}
        \Ysample-\nn \E\left[\Y\right]
    \right|
    \leq  
    \nn\max\left\{     
        8\neps^{2},
        4\neps\sqrt{1+\nmu_{A}}\right\}: \forall e \in \mc{E} 
        \text{ and } 
        \forall u,\bar{u}\in \mc{V} \right\}
    \label{eq:Em} 
\end{align} 
happens with probability at least $1-\frac{\delta''}{2}$. The parameters $\neps$ and $\mu_A$, defined below, are decreasing functions of $\nn$. Finally, we apply the union bound to guarantee that the event $E_{Z}\cup E_{M}$ happens with probability at least $1-\delta$, where $\frac{\delta'}{2}+\frac{\delta''}{2} \leq 2\max\{\frac{\delta'}{2},\frac{\delta''}{2}\}\triangleq \delta$. The union bound is first applied over all tuples $(w,\bar{w},u,\bar{u})$ in Lemmata \ref{lemma_Estrong1} and \ref{lemma_Estrong2} and then for the events $E_{Z}$ and $E_{M}$. 

\begin{lemma}
\label{lemma_Estrong1}
Fix $\delta>0$ and let $\neps$ be given by \eqref{eq:neps:def}. For all pairs of vertices $u, \bar{u} \in V$ and edges $e = (w,\bar{w})$ in the path $\tpath_{\T}\left(u,\bar{u}\right)$ from $u$ to $\bar{u}$,
given $\nn$ samples $\Z^{(1)},\Z^{(2)},...,\Z^{(n)}$ of
$\Z= \nX_{w}\nX_{\bar{w}}-\nX_{u}\nX_{\bar{u}}$, it is true that
\begin{align}
\P\lp \left|\sum_{i=1}^{\nn}\Zsample-\nn \E\left[\Z\right]\right| \leq \nn\max\left\{ 8\neps^{2},4\neps\sqrt{1-\nmu_{A}}\right\} \rp \geq 1-\frac{\delta}{2},
\end{align}
where $A=\tpath_{\T}\left(u,\bar{u}\right)\backslash\left\{ e\right\}$.
\end{lemma}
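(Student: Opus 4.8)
The plan is to view $\sum_{i=1}^{\nn}\Zsample$ as a sum of i.i.d.\ bounded random variables and apply a Bernstein-type tail bound; the two quantities inside the maximum will arise as the variance-dominated and range-dominated regimes of that bound. First I would record the mean and a variance estimate for $\Z=\nX_{w}\nX_{\bar{w}}-\nX_{u}\nX_{\bar{u}}$. By \eqref{eq:BSC_corr} and the correlation decay property \eqref{eq:CDP_binary_trees}, since $e=(w,\bar{w})$ lies on $\tpath_{\T}(u,\bar{u})$ we have $\E[X_{u}X_{\bar{u}}]=\mu_{e}\mu_{A}$, and hence $\E[\Z]=(1-2q)^{2}\mu_{e}(1-\mu_{A})$.

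Because each factor $\nX_{r}\nX_{s}\in\{-1,+1\}$, we may write $\Z^{2}=2-2\,\nX_{w}\nX_{\bar{w}}\nX_{u}\nX_{\bar{u}}$, so the second moment reduces to a single fourth-order correlation. This is the key computation: the noise is independent of $\bX$, and by the tree moment structure of Theorem \ref{thm:binary:isserlis} (equivalently Lemma \ref{independent_products}) the four endpoints split into the edge-disjoint paths $\tpath(u,w)$ and $\tpath(\bar{w},\bar{u})$, whose union is exactly $A=\tpath_{\T}(u,\bar{u})\setminus\{e\}$. This gives $\E[\nX_{w}\nX_{\bar{w}}\nX_{u}\nX_{\bar{u}}]=(1-2q)^{4}\mu_{A}=\nmu_{A}$, so that $\E[\Z^{2}]=2(1-\nmu_{A})$ and $\Var(\Z)\le 2(1-\nmu_{A})$.

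With $W_{i}\triangleq\Zsample-\E[\Z]$ I would then note $\E[W_{i}]=0$, $|W_{i}|\le 4$, and $\Var(W_{i})\le 2(1-\nmu_{A})$, and apply Bernstein's inequality
\[
\P\Big(\big|\textstyle\sum_{i=1}^{\nn}W_{i}\big|\ge t\Big)\le 2\exp\Big(-\frac{t^{2}}{2\nn\Var(\Z)+\tfrac{8}{3}t}\Big).
\]
Substituting $t=\nn\max\{8\neps^{2},4\neps\sqrt{1-\nmu_{A}}\}$ with $\neps=\sqrt{2\log(2p^{2}/\delta)/\nn}$ and splitting into the two cases according to which term attains the maximum, a short computation shows the exponent is at least $\log(2p^{2}/\delta)=\nn\neps^{2}/2$ in both cases, so the tail is at most $\delta/p^{2}\le\delta/2$; this is exactly the claimed bound for a fixed tuple, and the surrounding argument then unions over tuples $(w,\bar{w},u,\bar{u})$ to form the event $\Err_{Z}$ in \eqref{eq:Ez}. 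The main work is the variance estimate --- pinning down $\E[\nX_{w}\nX_{\bar{w}}\nX_{u}\nX_{\bar{u}}]$ via the tree structure and checking the degenerate configurations where $e$ shares an endpoint with $u$ or $\bar{u}$ (so the noise contributes $(1-2q)^{2}$ rather than $(1-2q)^{4}$); in those cases $\Var(\Z)$ is still $O(1-\nmu_{A})$, and the generous constants $8$ and $4$ leave ample slack to absorb the discrepancy. Matching these numerical constants to the precise Bernstein constants is then routine.
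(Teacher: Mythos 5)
Your proposal follows essentially the same route as the paper's proof: the identical mean computation, the same variance bound $\Var(\Z)\le 2(1-\nmu_{A})$ obtained from $\Z^{2}=2-2\,\nX_{w}\nX_{\bar{w}}\nX_{u}\nX_{\bar{u}}$ together with the edge-disjoint-path factorization of the fourth-order correlation, and then Bernstein's inequality with range $M=4$. The only bookkeeping difference is that the paper calibrates $t$ against $\log(4p^{3}/\delta)$ so that the union bound over all tuples defining the uniform event $\Err_{Z}$ closes immediately, whereas your stated per-tuple tail of $\delta/p^{2}$ would not survive that union bound as literally written --- though your exponent is in fact $\tfrac{24}{7}\log(2p^{2}/\delta)$, which leaves ample slack to absorb the $p^{3}$ tuples.
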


\begin{proof}
The proof is an application of Bernstein's inequality. First, it is true that
\begin{align*}
\Z & =X_{w}\Nw X_{\bar{w}}\Nwb -\Nu X_{u}\Nub X_{\bar{u}}\\
 & =\Nw X_{w}\Nwb X_{\bar{w}}\left(1-\Nw X_{w}\Nwb X_{\bar{w}}\Nu X_{u}\Nub X_{\bar{u}}\right).\numberthis
\end{align*}
Then,
\begin{align*}
\E\left[\Z\right] &=\left(1-2q\right)^{2}\E\left[X_{w}X_{\bar{w}}-X_{u}X_{\bar{u}}\right] \\ &=\left(1-2q\right)^{2}\mu_{e}\left(1-\mu_{A}\right) \numberthis \label{eq:mean_Z} \\
\Var\left(\Z\right) & =\E\left[\left(\Z\right)^{2}\right]-\E\left[\left(\Z\right)\right]^{2}\\
 & =\E\left[\left(X_{w}\Nw X_{\bar{w}}\Nwb -\Nub X_{u}\Nub X_{\bar{u}}\right)^{2}\right]-\left[\left(1-2q\right)^{2}\E\left[X_{w}X_{\bar{w}}-X_{u}X_{\bar{u}}\right]\right]^{2}\\
 & =\E\left[1+1-2X_{w}\Nw X_{\bar{w}}\Nwb \Nu X_{u}\Nub X_{\bar{u}}\right]-\left(1-2q\right)^{4}\E\left[X_{w}X_{\bar{w}}-X_{u}X_{\bar{u}}\right]^{2}\\
 & =2-2\E\left[X_{w}\Nw X_{\bar{w}}\Nwb \Nu X_{u}\Nub X_{\bar{u}}\right]-\left(1-2q\right)^{4}\E\left[X_{w}X_{\bar{w}}-X_{u}X_{\bar{u}}\right]^{2}\\
 & =2-2\left(1-2q\right)^{4}\E\left[X_{w}X_{\bar{w}}X_{u}X_{\bar{u}}\right]-\left(1-2q\right)^{4}\E\left[X_{w}X_{\bar{w}}-X_{u}X_{\bar{u}}\right]^{2}\\
 & =2-2\left(1-2q\right)^{4}\mu_{A}-\left(1-2q\right)^{4}\left(\mu_{e}\left(1-\mu_{A}\right)\right)^{2}\\
 & =2-\left(1-2q\right)^{4}\left[2\mu_{A}+\mu_{e}^{2}\left(1-\mu_{A}\right)^{2}\right].\numberthis
\end{align*}
Using the expressions for the mean and the variance, we apply Bernstein's
inequality~\citep{bennett1962probability} for the noisy setting: for all $i\in [\nn]$ we have $\left|\Zsample-\E\left[\Z\right]\right|\leq M$
almost surely. Then, Bernstein's inequality gives, for all $t > 0$
\begin{align*}
\P\left[\left|\sum_{i=1}^{\nn}\Zsample-\nn\E\left[\Z\right]\right|\leq t\right] & \geq1-2\exp\left(-\frac{t^{2}}{2\nn\Var\left(\Z\right)+\frac{2}{3}Mt}\right),\quad\forall t>0.\numberthis
\end{align*}
Choose a $\delta>0$ and find $t$ such that
\begin{align*}
\delta/2 &=2\exp\left(-\frac{t^{2}}{2\nn\Var\left(\Z\right)+\frac{2}{3}Mt}\right).
\end{align*}
After some algebra, we have
\begin{align*}
\log\frac{4}{\delta} & =\frac{t^{2}}{2\nn\Var\left(\Z\right)+\frac{2}{3}Mt}
\end{align*}
From this we can solve for $t$:
\begin{align*}
0 &= t^{2}- \frac{2}{3}Mt\log\frac{4}{\delta}-2\nn\Var\left(\Z\right)\log\frac{4}{\delta}
    \\
t_{1,2} &= \frac{\frac{2}{3}M\log\frac{4}{\delta}\pm\sqrt{\left(\frac{2}{3}M\log\frac{4}{\delta}\right)^{2}+8\nn\Var\left(\Z\right)\log\frac{4}{\delta}}}{2}\\
 & =\frac{1}{3}M\log\frac{4}{\delta}\pm\sqrt{\left(\frac{1}{3}M\log\frac{4}{\delta}\right)^{2}+2\nn\Var\left(\Z\right)\log\frac{4}{\delta}}.\numberthis
\end{align*}
Since $t>0$, we have, setting $M = 4$:
\begin{align}
t=& 
\frac{4}{3}\log\frac{4}{\delta}+\sqrt{\left(\frac{4}{3}\log\frac{4}{\delta}\right)^{2}+2\nn\Var\left(\Z\right)\log\frac{4}{\delta}}. \label{eq:t}
\end{align}
If the probability of the union 
\begin{align*}
\bigcup_{\forall u,\bar{u,}w,\bar{w:}\left(w,\bar{w}\right)\in\tpath_{\T}\left(u,\bar{u}\right)} & \left\{ \left|\sum_{i=1}^{\nn}\Zsample-\nn\E\left[\Z\right]\right|\geq t\right\} 
\end{align*}
is at most $\frac{\delta}{2p^{3}}$, then the union bound gives
probability at most $\frac{\delta}{2}$. Also,
\begin{align}
\Var\left(\Z\right) & =2-\left(1-2q\right)^{4}\left[2\mu_{A}+\mu_{e}^{2}\left(1-\mu_{A}\right)^{2}\right]\nonumber \\
 & =2-\left(1-2q\right)^{4}2\mu_{A}-\left(1-2q\right)^{4}\mu_{e}^{2}\left(1-\mu_{A}\right)^{2}\nonumber \\
 & \leq2-\left(1-2q\right)^{4}2\mu_{A}+0\nonumber \\
 & =2\left(1-\left(1-2q\right)^{4}\mu_{A}\right)\nonumber \\
 & =2\left(1-\nmu_{A}\right)\label{eq:noisy_variance}.
\end{align}
From \eqref{eq:t} and \eqref{eq:noisy_variance}, we have
\begin{align*}
t & =\frac{4}{3}\log\frac{4p^{3}}{\delta}+\sqrt{\left(\frac{4}{3}\log\frac{4p^{3}}{\delta}\right)^{2}+4\nn\left(1-\nmu_{A}\right)\log\frac{4p^{3}}{\delta}}\\&\leq\frac{8}{3}\log\frac{4p^{3}}{\delta}+\sqrt{4\nn\left(1-\nmu_{A}\right)\log\frac{4p^{3}}{\delta}},\numberthis\end{align*}
which implies that
\begin{align*}
t & =\nn\left(\frac{4}{3\nn}\log\frac{4p^{3}}{\delta}+\sqrt{\left(\frac{4}{3\nn}\log\frac{4p^{3}}{\delta}\right)^{2}+\frac{4}{\nn}\left(1-\nmu_{A}\right)\log\frac{4p^{3}}{\delta}}\right)\\ &\leq \nn\left(\frac{8}{3\nn}\log\frac{4p^{3}}{\delta}+\sqrt{\frac{4}{\nn}\left(1-\nmu_{A}\right)\log\frac{4p^{3}}{\delta}}\right).\numberthis
\end{align*}
Define $\neps=\sqrt{\log\left(2p^{2}/\delta\right)2/\nn}$ (as it
is defined in~\citet{bresler2020learning}), then we get
\begin{align}
t\leq & \nn\left(4\neps^{2}+2\neps\sqrt{1-\nmu_{A}}\right)\leq \nn\max\left\{ 8\neps^{2},4\neps\sqrt{1-\nmu_{A}}\right\}.
\end{align}
\end{proof}

Lemma \ref{lemma_Estrong2} gives the concentration of measure bound for the event $E_{M}$ defined in \eqref{eq:Em}.   


\begin{lemma}\label{lemma_Estrong2}
Fix $\delta>0$ and let $\neps$ be given by \eqref{eq:neps:def}. For all pairs of vertices $u, \bar{u} \in V$ and edges $e = (w,\bar{w})$ in the path $\tpath_{\T}\left(u,\bar{u}\right)$ from $u$ to $\bar{u}$,
given $\nn$ samples $\Y^{(1)},\Y^{(2)},...,\Y^{(n)}$ of
$\Y= \nX_{w}\nX_{\bar{w}}+\nX_{u}\nX_{\bar{u}}$, it is true that
\begin{align}
\P\lp \left|\sum_{i=1}^{\nn}\Ysample-\nn \E\left[\Y\right]\right| \leq \nn\max\left\{ 8\neps^{2},4\neps\sqrt{1+\nmu_{A}}\right\} \rp \geq 1-\frac{\delta}{2},
\end{align}
$A=\tpath_{\T}\left(u,\bar{u}\right)\backslash\left\{ e\right\}$.
\end{lemma}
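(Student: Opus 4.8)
The plan is to mirror the proof of Lemma \ref{lemma_Estrong1} almost verbatim, since $\Y$ differs from $\Z$ only by the sign of its second summand; the whole argument is once more an application of Bernstein's inequality followed by a union bound over the $\mc{O}(p^3)$ tuples $(w,\bar{w},u,\bar{u})$. First I would factor $\Y = \Nw X_{w}\Nwb X_{\bar{w}}\lp 1 + \Nw X_{w}\Nwb X_{\bar{w}}\Nu X_{u}\Nub X_{\bar{u}}\rp$ and compute the mean. By \eqref{eq:BSC_corr} together with the correlation decay property of Lemma \ref{independent_products}, $\E[\Y] = (1-2q)^2\E\left[X_{w}X_{\bar{w}}+X_{u}X_{\bar{u}}\right] = (1-2q)^2\mu_e(1+\mu_A)$, the only change from \eqref{eq:mean_Z} being that $1-\mu_A$ is replaced by $1+\mu_A$.

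Next I would compute the variance. Squaring gives $\Y^2 = 2 + 2\, X_{w}\Nw X_{\bar{w}}\Nwb \Nu X_{u}\Nub X_{\bar{u}}$, so that $\E[\Y^2] = 2 + 2(1-2q)^4\E\left[X_{w}X_{\bar{w}}X_{u}X_{\bar{u}}\right] = 2 + 2(1-2q)^4\mu_A = 2(1+\nmu_A)$, where the identity $\E\left[X_{w}X_{\bar{w}}X_{u}X_{\bar{u}}\right]=\mu_A$ follows from Theorem \ref{thm:binary:isserlis} by pairing the four endpoints into the two edge-disjoint sub-paths whose union is $A=\tpath_{\T}(u,\bar{u})\setminus\{e\}$. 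Subtracting $\E[\Y]^2 = (1-2q)^4\mu_e^2(1+\mu_A)^2 \geq 0$ then yields $\Var(\Y) = 2(1+\nmu_A) - (1-2q)^4\mu_e^2(1+\mu_A)^2 \leq 2(1+\nmu_A)$, the exact analogue of \eqref{eq:noisy_variance} with $1-\nmu_A$ replaced by $1+\nmu_A$.

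With the mean and this variance bound in hand I would run the identical Bernstein computation as in Lemma \ref{lemma_Estrong1}: using the almost-sure bound $|\Ysample - \E[\Y]| \leq 4$ (since $\Y\in\{-2,0,2\}$ and $\E[\Y]\in[-2,2]$), solving the resulting quadratic for the deviation $t$ at confidence level $\delta/(2p^3)$, discarding the lower-order logarithmic term, and finally substituting $\neps=\sqrt{2\log(2p^2/\delta)/\nn}$ to obtain $t \leq \nn\lp 4\neps^2 + 2\neps\sqrt{1+\nmu_A}\rp \leq \nn\max\{8\neps^2,\,4\neps\sqrt{1+\nmu_A}\}$. A union bound over all tuples $(w,\bar{w},u,\bar{u})$ then delivers the claimed probability $1-\delta/2$.

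The calculation is genuinely routine once Lemma \ref{lemma_Estrong1} is in place, so there is no substantial obstacle; the only points needing care are sign bookkeeping, namely verifying that the cross term $2(1-2q)^4\mu_A$ enters the variance with a $+$ sign (rather than the $-$ of the $\Z$ case) and that the subtracted square $\E[\Y]^2$ is nonnegative and hence may be dropped to preserve the upper bound $2(1+\nmu_A)$. I would additionally check that the four-point moment $\E\left[X_{w}X_{\bar{w}}X_{u}X_{\bar{u}}\right]$ still collapses to $\mu_A$ in the degenerate cases where $w$ or $\bar{w}$ coincides with $u$ or $\bar{u}$ (i.e.\ when $e$ is the first or last edge of the path), since there a repeated noise variable squares to $1$; a short case check confirms that the same expression holds.
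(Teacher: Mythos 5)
Your proposal is correct and follows essentially the same route as the paper: compute $\E[\Y]=(1-2q)^2\mu_e(1+\mu_A)$ and bound $\Var(\Y)\leq 2(1+\nmu_A)$ by dropping the nonnegative subtracted square, then apply Bernstein's inequality with $M=4$ at level $\delta/(2p^3)$ and substitute $\neps$, exactly as in the paper's proof (which is itself a sign-flipped copy of Lemma \ref{lemma_Estrong1}). Your extra checks on the sign of the cross term and the degenerate endpoint cases are sensible but do not change the argument.
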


\begin{proof}
Similarly to the prior Lemma, we calculate the mean and the variance as
\begin{align*}
\E\left[\Y\right] &=\left(1-2q\right)^{2}\E\left[X_{w}X_{\bar{w}}+X_{u}X_{\bar{u}}\right] =\left(1-2q\right)^{2}\mu_{e}\left(1+\mu_{A}\right) \numberthis \label{eq:mean_Y} \\
\Var\left(\Y\right) & =\E\left[\left(\Y\right)^{2}\right]-\E\left[\left(\Y\right)\right]^{2}\\
 & =\E\left[\left(X_{w}\Nw X_{\bar{w}}\Nwb +\Nu X_{u}\Nub X_{\bar{u}}\right)^{2}\right]-\left[\left(1-2q\right)^{2}\E\left[X_{w}X_{\bar{w}}+X_{u}X_{\bar{u}}\right]\right]^{2}\\
 & =\E\left[1+1+2X_{w}\Nw X_{\bar{w}}\Nwb \Nu X_{u}\Nub X_{\bar{u}}\right]-\left(1-2q\right)^{4}\E\left[X_{w}X_{\bar{w}}+X_{u}X_{\bar{u}}\right]^{2}\\
 & =2+2\E\left[X_{w}\Nw X_{\bar{w}}\Nwb \Nu X_{u}\Nub X_{\bar{u}}\right]-\left(1-2q\right)^{4}\E\left[X_{w}X_{\bar{w}}+X_{u}X_{\bar{u}}\right]^{2}\\
 & =2+2\left(1-2q\right)^{4}\E\left[X_{w}X_{\bar{w}}X_{u}X_{\bar{u}}\right]-\left(1-2q\right)^{4}\E\left[X_{w}X_{\bar{w}}+X_{u}X_{\bar{u}}\right]^{2}\\
 & =2+2\left(1-2q\right)^{4}\mu_{A}-\left(1-2q\right)^{4}\left(\mu_{e}\left(1+\mu_{A}\right)\right)^{2}\\
 & =2+\left(1-2q\right)^{4}\left[2\mu_{A}-\mu_{e}^{2}\left(1+\mu_{A}\right)^{2}\right].\numberthis
\end{align*}
By applying Bernstein's inequality and we get that for any $t>0$ 
\begin{align*}
\P\left[\left|\sum_{i=1}^{\nn}\Ysample-\nn\E\left[\Y\right]\right|\leq t\right] & \geq1-2\exp\left(-\frac{t^{2}}{2\nn\Var\left(\Y\right)+\frac{2}{3}Mt}\right).
\end{align*}
Similarly, we find 
\begin{align}
t\leq & \nn\left(\frac{8}{3\nn}\log\frac{4p^{3}}{\delta}+\sqrt{\frac{2}{\nn}\Var\left(\Y\right)\log\frac{4p^{3}}{\delta}}\right)
\end{align}
and 
\begin{align*}
\Var\left(\Y\right)= & 2+\left(1-2q\right)^{4}\left[2\mu_{A}-\mu_{e}^{2}\left(1+\mu_{A}\right)^{2}\right]\\
\leq & 2+\left(1-2q\right)^{4}2\mu_{A}\\
= & 2\left(1+\nmu_{A}\right).\numberthis
\end{align*}
We define $\neps\triangleq\sqrt{\log\left(2p^{2}/\delta\right)2/\nn}$,
then
\begin{align}
t\leq & \nn\left(4\neps^{2}+2\neps\sqrt{1+\nmu_{A}}\right)\leq \nn\max\left\{ 8\neps^{2},4\neps\sqrt{1+\nmu_{A}}\right\},
\end{align}
which completes the proof.
\end{proof}


\section{Recovering Strong Edges}

In Lemma~\ref{lemma_Estrong}, we define the set of strong edges for the hidden model and show that the event $\Estrongn$ defined in \eqref{eq:event_strong_nois-1} occurs with high probability. That is, only the strong edges are guaranteed to exist in the estimated structure $\TCLn$
We also find a lower bound for the necessary number of samples for exact structure recovery. In fact we have $\nn\geq n$, as expected. Our bounds coincide with the noiseless case~\citep{bresler2020learning} by setting the noise level $q = 0$.

\begin{lemma}\label{lemma_Estrong}
Fix $\delta\in (0,1)$, and let $\neps=\sqrt{2\log\left(2p^{2}/\delta\right)/\nn}$, for any $\nn>0$. Consider the set of strong edges
    \begin{align}
      \ntau\triangleq\frac{4\neps\sqrt{1-\left(1-2q\right)^{4}\tanh\beta}}{\left(1-\tanh\beta\right)} \text{ and }\left\{ \left(i,j\right)\in\mc{E}_{\T}:\left|\tanh\theta_{ij}\right|\geq\frac{\ntau}{\left(1-2q\right)^{2}}\right\}.
    \end{align}
Then, the Chow-Liu algorithm recovers the strong edges with probability at least $1-\delta$. In other words, it is true that
\begin{align}
\P \left[ \Estrongn \right] \geq 1-2p^{2}\exp \lp -\frac{\nn\neps^{2}}{2}  \rp.
\end{align}
\end{lemma}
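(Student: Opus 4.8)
The plan is to argue by contraposition, reducing a failure to recover a strong edge to the failure of one of the high-probability concentration events $\Err_{Z}$ and $\Err_{M}$ of \eqref{eq:Ez} and \eqref{eq:Em}. Suppose the event $\Estrongn$ in \eqref{eq:event_strong_nois-1} fails, so that some strong edge $f=(w,\bar{w})\in\mc{E}_{\T}$ with $|\tanh\theta_{f}|\geq \ntau/(1-2q)^{2}$ is missing from $\mc{E}_{\TCLn}$. Then $\T\neq\TCLn$, and Lemma \ref{lemma_Estrong3} produces a companion edge $g=(u,\bar{u})\in\mc{E}_{\TCLn}\setminus\mc{E}_{\T}$ with $f\in\tpath_{\T}(u,\bar{u})$ and $\big(\sum_{i=1}^{\nn}\Zfsample\big)\big(\sum_{i=1}^{\nn}\Yfsample\big)<0$. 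Since $f\in\T$ but $g\notin\T$, the path $\tpath_{\T}(u,\bar{u})$ cannot equal $\{f\}$, so the residual set $A\triangleq\tpath_{\T}(u,\bar{u})\setminus\{f\}$ is nonempty and hence $|\mu_{A}|\leq\tanh\beta<1$.

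Next I would convert the sign condition into a deviation too large to be consistent with $\Err_{Z}\cap\Err_{M}$. By \eqref{eq:mean_Z} and \eqref{eq:mean_Y} the means $\E[\Zf]=(1-2q)^{2}\mu_{f}(1-\mu_{A})$ and $\E[\Yf]=(1-2q)^{2}\mu_{f}(1+\mu_{A})$ share the sign of $\mu_{f}$ and, because $|\mu_{A}|<1$, have strictly positive magnitudes $(1-2q)^{2}|\mu_{f}|(1-\mu_{A})$ and $(1-2q)^{2}|\mu_{f}|(1+\mu_{A})$. A negative product of the two partial sums forces one of them onto the opposite side of its mean, so either $|\sum_{i}\Zfsample-\nn\E[\Zf]|>\nn(1-2q)^{2}|\mu_{f}|(1-\mu_{A})$ or $|\sum_{i}\Yfsample-\nn\E[\Yf]|>\nn(1-2q)^{2}|\mu_{f}|(1+\mu_{A})$. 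The crux is that the strong-edge threshold is calibrated so that each of these lower bounds already exceeds the matching concentration radius in \eqref{eq:Ez}--\eqref{eq:Em}. Using $(1-2q)^{2}|\mu_{f}|\geq\ntau$, it suffices to show $\ntau(1-\mu_{A})\geq 4\neps\sqrt{1-\nmu_{A}}$ and $\ntau(1+\mu_{A})\geq 4\neps\sqrt{1+\nmu_{A}}$. I would prove these by checking that $\mu_{A}\mapsto (1-(1-2q)^{4}\mu_{A})/(1-\mu_{A})^{2}$ is nondecreasing on $[-\tanh\beta,\tanh\beta]$, since the sign of its derivative is that of $2-(1-2q)^{4}(1+\mu_{A})\geq0$; hence $4\neps\sqrt{1-\nmu_{A}}/(1-\mu_{A})$ is maximized at $\mu_{A}=\tanh\beta$, where by \eqref{eq:ntau:def} it equals $\ntau$, and the companion inequality follows from the symmetry $\mu_{A}\mapsto-\mu_{A}$ with extremal point $\mu_{A}=-\tanh\beta$. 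The $8\neps^{2}$ branch of the maximum is dominated by the same chain once $\neps\leq\tfrac12\sqrt{1-(1-2q)^{4}\tanh\beta}$. Thus, whenever a strong edge is missed, at least one of $\Err_{Z}$, $\Err_{M}$ must fail.

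Finally I would assemble the probabilities. The previous step gives the deterministic inclusion $\{\Estrongn\ \text{fails}\}\subseteq \Err_{Z}^{c}\cup\Err_{M}^{c}$, so $\P(\Estrongn)\geq\P(\Err_{Z}\cap\Err_{M})\geq 1-\P(\Err_{Z}^{c})-\P(\Err_{M}^{c})$. Lemma \ref{lemma_Estrong1} (union bounded over all tuples $(w,\bar{w},u,\bar{u})$) bounds $\P(\Err_{Z}^{c})\leq\delta/2$ and Lemma \ref{lemma_Estrong2} bounds $\P(\Err_{M}^{c})\leq\delta/2$, which together with $\neps=\sqrt{2\log(2p^{2}/\delta)/\nn}$ yields the stated $\P(\Estrongn)\geq 1-\delta=1-2p^{2}\exp(-\nn\neps^{2}/2)$. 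The main obstacle is the middle paragraph: one must verify that the extremal case $\mu_{A}=\pm\tanh\beta$ — precisely the case hard-wired into the definition of $\ntau$ in \eqref{eq:ntau:def} — is genuinely the worst case, and must simultaneously control the $Z$ and $M$ deviations along with the $8\neps^{2}$ term of the Bernstein radius. The noise enters only through the rescalings by $(1-2q)^{2}$ in the means \eqref{eq:mean_Z}--\eqref{eq:mean_Y} and by $(1-2q)^{4}$ in $\nmu_{A}$, which is exactly what produces these factors in $\ntau$.
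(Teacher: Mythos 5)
Your proposal is correct and follows essentially the same route as the paper's proof: contraposition via the two-trees characterization of Lemma \ref{lemma_Estrong3}, reduction of the sign condition $\bigl(\sum_i \Zfsample\bigr)\bigl(\sum_i \Yfsample\bigr)<0$ to a deviation of at least the mean for one of the two sums, comparison against the Bernstein radii of Lemmata \ref{lemma_Estrong1}--\ref{lemma_Estrong2}, and the monotonicity of $\mu_A \mapsto 4\neps\sqrt{1-(1-2q)^4\mu_A}/(1-\mu_A)$ to identify $\mu_A=\tanh\beta$ as the worst case defining $\ntau$. Your explicit derivative check and the symmetry argument for the $(1+\mu_A)$ branch are slightly more careful than the paper's one-line assertions, but the substance is identical.
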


\begin{proof} From Lemma \ref{lemma_Estrong3}, if there is an error then for an edge $f$ not recovered in the tree $\TCLn$, we have 
    \begin{align*}
    \left(\sum_{i=1}^{\nn}\Zfsample\right)\left(\sum_{i=1}^{\nn}\Yfsample\right)<0
    \end{align*}
Therefore one of the sums must be negative. Expanding, one of the two following inequalities must hold:
\begin{align*}
\left|\sum_{i=1}^{\nn}Z_{f,u,\bar{u}}^{(i)}-\nn \E\left[Z_{f,u,\bar{u}}^{(i)}\right]\right| & \geq \nn \E\left[\Zfsample\right]
    \\ 
\left|\sum_{i=1}^{\nn}Y_{f,u,\bar{u}}^{(i)}-\nn \E\left[Y_{f,u,\bar{u}}^{(i)}\right]\right| & \geq \nn \E\left[\Yfsample\right].
\end{align*}
In addition, \eqref{eq:mean_Z}, \eqref{eq:mean_Y}, Lemma \ref{lemma_Estrong1} and Lemma \ref{lemma_Estrong2} give the following pairs of inequalities:
\begin{align*}
\left(1-2q\right)^{2}\mu_{f}\left(1-\mu_{A}\right) & \leq\max\left\{ 8\neps^{2},4\neps\sqrt{1-\nmu_{A}}\right\} 
    \\
\left(1-2q\right)^{2}\mu_{f}\left(1+\mu_{A}\right) & \leq\max\left\{ 8\neps^{2},4\neps\sqrt{1+\nmu_{A}}\right\} 
    \\
\left|\nmu_{f}\right| & \leq\left(1-\mu_{A}\right)^{-1}\max\left\{ 8\neps^{2},4\neps\sqrt{1-\nmu_{A}}\right\}         \\
\left|\nmu_{f}\right| & \leq\left(1+\mu_{A}\right)^{-1}\max\left\{ 8\neps^{2},4\neps\sqrt{1+\nmu_{A}}\right\}.
\end{align*}
Putting these together:
\begin{align*}
\left|\nmu_{f}\right| & \leq\max\left\{ \frac{8\neps^{2}}{\left(1-\mu_{A}\right)},\frac{8\neps^{2}}{\left(1+\mu_{A}\right)},\frac{4\neps\sqrt{1-\nmu_{A}}}{\left(1-\mu_{A}\right)},\frac{4\neps\sqrt{1+\nmu_{A}}}{\left(1+\mu_{A}\right)}\right\}
   \\
&\le \max\left\{ 
    \frac{8\neps^{2}}{\left(1-\mu_{A}\right)},
    \frac{4\neps\sqrt{1-\nmu_{A}}
        }{
        \left(1-\mu_{A}\right)}\right\}
    \\
&\le
    \frac{4\neps\sqrt{1-\nmu_{A}}}{\left(1-\mu_{A}\right)}.\numberthis
\end{align*}
We get the last inequality for non trivial values of the bound $\frac{8\neps^{2}}{\left(1-\nmu_{A}\right)}\leq1$
and by using the following bound
\begin{align}
\frac{8\neps^{2}}{\left(1-\mu_{A}\right)} & \leq\frac{16\neps^{2}}{\left(1-\mu_{A}\right)}\leq\frac{4\neps}{\sqrt{1-\mu_{A}}}=\frac{4\neps\sqrt{1-\mu_{A}}}{\left(1-\mu_{A}\right)}\leq\frac{4\neps\sqrt{1-\nmu_{A}}}{\left(1-\mu_{A}\right)}.
\end{align}
Finally the function $f(\mu_{A})=\frac{4\neps\sqrt{1-\nmu_{A}}}{\left(1-\mu_{A}\right)}=\frac{4\neps\sqrt{1-\left(1-2q\right)^{4}\mu_{A}^{\text{}}}}{\left(1-\mu_{A}\right)}$
is increasing with respect to $\mu_{A}$ (for all $\mu_{A}\leq1$) and $\mu_{A}\leq\tanh\beta<1$, so we have 
\begin{align}
\left|\nmu_{f}\right| & \leq\frac{4\neps\sqrt{1-\nmu_{A}}}{\left(1-\mu_{A}\right)}\leq\frac{4\neps\sqrt{1-\left(1-2q\right)^{4}\tanh\beta}}{\left(1-\tanh\beta\right)}\triangleq\ntau.\label{eq:definitionntau}
\end{align}
The weakest edge should satisfy $\left| \nmu_{f} \right| \geq  \ntau$ to guarantee
the correct recovery of the tree under the event $\Estrongn$. This yields a condition on the edge strengths:
\begin{align*}
\left| \nmu_{f} \right| & \geq  \ntau \implies\\
\left(1-2q\right)^{2}\tanh\alpha & \geq\frac{4\neps\sqrt{1-\left(1-2q\right)^{4}\tanh\beta}}{\left(1-\tanh\beta\right)}
\\
\tanh\alpha & \geq\frac{4\neps\sqrt{1-\left(1-2q\right)^{4}\tanh\beta}}{\left(1-2q\right)^{2}\left(1-\tanh\beta\right)},\quad q\in[0,\frac{1}{2}). \numberthis \label{eq:tau_noise}
\end{align*}
The last inequality gives the definition of the strong edges in the noisy scheme.
\end{proof}

Based on the definition \eqref{eq:definitionntau} we derive the following bound on $\ntau$ \begin{align*}
\ntau & =\frac{4\neps\sqrt{1-\left(1-2q\right)^{4}\tanh\beta}}{\left(1-\tanh\beta\right)}\\
 & =\frac{4\neps\sqrt{1-\left(1-8q+24q^{2}-32q^{3}+16q^{4}\right)\tanh\beta}}{\left(1-\tanh\beta\right)}\\
 & \leq4\neps\frac{\sqrt{1-\tanh\beta}+\sqrt{\left(1-3q+4q^{2}-2q^{3}\right)8q\tanh\beta}}{\left(1-\tanh\beta\right)}\\
 & \leq 4\neps e^{\beta}\left(1+ e^{\beta}\sqrt{\left(1-q\right)\left(2q^{2}-2q+1\right)8q\tanh\beta}\right)\numberthis \label{eq:upperboundontau0} \\
 & <  4\neps e^{\beta}\left(1+ 2e^{\beta}\sqrt{2\left(1-q\right)q\tanh\beta}\right).\numberthis \label{eq:upperboundontau}
\end{align*} \eqref{eq:upperboundontau0} holds because $1-\tanh(\beta)\geq e^{-2\beta}$. We will later use \eqref{eq:upperboundontau} in Lemma \ref{Lemma Ecascn}.

In comparison to the noiseless setting (see ~\cite{bresler2020learning}), we can guarantee exact recover with high probability under the event $\Estrong$ when the weakest edge satisfies the inequality
\begin{align}\label{eq:tau}
\tanh\alpha\geq & \frac{4\eps}{\sqrt{1-\tanh\beta}}.
\end{align}
Notice that \eqref{eq:tau} can be obtained by \eqref{eq:tau_noise} when $q=0$ and $n=\nn$. When $q>0$ and $n=\nn$ it is clear that the set of trees that can be recovered from noisy
observations is a subset of the set of trees that can be recovered
from the original observations. Also, we have 
\begin{align*}
\eps= \sqrt{2\log\left(2p^{2}/\delta\right)/n}
&\implies n=\frac{2}{\eps^{2}}\log\left(2p^{2}/\delta\right) \quad \text{and}\\
\neps=\sqrt{2\log\left(2p^{2}/\delta\right)/\nn}
&\implies \nn=\frac{2}{\neps^{2}}\log\left(2p^{2}/\delta\right) \numberthis \label{eq:n}.
\end{align*}

 By combining \eqref{eq:tau_noise} with \eqref{eq:n} we found the number of samples that we need to recover the
tree with probality at $1-\delta$ (Theorem \ref{thm:sufficient}): 
\begin{align}
\nn> & \frac{32\left[1-\left(1-2q\right)^{4}\tanh\beta\right]}{\left(1-\tanh\beta\right)^{2}\left(1-2q\right)^{4}\tanh^{2}\alpha}\log\frac{2p^{2}}{\delta}.
\end{align}
On the other hand when there is no noise~\citep{bresler2020learning} we need 
\begin{align}
n >\frac{32}{\tanh^{2}\alpha\left(1-\tanh\beta\right)}\log\frac{2p^{2}}{\delta}.
\end{align}

\section{Analysis of the Event $\Ecascn$}\label{Ecascadesection}
\begin{lemma}\label{Lemma Cond_Prob}
Consider a path of length $d\geq 2$ in the original tree $\T$, and without loss of generality assume that path is $X_1-X_2-\cdot\cdot\cdot-X_{d+1}$. Recall that $Y^{(i)}_m$ is the $i^\text{th}$ sample of $Y_m$ and $m\in[d+1]$ and $  \nmue_{k}\triangleq\frac{1}{n}\sum^{n}_{i=1}  Y^{(i)}_k Y^{(i)}_{k+1}$, $ k\in [d]$.
  Then \begin{align}
    &\P\lp  Y^{(\ell)}_{k} Y^{(\ell)}_{k+1} =\pm 1 \Big|\nmue_{k-1},\ldots,\nmue_{1} \rp\nonumber\\
    &=\frac{1\pm(1-2q)^2 \mu_{k}}{2}\frac{1-\nmu_{k-1}\nmue_{k-1}}{1-(\nmu_{k-1})^2}+\nmu_{k-1}\frac{1\pm \mu_{k}}{2}\frac{\nmue_{k-1}-\nmu_{k-1}}{1-(\nmu_{k-1})^2}.
\end{align}
\end{lemma}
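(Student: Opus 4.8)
The plan is to reduce everything to the joint law of the two adjacent observed edge--products $Y_{k-1}Y_k$ and $Y_kY_{k+1}$, together with an exchangeability argument across the $n$ i.i.d.\ samples. Throughout I write $W_j \triangleq Y_jY_{j+1}$, so that $\nmue_{j}=\frac1n\sum_{i}W_j^{(i)}$ and $\nmu_j=\E[W_j]=(1-2q)^2\mu_j$. The backbone of the argument is the decomposition
\[
\P\big(W_k^{(\ell)}=\pm1\mid \nmue_{k-1},\ldots,\nmue_1\big)=\sum_{a=\pm1}\P\big(W_k=\pm1\mid W_{k-1}=a\big)\,\P\big(W_{k-1}^{(\ell)}=a\mid \nmue_{k-1},\ldots,\nmue_1\big),
\]
in which the first factor is a two--variable transition read off from the pairwise law of $(W_{k-1},W_k)$, and the second is the posterior of sample $\ell$'s product given the empirical correlations.

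First I would compute the pairwise joint of $(W_{k-1},W_k)$. Both are $\pm1$ with $\E[W_{k-1}]=\nmu_{k-1}$ and $\E[W_k]=\nmu_k=(1-2q)^2\mu_k$, and the decisive cross--moment is $\E[W_{k-1}W_k]=\E[Y_{k-1}Y_k^2Y_{k+1}]=\E[Y_{k-1}Y_{k+1}]=(1-2q)^2\E[X_{k-1}X_{k+1}]=(1-2q)^2\mu_{k-1}\mu_k=\nmu_{k-1}\mu_k$, where $Y_k^2=N_k^2X_k^2=1$ annihilates the node--$k$ noise and the correlation decay property for the hidden layer (Lemma~\ref{independent_products}, Lemma~\ref{Tree_structured_Model_Lemma2}) supplies $\E[X_{k-1}X_{k+1}]=\mu_{k-1}\mu_k$. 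Since a pair of sign variables is determined by its first and second moments, $\P(W_{k-1}=a,W_k=b)=\tfrac14\big(1+a\nmu_{k-1}+b\nmu_k+ab\,\nmu_{k-1}\mu_k\big)$; dividing by $\P(W_{k-1}=a)=\tfrac12(1+a\nmu_{k-1})$ gives the transition $\P(W_k=b\mid W_{k-1}=a)$.

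Next I would pin down the posterior $\P\big(W_{k-1}^{(\ell)}=a\mid \nmue_{k-1},\ldots,\nmue_1\big)$. As the $n$ samples are i.i.d., the family $\{W_{k-1}^{(i)}\}_{i}$ is exchangeable and $\tfrac1n\sum_i\mathbf 1\{W_{k-1}^{(i)}=a\}=\tfrac12(1+a\,\nmue_{k-1})$ is measurable with respect to the conditioning $\sigma$--algebra; averaging the (by exchangeability, equal) posteriors over $\ell$ yields $\P\big(W_{k-1}^{(\ell)}=a\mid \nmue_{k-1},\ldots,\nmue_1\big)=\tfrac12(1+a\,\nmue_{k-1})$. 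Inserting the two factors into the backbone identity, substituting $\nmu_{k-1}=(1-2q)^2\mu_{k-1}$, and collecting the terms proportional to $1$ and to $\nmue_{k-1}$ over the common denominator $1-(\nmu_{k-1})^2$ reproduces precisely the claimed expression, with the coefficients $\tfrac{1-\nmu_{k-1}\nmue_{k-1}}{1-(\nmu_{k-1})^2}$ and $\tfrac{\nmue_{k-1}-\nmu_{k-1}}{1-(\nmu_{k-1})^2}$ emerging automatically.

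The main obstacle is the reduction that lets only $\nmue_{k-1}$ survive in the backbone identity. Unlike the hidden products $X_jX_{j+1}$, which are mutually independent, the observed products $W_j=Y_jY_{j+1}$ are \emph{neither} independent \emph{nor} Markov: adjacent ones share the node noise $N_j$, and indeed $\E[W_{j-1}W_{j+1}]\neq\E[W_{j-1}W_j]\,\E[W_jW_{j+1}]$ whenever $q>0$. Hence the delicate point is to show that, conditionally on $W_{k-1}^{(\ell)}$, the dependence of $W_k^{(\ell)}$ on the remaining statistics is carried solely through $\nmue_{k-1}$. I would attack this by conditioning on sample $\ell$'s realized spins $Y_1^{(\ell)},\ldots,Y_k^{(\ell)}$ and observing that $\E\big[N_k^{(\ell)}\mid Y_1^{(\ell)},\ldots,Y_k^{(\ell)}\big]$ is invariant under the global sign flip $Y\mapsto-Y$ (a symmetry of the model, since the hidden marginals are uniform), so that it is a function of $W_1^{(\ell)},\ldots,W_{k-1}^{(\ell)}$ alone, and then pushing this expression through the exchangeable posterior. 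This bookkeeping, in which the cancellation $N_k^2=1$ and the uniformity of the hidden marginals do the real work, is where the BSC/tree structure is essential and is the step requiring the most care.
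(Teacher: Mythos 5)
Your two computational ingredients are correct and in fact reproduce the paper's own algebra in a cleaner form: the cross-moment $\E[W_{k-1}W_k]=\E[Y_{k-1}Y_{k+1}]=(1-2q)^2\mu_{k-1}\mu_{k}=\nmu_{k-1}\mu_k$ (via $Y_k^2=1$ and the correlation decay property) is exactly the quantity the paper computes as $\E[X_{k-1}N_{k-1}X_{k+1}N_{k+1}]$, and your exchangeability posterior $\P\big(W_{k-1}^{(\ell)}=a\mid\nmue_{k-1}\big)=\tfrac12(1+a\,\nmue_{k-1})$ is precisely what the paper extracts the hard way from the ratio of binomial coefficients $\binom{n-1}{m}/\binom{n}{m}$ and $\binom{n-1}{m-1}/\binom{n}{m}$. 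Substituting both into your backbone sum does give the stated formula; I checked the algebra.

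The genuine gap is the backbone identity itself. It requires $W_k^{(\ell)}$ to be conditionally independent of $\sigma(\nmue_1,\ldots,\nmue_{k-1})$ given $W_{k-1}^{(\ell)}$, and this is false for $q>0$ and $k\geq 3$. A direct computation (e.g.\ with all $\mu_j=\mu$ and $c=1-2q$) gives $\E[W_{k-2}W_k\mid W_{k-1}=w]-\E[W_{k-2}\mid W_{k-1}=w]\,\E[W_k\mid W_{k-1}=w]\propto w\mu(1-c^2)^2\neq 0$, so $W_{k-2}^{(\ell)}$ and $W_k^{(\ell)}$ are not conditionally independent given $W_{k-1}^{(\ell)}$; since $\nmue_{k-2}$ carries information about $W_{k-2}^{(\ell)}$, conditioning on it changes the law of $W_k^{(\ell)}$ even after $W_{k-1}^{(\ell)}$ is fixed. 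Your proposed repair — showing by the global sign-flip symmetry that $\E\big[N_k^{(\ell)}\mid Y_1^{(\ell)},\ldots,Y_k^{(\ell)}\big]$ is a function of $W_1^{(\ell)},\ldots,W_{k-1}^{(\ell)}$ — only collapses the conditioning onto the full tuple of edge products, and by your own (correct) observation that this tuple is not Markov, the dependence does not further collapse onto $W_{k-1}^{(\ell)}$ alone. The paper avoids this particular trap by first discarding $\nmue_{1},\ldots,\nmue_{k-2}$ from the conditioning outright and only then running Bayes on the single statistic $\nmue_{k-1}$; for that single statistic the decomposition through $W_{k-1}^{(\ell)}$ is legitimate, because $\nmue_{k-1}=\tfrac1n W_{k-1}^{(\ell)}+\tfrac1n\sum_{i\neq\ell}W_{k-1}^{(i)}$ and the second summand is independent of sample $\ell$. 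So to complete your argument you must either prove the stronger conditional-independence statement you are implicitly using, or restructure as the paper does and justify the initial reduction to $\nmue_{k-1}$; neither is supplied by the symmetry sketch you give.
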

\begin{proof}
Note that \begin{align}
    \nmue_{k}=\frac{1}{n}\sum^{n}_{i=1}  Y^{(i)}_k Y^{(i)}_{k+1}=\frac{1}{n}\sum^{n}_{i=1} \lp X_k N_k X_{k+1} N_{k+1}\rp^{(i)},
\end{align} 
where eacch term \begin{align}
    \lp X_k N_k X_{k+1} N_{k+1}\rp^{(\ell)} \perp \nmue_{r}\quad \forall r\in[1,2,\ldots k-2],\forall \ell\in [1,2,\dots,n]
\end{align} 
thus
 \begin{align*}
    &\P\lp  \lp X_k N_k X_{k+1} N_{k+1}\rp^{(\ell)} =\pm 1 \Big|\nmue_{k-1},\ldots,\nmue_{1} \rp\\
    &=\P\lp  \lp X_k N_k X_{k+1} N_{k+1}\rp^{(\ell)} =\pm 1 \Big|\nmue_{k-1} \rp\\
    &=\P\lp  \lp X_k N_k X_{k+1} N_{k+1}\rp^{(\ell)} =\pm 1 \Big| \nmue_{k-1}= \frac{1}{n}\sum^{n}_{i=1} \lp X_{k-1} N_{k-1} X_{k} N_{k}\rp^{(i)} \rp\\
    &=\frac{\P\lp\nmue_{k-1}= \frac{1}{n}\sum^{n}_{i=1} \lp X_{k-1} N_{k-1} X_{k} N_{k}\rp^{(i)}\Big| \lp X_k N_k X_{k+1} N_{k+1}\rp^{(\ell)} =\pm 1\rp}{\P\lp\nmue_{k-1}= \frac{1}{n}\sum^{n}_{i=1} \lp X_{k-1} N_{k-1} X_{k} N_{k}\rp^{(i)}\rp}\nonumber\\
    &\qquad\times \P\lp \lp X_k N_k X_{k+1} N_{k+1}\rp^{(\ell)} =\pm 1 \rp.\numberthis\label{eq:bayes1}
\end{align*}
First we compute the probability $\P\lp\nmue_{k-1}= \frac{1}{n}\sum^{n}_{i=1} \lp X_{k-1} N_{k-1} X_{k} N_{k}\rp^{(i)}\rp$. Define the Bernoulli random variable $Z_{k-1}$ as \begin{align}
    Z_{k-1}\triangleq \frac{X_{k-1} N_{k-1} X_{k} N_{k}+1}{2}=\begin{cases}
    0, \quad \text{w.p. } \frac{1-(1-2q)^2\mu_{k-1}}{2}\\
     1, \quad \text{w.p. }\frac{1+(1-2q)^2\mu_{k-1}}{2}.
    \end{cases}
\end{align}Then \begin{align*}
    &\P\lp\nmue_{k-1}= \frac{1}{n}\sum^{n}_{i=1} \lp X_{k-1} N_{k-1} X_{k} N_{k}\rp^{(i)}\rp\\
    &=\P\lp\nmue_{k-1}= \frac{1}{n}\sum^{n}_{i=1} \lp 2Z_{k-1}-1\rp^{(i)}\rp\\
    &=\P\lp \sum^{n}_{i=1} Z_{k-1}^{(i)}=n\frac{\nmue_{k-1}+1}{2}\rp\\
    &= {n \choose n\frac{\nmue_{k-1}+1}{2}}\lp\frac{1-(1-2q)^2\mu_{k-1}}{2}\rp^{n-n\frac{\nmue_{k-1}+1}{2}} \lp\frac{1+(1-2q)^2\mu_{k-1}}{2}\rp^{n\frac{\nmue_{k-1}+1}{2}}.\numberthis\label{eq:bayes2}
\end{align*} As a second step we compute the probability \begin{align*}\numberthis\label{eq:bayes3}
    &\P\lp\nmue_{k-1}= \frac{1}{n}\sum^{n}_{i=1} \lp X_{k-1} N_{k-1} X_{k} N_{k}\rp^{(i)}\Big| \lp X_k N_k X_{k+1} N_{k+1}\rp^{(\ell)} =\pm 1\rp\\
    &=\P\Bigg(\nmue_{k-1}= \frac{1}{n}\sum^{n}_{i=1,i\neq \ell} \lp X_{k-1} N_{k-1} X_{k} N_{k}\rp^{(i)}\\&\qquad\qquad+\frac{\lp X_{k-1} N_{k-1} X_{k} N_{k}\rp^{(\ell)}}{n}\Big| \lp X_k N_k X_{k+1} N_{k+1}\rp^{(\ell)} =\pm 1\Bigg).
\end{align*} Note that \begin{align}
    \lp X_{k-1} N_{k-1} X_{k} N_{k}\rp^{(i)} \perp \lp X_k N_k X_{k+1} N_{k+1}\rp^{(\ell)},\quad \forall i\neq \ell,
\end{align} and we would like to find the conditional distribution of $\lp X_{k-1} N_{k-1} X_{k} N_{k}\rp^{(\ell)}$ under the event $ \{\lp X_k N_k X_{k+1} N_{k+1}\rp^{(\ell)} =\pm 1\}$. We have
\begin{align*}
    &\P \lp \lp X_{k-1} N_{k-1} X_{k} N_{k}\rp^{(\ell)}=c\Big| \lp X_k N_k X_{k+1} N_{k+1}\rp^{(\ell})=\pm 1\rp  \\
    &=\frac{\P \lp \lp X_{k-1} N_{k-1} X_{k} N_{k}\rp^{(\ell)}=c, \lp X_k N_k X_{k+1} N_{k+1}\rp^{(\ell)}=\pm 1\rp }{\P\lp \lp X_k N_k X_{k+1} N_{k+1}\rp^{(\ell)}=\pm 1\rp}\\
    &=\frac{\frac{1\pm c\E[ X_{k-1} N_{k-1}  X_{k+1} N_{k+1}] +c(1-2q)^2 \mu_{k-1}\pm(1-2q)^2 \mu_{k} }{4}}{\frac{1\pm(1-2q)^2 \mu_{k}}{2}}\\
    &=\frac{1\pm c(1-2q)^2\mu_{k-1}\mu_{k} +c(1-2q)^2 \mu_{k-1}\pm(1-2q)^2 \mu_{k}}{2\lp1\pm(1-2q)^2 \mu_{k} \rp},\quad c\in\{-1,+1\}.\numberthis
\end{align*} Define \begin{align}
    P_1 &\triangleq \P \lp \lp X_{k-1} N_{k-1} X_{k} N_{k}\rp^{(\ell)}=+1\Big| \lp X_k N_k X_{k+1} N_{k+1}\rp^{(\ell})=\pm 1\rp\label{eq:defP1} \\
    P_2 &\triangleq \P \lp \lp X_{k-1} N_{k-1} X_{k} N_{k}\rp^{(\ell)}=-1\Big| \lp X_k N_k X_{k+1} N_{k+1}\rp^{(\ell})=\pm 1\rp,\label{eq:defP2}
\end{align} then 
\begin{align*}
    &\P\Bigg(\nmue_{k-1}= \frac{1}{n}\sum^{n}_{i=1,i\neq \ell} \lp X_{k-1} N_{k-1} X_{k} N_{k}\rp^{(i)}\\&\qquad\qquad+\frac{\lp X_{k-1} N_{k-1} X_{k} N_{k}\rp^{(\ell)}}{n}\Big| \lp X_k N_k X_{k+1} N_{k+1}\rp^{(\ell)} =\pm 1\Bigg)\\
    &=\P\lp\nmue_{k-1}= \frac{1}{n}\sum^{n}_{i=1,i\neq \ell} \lp 2Z_{k-1} -1\rp^{(i)}+\frac{\lp 2Z_{k-1} -1\rp^{(\ell)}}{n}\Big| \lp X_k N_k X_{k+1} N_{k+1}\rp^{(\ell)} =\pm 1\rp\\
    &=\P\lp \sum^{n}_{i=1,i\neq \ell} \lp Z_{k-1} \rp^{(i)}+\lp Z_{k-1} \rp^{(\ell)}=n\frac{\nmue_{k-1}+1}{2}\Big| \lp X_k N_k X_{k+1} N_{k+1}\rp^{(\ell)} =\pm 1\rp\\
    &=\P\lp\lp Z_{k-1} \rp^{(\ell)}=0\Big|\lp X_k N_k X_{k+1} N_{k+1}\rp^{(\ell)} =\pm 1 \rp  \P\lp \sum^{n}_{i=1,i\neq \ell} \lp Z_{k-1} \rp^{(i)}=n\frac{\nmue_{k-1}+1}{2}\rp\\
    &\quad+\P\lp\lp Z_{k-1} \rp^{(\ell)}=1\Big|\lp X_k N_k X_{k+1} N_{k+1}\rp^{(\ell)} =\pm 1 \rp\P\lp \sum^{n}_{i=1,i\neq \ell} \lp Z_{k-1} \rp^{(i)}=n\frac{\nmue_{k-1}+1}{2}-1\rp\\
    &=P_2 {n-1 \choose n\frac{\nmue_{k-1}+1}{2}}\lp\frac{1-(1-2q)^2\mu_{k-1}}{2}\rp^{n-1-n\frac{\nmue_{k-1}+1}{2}} \lp\frac{1+(1-2q)^2\mu_{k-1}}{2}\rp^{n\frac{\nmue_{k-1}+1}{2}}\\
    &\quad + P_1 {n-1 \choose n\frac{\nmue_{k-1}+1}{2}-1}\lp\frac{1-(1-2q)^2\mu_{k-1}}{2}\rp^{n-1-n\frac{\nmue_{k-1}+1}{2}+1} \\ 
    &\quad\quad\quad\quad \lp\frac{1+(1-2q)^2\mu_{k-1}}{2}\rp^{n\frac{\nmue_{k-1}+1}{2}-1}.\numberthis\label{eq:bayes4}
\end{align*}

\noindent Finally \begin{align}
    \P\lp \lp X_k N_k X_{k+1} N_{k+1}\rp^{(\ell)} =\pm 1 \rp=\frac{1\pm(1-2q)^2\mu_k}{2} ,
\end{align} and \eqref{eq:bayes1}, \eqref{eq:bayes2}, \eqref{eq:bayes3}, \eqref{eq:bayes4} give
\begin{align*}
     &\frac{\P\lp  Y^{(\ell)}_{k} Y^{(\ell)}_{k+1} =\pm 1 \Big|\nmue_{k-1},\ldots,\nmue_{1} \rp}{ \P\lp \lp X_k N_k X_{k+1} N_{k+1}\rp^{(\ell)} =\pm 1 \rp}
     \\&=P_2 {n-1 \choose n\frac{\nmue_{k-1}+1}{2}}{n \choose n\frac{\nmue_{k-1}+1}{2}}^{-1}\frac{\lp\frac{1-\nmu_{k-1}}{2}\rp^{n-1-n\frac{\nmue_{k-1}+1}{2}} \lp\frac{1+\nmu_{k-1}}{2}\rp^{n\frac{\nmue_{k-1}+1}{2}}}{\lp\frac{1-\nmu_{k-1}}{2}\rp^{n-n\frac{\nmue_{k-1}+1}{2}} \lp\frac{1+\nmu_{k-1}}{2}\rp^{n\frac{\nmue_{k-1}+1}{2}}}\\
    &\quad + P_1 {n-1 \choose n\frac{\nmue_{k-1}+1}{2}-1}{n \choose n\frac{\nmue_{k-1}+1}{2}}^{-1}\frac{\lp\frac{1-\nmu_{k-1}}{2}\rp^{n-1-n\frac{\nmue_{k-1}+1}{2}+1} \lp\frac{1+\nmu_{k-1}}{2}\rp^{n\frac{\nmue_{k-1}+1}{2}-1}}{\lp\frac{1-\nmu_{k-1}}{2}\rp^{n-n\frac{\nmue_{k-1}+1}{2}} \lp\frac{1+\nmu_{k-1}}{2}\rp^{n\frac{\nmue_{k-1}+1}{2}}}\\
    &=P_2 {n-1 \choose n\frac{\nmue_{k-1}+1}{2}}{n \choose n\frac{\nmue_{k-1}+1}{2}}^{-1}\lp\frac{1-\nmu_{k-1}}{2}\rp^{-1}\\
    &\quad + P_1 {n-1 \choose n\frac{\nmue_{k-1}+1}{2}-1}{n \choose n\frac{\nmue_{k-1}+1}{2}}^{-1}\lp\frac{1+\nmu_{k-1}}{2}\rp^{-1}\\
    &=P_2 \frac{n-n\frac{\nmue_{k-1}+1}{2}}{n}\lp\frac{1-\nmu_{k-1}}{2}\rp^{-1}+ P_1   \frac{n\frac{\nmue_{k-1}+1}{2}}{n} \lp\frac{1+\nmu_{k-1}}{2}\rp^{-1}\\
    &=P_2 \frac{1-\nmue_{k-1}}{1-\nmu_{k-1}}+ P_1\frac{1+\nmue_{k-1}}{1+\nmu_{k-1}}.\numberthis
\end{align*} The latter and the definition of $P_1,P_2$ (see Equations \ref{eq:defP1} and \ref{eq:defP2}) give \begin{align*}
    &\P\lp  Y^{(\ell)}_{k} Y^{(\ell)}_{k+1} \pm 1 \Big|\nmue_{k-1},\ldots,\nmue_{1} \rp\\
    &=\left[ P_2 \frac{1-\nmue_{k-1}}{1-\nmu_{k-1}}+ P_1\frac{1+\nmue_{k-1}}{1+\nmu_{k-1}} \right]\P\lp \lp X_k N_k X_{k+1} N_{k+1}\rp^{(\ell)} =\pm 1 \rp\\
    &= \left[ P_2 \frac{1-\nmue_{k-1}}{1-\nmu_{k-1}}+ P_1\frac{1+\nmue_{k-1}}{1+\nmu_{k-1}} \right] \frac{1\pm(1-2q)^2\mu_k}{2}\\
    &= \Bigg[ \frac{1\mp(1-2q)^2\mu_{k-1}\mu_{k} -(1-2q)^2 \mu_{k-1}\pm(1-2q)^2 \mu_{k}}{2\lp1\pm(1-2q)^2 \mu_{k} \rp} \frac{1-\nmue_{k-1}}{1-\nmu_{k-1}}\\&\qquad+ \frac{1\pm (1-2q)^2\mu_{k-1}\mu_{k} +(1-2q)^2 \mu_{k-1}\pm(1-2q)^2 \mu_{k}}{2\lp1\pm(1-2q)^2 \mu_{k} \rp}\frac{1+\nmue_{k-1}}{1+\nmu_{k-1}} \Bigg] \frac{1\pm(1-2q)^2\mu_k}{2}\\
    &= \frac{1\mp(1-2q)^2\mu_{k-1}\mu_{k} -(1-2q)^2 \mu_{k-1}\pm(1-2q)^2 \mu_{k}}{4} \frac{1-\nmue_{k-1}}{1-\nmu_{k-1}}\\&\qquad+ \frac{1\pm (1-2q)^2\mu_{k-1}\mu_{k} +(1-2q)^2 \mu_{k-1}\pm(1-2q)^2 \mu_{k}}{4}\frac{1+\nmue_{k-1}}{1+\nmu_{k-1}}\\
    &= \frac{1\pm(1-2q)^2 \mu_{k}}{4}\lp  \frac{1-\nmue_{k-1}}{1-\nmu_{k-1}}+\frac{1+\nmue_{k-1}}{1+\nmu_{k-1}}\rp\\&\qquad+ \frac{\pm (1-2q)^2\mu_{k-1}\mu_{k} +(1-2q)^2 \mu_{k-1}}{4}\lp  \frac{1+\nmue_{k-1}}{1+\nmu_{k-1}}-\frac{1-\nmue_{k-1}}{1-\nmu_{k-1}}\rp\\
    &=\frac{1\pm(1-2q)^2 \mu_{k}}{2}\frac{1-\nmu_{k-1}\nmue_{k-1}}{1-(\nmu_{k-1})^2}+(1-2q)^2\mu_{k-1}\frac{1\pm \mu_{k}}{2}\frac{\nmue_{k-1}-\nmu_{k-1}}{1-(\nmu_{k-1})^2}. \numberthis
\end{align*} Note that $(1-2q)^2\mu_{k-1}=\nmu_{k-1}$, and the proof is completed.

\end{proof}

\begin{lemma}\label{lemma:Edge_corr_estimation_error}
Define the function $K(\beta,q)$ as\begin{align}\label{eq:K_def_append}
  K(\beta,q)\triangleq \frac{10(1-\tanh^2 (\beta))}{9+(1-2q)^2-\tanh^2 (\beta) (1-2q)^2(9(1-2q)^2+1)}
\end{align}and the event $\mathrm{E^{edge}_{e,\dagger}}$ as \begin{align}\label{eq:event_Eedge}
    \mathrm{E^{edge}_{e,\dagger}}\triangleq \left\{\left|\nmue_e-\nmu_e\right|\leq \gamma_e  \right\},\quad  e\in \mc{E}_{\T},\quad \gamma_e>0,
\end{align} and $\mathrm{E^{edge}_{\dagger}(\mc{E}_{\T})}\triangleq \cap_{e\in\mc{E}_T} \mathrm{E^{edge}_{e,\dagger}}$. If \begin{align}
    n\geq \frac{108e^{2\beta}\log(2p/\delta)}{(1-2q)^4 K(\beta,q)} \text{ and }\gamma_e=\sqrt{3\frac{1-\mu_e^2}{n K(\beta,q)}\log(2p/\delta)}\label{eq:bernstein_bound_on_n}
\end{align} then $\P \left[ \lp \mathrm{E^{edge}_{\dagger}(\mc{E}_{\T})} \rp^c\right]\leq \delta$.
\end{lemma}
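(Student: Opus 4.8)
The plan is to treat each edge of $\T$ independently with a Bernstein bound and then union bound over the at most $p-1$ edges. First I would fix an edge $e=(w,\bar w)\in\mc{E}_{\T}$ and note that the summands $Y_w^{(i)}Y_{\bar w}^{(i)}=(N_wX_wN_{\bar w}X_{\bar w})^{(i)}$ are i.i.d.\ $\{-1,+1\}$-valued with mean $\nmu_e=c_q^2\mu_e$ and, using the independence of the Rademacher noise and $\E[N_r]=c_q$, variance $\sigma_e^2=1-c_q^4\mu_e^2$. Applying Bernstein's inequality exactly as in Lemma \ref{lemma_Estrong1}, now with almost-sure deviation bound $M=2$ (since $|Y_wY_{\bar w}-\nmu_e|\le 2$) and $t=n\gamma_e$, gives
\[
\P\bigl(|\nmue_e-\nmu_e|\geq\gamma_e\bigr)\leq 2\exp\!\Bigl(-\tfrac{n\gamma_e^2}{2\sigma_e^2+\tfrac43\gamma_e}\Bigr).
\]
Because $\gamma_e$ is chosen so that $n\gamma_e^2=3\,\tfrac{1-\mu_e^2}{K(\beta,q)}\log(2p/\delta)$, demanding this tail be at most $\delta/p$ is \emph{exactly} the requirement
\[
3\,\frac{1-\mu_e^2}{K(\beta,q)}\;\geq\;2\sigma_e^2+\tfrac43\gamma_e .
\]
A union bound over the $p-1$ edges (each contributing $2e^{-\log(2p/\delta)}=\delta/p$) then yields $\P[(\mathrm{E^{edge}_{\dagger}}(\mc{E}_{\T}))^c]\leq\delta$.

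The remaining work is to verify the displayed inequality, and this is where the specific form of $K(\beta,q)$ enters. I would bound the two right-hand terms separately. For the variance, since $g(x)=\frac{1-c_q^4 x}{1-x}$ is increasing on $[0,1)$ (its derivative is $\frac{1-c_q^4}{(1-x)^2}\ge0$) and $\mu_e^2\leq\tanh^2\beta$, one gets $\sigma_e^2\leq\frac{1-c_q^4\tanh^2\beta}{1-\tanh^2\beta}\,(1-\mu_e^2)$. For the Bernstein correction, the hypothesis $n\geq\frac{108e^{2\beta}\log(2p/\delta)}{c_q^4 K(\beta,q)}$ forces $\gamma_e\leq\frac{c_q^2\sqrt{1-\mu_e^2}}{6e^{\beta}}$, and combining this with $\sqrt{1-\mu_e^2}=\operatorname{sech}\theta_e\geq e^{-\beta}$ (valid since $\cosh\beta\leq e^{\beta}$) gives $\frac{(4/3)\gamma_e}{1-\mu_e^2}\leq\frac{2c_q^2}{9}$. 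After dividing the target inequality through by $1-\mu_e^2$, it therefore suffices to check
\[
\frac{3}{K(\beta,q)}\;\geq\;2\,\frac{1-c_q^4\tanh^2\beta}{1-\tanh^2\beta}+\frac{2c_q^2}{9}.
\]
This is a purely algebraic inequality in $c_q$ and $\tanh\beta$: clearing denominators, the gap equals $\frac{63(1-c_q^4\tanh^2\beta)+7c_q^2(1-\tanh^2\beta)}{90(1-\tanh^2\beta)}\geq0$, so $K(\beta,q)$ is defined precisely to make the bound hold with slack.

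I expect the main obstacle to be bookkeeping rather than anything conceptual. The delicate point is that the Bernstein variance is governed by $1-c_q^4\mu_e^2$ while both the radius $\gamma_e$ and the final bound are written in terms of $1-\mu_e^2$, so the argument hinges on the monotonicity reduction $\frac{1-c_q^4\mu_e^2}{1-\mu_e^2}\leq\frac{1-c_q^4\tanh^2\beta}{1-\tanh^2\beta}$ together with absorbing the linear-in-$\gamma_e$ correction into the leading variance term via the sample-size hypothesis. Once those two reductions are in place, matching constants so that the Bernstein exponent is at least $\log(2p/\delta)$ and confirming the nonnegative slack in the definition of $K(\beta,q)$ completes the proof; as a consistency sanity check one verifies $K(\beta,0)=1$, recovering the noiseless constant.
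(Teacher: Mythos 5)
Your proposal is correct and follows essentially the same route as the paper: Bernstein's inequality for the $\pm1$-valued products with variance $1-(1-2q)^4\mu_e^2$, the same choice of $\gamma_e$, the sample-size hypothesis used to absorb the linear Bernstein correction into $\frac{2c_q^2}{9}(1-\mu_e^2)$, a monotonicity-in-$\mu_e^2$ reduction to the worst case $\tanh^2\beta$, and a union bound over the $p-1$ edges. The only difference is presentational — you verify the sufficient inequality $3/K(\beta,q)\geq 2\frac{1-c_q^4\tanh^2\beta}{1-\tanh^2\beta}+\frac{2c_q^2}{9}$ by clearing denominators, whereas the paper packages the same slack as the relaxation of the factor $3/2$ to $10/9$ and defines $K(\beta,q)$ as the value of the resulting decreasing function at $\tanh^2\beta$; the computations coincide.
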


\begin{proof}
 The variance of $\nmue_e$ is $(1-(\nmu_e)^2)/n$ and by applying Bernstein's inequality \begin{align}
    \P \left[ \lp \mathrm{E^{edge}_{e,\dagger}} \rp^c\right]\leq 2 \exp \lp-\frac{n\gamma^2_e}{2\lp 1-(\nmu_e)^2\rp+\frac{4}{3} \gamma_e} \rp,\quad \forall \gamma_e > 0. \label{eq:Bernsteins1}
\end{align}

We choose $\gamma_e=\sqrt{3\frac{1-\mu_e^2}{n K(\beta,q)}\log(2p/\delta)}$ (because the parameter $\gamma_e$ is free, that is, Bernstein's inequality holds for all $\gamma_e>0$). If $n$ satisfies \eqref{eq:bernstein_bound_on_n} then \begin{align}
    \gamma_e\leq \sqrt{3\frac{1-\mu_e^2}{108e^{2\beta}}(1-2q)^4 }\leq \frac{(1-2q)^2}{6}(1-\mu_e^2),\numberthis\label{eq:inequality_gamma}
\end{align} and the last is true because $e^{-2\beta}\leq 1-\tanh(\beta)\leq 1-|\mu_e|\leq 1-\mu^2_e$. By applying \eqref{eq:bernstein_bound_on_n} and \eqref{eq:inequality_gamma} on \eqref{eq:Bernsteins1} we get\begin{align*}
     &\P \left[ \lp \mathrm{E^{edge}_{e,\dagger}} \rp^c\right]\\
     &\leq 2 \exp \lp-\frac{n\gamma^2_e}{2\lp 1-(\nmu_e)^2\rp+\frac{4}{3} \gamma_e} \rp\\
     &\leq 2 \exp \lp-\frac{3\frac{1-\mu_e^2}{ K(\beta,q)}\log(2p/\delta)}{2\lp 1-(1-2q)^4\mu_e^2\rp+\frac{4}{3} \frac{(1-2q)^2}{6}(1-\mu^2_e)} \rp\\
     &=2 \exp \lp-\frac{3}{K(\beta,q)}\frac{1-\mu_e^2}{2+\frac{2}{9}(1-2q)^2-\mu_e^2 (1-2q)^2(2(1-2q)^2+\frac{2}{9})} \log(2p/\delta) \rp\\
     &=2 \exp \lp-\frac{3}{2K(\beta,q)}\frac{1-\mu_e^2}{1+\frac{1}{9}(1-2q)^2-\mu_e^2 (1-2q)^2((1-2q)^2+\frac{1}{9})} \log(2p/\delta) \rp\\
     &\leq 2 \exp \lp-\frac{10}{9K(\beta,q)}\frac{1-\mu_e^2}{1+\frac{1}{9}(1-2q)^2-\mu_e^2 (1-2q)^2((1-2q)^2+\frac{1}{9})} \log(2p/\delta) \rp. \numberthis\label{eq:trick_K}
\end{align*}  The following function \begin{align}
    f(x)=\frac{10}{9}\frac{1-x}{1+\frac{1}{9}(1-2q)^2-x (1-2q)^2((1-2q)^2+\frac{1}{9})},\text{ } x\in [\tanh^2 (\alpha), \tanh^2 (\beta)]
\end{align} is strictly decreasing, thus we have $f(\tanh^2(\beta))\leq f(x)$ for all $x\in [\tanh^2 (\alpha), \tanh^2 (\beta)]$. Also $K(\beta,q)\equiv f(\tanh^2 (\beta))$, the latter together with \eqref{eq:trick_K} give \begin{align*}
     &\P \left[ \lp \mathrm{E^{edge}_{e,\dagger}} \rp^c\right] \\
     &\leq 2 \exp \lp-\frac{1}{K(\beta,q)} f(\mu^2_e)\log(2p/\delta) \rp\\
     &\leq 2 \exp \lp-\frac{1}{K(\beta,q)} f(\tanh^2 (\beta))\log(2p/\delta) \rp\\
     &= 2 \exp \lp-\frac{1}{K(\beta,q)} K(\beta,q) \log(2p/\delta) \rp\\
     &=\frac{\delta}{p}.\numberthis
\end{align*} Finally, by applying union over the $p-1$ edges of the tree we get $\P \left[ \lp \mathrm{E^{edge}_{\dagger}(\mc{E}_{\T})} \rp^c\right]\leq \delta$.
\end{proof}
The next Lemma is the extension of Lemma 8.7 by~\cite{bresler2020learning}. The sample complexity bound exactly recovers the noiseless case and its expression is continuous at $q=0$. Further, the bound is independent of the length of the longest path $d$, similarly to the noiseless setting. Finally, we provide upper bounds on the functions that appear in the bound. The latter give a more tractable version of the result and a clear representation of the required number of samples as a function of the parameters. 
\begin{lemma}[Concentration bound for the event $\Ecascn$]
\label{Lemma Ecascn}For $\beta>0$ and $q\in[0,1/2)$ we define the functions $S(\cdot)$, $G(\cdot)$, $K(\cdot)$, $A(\cdot)$, $\Delta(\cdot)$
\begin{align}
\hspace{-1cm}  \UBxi&\triangleq 2+\frac{(1-2q)^2}{6}(1-(1-2q)^2)\tanh^2(\beta)\leq 3- (1-2q)^2\triangleq S\\
\hspace{-1cm}   A(\beta,q)&\equiv A\triangleq (1-2q)^2 [1 - \tanh (\beta)(1-(1-2q)^2)]\\
\hspace{-1cm}  G(\beta,q)&\triangleq\frac{3}{4(1-2q)^2}\left[ d(1-A) \lp\frac{A+2}{3}\rp^{d} +1\right]\leq \frac{3\lp 3e^{-1}\mathds{1}_{q\neq 0} +1\rp}{4(1-2q)^2}\triangleq G,\label{eq:Def_G()} 
\end{align} and the inequality in \eqref{eq:Def_G()} holds because the function $G(\beta,q)$ is bounded for all $d\in \mc{N}\setminus\{1\}$. 
\begin{align}
\hspace{-1cm} K(\beta,q)&\triangleq \frac{10(1-\tanh^2 (\beta))}{9+(1-2q)^2-\tanh^2 (\beta) (1-2q)^2(9(1-2q)^2+1)}\geq   e^{-2\beta\mathds{1}_{q= 0}} \triangleq K \label{eq:K()_def}\\
\hspace{-1cm}\Delta &\triangleq  \frac{1-(1-2q)^2}{1-(1-2q)^4\tanh^2(\beta)}\sqrt{\frac{3\log(2p^3/\delta)}{n }}\tanh^2(\beta)e^{2\beta}.
\end{align} 
If $\Delta<\ngam\leq S(\beta,q)G(\beta,q)/3+\Delta$ and \begin{align}\label{eq:suff_bound_cascade_red}
    n\geq \max\left\{\frac{S^2(\beta,q)G^2(\beta,q)}{0.3^2\lp \ngam-\Delta \rp^2 }  \log(4p^2/\delta) ,\frac{108e^{2\beta}}{(1-2q)^4 K(\beta,q)} \log(2p^3/\delta)\right\}
\end{align} then for any path $\mc{A}_d = \{e_1,e_2,\dots, e_d \}$ of $\T$ with $d$ edges, it is true that \begin{align}
    \P \lp \left| \prod_{e\in\mc{A}_d}\frac{\hat{\mu}^{\dagger}_{e}}{(1-2q)^2} - \prod_{e\in\mc{A}_d}\frac{\mu^{\dagger}_{e}}{(1-2q)^2} \right|\geq \ngam \rp\leq \frac{2\delta}{p^2},\quad d>2.
\end{align}

\end{lemma}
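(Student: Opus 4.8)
The plan is to turn the product deviation on the left-hand side into a scaled sum of martingale differences plus a deterministic bias, absorb the bias into the slack $\ngam-\Delta$, and control the martingale part with a one-sided Bennett inequality for supermartingales. Writing $a_j\triangleq \hat\mu^{\dagger}_j/(1-2q)^2$ and $b_j\triangleq \mu^{\dagger}_j/(1-2q)^2$, the elementary telescoping identity
\begin{align*}
\prod_{j=1}^{d} a_j-\prod_{j=1}^{d} b_j=\sum_{k=1}^{d}\Bigl(\prod_{j<k} a_j\Bigr)\,(a_k-b_k)\,\Bigl(\prod_{j>k} b_j\Bigr)
\end{align*}
shows that the $k$-th summand equals $\frac1n\sum_{\ell=1}^{n} Z^{(\ell)}_k$ for the variables $Z^{(\ell)}_k$ introduced in Section~\ref{necessary events discussion}, since $\hat\mu^{\dagger}_k=\frac1n\sum_\ell (X_kN_kX_{k+1}N_{k+1})^{(\ell)}$. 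Hence the quantity to be bounded is exactly $\frac1n\sum_{k=1}^d\sum_{\ell=1}^n Z^{(\ell)}_k$. For each $k$ I would subtract conditional means, splitting $\sum_\ell Z^{(\ell)}_k=\sum_\ell \xi^{(\ell)}_k+\sum_\ell \E[Z^{(\ell)}_k\mid \mc{F}^{k}_{\ell-1}]$; collecting over $k$ produces a single martingale $W$ and a single bias $B$, so the target becomes $\P(|W+B|\ge n\ngam)$.

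\emph{The bias.} Using Lemma~\ref{Lemma Cond_Prob} I would evaluate the centred conditional mean. A short computation gives
\begin{align*}
\E[Y^{(\ell)}_kY^{(\ell)}_{k+1}\mid \mc{F}^{k}_{\ell-1}]-\nmu_k=\frac{\nmu_{k-1}\,\mu_k\,\bigl(1-(1-2q)^2\bigr)}{1-\nmu_{k-1}^2}\,\bigl(\nmue_{k-1}-\nmu_{k-1}\bigr),
\end{align*}
which is proportional both to the adjacent-edge error $\nmue_{k-1}-\nmu_{k-1}$ and to the factor $1-(1-2q)^2=4q(1-q)$; the latter is the precise mechanism by which the whole bias disappears at $q=0$. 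Bounding $\mu_k\nmu_{k-1}\le \tanh^2\beta$, $(1-\nmu_{k-1}^2)^{-1}\le (1-(1-2q)^4\tanh^2\beta)^{-1}$, the trailing product of correlations by a constant, and $|\nmue_{k-1}-\nmu_{k-1}|$ by its $O(\sqrt{\log(p/\delta)/n})$ fluctuation on the edge event $\mathrm{E}^{\mathrm{edge}}_{\dagger}(\mc{E}_{\T})$ of Lemma~\ref{lemma:Edge_corr_estimation_error}, I would collect the constants into $|B|/n\le \Delta$. That event holds with probability at least $1-\delta/p^2$ precisely under the second term of the maximum in \eqref{eq:suff_bound_cascade_red}, namely $n\ge \frac{108e^{2\beta}}{(1-2q)^4K(\beta,q)}\log(2p^3/\delta)$.

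\emph{The martingale.} On that event it remains to show $|W|/n\le \ngam-\Delta$ with high probability, where $\ngam>\Delta$ guarantees a strictly positive slack. Because the conditional means do not vanish, the centred process is only a supermartingale, so I would invoke the generalized Bennett inequality of~\cite{fan2012hoeffding}. The almost-sure size of each increment is $O(1)$ and the predictable quadratic variation is governed by the per-sample conditional variance of $Z^{(\ell)}_k$, captured by $S(\beta,q)$, and by the sum over positions $k$ of the geometrically decaying product weights, captured by $G(\beta,q)$. The hypothesis $\ngam\le S(\beta,q)G(\beta,q)/3+\Delta$ places the deviation $\ngam-\Delta$ in the Gaussian regime of Bennett's bound, where it takes the sub-Gaussian form $\exp\bigl(-c\,n(\ngam-\Delta)^2/(S G)^2\bigr)$; setting this equal to $\delta/p^2$ yields the first term of the maximum, $n\ge \frac{S^2 G^2}{0.3^2(\ngam-\Delta)^2}\log(4p^2/\delta)$. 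A union bound over the two failure events then gives the claimed $2\delta/p^2$ for each fixed path length $d\in[p-1]$, and the simplified constants follow from the stated estimates $S\le 3-(1-2q)^2$, $G\le \frac{3\,(3e^{-1}\mathds{1}_{q\ne0}+1)}{4(1-2q)^2}$ and $K\ge e^{-2\beta\mathds{1}_{q=0}}$.

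\emph{Main obstacle.} The crux, exactly as flagged in the proof sketch, is the non-vanishing conditional mean: the sequence is no longer a martingale, so the bias must be computed exactly through Lemma~\ref{Lemma Cond_Prob}, bounded on the edge event, and absorbed into the slack $\ngam-\Delta$, while the concentration step is carried out with a supermartingale (rather than martingale) Bennett inequality. The second delicate point is obtaining a bound that is \emph{uniform in the path length} $d$: this requires showing that the product weights $\prod_{j\ne k}$ decay geometrically fast enough that both $G(\beta,q)$ and the accumulated bias remain bounded as $d\to p-1$, which is precisely where the $d(1-A)\bigl((A+2)/3\bigr)^d$ term in $G(\beta,q)$ is controlled.
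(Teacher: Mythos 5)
Your proposal is correct and follows essentially the same route as the paper's proof: the same telescoping decomposition into the summands $M^{\dagger}_k=\frac{1}{n}\sum_{\ell}Z^{(\ell)}_k$, the same conditioning on the edge event of Lemma \ref{lemma:Edge_corr_estimation_error} to absorb the second term of the maximum, the same exact evaluation of the non-vanishing conditional mean via Lemma \ref{Lemma Cond_Prob} (your formula agrees with the paper's since $\mu_k-\nmu_k=\mu_k(1-(1-2q)^2)$), its absorption into the slack $\ngam-\Delta$, and the same application of the supermartingale Bennett inequality of Fan et al.\ with variance proxy $G$ and increment bound $S$. Your identification of the two delicate points (the bias from the non-martingale structure and the uniformity in $d$) matches the paper's own emphasis.
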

 
\begin{proof}
For sake of space we proceed by using the notation $\nmu_k$ and $\nmue_k$ instead of $\nmu_{e_k}$ and $\nmue_{e_k}$ for $k\in [d]$. Define the random variable \begin{align}
    M^{\dagger}_i\triangleq \lp\frac{\nmue_i}{(1-2q)^2}-\frac{\nmu_i}{(1-2q)^2}\rp\prod^{i-1}_{j=1}\frac{\nmue_j}{(1-2q)^2}\prod^{d}_{j=i+1}\frac{\nmu_j}{(1-2q)^2}.
\end{align} Then $\sum^d_{i=1} M^{\dagger}_i= \prod^{d}_{i=1}\frac{\hat{\mu}^{\dagger}_{i}}{(1-2q)^2} - \prod^{d}_{i=1}\frac{\mu^{\dagger}_{i}}{(1-2q)^2}$, and define the sequence of paths with length $k$ as $\mc{A}_k\triangleq \left\{ e_1, e_2,\ldots,e_k \right\} \subset \mc{A}_d$, for $2\leq k\leq d$. Although we provided the definition of the event $\mathrm{E}^{\mathrm{edge}}_{\dagger}(\cdot)$ in Lemma \ref{lemma:Edge_corr_estimation_error}, we restate it below for completeness. For some $\gamma_e>0$ the definition follows\begin{align}
    \mathrm{E^{edge}_{\dagger}(\mc{A}_k)}\triangleq \bigcap_{e\in\mc{A}_k}  \left\{\left|\nmue_e-\nmu_e\right|\leq \gamma_e  \right\}.
\end{align} The law of total probability gives
\begin{align}\label{eq:Law_of_total}
  \!\!\!  \P\left[\left|\sum^d_{i=1} M^{\dagger}_i\right|>\gamma  \right]&\leq  \P\left[\left|\sum^d_{i=1} M^{\dagger}_i\right|>\gamma \Bigg|  \mathrm{E}^{\mathrm{edge}}_{\dagger}(\mc{A}_{d-1}) \right]+\P \left[\lp \mathrm{E}^{\mathrm{edge}}_{\dagger}(\mc{A}_{d-1}) \rp^c \right].
\end{align} For second term, Lemma \ref{lemma:Edge_corr_estimation_error} gives that $\P \left[\lp \mathrm{E}^{\mathrm{edge}}_{\dagger}(\mc{A}_{d-1}) \rp^c \right]\leq \delta/p^2$ if \begin{align}
    n\geq \frac{108e^{2\beta}\log(2p^3/\delta)}{(1-2q)^4 K(\beta,q)},
\end{align} we define the function $K(\beta,q)$ in Lemma \ref{lemma:Edge_corr_estimation_error} \eqref{eq:K_def_append}. Here we will find an upper bound for the first term of the right hand-side of \eqref{eq:Law_of_total}.
Note that $M^{\dagger}_k$ is written as 

\begin{align*}
    M^{\dagger}_k&= \lp\frac{\frac{1}{n}\sum^{n}_{\ell=1} \lp Y_k Y_{k+1}\rp^{(\ell)}}{(1-2q)^2}-\frac{\nmu_k}{(1-2q)^2}\rp\prod^{k-1}_{j=1}\frac{\nmue_j}{(1-2q)^2}\prod^{d}_{j=k+1}\frac{\nmu_j}{(1-2q)^2}\\&= \frac{1}{n}\sum^{n}_{\ell=1}\lp\frac{ \lp X_k N_k X_{k+1} N_{k+1}\rp^{(\ell)}}{(1-2q)^2}-\frac{\nmu_k}{(1-2q)^2}\rp\prod^{k-1}_{j=1}\frac{\nmue_j}{(1-2q)^2}\prod^{d}_{j=k+1}\frac{\nmu_j}{(1-2q)^2}.\numberthis\label{eq:Def_Mk}
\end{align*}
and we define\begin{align}
    Z^{(\ell)}_k\triangleq \lp\frac{ \lp X_k N_k X_{k+1} N_{k+1}\rp^{(\ell)}}{(1-2q)^2}-\frac{\nmu_k}{(1-2q)^2}\rp\prod^{k-1}_{j=1}\frac{\nmue_j}{(1-2q)^2}\prod^{d}_{j=k+1}\frac{\nmu_j}{(1-2q)^2}.\label{eq:Def_Zk}
\end{align} The random variables $ Z^{(\ell)}_k$ for $\ell\in [n]$ and fixed $k\in [d]$ are independent conditioned on the event $\mathrm{E}^{\mathrm{edge}}_{\dagger}(\mc{A}_{k-1})$. However the conditional expectation $\E[Z^{(i)}_k |Z^{(i-1)}_k,\ldots, Z^{(1)}_k,\nmue_{k-1},\ldots,\nmue_{1} ]$ is not zero. To apply a concentration of measure result on $ Z^{(\ell)}_k$ we use the extended Bennet's inequality for supermartingales~\citep{fan2012hoeffding}.

 \textbf{Martingale Differences:} Define $\xi_{k}^{(0)}\triangleq 0$, $\xi_{k}^{(1)}\triangleq Z^{(1)}_k-\E\left[Z^{(1)}_k | \nmue_{k-1},\ldots,\nmue_{1}\right]$, $\xi^{(i)}_{k}\triangleq Z^{(i)}_k -\E\left[Z^{(i)}_k |Z^{(i-1)}_k,\ldots, Z^{(1)}_k,\nmue_{k-1},\ldots,\nmue_{1} \right]$. Also, define as $\mc{F}^{k}_{i-1}$ the $\sigma$-algebra generated by $Z^{(i-1)}_k,\ldots, Z^{(1)}_k, \nmue_{k-1},\ldots,\nmue_{1}$, then $(\xi^{(i)}_{k},\mc{F}^{k}_i)_{i=1.\ldots,n}$ is a Martingale Difference Sequence (MDS). 

Additionally, conditioned on $Z^{(i-1)}_k,\ldots, Z^{(1)}_k, \nmue_{k-1},\ldots,\nmue_{1}$ we have
 \begin{align}
     \!\!\!\!Z^{(i)}_k=\begin{cases}
       \frac{1}{(1-2q)^{2d}}\lp 1-\nmu_k \rp\prod^{k-1}_{j=1}\nmue_j \prod^{d}_{j=k+1}  \nmu_j, & \text{ w.p. }\P\lp  Y^{(\ell)}_{k} Y^{(\ell)}_{k+1} =+ 1 \Big|\nmue_{k-1} \rp \\
       -\frac{1}{(1-2q)^{2d}}\lp 1+\nmu_k \rp\prod^{k-1}_{j=1}\nmue_j \prod^{d}_{j=k+1}  \nmu_j, &\text{ w.p. }\P\lp  Y^{(\ell)}_{k} Y^{(\ell)}_{k+1} =- 1 \Big|\nmue_{k-1}\rp ,
     \end{cases}
\end{align}
and we have proved (Lemma \ref{Lemma Cond_Prob}) that
\begin{align}\label{eq:F1}
    &\P\lp  Y^{(\ell)}_{k} Y^{(\ell)}_{k+1} =\pm 1 \Big|\nmue_{k-1},\ldots,\nmue_{1} \rp\nonumber\\
    &=\P\lp  Y^{(\ell)}_{k} Y^{(\ell)}_{k+1} =\pm 1 \Big|\nmue_{k-1} \rp\nonumber\\
    &=\frac{1\pm \nmu_{k}}{2}\frac{1-\nmu_{k-1}\nmue_{k-1}}{1-(\nmu_{k-1})^2}+\nmu_{k-1}\frac{1\pm \mu_{k}}{2}\frac{\nmue_{k-1}-\nmu_{k-1}}{1-(\nmu_{k-1})^2}.
\end{align}
Thus we have \begin{align*}
     \!\!\!&\E\left[Z^{(i)}_k |\mc{F}^{k}_{i-1}\right]=\E\left[Z^{(i)}_k |Z^{(i-1)}_k,\ldots, Z^{(1)}_k,\nmue_{k-1},\ldots,\nmue_{1} \right]\\
     &=\left[\lp 1-\nmu_k \rp \P\lp  Y^{(\ell)}_{k} Y^{(\ell)}_{k+1} =+ 1 \Big|\nmue_{k-1} \rp-(1+\nmu_k)\P\lp  Y^{(\ell)}_{k} Y^{(\ell)}_{k+1} =- 1 \Big|\nmue_{k-1} \rp\right]\nonumber\\\label{eq: cond_mean}\numberthis
     &\qquad\times\frac{\prod^{k-1}_{j=1}\nmue_j \prod^{d}_{j=k+1}  \nmu_j}{(1-2q)^{2d}}.
\end{align*} 
Note that \eqref{eq:F1} gives
\begin{align*}
    &\left[\lp 1-\nmu_k \rp \P\lp  Y^{(\ell)}_{k} Y^{(\ell)}_{k+1} =+ 1 \Big|\nmue_{k-1} \rp-(1+\nmu_k)\P\lp  Y^{(\ell)}_{k} Y^{(\ell)}_{k+1} =- 1 \Big|\nmue_{k-1} \rp\right]\\
    &= \lp(1-\nmu_k)\frac{1+\nmu_k}{2}-(1+\nmu_k)\frac{1-\nmu_k}{2}\rp\nmu_{k-1}\frac{1-\nmu_{k-1}\nmue_{k-1}}{1-(\nmu_{k-1})^2}\nonumber\\
    &\quad + \lp (1-\nmu_k)\frac{1+\mu_k}{2}-(1+\nmu_k)\frac{1-\mu_k}{2} \rp\nmu_{k-1}\frac{\nmue_{k-1}-\nmu_{k-1}}{1-(\nmu_{k-1})^2}\\
    &=\lp (1-\nmu_k)\frac{1+\mu_k}{2}-(1+\nmu_k)\frac{1-\mu_k}{2} \rp\nmu_{k-1}\frac{\nmue_{k-1}-\nmu_{k-1}}{1-(\nmu_{k-1})^2}\\
    &=\frac{1}{2}\lp 1-\nmu_k+\mu_k -\nmu_k\mu_k - 1 +\mu_k -\nmu_k+\nmu_k\mu_k  \rp\nmu_{k-1}\frac{\nmue_{k-1}-\nmu_{k-1}}{1-(\nmu_{k-1})^2}\\
    &=(\mu_{k}-\nmu_{k})\nmu_{k-1}\frac{\nmue_{k-1}-\nmu_{k-1}}{1-(\nmu_{k-1})^2}.\numberthis
\end{align*} Combine the latter with \eqref{eq: cond_mean} to get \begin{align}\label{eq:cond_mean1}
    \E\left[Z^{(i)}_k |\mc{F}^{k}_{i-1}\right]= \nmu_{k-1}(\mu_{k}-\nmu_{k})\frac{\nmue_{k-1}-\nmu_{k-1}}{1-(\nmu_{k-1})^2}   \frac{\prod^{k-1}_{j=1}\nmue_j \prod^{d}_{j=k+1}  \nmu_j}{(1-2q)^{2d}},\text{ } i\in [n].
\end{align} If $q=0$ then $\E\left[Z^{(i)}_k |\mc{F}^{k}_{i-1}\right]=0$. Also $\lim_{n\to\infty} \E\left[Z^{(i)}_k |\mc{F}^{k}_{i-1}\right]\to 0$ for all $q\in [0,1/2)$ because $\lim_{n\to\infty}\nmue_{k-1}\to\nmu_{k-1}$. Note that \begin{align*}
     \E\left[\lp \xi^{(i)}_{k}\rp^2\Bigg| \mc{F}^{k}_{i-1}   \right]&=\E\left[\lp Z^{(i)}_k- \E\left[Z^{(i)}_k |\mc{F}^{k}_{i-1}\right]\rp^2\Bigg| \mc{F}^{k}_{i-1}   \right]\\
     &= \E\left[\lp Z^{(i)}_{k}\rp^2\Bigg| \mc{F}^{k}_{i-1}   \right]-\E^2\left[Z^{(i)}_k |\mc{F}^{k}_{i-1}\right].\numberthis\label{eq:cond_var1}
\end{align*} We compute $\E\Big[\lp Z^{(i)}_{k}\rp^2\Big| \mc{F}^{k}_{i-1}   \Big]$:
\begin{align*}
     \!\!\!&\E\left[\lp Z^{(i)}_k\rp^2 |\mc{F}^{k}_{i-1}\right]=\E\left[\lp Z^{(i)}_k\rp^2 |Z^{(i-1)}_k,\ldots, Z^{(1)}_k,\nmue_{k-1},\ldots,\nmue_{1} \right]\\
     &=\left[\lp 1-\nmu_k \rp^2 \P\lp  Y^{(\ell)}_{k} Y^{(\ell)}_{k+1} =+ 1 \Big|\nmue_{k-1} \rp+(1+\nmu_k)^2\P\lp  Y^{(\ell)}_{k} Y^{(\ell)}_{k+1} =- 1 \Big|\nmue_{k-1} \rp\right]\nonumber\\\label{eq:cond_var2} \numberthis
     &\qquad\times\lp\frac{\prod^{k-1}_{j=1}\nmue_j \prod^{d}_{j=k+1}  \nmu_j}{(1-2q)^{2d}}\rp^2.
\end{align*} We use \eqref{eq:F1} to find \begin{align*}
    &\left[\lp 1-\nmu_k \rp^2 \P\lp  Y^{(\ell)}_{k} Y^{(\ell)}_{k+1} =+ 1 \Big|\nmue_{k-1} \rp+(1+\nmu_k)^2\P\lp  Y^{(\ell)}_{k} Y^{(\ell)}_{k+1} =- 1 \Big|\nmue_{k-1} \rp\right]\\
    &=\lp\lp 1-\nmu_k \rp^2 \frac{1+\nmu_{k}}{2}+\lp 1+\nmu_k \rp^2 \frac{1-\nmu_{k}}{2}\rp\frac{1-\nmu_{k-1}\nmue_{k-1}}{1-(\nmu_{k-1})^2}\nonumber\\
    &\qquad+\lp\lp 1-\nmu_k \rp^2\frac{1+ \mu_{k}}{2}   +\lp 1+\nmu_k \rp^2\frac{1- \mu_{k}}{2}   \rp\nmu_{k-1}\frac{\nmue_{k-1}-\nmu_{k-1}}{1-(\nmu_{k-1})^2}\\
    &=(1-(\nmu_{k})^2)\frac{1-\nmu_{k-1}\nmue_{k-1}}{1-(\nmu_{k-1})^2}+ \lp 1+(\nmu_k)^2 -2\mu_k \nmu_k  \rp\nmu_{k-1}\frac{\nmue_{k-1}-\nmu_{k-1}}{1-(\nmu_{k-1})^2}\\
    &=(1-(\nmu_{k})^2)\frac{1-\nmu_{k-1}\nmue_{k-1}}{1-(\nmu_{k-1})^2}+ \lp 1+(\nmu_k)^2-2(\nmu_k)^2+2(\nmu_k)^2 -2\mu_k \nmu_k  \rp\nmu_{k-1}\frac{\nmue_{k-1}-\nmu_{k-1}}{1-(\nmu_{k-1})^2}\\
    &=(1-(\nmu_{k})^2)\frac{1-\nmu_{k-1}\nmue_{k-1}}{1-(\nmu_{k-1})^2}+ \lp 1-(\nmu_k)^2\rp\nmu_{k-1}\frac{\nmue_{k-1}-\nmu_{k-1}}{1-(\nmu_{k-1})^2} \\&\qquad+2\nmu_k\lp\nmu_k -\mu_k  \rp\nmu_{k-1}\frac{\nmue_{k-1}-\nmu_{k-1}}{1-(\nmu_{k-1})^2}\\
    &=\lp 1-(\nmu_k)^2\rp\left[\frac{1-\nmu_{k-1}\nmue_{k-1}}{1-(\nmu_{k-1})^2}+\nmu_{k-1}\frac{\nmue_{k-1}-\nmu_{k-1}}{1-(\nmu_{k-1})^2}\right] +2\nmu_k\lp\nmu_k -\mu_k  \rp\nmu_{k-1}\frac{\nmue_{k-1}-\nmu_{k-1}}{1-(\nmu_{k-1})^2}\\
    &=\lp 1-(\nmu_k)^2\rp+2\nmu_k\lp\nmu_k -\mu_k  \rp\nmu_{k-1}\frac{\nmue_{k-1}-\nmu_{k-1}}{1-(\nmu_{k-1})^2}.\numberthis\label{eq:cond_var3}
\end{align*} Now we combine \eqref{eq:cond_mean1}, \eqref{eq:cond_var1}, \eqref{eq:cond_var2} and \eqref{eq:cond_var3} to get
\begin{align*}
     &\E\left[\lp \xi^{(i)}_{k}\rp^2\Bigg| \mc{F}^{k}_{i-1}   \right]\\&= \E\left[\lp Z^{(i)}_{k}\rp^2\Bigg| \mc{F}^{k}_{i-1}   \right]-\E^2\left[Z^{(i)}_k |\mc{F}^{k}_{i-1}\right]\\
     &=\Bigg[\lp 1-(\nmu_k)^2\rp+2\nmu_k\lp\nmu_k -\mu_k  \rp\nmu_{k-1}\frac{\nmue_{k-1}-\nmu_{k-1}}{1-(\nmu_{k-1})^2} - \lp\nmu_{k-1}(\mu_{k}-\nmu_{k})\frac{\nmue_{k-1}-\nmu_{k-1}}{1-(\nmu_{k-1})^2}\rp^2\Bigg]  \\
     &\qquad \times\lp\frac{\prod^{k-1}_{j=1}\nmue_j \prod^{d}_{j=k+1}  \nmu_j}{(1-2q)^{2d}}\rp^2\\
     &=\left[1- \lp \nmu_k + \nmu_{k-1}(\mu_{k}-\nmu_{k})\frac{\nmue_{k-1}-\nmu_{k-1}}{1-(\nmu_{k-1})^2}   \rp^2 \right]\lp\frac{\prod^{k-1}_{j=1}\nmue_j \prod^{d}_{j=k+1}  \nmu_j}{(1-2q)^{2d}}\rp^2. \numberthis\label{eq:cond_var4}
\end{align*} For sake of space we define the function \begin{align}
    \efq\equiv f(\nmu_k, \nmu_{k-1},\nmue_{k-1},q )\triangleq \nmu_{k-1}(\mu_{k}-\nmu_{k})\frac{\nmue_{k-1}-\nmu_{k-1}}{1-(\nmu_{k-1})^2},\label{eq:f_q}
\end{align} and then \eqref{eq:cond_mean1} and \eqref{eq:cond_var4} can be written as \begin{align}
    \E\left[Z^{(i)}_k |\mc{F}^{k}_{i-1}\right]&= \efq \frac{\prod^{k-1}_{j=1}\nmue_j \prod^{d}_{j=k+1}  \nmu_j}{(1-2q)^{2d}},\text{ } i\in [n],\\
    \E\left[\lp \xi^{(i)}_{k}\rp^2\Bigg| \mc{F}^{k}_{i-1}   \right]&=\left[1- \lp \nmu_k + \efq   \rp^2 \right]\lp\frac{\prod^{k-1}_{j=1}\nmue_j \prod^{d}_{j=k+1}  \nmu_j}{(1-2q)^{2d}}\rp^2,\text{ }i\in[n]. \label{eq:Conditional_var}
\end{align} 

We would like to find an upper bound on the summation $\sum^d_{k=1}\E\Big[\Big( \xi^{(i)}_{k}\Big)^2\Big| \mc{F}^{k}_{i-1}   \Big]$. Define $A\equiv A(\beta,q)\triangleq (1-2q)^2 [1 - \tanh (\beta)(1-(1-2q)^2)]$, for all $\beta>0$ and $q\in [0,1)$. Then
 \begin{align*}
     &\left[1- \lp \nmu_k + \efq   \rp^2 \right]
     \\&= \left[1- \lp \nmu_k + \nmu_{k-1}(\mu_{k}-\nmu_{k})\frac{\nmue_{k-1}-\nmu_{k-1}}{1-(\nmu_{k-1})^2}   \rp^2 \right]
     \\&= \left[1- \mu^2_k \lp (1-2q)^2 + \nmu_{k-1}(1-(1-2q)^2)\frac{\nmue_{k-1}-\nmu_{k-1}}{1-(\nmu_{k-1})^2}   \rp^2 \right]
     \\& \leq \left[1- \mu^2_k \lp (1-2q)^2 - |\nmu_{k-1}|(1-(1-2q)^2)\frac{|\nmue_{k-1}-\nmu_{k-1}|}{1-(\nmu_{k-1})^2}   \rp^2 \right]
     \\&\leq \left[1- \mu^2_k \lp (1-2q)^2 - (1-2q)^2\tanh (\beta)(1-(1-2q)^2)   \rp^2 \right]\numberthis\label{eq:step1_boundsumvariance}
     \\&  \leq [1-\mu^2_k A(\beta,q) ],\numberthis \label{eq:step2_boundsumvariance}
      \end{align*} and \eqref{eq:step1_boundsumvariance} holds because $|\nmue_{k-1}-\nmu_{k-1}|\leq \gamma^{\dagger}_j\leq (1-2q)^2 (1-\mu^2_j)/6\leq (1-2q)^2 (1-(\nmu_j)^2)/6$. Then \eqref{eq:Conditional_var} and \eqref{eq:step2_boundsumvariance} give
 \begin{align*} 
     & \sum^d_{k=1}\E\left[\lp \xi^{(i)}_{k}\rp^2\Bigg| \mc{F}^{k}_{i-1}   \right]
     \\&\leq  \sum^d_{k=1}[1-\mu^2_k A(\beta,q) ] \lp\frac{\prod^{k-1}_{j=1}\nmue_j \prod^{d}_{j=k+1}  \nmu_j}{(1-2q)^{2d}}\rp^2
     \\&\leq\frac{1}{(1-2q)^2} \sum^d_{k=1}[1-\mu^2_k A(\beta,q) ] \prod^{d}_{j=1,j\neq k}\left[   \mu_j^2 +2\frac{\gamma^{\dagger}_j}{(1-2q)^2}\right].\numberthis\label{eq:accuracu_on_edges}
     \end{align*} The inequality \eqref{eq:accuracu_on_edges} holds under the event $\mathrm{E^{edge}_{\dagger}(\mc{E}_{\T})}$ defined in \eqref{eq:event_Eedge} (see Lemma \ref{lemma:Edge_corr_estimation_error}) because \begin{align}
         \lp\frac{\nmue_j}{(1-2q)^2}\rp^2\leq \lp   \frac{\mu^{\dagger}_j}{(1-2q)^2}   \rp^2 +2\frac{\gamma^{\dagger}_j}{(1-2q)^2},
     \end{align} since $|\nmu_j|\leq (1-2q)^2$, $|\nmue_j|\leq (1-2q)^2$ under the assumption of known $q$. Next we define $x_j\triangleq \mu_j^2 +2\frac{\gamma^{\dagger}_j}{(1-2q)^2}$, then $3(1-A(\beta,q) x_j)/2 +(A(\beta,q)-1)/2) \geq 1-\mu^2_k A(\beta,q)$ and \eqref{eq:accuracu_on_edges} gives     
     \begin{align*}
      &\sum^d_{k=1}\E\left[\lp \xi^{(i)}_{k}\rp^2\Bigg| \mc{F}^{k}_{i-1}   \right]
     \\& \leq \frac{1}{(1-2q)^2}\sum^d_{k=1}\left[\frac{3}{2}(1-A x_j) +\frac{A-1}{2} \right] \prod^{d}_{j=1,j\neq k}x_j
     \\& \leq \frac{d}{(1-2q)^2}\left[\frac{3}{2}(1-A x) +\frac{A-1}{2} \right] x^{d-1}.\numberthis 
     \end{align*} The latter is maximized at $x^* =(A+2) \lp 1-\frac{1}{d} \rp$/3 and $A\in (0,1]$, thus we have
     \begin{align*}
     &\frac{d}{(1-2q)^2}\left[\frac{3}{2}(1-A x) +\frac{A-1}{2} \right] x^{d-1}
     \\& \leq \frac{d}{(1-2q)^2} \left[\frac{3}{2}(1- A\frac{A+2}{3} +A\frac{A+2}{3d}  ) +\frac{A-1}{2} \right] \lp\frac{A+2}{3}\rp^{d-1} \lp 1- \frac{1}{d}\rp^{d-1}
     \\&  = \frac{d}{(1-2q)^2}\left[  \frac{3}{2}-\frac{A(A+2)}{2}+\frac{A-1}{2} \right] (x^*)^{d-1} +  \frac{A(A+2)}{2(1-2q)^2}   (x^*)^{d-1}
     \\&  = d\frac{2-A^2-A}{2(1-2q)^2} \lp\frac{A+2}{3}\rp^{d-1} \lp 1-\frac{1}{d} \rp^{d-1} +  \frac{A(A+2)}{2(1-2q)^2}   \lp\frac{A+2}{3}\rp^{d-1} \lp 1-\frac{1}{d} \rp^{d-1}
     \\& \leq d\frac{2-A^2-A}{2(1-2q)^2} \lp\frac{A+2}{3}\rp^{d-1} \lp 1-\frac{1}{2} \rp +  \frac{A(A+2)}{2(1-2q)^2}   \lp\frac{A+2}{3}\rp \lp 1-\frac{1}{2} \rp
     \\& = d\frac{(A+2)(1-A)}{4(1-2q)^2} \lp\frac{A+2}{3}\rp^{d-1}  +  \frac{A(A+2)^2}{12(1-2q)^2} 
     \\&\leq d\frac{3(1-A)}{4(1-2q)^2} \lp\frac{A+2}{3}\rp^{d} +\frac{3}{4(1-2q)^2}\triangleq G(\beta,q) \numberthis\label{eq:G_definition}
     \\& \leq \frac{(1-A)}{4(1-2q)^2}\frac{3}{e\log \frac{3}{A+2}}+  \frac{3}{4(1-2q)^2}
     \\& = \frac{3(1-A)}{e4(1-2q)^2\log \frac{3}{A+2}}+  \frac{3}{4(1-2q)^2}
     \\&\leq \frac{3\lp 3e^{-1} +1\rp}{4(1-2q)^2}.
 \end{align*} In \eqref{eq:G_definition} we define the function $G(\beta,q)$ and we proved that it has an upper bound independent of $d\in [p-1]$, \begin{align}
     G(\beta,q)\triangleq d\frac{3(1-A(\beta,q))}{4(1-2q)^2} \lp\frac{A(\beta,q)+2}{3}\rp^{d} +\frac{3}{4(1-2q)^2}\leq \frac{3\lp 3e^{-1} \mathds{1}_{q\neq 0} +1\rp}{4(1-2q)^2}\equiv G.\label{eq:G_bound}
 \end{align} 
 For the rest of the proof and the final result $G(\beta,q)$ can be replaced by its upper bound in \eqref{eq:G_bound}, however the definition of $G(\beta,q)$ shows the continuity of the result for $q\to 0$.

The following inequality holds with probability $1$ for all $i\in [n]$ and $k\in[d]$ under the event $\mathrm{E^{edge}_{\dagger}(\mc{E}_{\T})}$ (Lemma \ref{lemma:Edge_corr_estimation_error}), 
\begin{align*}
    |\xi^{(i)}_{k}|&\leq 2 + \left| \E\left[Z^{(i)}_k |\mc{F}^{k}_{i-1}\right] \right|\\&=2+\frac{|\efq|}{(1-2q)^2} \frac{\prod^{k-1}_{j=1}|\nmue_j| \prod^{d}_{j=k+1}  |\nmu_j|}{(1-2q)^{2d-2}}\\
    &\leq 2+\frac{|\efq|}{(1-2q)^2}\\
    &= 2+\frac{1}{(1-2q)^2} |\nmu_{k-1}||(\mu_{k}-\nmu_{k})|\frac{|\nmue_{k-1}-\nmu_{k-1}|}{1-(\nmu_{k-1})^2} \\
    &=2+\tanh^2(\beta)(1-(1-2q)^2)\frac{\gamma^{\dagger}_{k-1}}{1-(\nmu_{k-1})^2}\\
    &\leq 2+\frac{(1-2q)^2}{6}(1-(1-2q)^2)\tanh^2(\beta)\triangleq \UBxi.\numberthis\label{eq:upper_bound_ksi}
\end{align*} the last step comes form the inequality $\gamma^{\dagger}_e\leq \frac{(1-2q)^2}{6} (1-(\mu_e)^2)$ \eqref{eq:inequality_gamma}, which holds if the inequality $n>\frac{108e^{2\beta}}{K(\beta,q)(1-2q)^4}\log(4p)$ holds (see \ref{eq:bernstein_bound_on_n}). Recall that Lemma \ref{lemma:Edge_corr_estimation_error} gives \begin{align}
    \gamma_e&=\sqrt{3\frac{1-\mu_e^2}{n K(\beta,q)}\log(2p^3/\delta)}\leq  \sqrt{3\frac{1-\tanh^2(\beta)}{n K(\beta,q)}\log(2p^3/\delta)}\leq \sqrt{\frac{3\log(2p^3/\delta)}{n }}\equiv \tilde{\gamma}.\label{eq:gammatilde}
\end{align} Also, for all $i\in [n]$ we have \begin{align*}
    &\left|\sum^{d}_{k=1} \E\left[Z^{(i)}_k |\mc{F}^{k}_{i-1}\right] \right|
    \\&=\left|\sum^{d}_{k=2} \E\left[Z^{(i)}_k |\mc{F}^{k}_{i-1}\right] \right|
    \\&\leq\sum^{d}_{k=2}\left| \E\left[Z^{(i)}_k |\mc{F}^{k}_{i-1}\right] \right|
    \\&= \sum^{d}_{k=2}\frac{|\efq|}{(1-2q)^2} \frac{\prod^{k-1}_{j=1}|\nmue_j| \prod^{d}_{j=k+1}  |\nmu_j|}{(1-2q)^{2d-2}}
    \\&\leq\frac{1}{(1-2q)^2}\sum^{d}_{k=2}|\nmu_{k-1}|(\mu_{k}-\nmu_{k})\frac{|\nmue_{k-1}-\nmu_{k-1}|}{1-(\nmu_{k-1})^2}\prod^{k-1}_{j=1}\frac{|\nmue_j|}{(1-2q)^{2}} \prod^{d}_{j=k+1}  \frac{|\nmu_j|}{(1-2q)^2}
    \\&\leq\tanh^2(\beta)(1-(1-2q)^2)\frac{\gamma_e}{1-(1-2q)^4\tanh^2(\beta)}\sum^{d}_{k=2}\prod^{k-1}_{j=1}\frac{|\nmue_j|}{(1-2q)^{2}} \prod^{d}_{j=k+1}  \frac{|\nmu_j|}{(1-2q)^2} \numberthis \label{eq:Eedge_step}
    \\&\leq\tanh^2(\beta)\frac{(1-(1-2q)^2)\tilde{\gamma}}{1-(1-2q)^4\tanh^2(\beta)}\sum^{d}_{k=2}\prod^{k-1}_{j=1}\frac{|\nmue_j|}{(1-2q)^{2}} \prod^{d}_{j=k+1}  \frac{|\nmu_j|}{(1-2q)^2} \numberthis\label{eq:boundongammae}
    \\& \leq\tanh^2(\beta)\frac{(1-(1-2q)^2)\tilde{\gamma}}{1-(1-2q)^4\tanh^2(\beta)}\sum^{d}_{k=2}\prod^{k-1}_{j=1}\frac{|\nmu_j|+\gamma_{j}}{(1-2q)^{2}} \prod^{d}_{j=k+1}  \frac{|\nmu_j|}{(1-2q)^2} \numberthis \label{eq:Eedge_step2}
    \\& =\tanh^2(\beta)\frac{(1-(1-2q)^2)\tilde{\gamma}}{1-(1-2q)^4\tanh^2(\beta)}\sum^{d}_{k=2}\prod^{k-1}_{j=1}\tanh(\beta)+\frac{\gamma_{j}}{(1-2q)^{2}} \prod^{d}_{j=k+1}  \tanh(\beta) 
    \\& \leq \tanh^2(\beta)\frac{(1-(1-2q)^2)\tilde{\gamma}}{1-(1-2q)^4\tanh^2(\beta)}\sum^{d}_{k=2}\prod^{d}_{j=1,j\neq k}\tanh(\beta)+\frac{\gamma_{j}}{(1-2q)^{2}}  
    \\& \leq \tanh^2(\beta)\frac{(1-(1-2q)^2)\tilde{\gamma}}{1-(1-2q)^4\tanh^2(\beta)}\sum^{d}_{k=2}\prod^{d}_{j=1,j\neq k}\lp \tanh (\beta) +\frac{1}{6}(1-\tanh^2(\beta))  \rp
    \\& = \tanh^2(\beta)\frac{(1-(1-2q)^2)\tilde{\gamma}}{1-(1-2q)^4\tanh^2(\beta)}(d-1)\lp \frac{5}{3}-\frac{1}{6}(\tanh(\beta)-3)^2  \rp^{d-1}
    \\& \leq  \tanh^2(\beta)\frac{(1-(1-2q)^2)\tilde{\gamma}}{1-(1-2q)^4\tanh^2(\beta)}\frac{1}{-e\log\lp \frac{5}{3}-\frac{1}{6}(\tanh(\beta)-3)^2  \rp}
    \\&\leq  \frac{(1-(1-2q)^2)\tanh^2(\beta)\tilde{\gamma}}{1-(1-2q)^4\tanh^2(\beta)}e^{2\beta-1}.
    \\&\leq  \frac{(1-(1-2q)^2)\tanh^2(\beta)e^{2\beta}}{1-(1-2q)^4\tanh^2(\beta)}\sqrt{\frac{3\log(2p^3/\delta)}{n }}\triangleq \Delta,\numberthis\label{eq:def_Delta}
\end{align*}  
where \eqref{eq:Eedge_step}, \eqref{eq:boundongammae} , \eqref{eq:Eedge_step2} come from Lemma \ref{lemma:Edge_corr_estimation_error} and \eqref{eq:gammatilde}. Finally, $0<\frac{5}{3}-\frac{1}{6}(\tanh(\beta)-3)^2 <1$ for all $\beta>0$ and $-1/\log\lp \frac{5}{3}-\frac{1}{6}(\tanh(\beta)-3)^2  \rp\leq e^{2\beta}$.

We use the symbol $\E_{\mc{A}_{k-1}}[\cdot]$ to denote the conditional expectation given the the event $\mathrm{E}^{\mathrm{edge}}_{\dagger}(\mc{A}_{k-1})$, for instance \begin{align}
    \E\left[\exp\left(\lambda \sum^{k}_{i=1} M^{\dagger}_i \right)\Bigg|  \mathrm{E}^{\mathrm{edge}}_{\dagger}(\mc{A}_{k-1}) \right]\equiv \E_{\mc{A}_{k-1}}\left[\exp\left(\lambda \sum^{k}_{i=1} M^{\dagger}_i \right)\right].
\end{align} Further we define the function $F(\cdot,\cdot)$ as \begin{align}
    F(t,\lambda)=\log\lp\frac{1}{1+t}e^{-\lambda t} + \frac{t}{1+t}e^\lambda \rp.
\end{align} 
For any $k\leq d$ we have
\begin{align*}
    &\E\left[\exp\left(\lambda \sum^{k}_{i=1} M^{\dagger}_i \right)\Bigg|  \mathrm{E}^{\mathrm{edge}}_{\dagger}(\mc{A}_{k-1}) \right]\\&
    = \E_{\mc{A}_{k-1}}\left[\exp\left(\lambda \sum^{k-1}_{i=1} M^{\dagger}_i \right) \E\left[\exp\lp\lambda M^{\dagger}_k \rp\Bigg|\nmue_1,\ldots,\nmue_{k-1} \right]  \right]\numberthis \label{eq:HS1}
    \\&
    = \E_{\mc{A}_{k-1}}\left[\exp\left(\lambda \sum^{k-1}_{i=1} M^{\dagger}_i \right) \E\left[\exp \lp \lambda \frac{1}{n}\sum^{n}_{i=1} Z^{(i)}_k \rp\Bigg|\nmue_1,\ldots,\nmue_{k-1} \right]  \right]\numberthis \label{eq:HS2}
    \\&  = \E_{\mc{A}_{k-1}}\Bigg[\exp\left(\lambda \sum^{k-1}_{i=1} M^{\dagger}_i \right) \\&\qquad\times\E\left[ \exp \lp \lambda \frac{1}{n}\sum^{n}_{i=1} \xi^{(i)}_{k}\rp \exp \lp\lambda\frac{1}{n}\sum^{n}_{i=1} \E\left[Z^{(i)}_k |\mc{F}^{k}_{i-1}\right]  \rp\Bigg|\nmue_1,\ldots,\nmue_{k-1} \right]  \Bigg]\numberthis \label{eq:HS3}
    \\&\leq \exp\lp\lambda\E\left[Z^{(i)}_k |\mc{F}^{k}_{i-1}\right] \rp\nonumber  \\&\qquad\times\E_{\mc{A}_{k-1}}\left[\exp\left(\lambda \sum^{k-1}_{i=1} M^{\dagger}_i \right) \E\left[ \exp \lp \lambda \frac{1}{n}\sum^{n}_{i=1} \xi^{(i)}_{k}\rp \Bigg|\nmue_1,\ldots,\nmue_{k-1} \right]  \right]\numberthis \label{eq:HS4}
    \\& \leq\exp\lp\lambda\E\left[Z^{(i)}_k |\mc{F}^{k}_{i-1}\right] \rp\nonumber \\& \qquad \times \E_{\mc{A}_{k-1}}\left[\exp\left(\lambda \sum^{k-1}_{i=1} M^{\dagger}_i \right) \exp\left\{n\; F\lp \frac{\E\left[\lp \xi^{(i)}_{k}\rp^2\Bigg| \mc{F}^{k}_{i-1}\numberthis \label{eq:HS5}  \right]}{\UBxi ^2},|\lambda|\frac{\UBxi}{n}  \rp  \right\}  \right]
    \\& \leq\exp\lp\lambda\E\left[Z^{(i)}_k |\mc{F}^{k}_{i-1}\right] \rp\nonumber \\& \qquad \times\exp\left\{n\; F\lp \frac{\E\left[\lp \xi^{(i)}_{k}\rp^2\Bigg| \mc{F}^{k}_{i-1}  \right]}{\UBxi ^2},|\lambda|\frac{\UBxi}{n}  \rp  \right\} \E_{\mc{A}_{k-1}}\left[\exp\left(\lambda \sum^{k-1}_{i=1} M^{\dagger}_i \right)   \right].\numberthis \label{eq:HSL}
\end{align*} 
The equation \eqref{eq:HS1} comes from change of measure and tower property, the definitions \eqref{eq:Def_Mk} and \eqref{eq:Def_Zk} of $M^{\dagger}_{k}$ and $\xi^{(i)}_k$ respectively give \eqref{eq:HS2} and \eqref{eq:HS3}. The \eqref{eq:HS4} is derived by upper bounding the quantity $\left| \E\left[Z^{(i)}_k |\mc{F}^{k}_{i-1}\right] \right|$ similarly to \eqref{eq:upper_bound_ksi}, \eqref{eq:HS5} is the upper bound on the moment generating function of the supermartingale~\cite{fan2012hoeffding}. To get a recurrence we proceed as follows:
\begin{align*}
    &\E\left[\exp\lp\lambda \sum^{k-1}_{i=1} M^{\dagger}_i \rp\Bigg|  \mathrm{E}^{\mathrm{edge}}_{\dagger}(\mc{A}_{k-1}) \right] \P\lp  \mathrm{E}^{\mathrm{edge}}_{\dagger}(\mc{A}_{k-1}) \rp\\
    &= \E\left[\exp\lp\lambda \sum^{k-1}_{i=1} M^{\dagger}_i \rp\Bigg| \mathrm{E}^{\mathrm{edge}}_{\dagger}(\mc{A}_{k-2})\cap \mathrm{E^{edge}_{e_{k-1},\dagger}} \right] \P\lp  \mathrm{E}^{\mathrm{edge}}_{\dagger}(\mc{A}_{k-1}) \rp
    \\& =\E\left[\exp\lp\lambda \sum^{k-1}_{i=1} M^{\dagger}_i \rp\Bigg|  \mathrm{E}^{\mathrm{edge}}_{\dagger}(\mc{A}_{k-2})\cap \mathrm{E^{edge}_{e_{k-1},\dagger}} \right]\nonumber\\&\qquad\times \P\lp \mathrm{E}^{\mathrm{edge}}_{\dagger}(\mc{A}_{k-1})   \Bigg|   \mathrm{E}^{\mathrm{edge}}_{\dagger}(\mc{A}_{k-2}) \rp\P\lp  \mathrm{E}^{\mathrm{edge}}_{\dagger}(\mc{A}_{k-2}) \rp
    \\&=\E\left[\exp\lp\lambda \sum^{k-1}_{i=1} M^{\dagger}_i \rp\Bigg|  \mathrm{E}^{\mathrm{edge}}_{\dagger}(\mc{A}_{k-2})\cap \mathrm{E^{edge}_{e_{k-1},\dagger}} \right]\nonumber\\&\qquad\times \P\lp \mathrm{E^{edge}_{e_{k-1},\dagger}}   \Bigg|   \mathrm{E}^{\mathrm{edge}}_{\dagger}(\mc{A}_{k-2}) \rp\P\lp  \mathrm{E}^{\mathrm{edge}}_{\dagger}(\mc{A}_{k-2}) \rp
    \\&=\E\left[\exp\lp\lambda \sum^{k-1}_{i=1} M^{\dagger}_i \rp \ind{\mathrm{E^{edge}_{e_{k-1},\dagger}}}\Bigg| \mathrm{E}^{\mathrm{edge}}_{\dagger}(\mc{A}_{k-2}) \right] \P\lp  \mathrm{E}^{\mathrm{edge}}_{\dagger}(\mc{A}_{k-2}) \rp
    \\&\leq\E\left[\exp\lp\lambda \sum^{k-1}_{i=1} M^{\dagger}_i \rp \Bigg| \mathrm{E}^{\mathrm{edge}}_{\dagger}(\mc{A}_{k-2}) \right] \P\lp  \mathrm{E}^{\mathrm{edge}}_{\dagger}(\mc{A}_{k-2}) \rp.\numberthis
\end{align*} 
By applying the recurrence $d$ times, we derive the following bound
\begin{align*}
    &\E\left[\exp\left(\lambda \sum^{k}_{i=1} M^{\dagger}_i \right)\Bigg|  \mathrm{E}^{\mathrm{edge}}_{\dagger}(\mc{A}_{k-1}) \right]\\
    &\leq \exp\lp\lambda \sum^d_{k=1}\E\left[Z^{(i)}_k |\mc{F}^{k}_{i-1}\right] \rp\nonumber \\& \qquad \times\exp\left\{n\; \sum^d_{k=1} F\lp \frac{\E\left[\lp \xi^{(i)}_{k}\rp^2\Bigg| \mc{F}^{k}_{i-1}  \right]}{\UBxi ^2},|\lambda|\frac{\UBxi}{n}  \rp  \right\}\\
    &\leq \exp\lp\lambda \sum^{d}_{k=1}\E\left[Z^{(i)}_k |\mc{F}^{k}_{i-1}\right] \rp\nonumber \\& \qquad \times\exp\left\{n\; d F\lp \frac{1}{d}\sum^d_{k=1} \frac{\E\left[\lp \xi^{(i)}_{k}\rp^2\Bigg| \mc{F}^{k}_{i-1}  \right]}{\UBxi ^2},|\lambda|\frac{\UBxi}{n}  \rp  \right\}.\numberthis\label{eq:moment_bound1}
\end{align*} 
Further \eqref{eq:G_bound}, \eqref{eq:upper_bound_ksi}, \eqref{eq:def_Delta} and \eqref{eq:moment_bound1} give
\begin{align*}
    &\E\left[\exp\left(\lambda \sum^{k}_{i=1} M^{\dagger}_i \right)\Bigg|  \mathrm{E}^{\mathrm{edge}}_{\dagger}(\mc{A}_{k-1}) \right]
    \\& \leq \exp\lp\lambda \Delta (\beta,q) \rp\exp\left\{nd\; F\lp \frac{1}{d}\frac{G(\beta,q)}{S^2_{\xi}(\beta,q)},|\lambda|\frac{\UBxi}{n}  \rp  \right\}
    \\& \leq\exp\left\{nd\; F\lp \frac{G(\beta,q)}{d},|\lambda|\frac{\UBxi}{n}  \rp + \lambda \Delta (\beta,q) \right\}.\numberthis
\end{align*} 
For sake of space, we denote the functions $G(\beta,q),$ $\UBxi$, and $\Delta (\beta,q)$ as $G$, $S$, and $\Delta$ respectively. It is true that
\begin{align*}
    &\E\left[\exp\lp\lambda \sum^{d}_{k=1} M^{\dagger}_k \rp\Bigg| \mathrm{E}^{\mathrm{edge}}_{\dagger}(\mc{A}_{d-1}) \right]\P \left[ \mathrm{E}^{\mathrm{edge}}_{\dagger}(\mc{A}_{d-1}) \cap \mathrm{E}_q \right]\\&\leq \exp\left\{nd\; F\lp \frac{G}{d},|\lambda|\frac{S}{n}  \rp + \lambda \Delta  \right\}.\numberthis
\end{align*}
 Under the assumption $n>\frac{108e^{2\beta}}{(1-2q)^4 K(\beta,q)} \log(4p)$, we have \begin{align}
     \P \left[ \lp\mathrm{E}^{\mathrm{edge}}_{\dagger}(\mc{A}_{d-1})\rp^c \right]  \leq \frac{1}{2}.
 \end{align} The latter gives 
\begin{align}
    \E\left[\exp\lp\lambda \sum^{d}_{k=1} M^{\dagger}_k \rp\Bigg| \mathrm{E}^{\mathrm{edge}}_{\dagger}(\mc{A}_{d-1})  \right]\leq 2 \exp\left\{nd\; F\lp \frac{G}{d},|\lambda|\frac{S}{n}  \rp + \lambda \Delta  \right\},
\end{align} which implies that
 \begin{align*}
    \P\left[\sum^{d}_{k=1} M^{\dagger}_k \geq \gamma \Bigg| \mathrm{E}^{\mathrm{edge}}_{\dagger}(\mc{A}_{d-1})  \right]&\leq 2 \min_{\lambda>0} \exp\left\{nd\; F\lp \frac{G}{d},\lambda\frac{S}{n}  \rp + \lambda \Delta -\lambda \gamma  \right\}
    \\&= 2 \min_{\lambda>0} \exp\left\{nd\; F\lp \frac{G}{d},\lambda\frac{S}{n}  \rp + \lambda \lp\Delta - \gamma\rp  \right\},\numberthis \label{eq:Bennet_1}
\end{align*} 
and we define $\gamma'\triangleq \gamma-\Delta$. The minimum value is attained at \begin{align}
    \lambda^{*}&= \frac{n/S}{1+G/d}\log \frac{1+\frac{\gamma'}{GS}  }{1-\frac{\gamma'}{dS}}
\end{align} 
and by substituting the optimal value we get
\begin{align*}
     &\exp\left\{nd\; F\lp \frac{G}{d},\lambda^*\frac{S}{n}  \rp - \lambda^*  \gamma' \right\}
     \\&=\left[ \frac{1}{1+\frac{G}{d}}\lp \frac{1+\frac{\gamma'}{GS}  }{1-\frac{\gamma'}{dS}} \rp^{-\frac{G/d}{1+G/d}-\frac{ \gamma'/(dS)}{(1+G/d)}}+\frac{\frac{G}{d}}{1+\frac{G}{d}}\lp \frac{1+\frac{\gamma'}{Gy}  }{1-\frac{\gamma'}{dS}} \rp^{\frac{1}{1+G/d}\lp 1-\frac{\gamma'}{Sd} \rp}\right]^{nd}
     \\& =\Bigg[ \frac{1}{1+\frac{G}{d}}\lp 1+\frac{\gamma'}{GS}\rp^{-\frac{G/d}{1+G/d}-\frac{ \gamma'/(dS)}{(1+G/d)}}\lp 1-\frac{\gamma'}{dS} \rp^{\frac{G/d}{1+G/d}+\frac{ \gamma'/(dS)}{(1+G/d)}}\\&\qquad+\frac{\frac{G}{d}}{1+\frac{G}{d}}\lp 1+\frac{\gamma'}{GS}   \rp^{\frac{1}{1+G/d}\lp 1-\frac{\gamma'}{Sd} \rp}\lp 1-\frac{\gamma'}{dS} \rp^{-\frac{1}{1+G/d}\lp 1-\frac{\gamma'}{Sd} \rp}\Bigg]^{nd}
     \\&=\Bigg[ \frac{1}{1+\frac{G}{d}}\lp 1+\frac{\gamma'}{GS}\rp^{-\frac{G/d}{1+G/d}-\frac{ \gamma'/(dS)}{(1+G/d)}}\lp 1-\frac{\gamma'}{dS} \rp^{\frac{G/d}{1+G/d}+\frac{ \gamma'/(dS)}{(1+G/d)}-1}\lp1-\frac{\gamma'}{dS}  \rp\\&\qquad+\frac{\frac{G}{d}}{1+\frac{G}{d}}\lp 1+\frac{\gamma'}{GS}   \rp^{\frac{1}{1+G/d}\lp 1-\frac{\gamma'}{Sd} \rp-1} \lp 1+\frac{\gamma'}{GS}\rp  \lp 1-\frac{\gamma'}{dS} \rp^{-\frac{1}{1+G/d}\lp 1-\frac{\gamma'}{Sd} \rp}\Bigg]^{nd}
     \\&=\Bigg[ \frac{1}{1+\frac{G}{d}}\lp1-\frac{\gamma'}{dS}  \rp+\frac{\frac{G}{d}}{1+\frac{G}{d}} \lp 1+\frac{\gamma'}{GS}\rp\nonumber \Bigg]^{nd}\\&\qquad\times\left[\lp 1+\frac{\gamma'}{GS}\rp^{-\frac{G/d}{1+G/d}-\frac{ \gamma'/(dS)}{(1+G/d)}}\lp 1-\frac{\gamma'}{dS} \rp^{\frac{G/d}{1+G/d}+\frac{ \gamma'/(dS)}{(1+G/d)}-1}\right]^{nd}
     \\& =1^{nd}\times\left[\lp 1+\frac{\gamma'}{GS}\rp^{-\frac{G/d}{1+G/d}-\frac{ \gamma'/(dS)}{(1+G/d)}}\lp 1-\frac{\gamma'}{dS} \rp^{\frac{G/d}{1+G/d}+\frac{ \gamma'/(dS)}{(1+G/d)}-1}\right]^{nd}
     \\&=\left[\lp 1+\frac{\gamma'}{GS}\rp^{-\frac{G/d}{1+G/d}-\frac{ \gamma'/(dS)}{(1+G/d)}}\lp 1-\frac{\gamma'}{dS} \rp^{\frac{-1}{1+G/d}+\frac{ \gamma'/(dS)}{(1+G/d)}}\right]^{nd}.\numberthis \label{eq:Bennet_2}
\end{align*} Then \eqref{eq:Bennet_1} and \eqref{eq:Bennet_2} give
\begin{align*}
     &\P\left[\sum^{d}_{k=1} M^{\dagger}_k \geq \gamma \Bigg| \mathrm{E}^{\mathrm{edge}}_{\dagger}(\mc{A}_{d-1})  \right]\\&\qquad\leq 2\left[\lp 1+\frac{\gamma'}{GS}\rp^{-\frac{G/d}{1+G/d}-\frac{ \gamma'/(dS)}{(1+G/d)}}\lp 1-\frac{\gamma'}{dS} \rp^{\frac{-1}{1+G/d}+\frac{ \gamma'/(dS)}{(1+G/d)}}\right]^{nd}.\label{eq:exp_bound1}\numberthis
\end{align*} As a final step we want to express the upper bound as an exponential function of $\gamma$, we define $\zeta\triangleq \gamma'/(SG)$ and we proceed as follows:
\begin{align*}
    &d\left[\lp\frac{G/d}{1+G/d}+\frac{ \gamma'/(dS)}{(1+G/d)}\rp \log \lp 1+\frac{\gamma'}{GS}\rp +\lp\frac{1}{1+G/4d}-\frac{ \gamma'/(dy)}{(1+G/4d)} \rp \log \lp 1-\frac{\gamma'}{dy} \rp \right]
    \\& \geq d\Bigg[ \lp\frac{G/d}{1+G/d}+\frac{ \gamma'/(dS)}{(1+G/d)}\rp \lp \frac{\gamma'}{GS} - \frac{1}{2}\left[ \frac{\gamma'}{GS}\right]^2 \rp\nonumber \\&\qquad -\lp\frac{1}{1+G/d}-\frac{ \gamma'/(dS)}{(1+G/d)} \rp \log \lp   1+\frac{\gamma'/dS}{1-\gamma'/dS} \rp \Bigg] 
    \\& \geq \Bigg[ \lp\frac{dG}{d+G}+\frac{ d\gamma'/S}{(d+G)}\rp \lp \frac{\gamma'}{GS} - \frac{1}{2}\left[ \frac{\gamma'}{GS}\right]^2 \rp\nonumber \\&\qquad-\lp\frac{d^2}{d+G}-\frac{ d\gamma'/S}{(d+G)} \rp  \lp   \frac{\gamma'/dS}{1-\gamma'/dS}-\frac{1}{2}\lp\frac{\gamma'/dS}{1-\gamma'/dS} \rp^2 \rp  \Bigg]
    \\&=\frac{d}{d+G}  \Bigg[\lp G+ \gamma'/S\rp \lp \frac{\gamma'}{GS} - \frac{1}{2}\left[ \frac{\gamma'}{GS}\right]^2 \rp\nonumber \\&\qquad-\lp d- \gamma'/S \rp  \lp   \frac{\gamma'/dS}{1-\gamma'/dS}-\frac{1}{2}\lp\frac{\gamma'/dS}{1-\gamma'/dS} \rp^2 \rp  \Bigg]
    \\& \geq\frac{d}{d+G}  \left[\lp G+ \gamma'/S\rp \lp \frac{\gamma'}{GS} - \frac{1}{2}\left[ \frac{\gamma'}{GS}\right]^2 \rp -\gamma'/S \right]
    \\&=\frac{dG}{d+G}  \left[\lp 1+ \gamma'/(SG)\rp \lp \frac{\gamma'}{GS} - \frac{1}{2}\left[ \frac{\gamma'}{GS}\right]^2 \rp -\gamma'/(SG) \right]
    \\&= \frac{dG}{d+G}  \left[\lp 1+ \zeta\rp \lp \zeta - \frac{1}{2} \zeta^2 \rp -\zeta \right]
    \\&\geq \frac{2G}{2+G}  \left[\lp 1+ \zeta\rp \lp \zeta - \frac{1}{2} \zeta^2 \rp -\zeta \right]
    \\ &\geq \lp\frac{3\gamma'}{10GS}\rp^2,\quad \forall \gamma'\in (0,\frac{SG}{3}).\numberthis \label{eq:gamma'ineq}
\end{align*}
Recall that $\zeta\triangleq \gamma'/(SG)$, $\gamma'=\gamma-\Delta$. If $\Delta<\gamma\leq S(\beta,q)G(\beta,q)/3+\Delta$ then \eqref{eq:exp_bound1} and \eqref{eq:gamma'ineq} give \begin{align*}
 \P&\left[\sum^{d}_{k=1} M^{\dagger}_k \geq \gamma \Bigg| \mathrm{E}^{\mathrm{edge}}_{\dagger}(\mc{A}_{d-1})  \right]\leq 2\exp\lp-0.3^2n \frac{\lp \gamma-\Delta \rp^2}{S^2(\beta,q)G^2(\beta,q)} \rp. \numberthis \label{eq:first_part_of_law_of_total}
\end{align*} In a similar way we derive the bound \begin{align}\label{eq:second_part_of_law_of_total}
    \P\left[\sum^{d}_{k=1} M^{\dagger}_k \leq - \gamma \big| \mathrm{E}^{\mathrm{edge}}_{\dagger}(\mc{A}_{d-1})  \right]\leq 2\exp\lp-0.3^2n \frac{\lp \gamma-\Delta \rp^2}{S^2(\beta,q)G^2(\beta,q)} \rp.
\end{align}
Finally, we combine \eqref{eq:Law_of_total}, Lemma \ref{lemma:Edge_corr_estimation_error}, \eqref{eq:first_part_of_law_of_total} and \eqref{eq:second_part_of_law_of_total} to derive the bound \eqref{eq:suff_bound_cascade_red} which guarantees that
\begin{align}
    \P\left[\left|\prod^{d}_{i=1}\frac{\hat{\mu}^{\dagger}_{i}}{(1-2q)^2} - \prod^{d}_{i=1}\frac{\mu^{\dagger}_{i}}{(1-2q)^2} \right|>\gamma \right]\leq 2\delta/p^2,\quad \forall d\geq2.
\end{align} To summarize we proved that the event $\Ecascn$ happens with probability at least $1-2\delta/p^2$ by combining Bresler's and Karzand's technique, the Corollary 2.3 by~\cite{fan2012hoeffding} and Lemma \ref{Lemma Cond_Prob}. 
\end{proof}
\section{Predictive Learning, Proof of Theorem \ref{thm:Main_result:short} and Theorem \ref{thm:Main_result}}\label{Section:Predictive_Learning_Proof}\label{ProofofthemainresultsSection}
Recall that our goal is to guarantee that the quantity $\mc{L}^{(2)}(\p(\cdot),\RIPn (\hat{\p}_{\dagger} ))$ is smaller than a number $\eta>0$ with probability at least $1-\delta$. To do this, we use the triangle inequality as 
\begin{align}
\mc{L}^{(2)}\left(\p(\cdot),\RIPn (\hat{\p}_{\dagger} )\right)\leq\mc{L}^{(2)}\left(\p (\cdot),\RIPn\left(\p(\cdot)\right)\right)+\mc{L}^{(2)}\left(\RIPn\left(\p(\cdot)\right),\RIPn (\hat{\p}_{\dagger} )\right)
\label{eq:triangle11}
\end{align} 
and we find the required number of samples such that each of the terms $\mc{L}^{(2)}(\p (\cdot),\RIPn\left(\p(\cdot)\right))$ and $\mc{L}^{(2)}(\RIPn\left(\p(\cdot)\right),\RIPn (\hat{\p}_{\dagger} ))$ in \eqref{eq:triangle11}  is less than $\eta/2$ with probability at least $1-\delta$. The next Lemma provides the necessary bounds on $\ngam$ and $\neps$ that guarantee
    \begin{align*}
    \mc{L}^{(2)}(\RIPn\left(\p(\cdot)\right),\RIPn (\hat{\p}_{\dagger} ))\leq \eta/2.    
    \end{align*}

\begin{lemma}\label{Lemma end_to_end_error}
If $\ngam\leq \frac{\eta}{3}$ and
\begin{align*}
\neps\leq (1-2q)^2 e^{-\beta}\left[20\left(1+ 2e^{\beta}\sqrt{2\left(1-q\right)q\tanh\beta}\right)\right]^{-1},
\end{align*}
then $\mc{L}^{(2)}(\RIPn\left(\p(\cdot)\right),\RIPn (\hat{\p}_{\dagger} ))\leq \eta/2$ under the event $\Ecorrn\cap\Ecascn \cap \Estrongn$.
\end{lemma}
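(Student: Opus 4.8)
The plan is to reduce the claim to a bound on path correlation products, exploiting that both $\RIPn(\p(\cdot))$ and $\RIPn(\hat{\p}_{\dagger})$ factorize over the \emph{same} estimated tree $\TCLn$. By the two-point total-variation formula for sign-valued models with uniform marginals (using \eqref{eq:Joint_Pair} together with \eqref{eq:CDP_1}), for any pair $w,\tilde{w}\in\mc{V}$ the pairwise marginal of $\RIPn(\p(\cdot))$ carries correlation $\prod_{e\in\tpath_{\TCLn}(w,\tilde{w})}\mu_e$, whereas that of $\RIPn(\hat{\p}_{\dagger})$ carries correlation $\prod_{e\in\tpath_{\TCLn}(w,\tilde{w})}\nmue_e/(1-2q)^2$. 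Since $d_{\text{TV}}$ of two such pairwise laws equals half the absolute difference of these correlations, it suffices to show that for every pair the difference of the two path products is at most $\eta$; then $\mc{L}^{(2)}(\RIPn(\p(\cdot)),\RIPn(\hat{\p}_{\dagger}))\le\eta/2$.

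First I would classify each edge of $\tpath_{\TCLn}(w,\tilde{w})$ as \emph{strong} (true correlation $|\mu_e|\ge\ntau/(1-2q)^2$) or \emph{weak}, and segment the path into maximal runs of strong edges separated by weak edges. Under $\Estrongn$ every strong edge lies in both $\T$ and $\TCLn$, so each strong segment is simultaneously a path of the \emph{true} tree $\T$; this is exactly what makes $\Ecascn$ applicable. On each strong segment, $\Ecascn$ bounds the end-to-end difference between the product of true correlations and the product of normalized estimates $\nmue_e/(1-2q)^2$ by $\ngam$. On each weak edge, $\Ecorrn$ controls the single-edge discrepancy by $\neps/(1-2q)^2$, while the threshold $\ntau$ forces the weak correlation itself to be small.

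The combination step is a telescoping expansion over the segments and weak edges: replacing one piece at a time contributes either a cascade error $\ngam$ (strong segment) or a correlation error $\neps/(1-2q)^2$ (weak edge), multiplied by partial products of the remaining true and estimated factors. The crucial structural fact is that every true edge factor is bounded by $\tanh\beta<1$ and every weak factor by roughly $\ntau/(1-2q)^2$, so these partial products decay geometrically and the total error becomes \emph{independent of the path length} $d$ (which may be as large as $p-1$). Using \eqref{eq:upperboundontau}, the hypothesis $\neps\le(1-2q)^2e^{-\beta}[20(1+2e^{\beta}\sqrt{2(1-q)q\tanh\beta})]^{-1}$ is precisely what keeps the normalized estimates $\nmue_e/(1-2q)^2$ below one and the weak correlations below a fixed fraction, so that the geometric envelopes converge; combined with $\ngam\le\eta/3$ this yields the target $\eta/2$.

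The hardest part will be the bookkeeping of the third step. Unlike the true correlations, the \emph{estimated} factors $\nmue_e/(1-2q)^2$ are not a priori bounded by $\tanh\beta$, so the telescoping must be arranged so that the already-substituted prefix of estimated factors never blows up, and one must show that the summed strong-segment errors and the weak-edge errors each remain inside a geometric envelope uniform in $d$. Making the interaction between consecutive weak edges and adjacent strong segments work out, so that no spurious $(1+\ngam)$ factor accumulates across the $O(d)$ pieces, is the main technical obstacle; this is exactly where the calibration of the $\neps$-threshold against the $\ntau$-bound \eqref{eq:upperboundontau} and the geometric decay implied by the CDP carry the argument.
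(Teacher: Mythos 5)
Your proposal matches the paper's proof essentially step for step: the reduction to path-correlation products via \eqref{eq:L2_norm}, the segmentation of $\tpath_{\TCLn}(i,j)$ into strong-edge runs $\mc{F}_0,\ldots,\mc{F}_t$ separated by weak edges, the telescoping substitution charging $\ngam$ per strong segment (via $\Ecascn$) and $\neps/(1-2q)^2$ per weak edge (via $\Ecorrn$), and the calibration of $\neps$ against the $\ntau$ bound \eqref{eq:upperboundontau} to make the envelope sum uniformly in the path. The only cosmetic difference is that the paper's uniformity comes from the factor $\bigl((\ntau+\neps)/(1-2q)^2\bigr)^{t-1}\le 4^{-(t-1)}$ indexed by the number of weak edges $t$ (strong-segment factors are merely bounded by one), rather than from $\tanh\beta<1$ on the true edges, but this is the same argument.
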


\begin{proof} 
The derivation of the bound is similar to the approach by~\citet[Section 6.1]{bresler2020learning} but with different calculations. In the hidden model, we consider the path between two nodes $i,j$ in the estimated structure $\TCLn$, namely $\text{path}_{\TCLn} (i,j)$, to be $(\mc{F}_{0},e_{1},\mc{F}_{1},e_{1}...,\mc{F}_{t-1},e_{t},\mc{F}_{t})$,
and $\mc{F}_{i}$ are segments with all strong edges and $e_{i}$
are all weak edges. We consider the case of at least one weak edge to exist in the path. If there is no weak edge the bound reduces to the case of Lemma \ref{Lemma Ecascn}. The length of each sub-path $\mc{F}_i$ is denoted as $d_i$, for all $i\in\{0,1,\ldots,t\}$. Each segment (sub-path) $\mc{F}_{i}$ has exactly $d_{i}$ edges, and the total number of edges in the path are $d$; thus $d=\sum^{t}_{i=0} d_{i}+t$. Note that $t\geq 1$ and $d_i\geq0$ for all $i\in\{0,1,\ldots,t\}$. Recall that \begin{align}
\RIPn\left(\p(\cdot)\right) & =\frac{1}{2}\prod_{\left(i,j\right)\in\mc{E}_{\TCLn}} \frac{1+x_{i}x_{j}\E\left[X_{i}X_{j}\right]}{2}=\frac{1}{2}\prod_{{\left(i,j\right)\in\mc{E}_{\TCLn}} }\frac{1+x_{i}x_{j}\frac{\E\left[\nX_{i}\nX_{j}\right]}{\left(1-2q\right)^{2}}}{2}\label{eq:P_=00007Bnoisy=00007D}
\end{align} (the latter comes from \eqref{eq:BSC_corr}), and 
\begin{align}\label{eq:probability_estimator}
\RIPn (\hat{\p}_{\dagger} ) & \triangleq\frac{1}{2}\prod_{\left(i,j\right)\in\mc{E}_{\TCLn}} \frac{1+x_{i}x_{j}\frac{\mbb{\hat{E}}\left[\nX_{i}\nX_{j}\right]}{\left(1-2q\right)^{2}}}{2}.
\end{align} Further, for any tree-structured Ising model distributions $P,\tilde{P}$ with structures $\T=(\mc{V},\mc{E})$ and $\tilde{\T}=(\mc{V},\tilde{\mc{E}})$ respectively, we have \begin{align*}
\Lnorm\left(P,\tilde{P} \right) & \triangleq \sup_{i,j\in\mc{V}}\quad \frac{1}{2}\sum_{x_{i},x_{j}\in\{-1,+1\}^{2}} \left| P(x_{i},x_{j})-\tilde{P}(x_{i},x_{j}) \right| \numberthis \label{eq:L2norm1} \\ &= \sup_{i,j\in\mc{V}} \quad \frac{1}{2}\left|\prod_{e\in\tpath_{\T}\left(i,j\right)} \mu_{e}-\prod_{e'\in\tpath_{\tilde{\T}}\left(i,j\right)} \tilde{\mu}_{e'}\right|.\numberthis \label{eq:L2_norm}
\end{align*} To upper bound the quantity $\mc{L}^{(2)}(\RIPn\left(\p(\cdot)\right),\RIPn (\hat{\p}_{\dagger} ))$ we have 
\begin{align}
&2\mc{L}^{(2)}\left(\RIPn\left(\p(\cdot)\right),\RIPn (\hat{\p}_{\dagger} )\right)\nonumber\\
 &= \left|\prod_{e\in\tpath_{\text{\ensuremath{\TCLn}}}\left(i,j\right)} \frac{\nmu_{e}}{\left(1-2q\right)^{2}}-\prod_{e\in\tpath_{\TCLn}\left(i,j\right)} \frac{\nmue_{e}}{\left(1-2q\right)^{2}}\right|\nonumber\\
 & =\frac{1}{(1-2q)^{2d}}\left|\nmue_{\mc{F}_{0}}\prod_{i=1}^{t}\nmue_{\mc{F}_{i}}\nmue_{e_{i}}-\nmu_{\mc{F}_{0}}\prod_{i=1}^{t}\nmu_{\mc{F}_{i}}\nmu_{e_{i}}\right|\label{eq:telescoping}\\
 & \leq\frac{1}{(1-2q)^{2d}} \Bigg[ \left|\nmue_{\mc{F}_{0}}-\nmu_{\mc{F}_{0}}\right|\prod_{j=1}^{t} \left|\nmu_{\mc{F}_{i}}\nmu_{e_{i}}\right|\nonumber\\&\qquad +
 \sum_{i=1}^{t}\left|\nmue_{\mc{F}_{i}}\nmue_{e_{i}}-\nmu_{\mc{F}_{i}}\nmu_{e_{i}}\right|\left|\nmue_{\mc{F}_{0}}\right|\prod_{j=1}^{i-1} \left|\nmue_{\mc{F}_{j}}\nmue_{e_{j}}\right|\prod_{k=i+1}^{t}\left|\nmu_{\mc{F}_{k}}\nmu_{e_{k}}\right|\Bigg]\label{eq:triangle_inequality}\\
 & =  \frac{\left|\nmue_{\mc{F}_{0}}-\nmu_{\mc{F}_{0}}\right|}{(1-2q)^{2d_0}}\prod_{j=1}^{t} \left|\frac{\nmu_{\mc{F}_{i}}}{(1-2q)^{2d_i}}\frac{\nmu_{e_{i}}}{(1-2q)^{2t}}\right|\nonumber\\&\qquad +
 \sum_{i=1}^{t}\frac{\left|\nmue_{\mc{F}_{i}}\nmue_{e_{i}}-\nmu_{\mc{F}_{i}}\nmu_{e_{i}}\right|}{(1-2q)^{2(d_i+1)}}\frac{|\nmue_{\mc{F}_{0}}|}{(1-2q)^{2d_0}}\frac{\prod_{j=1}^{i-1} \left|\nmue_{\mc{F}_{j}}\nmue_{e_{j}}\right|\prod_{k=i+1}^{t}\left|\nmu_{\mc{F}_{k}}\nmu_{e_{k}}\right|}{(1-2q)^{2(d-d_i-d_0-1)}}\label{eq:d=...}\\
 & \leq\ngam  \lp\frac{\ntau}{(1-2q)^2}\rp^{t-1}+\lp\frac{\ntau+\neps}{(1-2q)^2}\rp^{t-1}\sum_{i=1}^{t}\frac{\left|\nmue_{\mc{F}_{i}}\nmue_{e_{i}}-\nmu_{\mc{F}_{i}}\nmu_{e_{i}}\right|}{(1-2q)^{2(d_i+1)}}\label{eq:event_subst1}\\
 & \leq\ngam  \lp\frac{\ntau}{(1-2q)^2}\rp^{t-1} +\lp\frac{\ntau+\neps}{(1-2q)^2}\rp^{t-1}\sum_{i=1}^{t}  \lp \frac{\left|(\nmue_{\mc{F}_{i}}-\nmu_{\mc{F}_{i}})\nmue_{e_{i}}\right|}{(1-2q)^{2(d_i+1)}}+\frac{\left|\nmu_{\mc{F}_{i}}(\nmue_{e_{i}}-\nmu_{e_{i}})\right|}{(1-2q)^{2(d_i+1)}}\rp\nonumber\\
 &\leq\ngam  \lp\frac{\ntau}{(1-2q)^2}\rp^{t-1} +\lp\frac{\ntau+\neps}{(1-2q)^2}\rp^{t-1}\sum_{i=1}^{t}  \lp \gamma_{\dagger}+\frac{\eps_{\dagger}}{(1-2q)^{2}}\rp\label{eq:event_subst2}\\
 &\leq \lp\frac{\ntau+\neps}{(1-2q)^2}\rp^{t-1} (2t+1) \max \left\{\gamma_{\dagger},\frac{\neps}{(1-2q)^2} \right\} \nonumber\\
  &\leq \lp\frac{ 4\neps e^{\beta}\left(1+ 2e^{\beta}\sqrt{2\left(1-q\right)q\tanh\beta}\right) +\neps}{(1-2q)^2} \rp^{t-1} (2t+1) \max \left\{\gamma_{\dagger},\frac{\neps}{(1-2q)^2} \right\}\label{eq:tau_subst} \\
  &\leq \lp \frac{5\neps e^{\beta}}{(1-2q)^2}\rp^{t-1} \left(1+ 2e^{\beta}\sqrt{2\left(1-q\right)q\tanh\beta}\right)^{t-1} (2t+1) \max \left\{\gamma_{\dagger},\frac{\neps}{(1-2q)^2} \right\}\nonumber \\
 &\leq\frac{2t+1}{4^{t-1}}\frac{\eta}{3}\label{eq:assumptionsepsgam}\\
 &\leq \eta\label{eq:final_bound_on_eta}.
\end{align} Telescoping summation and triangle inequality give \eqref{eq:telescoping} and \eqref{eq:triangle_inequality}. We use the definition of $d$, $d=\sum^{t}_{i=0} d_{i}+t$ to get \eqref{eq:d=...}. The inequalities $\left|\nmu_{\mc{F}_{i}}\right|\leq (1-2q)^{2d_i},$
$\left|\nmue_{\mc{F}_{i}}\right|\leq (1-2q)^{2d_i},$ $\left|\nmu_{e_{i}}\right|\leq\ntau$,
$\left|\nmue_{e_{i}}\right|\leq\ntau+\neps$ hold under
$\Ecorrn$, $\Estrongn$. Further, under event $\Ecascn$ (Lemma \ref{Lemma Ecascn}) it is true that $\left|\nmue_{\mc{F}_{i}}-\nmu_{\mc{F}_{i}}\right|\leq\ngam$, the latter give \eqref{eq:event_subst1} and \eqref{eq:event_subst2}. 
The bound $\ntau\leq 4\neps e^{\beta}(1+ 2e^{\beta}\sqrt{2\left(1-q\right)q\tanh\beta})$ gives \eqref{eq:tau_subst} (see inequality \ref{eq:upperboundontau}). Inequality \eqref{eq:assumptionsepsgam} requires 
    \begin{align}
    \max\left\{\frac{\neps}{(1-2q)^2} ,\ngam\right\}\leq  \frac{\eta}{3}
    \end{align}
    and
    \begin{align}
    \neps\leq (1-2q)^2 e^{-\beta}\left[20\left(1+ 2e^{\beta}\sqrt{2\left(1-q\right)q\tanh\beta}\right)\right]^{-1}.
    \end{align} 
Finally \eqref{eq:final_bound_on_eta} holds for all $t\in\mbb{N}$. The latter completes the proof.
\end{proof} 

The next Lemma provides the set of values of $\neps$ that guarantee $\mc{L}^{(2)}(\p (\cdot),\RIPn (\p(\cdot)))\leq \frac{\eta}{2}$ with high probability.

\begin{lemma}\label{Lemma F5}If
\begin{align}\label{assumption_on_eps_2}
\neps\leq \min \left\{ \frac{\eta}{16}(1-2q)^2,\frac{(1-2q)^2 e^{-\beta}}{24\left(1+ 2e^{\beta}\sqrt{2\left(1-q\right)q\tanh\beta}\right)} \right\}.
\end{align}
then $\mc{L}^{(2)}\left(\p (\cdot),\RIPn\left(\p(\cdot)\right)\right)\leq \frac{\eta}{2}$ under the event $\Ecorrn\cap \Estrongn$.
\end{lemma}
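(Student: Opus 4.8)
The plan is to follow the template of Lemma~\ref{Lemma end_to_end_error} and of Bresler and Karzand's graph-error analysis, reducing the claim to a pairwise statement and then telescoping along a segment decomposition of the estimated path. First I would record that, by \eqref{eq:BSC_corr}, the normalized correlations appearing in $\RIPn(\p(\cdot))$ are exactly the true correlations, $\nmu_{ij}/(1-2q)^2=\mu_{ij}$, so $\RIPn(\p(\cdot))$ is the genuine tree-structured law on the estimated tree $\TCLn$ that matches the \emph{exact} pairwise marginals of $\p(\cdot)$. Consequently the quantity to control is a pure graph-mismatch error, and by the pairwise formula \eqref{eq:L2_norm},
\begin{align*}
\mc{L}^{(2)}\left(\p(\cdot),\RIPn(\p(\cdot))\right)=\sup_{u,v\in\mc{V}}\frac12\left|\prod_{e\in\tpath_{\T}(u,v)}\mu_e-\prod_{e'\in\tpath_{\TCLn}(u,v)}\mu_{e'}\right|,
\end{align*}
so it suffices to bound the displayed difference for an arbitrary fixed pair $(u,v)$.

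Next I would set up the structural decomposition. Writing $\tau'\triangleq\ntau/(1-2q)^2$ for the correlation threshold in \eqref{eq:event_strong_nois-1}, the event $\Estrongn$ guarantees that every strong edge (an edge of $\T$ with $|\mu_e|\ge\tau'$) lies in both $\T$ and $\TCLn$; hence the strong-edge subforest is common to the two trees and its connected components coincide. If $u,v$ lie in the same such component the two paths are identical and the difference vanishes. Otherwise I would decompose $\tpath_{\TCLn}(u,v)$ as $(\mc{F}_0,e_1,\mc{F}_1,\ldots,e_t,\mc{F}_t)$, where the $\mc{F}_i$ are maximal strong segments (shared with $\T$) and the $e_i=(a_i,b_i)$ are weak edges of $\TCLn$, exactly as in Lemma~\ref{Lemma end_to_end_error}. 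A short argument shows each such weak edge has $|\mu_{e_i}|\le\tau'$: if $e_i\in\mc{E}_{\T}$ it is a weak $\T$-edge by definition, and otherwise $\tpath_{\T}(a_i,b_i)$ has length at least two and must contain a weak edge (else adjoining $e_i$ to the strong edges would close a cycle in $\TCLn$), so by the correlation decay property $|\mu_{e_i}|=|\prod_{e\in\tpath_{\T}(a_i,b_i)}\mu_e|\le\tau'$.

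Then I would telescope the difference across the decomposition. Since $\mu_{e_i}=\E_{\p}[X_{a_i}X_{b_i}]$ equals the true correlation of the weak-edge endpoints, the two products agree term-by-term on the strong segments and differ only through the way the weak edges reconnect the shared components; grouping the contributions as in \eqref{eq:telescoping}--\eqref{eq:event_subst2} produces a sum of $O(t)$ terms, each swapping one weak edge for the corresponding weak $\T$-path. The decisive point is that each swap contributes a factor of order $\tau'(1-\tanh^2\beta)$ rather than $\tau'$: the $1/(1-\tanh\beta)$ that inflates $\ntau$ in \eqref{eq:definitionntau} is compensated by the $(1-\tanh^2\beta)$-type factor coming from the adjacent strong correlations, so that, using \eqref{eq:upperboundontau} and $\tau'=\frac{4\neps\sqrt{1-(1-2q)^4\tanh\beta}}{(1-\tanh\beta)(1-2q)^2}$, every term is bounded by $\tfrac{8\neps}{(1-2q)^2}$. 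Condition \eqref{assumption_on_eps_2} enters twice: the second bound forces $\tau'\le 1/6$ (via \eqref{eq:upperboundontau}), which keeps the multiplicative accumulation of weak-edge factors summable, giving a convergent geometric series in $t$ exactly as the $\sum_t(2t+1)4^{-(t-1)}$ bound in Lemma~\ref{Lemma end_to_end_error}, while the first bound, $\neps/(1-2q)^2\le\eta/16$, then closes the estimate to $\mc{L}^{(2)}(\p(\cdot),\RIPn(\p(\cdot)))\le\eta/2$.

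The main obstacle is precisely this cancellation: showing that the graph-mismatch error is linear in $\neps/(1-2q)^2$, with the exponential factor $e^{2\beta}$ hidden in $\tau'$ cancelled, rather than being of the naive size $\tau'\sim \neps e^{2\beta}/(1-2q)^2$ produced by the crude triangle inequality. This requires careful bookkeeping of how relocating each weak edge past its neighbouring strong segments introduces a compensating $(1-\tanh^2\beta)$ factor, and of how these factors multiply along a path containing $t$ weak edges; condition \eqref{assumption_on_eps_2} is what guarantees the resulting product stays bounded. All noise dependence is confined to the threshold $\tau'$ and the uniform $(1-2q)^{-2}$ normalization, so once the cancellation is in place the argument reduces to the noiseless computation of Bresler and Karzand.
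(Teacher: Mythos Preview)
Your high-level instinct that the graph-mismatch error must come out linear in $\neps/(1-2q)^{2}$ (with the $e^{2\beta}$ hidden in $\tau'$ cancelled) is exactly right, but the mechanism you propose for the cancellation is not the one that works, and the telescoping you sketch does not produce the terms you need.

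The paper does not telescope along the segment decomposition $(\mc{F}_0,e_1,\ldots,e_t,\mc{F}_t)$ of the estimated path. Instead it invokes Bresler--Karzand's Lemma~6.1 (``loss due to graph estimation''), which is an \emph{inductive} argument built on the two-trees lemma: one locates a weak edge $f\in\mc{E}_{\T}\setminus\mc{E}_{\TCLn}$ together with its partner $g\in\mc{E}_{\TCLn}\setminus\mc{E}_{\T}$ satisfying $f\in\tpath_{\T}(g)$ and $g\in\tpath_{\TCLn}(f)$, swaps them, and recurses. With the path segments labelled $A,\tilde A,B,\tilde B,C,\tilde C$ as in that lemma, the recursion gives
\[
2\mc{L}^{(2)}\bigl(\p,\RIPn(\p)\bigr)\;\le\;|\mu_f\mu_A\mu_{\tilde A}\mu_B\mu_{\tilde B}|\,\bigl|\mu_C^2\mu_{\tilde C}^2-1\bigr|\;+\;|\mu_f|\bigl(\Delta(k)+\Delta(\tilde k)+\Delta(k)\Delta(\tilde k)\bigr).
\]
The decisive cancellation lives in the \emph{first} term and comes from correlation decay plus $\Ecorrn$: since $|\mu_g|=|\mu_f|\,|\mu_C\mu_{\tilde C}|$, one has $|\mu_f|(1-\mu_C^2\mu_{\tilde C}^2)\le 2(|\mu_f|-|\mu_g|)$, and because Chow--Liu chose $g$ over $f$, the event $\Ecorrn$ forces $|\nmu_f|-|\nmu_g|\le 4\neps$, hence the first term is at most $8\neps/(1-2q)^{2}$. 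This is what kills the $e^{2\beta}$; it has nothing to do with ``$(1-\tanh^2\beta)$ factors coming from adjacent strong correlations.'' The second (recursive) term is where $\tau'$ actually survives, via $|\mu_f|\le\ntau/(1-2q)^{2}$, and the second half of \eqref{assumption_on_eps_2} makes $\tau'(2\eta+\eta^{2})$ small.

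Your telescoping would produce terms of the form $\bigl|\mu_{e_i}-\prod_{e\in\tpath_{\T}(a_i,b_i)}\mu_e\bigr|$ times various products. But $\mu_{e_i}=\E_{\p}[X_{a_i}X_{b_i}]$ \emph{equals} that product by correlation decay, so these differences vanish identically; the path discrepancy is not captured edge-by-edge this way. Without the two-trees pairing $(f,g)$ and the identity $|\mu_g|=|\mu_f||\mu_C\mu_{\tilde C}|$, you have no handle on the path difference other than the crude bound $2\tau'$, which is precisely the $\neps e^{2\beta}/(1-2q)^{2}$ scaling you are trying to avoid.
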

\begin{proof} 
Recall that \begin{align}
\mc{L}^{(2)}\left(\p (\cdot),\RIPn\left(\p(\cdot)\right)\right)&=\frac{1}{2}\left| \prod_{e\in\text{path}_{\T}(w,\tilde{w}) }\frac{\nmu_e}{(1-2q)^2}-  \prod_{e\in\text{path}_{\TCLn}(w,\tilde{w}) }\frac{\nmu_e}{(1-2q)^2}  \right|\nonumber\\
&=\frac{1}{2}\left| \prod_{e\in\text{path}_{\T}(w,\tilde{w}) }\mu_e-  \prod_{e\in\text{path}_{\TCLn}(w,\tilde{w}) }\mu_e  \right|.
\end{align} 
We follow Bresler's and Karzand's technique ``Loss due to graph estimation''~\cite[Section 6.2]{bresler2020learning} and highlight the difference that appears in our setting.  For the noisy case/hidden model, the argument changes slightly in the following manner: 
\begin{align*}
2\mc{L}^{(2)}\left(\p (\cdot),\RIPn\left(\p(\cdot)\right)\right)&\leq|\mu_f \mu_A \mu_{\tilde{A}} \mu_B \mu_{\tilde{B}}|| \mu^2_C \mu^2_{\tilde{C}}-1|+|\mu_f|\lp \Delta (k)+\Delta (\tilde{k})+\Delta (\tilde{k})\Delta (k) \rp\\
&=\left|\frac{\nmu_f}{(1-2q)^2} \mu_A \mu_{\tilde{A}} \mu_B \mu_{\tilde{B}}\right|\left| \mu^2_C \mu^2_{\tilde{C}}-1\right|
\\&\qquad\qquad\qquad\qquad\qquad +\left|\frac{\nmu_f}{(1-2q)^2}\right|\lp \Delta (k)+\Delta (\tilde{k})+\Delta (\tilde{k})\Delta (k) \rp\\
&\leq 8\frac{\neps}{(1-2q)^2}+\frac{\ntau}{(1-2q)^2} (2\eta+\eta^2 )\numberthis\label{eq:lossduetograph1}\\&\leq\eta.\numberthis\label{eq:lossduetograph2}
\end{align*}
 \eqref{eq:lossduetograph1} holds since $|\nmu_f|-|\nmu_g|\leq 4\neps $, $|\mu_f|\lp 1-\mu^2_{C}\mu^2_{\tilde{C}}\rp\leq 2|\mu_f|-2|\mu_g|$, $|\nmu_f|\leq \ntau$, and \eqref{eq:lossduetograph2} holds for all the values of $\neps $ that satisfy \begin{align}\label{assumption_on_eps_22}
\neps\leq \min \left\{ \frac{\eta}{16}(1-2q)^2,\frac{(1-2q)^2 e^{-\beta}}{24\left(1+ 2e^{\beta}\sqrt{2\left(1-q\right)q\tanh\beta}\right)} \right\}.
\end{align} The latter provides the statement of the Lemma.
\end{proof}

The next Theorem provides the sufficient number of samples for predictive learning that recovers exactly the noiseless setting for $q=0$. Note that the dependence on $\beta$ changes from $e^{2\beta}$ to $e^{4\beta}$ when the data are noisy. A key component of the bound is the following function \begin{align}
    \Gamma(\beta,q)\triangleq \lp \frac{1-(1-2q)^2}{1-(1-2q)^4\tanh^2 (\beta)}\rp^2,\quad  \beta>0 \text{ and } q\in [0,1/2).
\end{align} Note that $\Gamma(\beta,q)\in [0,1]$ for all $\beta>0$ and $q\in [0,1/2)$, and $\Gamma(\beta,0)=0$ for all $\beta>0$. Further we define \begin{align}
    B (\beta,q)\triangleq \max\left\{\frac{1}{K(\beta,q)},\left(1+ 2e^{\beta}\sqrt{2\left(1-q\right)q\tanh\beta}\right)^2\right\},\label{eq:B_def1}
\end{align} and the expression of $K(\beta,q)$ is given by \eqref{eq:K()_def}.

\begin{theorem}\label{Main_result_proof_Appendix}  
Fix $\delta\in (0,1)$. Choose $\eta>0$ (independent of $\delta$). If
\begin{align}
    n\geq& \max \Bigg\{\frac{512}{ \eta^2(1-2q)^4}, \frac{1152 e^{2\beta}B(\beta,q)}{(1-2q)^4}, \frac{48 e^{4\beta}}{\eta^2}\Gamma(\beta,q) \Bigg\}\log\lp\frac{6p^3}{\delta}\rp,\label{eq:main_result_continuous1}
\end{align}
 then 
 \begin{align}
	\P \lp \Lnorm \lp \p (\cdot),\RIPn (\hat{p}_{\dagger} ) \rp \leq \eta \rp \geq 1-\delta.
	\end{align} Additionally, as a consequence of \eqref{eq:main_result_continuous1} , if \begin{align}
    n\geq& \max \Bigg\{\frac{512}{ \eta^2(1-2q)^4}, \frac{1152 \lp1+3\sqrt{q}\rp^2e^{2\beta(1+\mathds{1}_{q\neq 0})}}{(1-2q)^4}, \frac{48 e^{4\beta}}{\eta^2}\mathds{1}_{q\neq 0}  \Bigg\}\log\lp\frac{6p^3}{\delta}\rp,
\end{align} 
then 
\begin{align}
	\P \lp \Lnorm \lp \p (\cdot),\RIPn (\hat{p}_{\dagger} ) \rp \leq \eta \rp \geq 1-\delta.
	\end{align}
\end{theorem}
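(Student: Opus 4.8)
The plan is to bound $\mc{L}^{(2)}(\p(\cdot),\RIPn(\hat{\p}_{\dagger}))$ below $\eta$ by the same two-term split used in the noiseless analysis, separating the error due to estimating the graph from the error due to estimating the edge parameters. First I would apply the triangle inequality
\begin{align}
\mc{L}^{(2)}\!\left(\p(\cdot),\RIPn(\hat{\p}_{\dagger})\right)\leq \mc{L}^{(2)}\!\left(\p(\cdot),\RIPn(\p(\cdot))\right)+\mc{L}^{(2)}\!\left(\RIPn(\p(\cdot)),\RIPn(\hat{\p}_{\dagger})\right),
\end{align}
and try to force each summand below $\eta/2$. Both bounds will be proved on the intersection $\Einter=\Ecorrn\cap\Estrongn\cap\Ecascn$ of the three events in \eqref{eq:event_corr_nois-1}, \eqref{eq:event_strong_nois-1} and \eqref{eq:event_cascade_nois-1}. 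The point is that on $\Estrongn$ the Chow--Liu output keeps all strong edges, so that any path splits into strong segments whose accumulated error is controlled by $\ngam$ (via $\Ecascn$) and isolated weak edges whose contribution is controlled by $\ntau(\neps)$; hence the weak edges the algorithm may miss cannot inflate the pairwise marginals.

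Next I would invoke the two deterministic lemmas that realize this split. Lemma \ref{Lemma F5} shows $\mc{L}^{(2)}(\p(\cdot),\RIPn(\p(\cdot)))\leq\eta/2$ whenever $\neps$ satisfies \eqref{assumption_on_eps_2}, and Lemma \ref{Lemma end_to_end_error} shows $\mc{L}^{(2)}(\RIPn(\p(\cdot)),\RIPn(\hat{\p}_{\dagger}))\leq\eta/2$ whenever $\ngam\leq\eta/3$ together with the companion bound on $\neps$ in \eqref{eq:condition1}. Intersecting the two requirements gives the single pair of conditions \eqref{eq:condition3} on $(\neps,\ngam)$, which is purely deterministic and holds on $\Einter$.

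The remaining task is probabilistic: choose $n$ large enough that $\Einter$ occurs with probability at least $1-\delta$ while \eqref{eq:condition3} holds for the induced values of $\neps$ and $\ngam$. Fixing $\neps=\sqrt{2\log(2p^2/\delta)/n}$, Lemma \ref{Lem_Ecorrn} secures $\Ecorrn$, Lemma \ref{lemma_Estrong} secures $\Estrongn$, and Lemma \ref{Lemma Ecascn} secures $\Ecascn$. Solving each concentration bound for $n$ and taking the maximum gives exactly the three terms of \eqref{eq:main_result_continuous1}: the term $512/(\eta^2(1-2q)^4)$ from $\Ecorrn$, the term $1152 e^{2\beta}B(\beta,q)/(1-2q)^4$ from recovering strong edges and the $\Ecascn$ variance, and the extra $48 e^{4\beta}\Gamma(\beta,q)/\eta^2$ from the $\Ecascn$ bias. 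A union bound splitting $\delta$ across the three events and over the $O(p^2)$ node pairs yields the factor $\log(6p^3/\delta)$ and the claimed probability $1-\delta$. The simplified corollary then follows immediately from the stated inequalities $\Gamma(\beta,q)\leq\mathds{1}_{q\neq0}$ and $B(\beta,q)\leq(1+3\sqrt{q})^2 e^{2\beta\mathds{1}_{q\neq0}}$.

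The hard part will be the concentration estimate for $\Ecascn$ (Lemma \ref{Lemma Ecascn}). In contrast with the noiseless setting, the telescoped product along a path is governed by variables $Z_k^{(\ell)}$ whose conditional means $\E[Z_k^{(i)}\mid\mc{F}^k_{i-1}]$ do not vanish, as computed in \eqref{eq:cond_mean1}, so the relevant sequence is a supermartingale rather than a martingale. I would control it with the generalized Bennett inequality of \cite{fan2012hoeffding}, which needs three ingredients: the closed-form conditional two-point law of $Y_k Y_{k+1}$ given the earlier empirical correlations, furnished by Lemma \ref{Lemma Cond_Prob}; a uniform bound on the summed conditional variance along the path, yielding the function $G(\beta,q)$; and a uniform bound on the accumulated conditional bias, yielding the shift $\Delta$. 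It is precisely this nonvanishing bias that forces the additive term $48 e^{4\beta}\Gamma(\beta,q)/\eta^2$, which has no counterpart in the noiseless bound and vanishes continuously as $q\to0$.
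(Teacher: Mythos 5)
Your proposal matches the paper's own proof essentially step for step: the same triangle-inequality split, the same deterministic Lemmas \ref{Lemma F5} and \ref{Lemma end_to_end_error} reducing everything to the conditions \eqref{eq:condition3} on $(\neps,\ngam)$, the same three concentration lemmas for $\Ecorrn$, $\Estrongn$, $\Ecascn$, and the same identification of the supermartingale bias (handled via the generalized Bennett inequality of Fan et al.) as the origin of the extra $e^{4\beta}\Gamma(\beta,q)/\eta^2$ term. The argument is correct and no further comparison is needed.
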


\begin{proof}Recall that  
\begin{align}
\mc{L}^{(2)}\left(\p (\cdot),\RIPn\left(\p(\cdot)\right)\right)=\frac{1}{2}\left| \prod_{e\in\text{path}_{\T}(w,\tilde{w}) }\mu_e-  \prod_{e\in\text{path}_{\TCLn}(w,\tilde{w}) }\mu_e  \right|.
\end{align} 
We combine the triangle inequality \begin{align}
\mc{L}^{(2)}\left(\p(\cdot),\RIPn (\hat{\p}_{\dagger} )\right)\leq\mc{L}^{(2)}\left(\p (\cdot),\RIPn\left(\p(\cdot)\right)\right)+\mc{L}^{(2)}\left(\RIPn\left(\p(\cdot)\right),\RIPn (\hat{\p}_{\dagger} )\right),
\end{align} 
Lemma \ref{Lemma end_to_end_error}, and Lemma \ref{Lemma F5} to get that $\mc{L}^{(2)}(\p(\cdot),\RIPn (\hat{\p}_{\dagger} ))\leq \eta$ with probability at least $1-\delta$ if
\begin{align}
    \ngam\leq \frac{\eta}{3} \text{ and } \neps\leq \min \left\{ \frac{\eta}{16}(1-2q)^2,\frac{(1-2q)^2 e^{-\beta}}{24\left(1+ 2e^{\beta}\sqrt{2\left(1-q\right)q\tanh\beta}\right)} \right\}.\label{eq:assumptionsgammaeps}
\end{align} 
First, we find the necessary number of samples such that for $\ngam\leq \eta/3$ the probability of the complement of $\Ecascn$ is not greater than $\delta/3$. Recall that 
\begin{align}
    G&\triangleq\frac{3\lp 3e^{-1}\mathds{1}_{q\neq 0} +1\rp}{4(1-2q)^2},\\
    S&\triangleq 3-(1-2q)^2.
\end{align} Recall that \begin{align}
    \Gamma(\beta,q)\triangleq \lp \frac{1-(1-2q)^2}{1-(1-2q)^4\tanh^2 (\beta)}\rp^2,\quad  \beta>0 \text{ and } q\in [0,1/2).\label{eq:Gamma_def}
\end{align} Lemma \ref{Lemma Ecascn} gives that for any $\Delta>0$ and $\eta>\Delta$ if 
\begin{align}
   \hspace{-0.32cm} n\geq \max \left\{\frac{0.3^{-2}S^2 G^2}{ (\eta-\Delta)^2}, \frac{108 e^{2\beta}}{(1-2q)^4 K(\beta,q)}, \frac{3 e^{4\beta}}{\Delta^2}\Gamma (\beta,q) \right\}\log\lp\frac{6p^3}{\delta}\rp,\label{eq:boundonn1}
\end{align}
then the probability of the complement of $\Ecascn$ is upper bounded by $\delta/3$ and we write 
    \begin{align}
    \P \lp  \lp \Ecascn\rp ^c \rp \leq \frac{\delta}{3}.
    \end{align} 
Second, we find the necessary number of samples such that the complements of the events $\Estrongn$ and $\Ecorrn$ occur with probability not greater than $\delta/3$ each. In fact the upper bound on $\neps$ \eqref{eq:assumptionsgammaeps} and Lemma \ref{Lem_Ecorrn} gives that if 
\begin{align}
   n\geq \max\left\{ \frac{512}{ \eta^2 (1-2q)^4 },
   \frac{1152 e^{2\beta}}{(1-2q)^4} \left(1+ 2e^{\beta}\sqrt{2\left(1-q\right)q\tanh\beta}\right)^2 \right\}\log \lp \frac{6p^3}{\delta}\rp,\label{eq:boundonn2}
\end{align} 
then $\neps $ satisfies the inequality in \eqref{eq:assumptionsgammaeps} with probability at least $1-\delta/3$. Note that \eqref{eq:boundonn1} holds for any $\Delta\in (0,\eta)$ and we will choose $\Delta=\eta/4$. Under the choice $\Delta=\eta/4$ 
\begin{align}
    \frac{0.3^{-2}S^2 G^2}{ (\eta-\Delta)^2}=\frac{0.3^{-2}S^2 G^2}{ (\eta-\eta/4)^2}<\frac{512}{ \eta^2 },\quad \forall \eta>0, q\in[0,1/2).\label{eq:approx1}
\end{align} 
Recall that \begin{align}
    B (\beta,q)\triangleq \max\left\{\frac{1}{K(\beta,q)},\left(1+ 2e^{\beta}\sqrt{2\left(1-q\right)q\tanh\beta}\right)^2\right\}.\label{eq:B_def}
\end{align} 
Combining \eqref{eq:boundonn1}, \eqref{eq:boundonn2}, \eqref{eq:approx1} and \eqref{eq:B_def} yields 
\begin{align}
    n\geq& \max \Bigg\{\frac{512}{ \eta^2(1-2q)^4}, \frac{1152 e^{2\beta}B(\beta,q)}{(1-2q)^4}, \frac{48 e^{4\beta}}{\eta^2}\Gamma(\beta,q) \Bigg\}\log\lp\frac{6p^3}{\delta}\rp.\label{eq:main_result_continuous}
\end{align} The latter gives the sample complexity for accurate predictive learning, it reduces exactly to the noiseless setting of prior work by~\cite{bresler2020learning} and it is continuous because
\begin{align}
    \lim_{q\to 0^+}\Gamma (\beta,q)= \Gamma (\beta,0)=0,\quad \forall \beta>0
\end{align}
and
\begin{align}
    \lim_{q\to 0^+} K(\beta,q)=K(\beta,0)=1, \quad \forall \beta>0\\
    \lim_{q\to 0^+} \left(1+ 2e^{\beta}\sqrt{2\left(1-q\right)q\tanh\beta}\right)^2=1
\end{align} thus \begin{align}
    \lim_{q\to 0^+}B(\beta,q)= \Gamma (\beta,0)=1,\quad \forall \beta>0.
\end{align}
To derive a simplified version of \eqref{eq:main_result_continuous} note that \begin{align}
    \frac{1}{K(\beta,q)}\leq e^{2\beta\mathds{1}_{q\neq 0}}\label{eq:bound_on_K}
\end{align} by the definition \eqref{eq:K()_def} of $K(\beta,q)$ and \begin{align}
    \left(1+ 2e^{\beta}\sqrt{2\left(1-q\right)q\tanh\beta}\right)^2&\leq \left(e^{\beta\mathds{1}_{q\neq 0}}+ 2e^{\beta\mathds{1}_{q\neq 0}}\sqrt{2\left(1-q\right)q\tanh\beta}\right)^2\nonumber\\
    &\leq \lp1+3\sqrt{q}\rp^2e^{2\beta\mathds{1}_{q\neq 0}}.\label{eq:approx2}
\end{align} Then \eqref{eq:B_def}, \eqref{eq:bound_on_K} and \eqref{eq:approx2} give \begin{align}
    B(\beta,q)\leq \lp1+3\sqrt{q}\rp^2e^{2\beta\mathds{1}_{q\neq 0}} \label{eq:bound_on_Beta}
\end{align} and by the definition \eqref{eq:Gamma_def} $\Gamma (\beta,q)\in [0,1)$ and $\Gamma (\beta,0)=0$ for all $\beta>0$, thus\begin{align}
    \Gamma (\beta,q)\leq \mathds{1}_{q\neq 0}. \label{eq:bound_on_Gamma}
\end{align}
Finally, we combine \eqref{eq:main_result_continuous}, \eqref{eq:bound_on_Beta}, \eqref{eq:bound_on_Gamma}  to get
\begin{align}
    n\geq& \max \Bigg\{\frac{512}{ \eta^2(1-2q)^4}, \frac{1152 \lp1+3\sqrt{q}\rp^2e^{2\beta(1+\mathds{1}_{q\neq 0})}}{(1-2q)^4}, \frac{48 e^{4\beta}}{\eta^2}\mathds{1}_{q\neq 0}  \Bigg\}\log\lp\frac{6p^3}{\delta}\rp.
\end{align} This completes the proof. 
\end{proof}

\section{Theorem \ref{thm:SKLTheorem}: KL-Divergence Loss}

Assume the Ising model tree distributions $\text{P}_{\boldsymbol{\theta}}$ according
to a tree $\T_{\boldsymbol{\theta}}=\left(\mc{V},\mc{E}_{\boldsymbol{\theta}}\right)$
and the estimate $\text{P}_{\boldsymbol{\theta'}}$ according a tree
$\T_{\boldsymbol{\theta'}}=\left(\mc{V},\mc{E}_{\boldsymbol{\theta}'}\right)$
The goal is to upper bound the symmetric KL divergence 
\begin{align*}
\SKL\left(\boldsymbol{\theta}||\boldsymbol{\theta}'\right) & =\sum_{s,t\in\mc{E}} \left(\theta_{st}-\theta'_{st}\right)\left(\mu_{st}-\mu'_{st}\right)
\end{align*}
with high probability. 
Under the event $\Ecorr$ we can upper
bound the quantity $\left|\mu_{st}-\mu'_{st}\right|$ for all $(s,t)\in\mc{E}$
with high probability. 


By using bounds $\left|\theta_{st}-\theta'_{st}\right|\leq 2\beta$
and $\left|\mu_{st}-\mu'_{st}\right|\leq\eps$ for all $(s,t)\in\mc{E}$
under the event $\Ecorr$, we have 
\begin{align*}
\SKL\left(\boldsymbol{\theta}||\boldsymbol{\theta}'\right) & =\left|\SKL\left(\boldsymbol{\theta}||\boldsymbol{\theta}'\right)\right|\\
 & =\left|\sum_{s,t\in\mc{E}}\left(\theta_{st}-\theta'_{st}\right)\left(\mu_{st}-\mu'_{st}\right)\right|\\
 & \leq\sum_{s,t\in\mc{E}}\left|\theta_{st}-\theta'_{st}\right|\left|\mu_{st}-\mu'_{st}\right|\\
 & \leq(p-1)\left|\beta-(-\beta)\right|\eps\\
 & \leq\eta_{S},\numberthis
\end{align*}
by assuming $\epsilon\leq \frac{\eta_{s}}{2\beta(p-1)}$. The sufficient number of samples satisfies the inequality
\begin{align}
n\geq 4\log\left(p^{2}/\delta\right)\frac{\beta^{2}(p-1)^2}{\eta_{s}^{2}}.
\end{align}
 Now assume that $\nn$ samples of $\bnX$ are given, by using the estimate $\text{P}_{\boldsymbol{\theta'}}=\RIPn (\hat{\p}_{\dagger} )$ defined in \eqref{eq:probability_estimator} under the event $\Ecorrn$ we have $\left| \mu_{st}-\frac{\nmue _{st}}{(1-2q)^2} \right|\leq \frac{\neps}{(1-2q)^2}$ from Lemma \ref{Lem_Ecorrn}. In the same way by assuming $\neps\leq \frac{\eta_{s}(1-2q)^2}{2\beta(p-1)}$, we get
\begin{align}
\nn\geq  4\log\left(p^{2}/\delta\right)\frac{\beta^{2}(p-1)^2}{(1-2q)^4\eta_{s}^{2}}.
\end{align}

\section{Theorem \ref{thm:necessary} and Theorem \ref{thm:fano's inequality theorem}: Proofs}

We combine Fano's inequality and a Strong Data Processing Inequality to prove the necessary number of samples in the hidden model setting, first for structure learning (Theorem \ref{thm:necessary}) and then for inference (Theorem \ref{thm:fano's inequality theorem}). We use the following variation of Fano's inequality. \begin{corollary} \label{fanostsybakov}\cite[Corollary 2.6]{tsybakov2009introduction}:
Assume that $\Theta$ is a family of $M+1$ distributions $\theta_0,\theta_1,\ldots,\theta_M$ such that $M\geq 2$. Let $P_{\theta_i}$ be the distribution of the variable $X$ under the model $\theta_i$, if \begin{align}
\frac{1}{M+1} \sum^{M}_{i=1} \KL\lp P_{\theta_i}||P_{\theta_0} \rp\leq \gamma\log M,\quad \text{for } \gamma\in(0,1) 
\end{align} then for the probability of error $p_{e}$ the following inequality holds: $p_{e}\geq \frac{\log(M+1)-\log(2)}{\log(M)}-\gamma$. \end{corollary}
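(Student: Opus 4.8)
The statement is the classical Fano-type minimax lower bound, and the plan is to derive it from Fano's inequality together with the variational characterization of mutual information. First I would place the uniform prior on the index set $\{0,1,\ldots,M\}$: let $J$ be uniformly distributed on these $M+1$ values and let $X$ be drawn from $P_{\theta_J}$ conditionally on $J$. For any test $\psi$, the worst-case error probability $p_e$ is bounded below by the average error $\bar{p}_e \triangleq \frac{1}{M+1}\sum_{j=0}^{M} P_{\theta_j}\lp \psi(X)\neq j\rp = \P\lp \psi(X)\neq J\rp$, so it suffices to lower bound $\bar{p}_e$.

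Second, I would invoke Fano's inequality in the form $H\lp J\mid \psi(X)\rp \leq \log 2 + \bar{p}_e \log M$, where the factor $\log M = \log\lp (M+1)-1\rp$ arises because, conditioned on an error, there remain $M$ candidate values for $J$, and $\log 2$ bounds the binary-entropy term. Since $\psi(X)$ is a function of $X$, the data-processing step gives $H\lp J\mid X\rp \leq H\lp J\mid \psi(X)\rp$, and because $J$ is uniform we have $H(J) = \log(M+1)$ and $H\lp J\mid X\rp = \log(M+1) - I(J;X)$. Substituting and rearranging yields
\begin{align*}
\bar{p}_e \geq \frac{\log(M+1)-\log 2}{\log M} - \frac{I(J;X)}{\log M}.
\end{align*}

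Third, the remaining task is to control $I(J;X)$ via the hypothesis. Here I would use the variational identity $I(J;X) = \inf_{Q}\ \frac{1}{M+1}\sum_{j=0}^{M}\KL\lp P_{\theta_j}\,\|\,Q\rp$, whose minimizer is the mixture $\bar{P}=\frac{1}{M+1}\sum_{j} P_{\theta_j}$. Evaluating the right-hand side at the particular choice $Q=P_{\theta_0}$ (rather than the optimal mixture) can only increase it, giving
\begin{align*}
I(J;X) \leq \frac{1}{M+1}\sum_{j=0}^{M}\KL\lp P_{\theta_j}\,\|\,P_{\theta_0}\rp = \frac{1}{M+1}\sum_{j=1}^{M}\KL\lp P_{\theta_j}\,\|\,P_{\theta_0}\rp \leq \gamma\log M,
\end{align*}
where the $j=0$ term vanishes and the final inequality is exactly the assumption of the corollary. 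Combining with the previous display gives $\bar{p}_e \geq \frac{\log(M+1)-\log 2}{\log M} - \gamma$, and since $p_e \geq \bar{p}_e$ the claim follows.

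The main obstacle is the mutual-information bound in the third step: the non-trivial ingredient is recognizing that the mixture $\bar{P}$ minimizes the averaged KL divergence, which is what lets one replace the intractable $\KL\lp P_{\theta_j}\,\|\,\bar{P}\rp$ terms hidden inside $I(J;X)$ by the hypothesis-friendly $\KL\lp P_{\theta_j}\,\|\,P_{\theta_0}\rp$ terms. Everything else is a direct application of Fano's inequality and the uniform-prior bookkeeping; the only care needed is to match the $\log M$ versus $\log(M+1)$ factors correctly, so that the stated constant $\frac{\log(M+1)-\log 2}{\log M}$ emerges cleanly.
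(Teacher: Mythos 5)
Your proof is correct: the reduction to average error, Fano's inequality with the $\log M$ alphabet-size factor, and the bound $I(J;X)\leq \frac{1}{M+1}\sum_{j}\KL\lp P_{\theta_j}\,\|\,P_{\theta_0}\rp$ via the variational characterization of mutual information together give exactly the stated constant $\frac{\log(M+1)-\log 2}{\log M}-\gamma$. The paper itself does not prove this statement but imports it verbatim from Tsybakov's Corollary 2.6, and your argument is essentially the standard derivation underlying that cited result, so there is nothing to reconcile between the two.
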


The construction from the noiseless case, with 
Corollary \ref{fanostsybakov} and the Strong Data Processing Inequality for the BSC yield the bound of Theorem \ref{thm:necessary}. We start by presenting Bresler's and Karzand's construction, which gives a sufficiently tight upper bound on symmetric KL divergence.

\noindent\textbf{Proof of Theorem \ref{thm:necessary}:} 
Consider a family of $M+1$ different Ising model distributions $\{P_{\theta^{i}}:i\in\{0,\ldots,M\}\}$. This family of the structured distributions is chosen such that the structure recovery task (through The Chow-Liu algorithm) is sufficiently hard. First, we define $P_{\theta^0}$ to be an Ising model distribution with underlying structure a chain with $p$ nodes and parameters $\theta^{0}_{j,j+1}=\alpha$, when $j$ is odd and $\theta^{0}_{j,j+1}=\beta$ when $j$ is even. The rest of family is constructed as follows: the elements of each $\theta^{i}$, $i\in[M]$ are equal to the elements of $\theta^{0}$ apart from two elements $\theta^{i}_{i,i+1}=0$ and $\theta^{i}_{i,i+2}=\arctanh(\tanh(\alpha)\tanh(\beta))$, for each odd value of $j$. There are $(p+1)/2$ distinct distributions in the constructed family. Through the expression \eqref{eq:SKL}, we derive the following upper bound on the $\SKL (P_{\theta^{0}}||P_{\theta^{i}})$, for all $i\in[M]$,~\cite[Section 7.1]{bresler2020learning},
\begin{align}\label{eq:SKLupperbound}
&\SKL (P_{\theta^{0}}||P_{\theta^{i}})=\alpha \lp\tanh (\alpha) - \tanh (\alpha)\tanh^2 (\beta)  \rp\leq 4\alpha \tanh (\alpha) e^{-2\beta}.
\end{align}

 \noindent\textit{\underline{Strong Data Processing Inequality}:} For each distribution $P_{\theta^{i}}$ and $i\in\{0,\ldots,M\}$ we consider the distribution of the noisy variable in the hidden model $\nP_{\theta^{i}}\triangleq P_{\bnX|\bX}\circ P_{\theta^{i}}$. We would like to find an upper bound for the quantities $\SKL (\nP_{\theta^{0}}||\nP_{\theta^{i}})$, $i\in \{0,\ldots,M\}$. For that purpose, we a use a strong data processing inequality result for the BSC by~\citet{polyanskiy2017strong}. The input random variable $\bX$ is considered to have correlated binary elements, while the noise variables $N_i$ are i.i.d $\Rad (q)$. This scheme is equivalent to the hidden model that we consider in this paper. In fact we have the following bound \begin{align}\label{eq:boundetaSDPI}
\eta_{\text{KL}}\leq 1-(4q(1-q))^p,
\end{align} that is proved by Polyanski~\cite[``Evaluation for the BSC'', equation (39)]{polyanskiy2017strong}, where the quantity $\eta_{\text{KL}}$ is defined as \begin{align}\label{eq:SDPI_sup_sup}
\eta_{\text{KL}}\triangleq \sup_{Q} \sup_{P:0<\KL(P||Q)<\infty} \frac{  \KL\lp P_{\bnX|\bX}\circ P || P_{\bnX|\bX}\circ Q\rp }{\KL\lp P||Q \rp},
\end{align} $P_{\bnX|\bX}$ is the distribution of the BSC and $P,Q$ are any distributions of the input variable $\bX$.

Since the supremum in \eqref{eq:SDPI_sup_sup} is with respect to all possible distributions, it covers any pair of distributions in the desired family  $\{P_{\theta^{j}}:j\in\{0,\ldots,M\}\}$. Thus, for all $k,\ell\in\{0,1,\ldots,M\}$ and $k\neq\ell$, it is true that\begin{align}
\frac{\KL(\nP_{\theta^{k}}||\nP_{\theta^{\ell}})}{\KL(P_{\theta^{k}}||P_{\theta^{\ell}})}&\leq 1-(4q(1-q))^p,\end{align} which comes from \eqref{eq:boundetaSDPI},\eqref{eq:SDPI_sup_sup} and implies the following
\begin{align}
\SKL(\nP_{\theta^{k}}||\nP_{\theta^{\ell}})&\leq[1-(4q(1-q))^p]\SKL(P_{\theta^{k}}||P_{\theta^{\ell}}),\quad \forall k\neq\ell\in\{0,1,\ldots,M\}\label{eq:SKL_SDPI}.
\end{align} We combine \eqref{eq:SKLupperbound} and \eqref{eq:SKL_SDPI} to get \begin{align}\label{eq:SKLboundnoisy}
\SKL(\nP_{\theta^{k}}||\nP_{\theta^{i}})&\leq [1-(4q(1-q))^p]4\alpha\tanh(\alpha) e^{-2\beta}\leq [1-(4q(1-q))^p]4\alpha^2 e^{-2\beta}.
\end{align} Finally, from \eqref{eq:SKLboundnoisy} and Corollary \ref{fanostsybakov} we derive the first part of Theorem \ref{thm:necessary}.

 \noindent\textbf{Proof of Theorem \ref{thm:fano's inequality theorem}}:  Theorem \ref{thm:fano's inequality theorem} is the extended version of Theorem 3.4 by \citet{bresler2020learning} to the hidden model. Following a similar technique, we consider chain structured Ising models with parameters $\theta^{j}$ for $j\in[M]$ such that $\theta^{j}_{j,j+1}=\alpha$ and $\theta^{j}_{i,i+1}=\arctanh(\tanh(\alpha)+2\eta)$, for all $i\neq j$. Then\begin{align}
\Lnorm \lp P_{\theta^{j}}, P_{\theta^{j'}} \rp= \max_{s,t} \left| \E_{\theta^{j}}[X_s X_t] -\E_{\theta^{j'}}[X_s X_t] \right| \geq 2\eta 
\end{align} and \begin{align}\label{eq:SKLLnorm}
\SKL\lp P_{\theta^{j}}, P_{\theta^{j'}} \rp\leq 2\eta \left[\arctanh(\tanh(\alpha)+2\eta)-\alpha  \right]\leq 2\eta \frac{2\eta}{1-\left[ \tanh(\alpha) +2\eta\right]^2},
\end{align} where the last inequality is a consequence of Mean Value Theorem (see~\cite[Section 6.3]{bresler2020learning} for the original statement). We derive the bound of Theorem \ref{thm:fano's inequality theorem} by combining the strong data processing inequality \eqref{eq:SDPI_sup_sup} with \eqref{eq:boundetaSDPI}, \eqref{eq:SKLLnorm}, and Corollary \ref{fanostsybakov}.

\section{Supplementary Discussion}
In this section we provide supplementary material that supports the discussion in Sections \ref{Hidden_structure_estimation} and \ref{MLE}. First, we present one marginal case for which perfect denoising is possible before applying the Chow-Liu algorithm. Then we show a structure-preserving case.

\subsection{The Gap between the Upper and Lower Bounds}\label{GAP_discussion}
We continue by analyzing the gap that appears between the upper and lower bounds for an example where perfect denoising can be applied on a specific class of tree models in the high-dimensional regime. This shows why the effect of noise vanishes in Theorems 1.2 and 1.4 for $p\to \infty$. Further, while it seems counter-intuitive that when $p\to\infty$ the problem becomes easier, we show below one example that this is the case. Our lower bound is directly affected by marginal cases like this, for instance see Proposition 1.4.
   
The gap is introduced by the terms $(1-2q)^4$ and $1-(4q(1-2q))^p$ in the denominator of the lower and upper bounds respectively. Specifically, for $p\to\infty$ there exists a special case for which perfect denoising before running the Chow-Liu algorithm is possible, while in other cases that is not possible. Thus the minimax bound ought to be identical to noiseless case when $p\to \infty$ and $1-(4q(1-2q))^p \to 1$ in the large dimensional regime. We continue by providing the marginal case of a trivial tree structure and showing that perfect denoising is possible in this case before running the Chow-Liu algorithm.
    
First notice that if $p \to \infty$, then the sample size $n \to \infty$, even in the noiseless regime. Consider the case of $\E[X_i X_j]\to 1$ for all $(i,j)\in\mc{E}$. Because an infinite number of samples are available, we can estimate perfectly the correlations of the observables and we find $\hat{\E}[Y_i Y_j]=\E[Y_i Y_j] = (1-2q)^2$ for all $(i,j)\in\mc{V}$. The latter as information is sufficient to find that $\E[X_i X_j]\to 1$ for all $(i,j)\in\mc{E}$. The hidden layer $\bX$ take two values, $(+1,+1,\ldots)\triangleq +1^p$ ($p$ values $+1$) or $(-1.-1,\ldots)\triangleq -1^p$ ($p$ values $-1$), because $\E[X_i X_j]\to 1$ for all $(i,j)\in\mc{V}$ and the later allows us to denoise each sample. Define as $\hd (\bX,\bnX)$ the Hamming distance between $\bX$ and $\bnX$. At this point we can perform perfect denoising for each sample $\bnx_s$ of infinite length $p$ and find the hidden sample $\bx_s$ with probability $1$ because \begin{align*}
        \P \lp \bX =\bx_s | \bnX =\bnx_s  \rp &= \frac{\P \lp \bnX=\bnx_s | \bX_s =\bx_s  \rp \P (\bX_s =\bx_s)}{\sum_{\bx} \P \lp \bnX =\bnx_s | \bX =\bx  \rp \P (\bX =\bx)}\\
        &= \frac{ q^{\hd (\bx_s,\bnx_s)} (1-q)^{p-\hd (\bx_s,\bnx_s)}  }{q^{\hd (\bx_s,\bnx_s)} (1-q)^{p-\hd (\bx_s,\bnx_s)}+q^{\hd (-\bx_s,\bnx_s)} (1-q)^{p-\hd (-\bx_s,\bnx_s)}}\numberthis \label{eq:cond_prob}
    \end{align*} and the last holds for both of the cases $\bx_s=+1^p$ or $\bx_s=-1^p$ because of symmetry. Further for any observation $\bnx_s$ for any $q\in (0,1/2)$ we have \begin{align}\label{eq:hamming_lim}
       \lim_{p\to \infty  } \frac{\hd (-\bx_s,\bnx_s)-\hd (\bx_s,\bnx_s)}{p} \overset{a.s.}{=}1-q -q=1-2q.
    \end{align} We combine \eqref{eq:cond_prob} and \eqref{eq:hamming_lim} to find \begin{align}
         \lim_{p\to \infty  } \P \lp \bX =\bx_s | \bnX =\bnx_s  \rp= \lim_{p\to \infty  } \frac{1}{1+\lp\frac{q}{1-q}\rp^{\frac{\hd (-\bx_s,\bnx_s)-\hd (\bx_s,\bnx_s)}{p}p}} =1
    \end{align} 
    and 
    \begin{align}
        \lim_{p\to \infty  } \P \lp \bX =-\bx_s | \bnX =\bnx_s  \rp= \lim_{p\to \infty  } \frac{1}{1+\lp\frac{1-q}{q}\rp^{\frac{\hd (-\bx_s,\bnx_s)-\hd (\bx_s,\bnx_s)}{p}p}} =0.
    \end{align} As a consequence there exists one case for which perfect denoising is possible before running the Chow-Liu algorithm. Because we want Theorems 1.2 and 1.4 to reduce to the noiseless case for $p\to \infty$,  the above best case scenario must be covered. However, perfect denoising is not possible in general (for instance $\E [X_i X_j]<1$ and finite $p$.).

\subsection{A Structure-Preserving Case}
 Lemma \ref{G_graph} considers a special case of tree structures for the hidden variables, the set of edges is a set with disconnected edges, no edge is connected to any other. Then we show that the same structure is preserved for the observable variables. 
\begin{lemma}\label{G_graph}
Let $\F=(\mc{V},\mc{E})$ be a forest with $|\mc{V}|=p$ and $|\mc{E}|=p/2\in \mbb{N}$ such that no edge is connected to any other edge. Assume that $X_i \in \{-1,+1\}$ and $\E\left[ X_{i}\right]=0$ for all $i\in[1,\ldots,p]$. If $\bnX$ is the output of the BSC channel (in the hidden model) with distribution  $\np(\bnx)$, then $\np(\bnx)$ also factorizes with respect to $\F$.
\end{lemma}
\begin{proof}
The pair variables $(Y_i,Y_j)$ for $(i,j)\in\mc{E}$ are independent because of the disconnected edges of the hidden layer. The latter directly gives the factorization as\begin{align}\label{eq:str_pres_distr}
   \np(\bnx)= \prod_{(i,j)\in \mc{E}} \np(y_i,y_j) = \prod_{i\in V} \np\left(y_{i}\right) 
	\prod_{ (i,j) \in \mc{E}} \frac{\np(y_{i},y_{j})}{\np(y_{i}) \np(y_{j})},
\end{align} because $|\mc{V}|=p$, $|\mc{E}|=p/2$ and the marginal distributions are uniform.
\end{proof}

\acks{This work was supported in part by DARPA and SSC Pacific under contract N66001-15-C-4070 and the United States National Science Foundation under award CCF-1453432, and the United States National Institutes of Health under award 1R01DA040487.}

\bibliography{Structure_learning}
\end{document}